\crefname{ineq}{ineq.}{inequalities}
\newcommand{\killdo}[2]{\unskip}
\newcommand{\cA}{\mathcal{A}}
\newcommand{\cB}{\mathcal{B}}
\newcommand{\cF}{\mathcal{F}}
\newcommand{\cN}{\mathcal{N}}
\newcommand{\cO}{\mathcal{O}}
\newcommand{\cP}{\mathcal{P}}
\newcommand{\cS}{{\mathcal{S}}}
\newcommand{\BB}{\mathbb{B}}
\newcommand{\EE}{\mathbb{E}}
\newcommand{\PP}{\mathbb{P}}
\newcommand{\RR}{\mathbb{R}}
\newcommand{\E}{{\mathbb E}}
\DeclareMathOperator{\ind}{\mathds{1}} 
\def \iid {\stackrel{\text{i.i.d.}}{\sim}}
\def\given{{\,|\,}}
\def\argmin{\mathop{\text{\rm arg\,min}}}
\def\argmax{\mathop{\text{\rm arg\,max}}}
\def\poly{\mathop{\text{\rm poly}}}
\def\var{\mathop{\text{var}}}
\newcommand\re[1]{\textcolor{black}{#1}}
\newtheorem{assumption}[theorem]{Assumption}
\begin{document}

\title{A Reinforcement Learning Approach in Multi-Phase Second-Price Auction Design}

\author{
    \hspace{-0.33em}\name Rui Ai \email ruiai@mit.edu\\ \addr  Institute for Data, Systems, and Society\\
    Massachusetts Institute of Technology\\
    Cambridge, MA 02139, USA
    \AND
    \name Boxiang Lyu \email blyu@chicagobooth.edu\\ \addr Booth School of Business\\
    The University of Chicago\\
    Chicago, IL 60637, USA
    \AND
    \name Zhaoran Wang \email zhaoranwang@gmail.com\\
    \addr Department of Industrial Engineering and Management Sciences\\
    Northwestern University\\
    Evanston, IL 60208, USA
    \AND
    \name Zhuoran Yang \email zhuoran.yang@yale.edu\\
    \addr Department of Statistics and Data Science\\
    Yale University\\
    New Haven, CT 06511, USA
    \AND
    \name Michael I. Jordan \email jordan@cs.berkeley.edu\\
    \addr Department of EECS, Department of Statistics \\
    University of California\\
    Berkeley, CA 94720, USA
}

\editor{Tor Lattimore}

\maketitle
\begin{abstract}
We study reserve price optimization in multi-phase second price auctions, where the seller's prior actions affect the bidders' later valuations through a Markov Decision Process (MDP). Compared to the bandit setting in existing works, the setting in ours involves three challenges. 
First, from the seller's perspective, we need to efficiently explore the environment in the presence of potentially untruthful bidders who aim to manipulate the seller's policy. 
Second, we want to minimize the seller's revenue regret when the market noise distribution is unknown. Third, the seller's per-step revenue is an unknown, nonlinear random variable, and cannot even be directly observed from the environment but realized values.

We propose a mechanism addressing all three challenges. To address the first challenge, we use a combination of a new technique named ``buffer periods'' and inspirations from Reinforcement Learning (RL) with low switching cost to limit bidders' surplus from untruthful bidding, thereby incentivizing approximately truthful bidding. The second one is tackled by a novel algorithm that removes the need for pure exploration when the market noise distribution is unknown. The third challenge is resolved by an extension of LSVI-UCB, where we use the auction's underlying structure to control the uncertainty of the revenue function. The three techniques culminate in the \underline{C}ontextual-\underline{L}SVI-\underline{U}CB-\underline{B}uffer (CLUB) algorithm which achieves $\tilde{\mathcal{O}}(H^{5/2}\sqrt{K})$ revenue regret, where $K$ is the number of episodes and $H$ is the length of each episode, when the market noise is known and $\tilde{\mathcal{O}}(H^{3}\sqrt{K})$ revenue regret when the noise is unknown with no assumptions on bidders' truthfulness.
\end{abstract}

\begin{keywords}
  Mechanism Design, Second Price Auction, Reserve Price Optimization, Reinforcement Learning
\end{keywords}

\section{Introduction}
Second price auction with reserve prices is one of the most popular auctions both in theory~\citep{nisan2007algorithmic} and in practice~\citep{roth2002last}. While closed-form expressions for the optimal reserve price have been known ever since the seminal work of~\citet{myerson1981optimal}, directly applying the result requires that population information, such as the bidders' valuations' distribution, is known a priori. Various attempts have been made to weaken the assumption, with one of the most prominent lines of literature being reserve price optimization for repeated auctions in the contextual bandit setting~\citep{amin2014repeated,golrezaei2019dynamic,javanmard2019dynamic,deng2020robust}.

A limitation of existing works lies in the bandit assumption. Indeed, while reserve price optimization is already challenging as-is, allowing the auction to be both contextual and introducing temporally dependent dynamics, particularly, incorporating Markov Decision Process (MDP) induced dynamics in the evolution of bidders' preferences, opens up a wider range of problems for studying. For example,~\citet{dolgov2006resource} studies optimal auction under the setting and developed novel resource allocation mechanisms,~\citet{jiang2015data} leverages both MDP and auctions to better analyze resource allocation in IaaS cloud computing, and~\citet{zhao2018deep} uses deep Reinforcement Learning (RL) to study sponsored search auctions. We refer interested readers to~\citet{athey2013efficient} for more motivating examples. A question naturally arises: is it possible to optimize reserve prices when bidders' preferences evolve according to MDPs?

In this article, we provide an affirmative answer. Our work assumes that the state of the auction is affected by the state and the seller's action in the preceding step. To facilitate interpretation, we refer to the seller's action in this context as ``item choice'': bidders' later preferences could be affected by the types of items sold in previous rounds, a phenomenon well-documented by empirical works in auctions ~\citep{lusht1994order,jones2004auction,lange2010price,ginsburgh2007organizing}.

As is the case in many real-world problems, we assume that the underlying transition dynamics and the bidder's valuations are both unknown. We further emphasize that we do not make any truthfulness assumptions about the bidders, allowing them to be strategic with their reporting.
Under such a challenging setting, our goal is to learn the optimal policy of the seller in the unknown environment, in the presence of untruthful bidders. 

{\noindent\textbf{Our Contributions.}}
We begin by summarizing the three key challenges we face.
First, bidders have the incentive to report their valuation untruthfully, in hopes of manipulating the seller's learned policy, through either overbidding or underbidding, making it difficult to estimate their true preferences and the underlying MDP dynamics. Existing works such as~\citet{amin2014repeated,golrezaei2019dynamic,deng2020robust} do not apply due to technical challenges unique to MDP. Second, when the market noise distribution is unknown, even in the bandit setting existing literature often only obtains $\tilde{\cO}(K^{2/3})$ guarantee~\citep{amin2014repeated,golrezaei2019dynamic} and $\Omega(K^{2/3})$ revenue regret lower bound exists \re{in the worst case}~\citep{kleinberg2003value}. Third, the seller's reward function, namely revenue, is unknown, nonlinear, and can not be directly observed from the bidders' submitted bids, and LSVI-UCB~\citep{jin2020provably} cannot be directly applied.

We are able to address all three challenges with the CLUB algorithm. Motivated by the ever increasing learning periods in existing works~\citep{amin2014repeated,golrezaei2019dynamic,deng2020robust}, our work further draws inspiration from RL with low switching cost~\citep{wang2021provably} and proposes a novel concept dubbed ``buffer periods'' to ensure that the bidders are sufficiently truthful. Additionally, we feature a novel algorithm we dub ``simulation'' which, combined with a novel proof technique leveraging the Dvoretzky–Kiefer–Wolfowitz inequality~\citep{dvoretzky1956asymptotic}, yields $\tilde{\cO}(\sqrt{K})$ revenue regret under only mild additional assumptions. Finally, by exploiting the mathematical properties of the revenue function, our work provides a provably efficient RL algorithm for when the reward function is nonlinear.

\subsection{Related Works}\label{subsec:related_work}
We summarize below two lines of existing literature pertinent to our work.

{\noindent\textbf{Reserve Price Optimization.}}
There is a vast amount of literature on price estimation~\citep{cesa2014regret,qiang2016dynamic,shah2019semi,drutsa2020reserve,kanoria2020dynamic,keskin2021dynamic,guo2022no}. \citet{deng2020robust} considers a model where buyers and sellers are equipped with different discount rates, proposing a robust mechanism for revenue maximization in contextual auctions. \citet{javanmard2020multi} proposes an algorithm with $\tilde \cO(\sqrt T)$ regret ($T=KH$ in our paper) while \citet{fan2021policy} achieves sublinear regret in a more complex setting. \citet{cesa2014regret} studies reserve price optimization in non-contextual second price auctions, obtaining $\tilde \cO(\sqrt{T})$ revenue regret bound.~\citet{drutsa2017horizon,drutsa2020reserve} studies revenue maximization in repeated second-price auctions with one or multiple bidders, proposing an algorithm with a $\cO (\log \log T)$ worst-case regret bound. However, their setting is non-contextual, and they cannot be applied to our setting. 

Among this line of research, \citet{golrezaei2019dynamic,golrezaei2023incentive} are possibly the closest to our work. \citet{golrezaei2019dynamic} assumes a linear stochastic contextual bandit setting, where the contexts are independent and identically distributed, achieving $\tilde \cO(1)$ regret when the market noise distribution is known and $\tilde \cO (K^{2/3})$ when it is unknown and nonparametric. While the $\tilde{\cO}(1)$ regret under known market noise distribution seems to be better than our bound, we emphasize that their stochastic bandit setting does not require exploration over the action space but needed in our work and, even in generic linear MDPs, a $\Omega{(\sqrt{K})}$ regret lower bound exists~\citep{jin2020provably}. \re{For unknown distribution, there's another difference that they consider a time-varying model while we focus on dealing with the underlying fixed MDP. Though the difficulty of these tasks is hard to compare directly, \citet{amin2014repeated} considers a non-parametric but fixed distribution setting and suffers $\tilde O(K^{2/3})$ regret, which may hint at the main difficulty comes from a non-parametric rather than time-varying setting. We delay more discussion about concrete techniques in \citet{golrezaei2019dynamic} in \Cref{app:comp}.}
 Lastly, as we discussed previously, the approaches in~\citet{golrezaei2019dynamic} cannot be directly applied in the MDP setting, necessitating our novel algorithmic structure.

\re{At the same time with our paper, \citet{golrezaei2023incentive} considers another pricing problem with non-parametric noise, achieving $\tilde{O}(\sqrt{T})$ regret. However, they only set a reserve price for all bidders, while we customize reserve prices for each bidder to attain more revenue. On the one hand, the seller will achieve more revenue by setting different reserve prices for different bidders, which is in line
with the goal of the seller because there are fewer corresponding constraints. On the other hand, in the
real world, it is more common to set up personalized reserve prices in the online advertisement
market, like price discrimination~\citep{paes2016field,wu2019effective}. 
Additionally, \citet{golrezaei2023incentive} is in the scope of contextual bandits and is a special case of our MDP setting. Pricing in contextual bandit settings is much easier than MDP because i.i.d. context will form a positive definite covariance matrix, and linear regression works well. But in MDP, features depend on action and are absolutely not i.i.d. Without the positive definite assumption, algorithms designed for contextual bandits lose effectiveness, and we need innovative algorithms to incorporate pricing and complex information structures.}

{\noindent\textbf{RL with Linear Function Approximation.}}
Linear contextual bandit is a popular model for online decision making~\citep{rusmevichientong2010linearly,abbasi2011improved,chu2011contextual,li2019nearly,lattimore2020bandit} that has also been extensively studied from the auction design perspective~\citep{amin2014repeated,golrezaei2019dynamic}. Its dynamic counterpart,
Linear MDP, remains popular in the analysis of provably efficient RL \citep{yang2019sample,jin2020provably,jin2021pessimism,yang2020function,zanette2020learning,jin2021bellman,uehara2021representation,yu2022strategic,wang2021provably,gao2021provably}. In particular, \citet{jin2020provably} is one of the first papers to introduce the concept, proposing a provably efficient RL algorithm with $\tilde\cO(\sqrt{K})$ regret. \citet{jin2021pessimism} generalizes the idea to offline RL.

While we use linear function approximation, the seller's per-step reward function, revenue, is non-linear. Our work also features novel per-step optimization problems to combat effects from untruthful reporting. While our work draws inspiration from~\citet{wang2020optimism} and~\citet{gao2021provably}, as we discussed previously, these inspirations are needed for obtaining high-quality estimates when the bidders are untruthful. Thus, our work differs significantly from prior works on linear MDPs.

{\noindent\textbf{Notations.}}
For any positive integer $n$ we let $[n]$ denote the set $\{1, \ldots, n\}$. For any set $A$ we let $\Delta(A)$ denote the set of probability measures over $A$. For sets $A$, $B$, we let $A \times B$ be the Cartesian product of the two. \re{During the whole paper, we use $k\in[K]$ to refer to an episode and $h\in[H]$ to refer to a horizon. In addition, we use $\tilde k$ to refer to a buffer period associated with the $k$-th episode.}

\section{Preliminaries}
We consider a repeated (lazy) multi-phase second-price auction with personalized reserve prices. Particularly, we assume that there are $N$ rational bidders, indexed by $[N]$, and one seller participating in the auction. For ease of presentation, we use ``he" to refer to a specific bidder and ``she" to the seller.

{\noindent\textbf{Second Price Auction with Personalized Reserve Prices.}}
We begin by describing a single round of the auction. Each bidder $i \in [N]$ submits some bid $b_i \in \RR_{\geq 0}$ and the seller determines the personalized reserve prices for the bidders in the form of reserve price vector $\rho \in \RR_{\geq 0}^N$, with $\rho_i$ denoting bidder $i$'s reserve price. The bidder with the highest bid only wins if he also clears his personal reserve price, i.e., $b_i \geq \rho_i$. If the bidder $i$ receives the item, he pays the seller the maximum of his personalized reserve and the second highest bid, namely $\max\{\rho_i, \max_{j \neq i} b_j\}$, which we dub $m_i$ for simplicity. When the bidder with the highest bid fails to clear his personalized reserve price, the auction fails, the seller gains zero, and the item remains unsold. In summary, bidder $i$ receives the item if and only if $b_i \geq m_i$ and the price he pays is $m_i$. For any round of auction, we let $q_{i} = \ind(\textrm{bidder $i$ receives the item})$ indicate whether bidder $i$ received the item or not. For the sake of convenience, throughout the paper, we assume that there are no ties in the submitted bids.

{\noindent\textbf{A Multi-Phase Second Price Auction.}}
We now characterize the dynamics of the multi-phase auction setting we study. Assume that the transition dynamic between rounds can be modeled as an episodic Markov Decision Process (MDP)\footnote{We can easily extend our setting to that of an infinite-horizon MDP by improperly learning the process as an episodic one. Here we focus on the finite-horizon case purely for simplicity of presentation.}. A multi-phase second price auction with personalized reserves is parameterized as $(\mathcal{S}, \Upsilon, H,\mathbb{P},\{r_i\}_{i = 1}^N)$, with the state space denoted by $\mathcal{S}$, seller's item choice space $\Upsilon$\footnote{Here we use ``item choice" to better illustrate what $\Upsilon$ intuitively represents. The term can be extended to more generic notions of the seller's action.}, horizon $H$, transition kernel $\mathbb{P}=\{\mathbb{P}_h\}_{h=1}^H$ where $\PP_h: \cS \times \Upsilon \to \Delta(\cS)$, and the individual bidders' reward functions $r_i=\{r_{i h}\}_{h=1}^{H}$ for all $i \in [N]$. The choice of item $\upsilon\in\Upsilon$ affects the bidders' rewards as well as the transition.

The interaction between the bidders and the seller is then defined as follows. We assume without loss of generality that the state at the initial step is fixed at some $x_{1} \in \cS$. For each $h \in [H]$, the seller and the bidders engage in a single round of the second-price auction. Given the seller's item choice at step $h$, $\upsilon_{h}$, nature transitions to the next state according to the transition kernel $\PP_{h}$.

{\noindent\textbf{Bidder Rewards.}} 
We assume that for each bidder $i \in [N]$ at time $h\in [H]$, his reward\footnote{We use the term ``reward'' to maintain consistency with existing RL literature.} depends on both the state $x$ and item being auctioned off at that round $\upsilon \in \Upsilon$, which we formalize as
\[r_{ih}(x,\upsilon)=1+\mu_{ih}(x,\upsilon)+z_{ih} \textrm{, where } z_{ih} \iid F.\]
Here, $z_{ih}$ denotes the randomness within bidders' rewards and is drawn i.i.d. from the market noise distribution $F(\cdot)$. We assume that $F(\cdot)$ is supported on $[-1,1]$ and has mean 0. Let $\mu_{i, h}: \cS \times \Upsilon \to [0,1]$ denote the conditional expectation of the reward less one, where the constant is added to ensure $r_{ih}(x, \upsilon) \in [0, 3]$.

{\noindent \textbf{Policies and Value Functions.}}
Before we describe the seller's policy, we first discuss the action space $\cA = \Upsilon \times \RR_{\geq 0}^N$. At each $h \in [H]$, the seller chooses some action $a_{h} = (\upsilon_{h}, \rho_{h})$, comprising of item choice $\upsilon \in \Upsilon$ and reserve price vector $\rho \in \RR_{\geq 0}^N$. The seller's policy is then $\pi = \{\pi_h\}_{h = 1}^H$, where $\pi_h: \cS \to \Delta(\cA)$. We let $\pi^{\upsilon}$ and $\pi^{\rho}$ denote the marginal item choice and reserve price policies, respectively. Recall that the seller garners revenue only when the item is sold to a bidder. At each $h \in [H]$, her per-step expected revenue is then

\begin{equation}
\label{eqn:defn_revenue}
    R_h = \E_{\{z_{ih}\}_{i = 1}^{N}}\left[\sum_{i = 1}^N m_{ih}\ind(m_{ih} \leq b_{ih})\right]
\end{equation}
as we recall that $m_{ih} = \max\{\rho_{ih}, \max_{j \neq i}b_{jh}\}$ and bidder $i$ pays the seller $m_{ih}$ if and only if $b_{ih} \geq m_{ih}$. The value function (V-function) of the seller's revenue for any policy $\pi$ 
and the action-value function (Q-function) is $Q_h^\pi: \mathcal{S}\times \mathcal{A}\rightarrow \mathbb{R}$ are then
\[
    V_h^\pi(x)=\E_\pi \left[\sum_{h'=h}^H R_{h'}(x_{h'},a_{h'})\given x_h=x\right]
\]\textrm{and }
\[
    Q_h^\pi(x, a)=\E_\pi \left[\sum_{h'=h}^H R_{h'}(x_{h'}, a_{h'})\given x_h=x,a_h=a\right],
\]
respectively. 

Since the bidder reward only depends on state $x$ and the choice of item $\upsilon$ instead of reserve $\rho$, we have a family of mappings from $\cS \times \Upsilon$ to $\RR_{\geq 0}^N$ that determines $\rho$. Therefore, with a slight abuse of notation, we can rewrite our Q-function as $Q(x,a)=Q(x,(\upsilon,\rho(x,\upsilon)))$, restricting the role of setting reserve prices using such mappings without loss of generality. From now on, we use $Q(x,\upsilon)$ to denote the Q-function for simplicity.
For any function $f:\mathcal{S}\rightarrow \mathbb{R}$, we define the transition operator $\cP$ and the Bellman operator $\cB$ as
\[
    (\cP_h f)(x, a)=\E[f(x_{h+1})\given x_h=x,a_h= a], \,(\mathcal{B}_hf)(x,a)=\E[R_h(x_h, a_h)]+(\mathbb{P}_hf)(x, a),
\] respectively. Finally, we let $\pi^{\star}$ denote the optimal policy when the bidders' reward functions, the MDP's underlying transition, and the market noise distribution are all known to the seller. We remark that when these parameters are known, second price auctions with personalized reserve prices are inherently incentive compatible and rational bidders will bid truthfully.

{\noindent \textbf{Performance Metric.}}
The revenue suboptimality for each episode $k \in [K]$ is
\[
    \textrm{SubOpt}_k(\pi_k)=V_1^{\pi^*}(x_{1})-V_1^{\pi_k}(x_{1}),
\]
with $\pi_k$ being the strategy used in episode $k$. Our evaluation metric is then the revenue regret attained over $K$ episodes, namely
\begin{equation}\label{regret}
    \textrm{Regret}(K)=\sum_{k=1}^{K} \textrm{SubOpt}_k(\pi_k).
\end{equation}

{\noindent \textbf{Impatient Utility-Maximizing Bidders.}}
We assume the bidders are equipped with some discount rate $\gamma\in (0,1)$ while the seller's reward is not discounted. For the sake of simplicity, we assume $\gamma$ is common knowledge. \re{\citet{drutsa2020reserve} consider a scenario where $\gamma$ is unknown but with a strictly less than one upper bound. We highlight that it also works with our CLUB algorithm as long as we replace $\gamma$ with its upper bound. We can have regret bounds with the same order because we adopt more conservative estimators, and buyers won't violate as much as the corresponding results of $\gamma$. Then all results in our paper hold up to some changes of absolute constants.} Rational bidder $i$'s utility at step $h$ is given by $(r_{ih}(x_h, \upsilon_h) - m_{ih})\ind(b_{ih} \geq m_{ih})$, as we note that he only receives nonzero utility upon winning the auction. His objective is to maximize his discounted cumulative utility
\[
    \textrm{Utility}_i = \sum_{k = 1}^K\gamma^k\EE_{\pi_k}\left[\sum_{h = 1}^H (r_{ih}(x^k_h, \upsilon^k_h) - m^k_{ih})\ind(b^k_{ih} \geq m^k_{ih}) \given x^k_1 = x_1\right].
\]

Note that in practical applications, sellers are usually more patient than bidders and discount their future rewards less. Consider a sponsored search auction, where the seller usually auctions off large numbers of ad slots every day. Bidders usually urgently need advertisements and value future rewards less. On the other hand, the seller is not especially concerned with slight decreases in immediate rewards. We refer the readers to \citet{drutsa2017horizon,golrezaei2019dynamic} for a more detailed discussion on the economic justifications of the assumption and emphasize that the assumption is necessary, as \citet{amin2013learning} shows that when the bidders are as patient as the seller, achieving sub-linear revenue regret is impossible.

{\noindent\textbf{Linear Markov Decision Process.}}
As a concrete setting, we study linear function approximation. 
\begin{assumption}\label{assumption:linearmdp}
Assume that there exists known feature mapping $\phi: \cS \times \Upsilon \to \RR^d$ such that there exist $d$-dimension unknown (signed) measures $\mathcal{M}_h$ over $\mathcal{S}$ and unknown vectors $\{\theta_{ih}\}_{i = 1}^N \in \RR^d$ that satisfy
\[
\mathbb{P}_h(x'|x,\upsilon)=\langle\phi(x,\upsilon),\mathcal{M}_h(x')\rangle, \, \mu_{ih}(x, \upsilon) = \langle \phi(x, \upsilon), \theta_{ih} \rangle
\]for all $(x,\upsilon,x')\in \cS\times \Upsilon\times\cS$, $i \in [N]$, and $h \in [H]$. Without loss of generality, we assume that $\|\phi(x,\upsilon)\|\le 1$ for all $(x,\upsilon)\in \cS\times \Upsilon$, $\|\mathcal{M}_{h}(\mathcal{S})\| \le \sqrt d$, and $\|\theta_{ih}\|\leq \sqrt{d}$ for all $h \in [H]$ and $i \in [N]$. 
\end{assumption}
\re{There're some scenarios in reality that mapping $\phi(\cdot,\cdot)$ is public knowledge like representing the order of items. However, for unknown mapping~\citep{lattimore2020learning}, there are some ways to pre-train features using a reproducing kernel Hilbert space, neural networks, or the Knowledge Discovery in Databases (KDD) method~\citep{lange2010deep,claessens2016convolutional,wang2020gcn}. Utilizing these, we can obtain a working feature representation in practice.} 

We remark that while the transition kernel $\PP_{h}$ and the bidders' individual expected reward functions $\{\mu_{i}\}_{i = 1}^{N}$ are linear, the seller's objective, revenue, is not linear, differentiating our work from typical linear MDP literature (see \citet{yang2019sample, jin2020provably} for representative works).

{\noindent\textbf{Motivations for the MDP Model .}}
We close off the section by providing some practical applications of our MDP model. The core of our setting is to study what will happen when selling heterogeneous goods and how the order, part of the state, will affect the revenue. We provide three real-world scenarios to motivate this phenomenon.
\re{\begin{itemize}
    \item \textbf{(Online Advertisement)} Google sells lots of advertising positions every day, while buyers face budget constraints. In the early rounds, since buyers have more budget left, they are usually eager to bid higher and have a stronger willingness to pay. Therefore, Google may want to sell the most valuable position at first so that buyers have the ability to pay higher acceptable prices and avoid being underbid and unsold.
    \item \textbf{(Antique Auction)} For traditional auction design, the prior auctions may affect the latter auctions. For instance, consider when Sotheby's wants to sell several antiques. The order of selling is of significance, and that's the reason why Sotheby's needs to sell a few other pieces to warm up before selling the final flagship piece. The order, part of the state, influences people's valuation and, consequently, total revenue. For example, if Sotheby's wishes to auction a valuable Chinese ancient artifact, they would auction some related artifacts during the warm-up session to enhance buyers' expectations.
    \item \textbf{(Automobile Sales Market)} The last example is on the market of cars. If one buyer wants to buy a sedan from General Motors, recommending Chevrolet first or Cadillac first will influence his preference for the course. If he sees Chevrolet first, he may think Cadillac is too expensive. However, if he sees Cadillac first, he may think Chevrolet lacks a sense of experiential quality. To achieve maximum profitability, General Motors carefully arranges the recommended order. In a broader sense, they meticulously design the sequence in which cars appear in advertisements.
\end{itemize}}
\re{All in all, contextual bandits lack the ability to depict such kinds of problems. We need to use MDP to model these issues.} 

\section{Known Market Noise Distribution}\label{sec:KnownF}
We remind the readers of our three main challenges, with the first challenge being exploring the environment, even when the bidders submit their bids potentially untruthfully. The second challenge emerges only when the market noise distribution is unknown, and we defer its resolution to Section~\ref{sec:unknownF}. The third challenge is performing provably efficient RL even when the seller's per-step revenue, detailed in~\Cref{eqn:defn_revenue}, is nonlinear and not directly observable.

In this section, we present a version of CLUB when the market noise distribution is known. We assume for convenience that $K$ is known, as we can use the doubling trick (see \citet{auer2002adaptive} and \citet{besson2018doubling} for discussions) to achieve the same order of regret when $K$ is unknown or infinite. \re{Since we can utilize the doubling trick to partition $K$ into at most $[\log_2 K]+1$, adding corresponding regret will lead to a regret bound of the same order up to some logarithmic terms.}

\subsection{CLUB Algorithm When \texorpdfstring{$F(\cdot)$}{} is Known}
We start with the first challenge, which we address by a collection of algorithms that successfully induce approximately truthful bids from the bidders.

{\noindent \textbf{Addressing Challenge 1: Untruthfulness.}} To curb the sellers' untruthfulness, we need to punish such behavior, achieved through a random pricing policy in the form of~\Cref{algo:pizero}. For each $h \in [H]$, $\pi_{\rm rand}$ randomly chooses an item and a bidder, offering him the item with a reserve price drawn uniformly at random. The bidder's utility decreases whenever he reports untruthfully, risking either not receiving the item when he underbids or overpaying for an item when he overbids. \re{Combining lazy updates (see \Cref{algo:buffer}), we can ensure approximate truthfulness because with the discount rate being less than one, the benefit the bidder gains from misreporting the bids will decay as the timestep increases. However, since we consider a multi-phase auction design, there remains some nuisance introduced by MDP. For instance, there is no guarantee of a positive definite covariance matrix, and it's challenging to give a low regret union bound. In other words, $\sum\phi\phi^T$ might have some zero eigenvalues. We will see how to solve them in the following paragraphs.}

\begin{algorithm}[bht]
    \caption{Definition of $\pi_{\rm rand}$}
    \label{algo:pizero}
    \begin{algorithmic}[1]
        \FOR {$h = 1, \ldots, H$}
        \STATE Randomly chooses an item $\upsilon_{h} \in \Upsilon_{h}$.
        \STATE Choose a bidder $i \in [N]$ uniformly at random and
        offer him the item with reserve price $\rho_{ih}\sim
        \textrm{Unif}([0, 3])$. Set other bidders' reserve prices to infinity.
        \ENDFOR
    \end{algorithmic}
\end{algorithm}

\re{We further introduce a novel technique, ``buffer period'', which explicitly forces the bidders to wait before starting a new learning period, thereby decreasing the discounted utility the impatient bidders may gain from untruthfulness.
Indeed, a typical algorithm in the bandit setting only features $\pi_{\rm rand}$ and a sequence of learning periods that double in length~\citep{amin2014repeated, golrezaei2019dynamic,deng2020robust}. In the bandit setting, data collected in all previous periods is used to update the policy at the end of each period. The increasingly lengthy periods ensure that the seller switches policy less frequently, ensuring that the impatient buyers need to wait longer before benefiting from untruthful reporting, deterring them from doing so. Unfortunately, the same technique does not work for MDPs, as the rate at which the smallest eigenvalue of the covariance matrix estimate grows cannot be determined, and we cannot ensure our estimate of the underlying environment is not ``stale'' when we double the length of the periods. }

\begin{algorithm}[bht]
    \caption{Buffer Period with Known $F(\cdot)$}
    \label{algo:buffer}
    \begin{algorithmic}[1]
      \STATE Receives buffer start $\mathtt{buffer.s}(\tilde{k} + 1) = k$ and end $\mathtt{buffer.e}(\tilde{k} + 1) = k + \frac{3\log K}{\log(1/\gamma)}$.
      \STATE Do nothing for all episodes $\mathtt{buffer.s}(\tilde{k} + 1) \leq k < \mathtt{buffer.e}(\tilde{k} + 1)$, i.e., do nothing during the buffer period before the end.
      \STATE At the end of the buffer period, update policy estimate $\pi_{\tilde{k} + 1}$ and Q-function estimate $\hat{Q}_h^{\pi_{\tilde{k} + 1}}(\cdot,\cdot)$ using \Cref{algo:estimate}, and then increment buffer period counter $\tilde{k} \gets \tilde{k} + 1$.
    \end{algorithmic}
\end{algorithm}

While we can mimic the aforementioned bandit algorithms by drawing inspiration from low-switching cost RL literature, we cannot guarantee that the periods are sufficiently long without buffer periods. Indeed, we can use the smallest eigenvalue of the covariance matrix to determine when to start a new period. However, it is impossible to determine a priori the rate at which the smallest eigenvalue grows. Buffer periods ensure that each period is sufficiently long, deferring any utility gain from untruthful reporting. Combined with the bidders' discount rate, a combination of $\pi_{\rm rand}$ and buffer periods ensures that the bidders behave approximately truthfully. The technique is detailed in~\Cref{algo:buffer}. 

\begin{algorithm}[hbt]
    \caption{ \underline{C}ontextual-\underline{L}SVI-\underline{U}CB-\underline{B}uffer (CLUB) with Known $F$}
    \label{algo:KnownF}
    \begin{algorithmic}[1]
      \STATE Initialize policy estimate $\pi_{0}$, buffer period counter $\tilde{k} = 0$, buffer period starting points $\mathtt{buffer.s}(0) = 1$, and buffer period end points $\mathtt{buffer.e}(0) = 1$.
      \FOR {episodes $k = 1, \ldots, K$}
      \STATE Execute mixture policy $\frac{1}{HK}\circ\pi_{\rm rand}+(1-\frac{1}{HK})\circ \pi_{\tilde{k}}$, collecting outcomes $q_{ih}^{\tau}$ and updating matrices $\Lambda_h^k\leftarrow \sum_{\tau = 1}^{k}\phi(x^{\tau}_h,\upsilon^{\tau}_h)\phi(x^{\tau}_h,\upsilon^{\tau}_h)^T + I$ for all $h \in [H]$.
      \STATE If there exists $h \in [H]$ such that $(\Lambda_h^{\mathtt{buffer.e}(\tilde{k})})^{-1} \not \preceq 2(\Lambda_h^k)^{-1}$, schedule a new buffer period starting at $\mathtt{buffer.s}(\tilde{k} + 1) = k$ and ending at $\mathtt{buffer.e}(\tilde{k} + 1) = k + \frac{3\log K}{\log(1/\gamma)}$ using~\Cref{algo:buffer}\re{, and set $k\leftarrow \mathtt{buffer.e}(\tilde{k} + 1)$.} \label{algoline:unknownFupdate}
      \ENDFOR
    \end{algorithmic}
\end{algorithm}

With buffer periods defined, we summarize CLUB's update schedule in~\Cref{algo:KnownF} and include~\Cref{fig:my_label} for visual representation. Let $\frac{1}{HK}\circ\pi_{\rm rand}+(1-\frac{1}{HK})\circ \pi_{\tilde{k}}$ represent a mixture policy combining $\pi_{\rm rand}$ and $\pi_{\tilde{k}}$ where for each $h$, with probability $\frac{1}{HK}$ we act according to $\pi_{\rm rand}$ and with probability $1 - \frac{1}{HK}$ according to $\pi_{\tilde{k}}$. For convenience, we assume $\mathtt{buffer.e}(\tilde{k})$ is an integer, as rounding up $\mathtt{buffer.e}(\tilde{k})$ does not affect asymptotic regret. Unlike a typical low switching cost RL algorithm,~\Cref{algo:estimate} further delays updating for $\frac{3\log K}{\log(1/\gamma)}$ episodes after the switching criterion in Line~\ref{algoline:unknownFupdate} is satisfied. 
\begin{figure}[bth]
    \centering
    \includegraphics[width=0.75\linewidth]{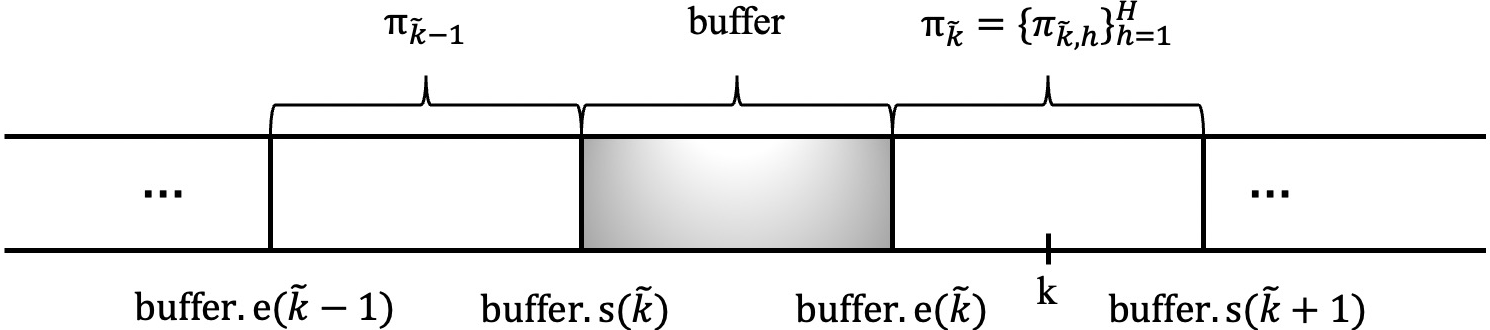}
    \caption{Learning periods and buffer periods\re{: $\mathtt{buffer.s(\cdot)}$ and $\mathtt{buffer.e(\cdot)}$ represent the start point and the end point of a buffer, respectively. Episode $k$ lays between $\mathtt{buffer.e(\tilde k)}$ and $\mathtt{buffer.s(\tilde k+1)}$ and the length of each buffer is $\frac{3\log K}{\log(1/\gamma)}$.}}
    \label{fig:my_label}
\end{figure}

The mixture policy sufficiently punishes untruthfulness. Combined with buffer periods (\Cref{algo:buffer}) and the update schedule (Line~\ref{algoline:unknownFupdate}),~\Cref{algo:KnownF} also limits the discounted utility bidders gain from untruthfulness, thereby curbing excessive overbidding and/or underbidding. \re{Line~\ref{algoline:unknownFupdate} represents a kind of lazy update. We only calculate the new Q-function when at least one eigenvalue decays by half, restricting the total number of updates and benefiting the construction of high probability union regret bounds. At the same time, we wait for the length of buffer periods before updating to motivate truthful bidding.} While $\pi_{\rm rand}$ is suboptimal, the mixture policy ensures that it is not executed too many times, reducing its damage to revenue.

With the techniques discussed above, namely Algorithms~\ref{algo:pizero},~\ref{algo:buffer}, and~\ref{algo:KnownF}, {we now have sufficiently addressed our first challenge}, obtaining approximately truthful reports in the face of strategic bidders. We now turn to tackling the third challenge outlined in the abstract: provably efficient reinforcement learning even when the per-step revenue is nonlinear.

{\noindent \re{\textbf{Addressing Challenges 2 and 3: Regret Minimization and Nonlinear Revenue.}}}
Having shown that our algorithm punishes untruthful behavior, we begin by showing that the resulting reports are sufficiently truthful for obtaining accurate parameter estimates. \re{It's still quite intricate as regret depends on both the state, action related to the transition kernel, and reserve prices. Traditional point estimation with uncertainty quantity is not enough since we need to not only combine the structure of the underlying MDP and coordinate with buffer periods, the so-called lazy updates, but also consider the small proportion of untruthful bids.} Whereas LSVI-UCB directly learns from empirical rewards, here we use indicators $q_{ih}^{k}$, which we recall is one if bidder $i$ receives the item at episode $k$ step $h$ and zero otherwise. As we cannot guarantee that the empirical covariance matrix is positive definite, existing techniques in~\citet{amin2014repeated, golrezaei2019dynamic} cannot be applied.
We instead have
\begin{equation}
    \label{algo:thetahat}
    \hat \theta_{ih}=\argmin_{\|\theta\|\le 2\sqrt{d}}\sum_{\tau=1}^{\mathtt{buffer.e}(\tilde{k}+1)} (q_{ih}^\tau -1 +F(m_{ih}^\tau   -1-\langle\phi(x_h^\tau,\upsilon_h^\tau),\theta\rangle))^2,
\end{equation}
where $\rho_{ih}^{\tau}$ is agent $i$'s reserve price and $m_{ih}^{\tau} = \max\{\max_{j \neq i}b_{ih}^{\tau}, \rho_{ih}^{\tau}\}$.~\Cref{algo:thetahat} is justified by the observation that, assuming that he bids truthfully, bidder $i$ wins the auction with probability $1 - F(m_{ih}^\tau - 1 - \langle\phi(x_h^\tau,\upsilon_h^\tau),\theta\rangle)$, conditioned on $x_h^\tau, \upsilon_h^\tau$, and $m_{ih}^\tau$. Controlling the uncertainty around $\hat{\theta}_{ih}$ then resembles controlling the uncertainty of a generalized linear model with $F(\cdot)$ being the link function. As bidders need to overbid or underbid significantly to alter the outcome of the auction, $\hat{\theta}_{ih}$ is less susceptible to untruthfulness.

While we use a typical linear function approximation assumption, the seller's revenue function $R_{h}$ is not linear, and we cannot directly apply existing approaches. We instead directly estimate $R_{h}$ and link our uncertainty on the seller's revenue to the typical linear MDP uncertainty quantifier, summarized~\Cref{algo:estimate}.

\begin{algorithm}[bht]
    \caption{Estimation of $\hat Q_h^{\pi_{\tilde k+1}}(\cdot,\cdot)$}
    \label{algo:estimate}
    \begin{algorithmic}[1]
        \STATE Estimate $\hat \theta_{ih}$ using \Cref{algo:thetahat} and set $\hat \mu_{ih}(\cdot,\cdot)\leftarrow \langle\phi(\cdot,\cdot),\hat \theta_{ih}\rangle$ for all $i, h$.
        \label{algoline:thetaest}
        \STATE Estimate reserve price $\hat \rho_{ih}(\cdot,\cdot)=\argmax_y{y(1-F(y-1-\hat \mu_{ih} (\cdot,\cdot)))}$ for all $i, h$.\label{algoline:rhoest}
        \STATE Estimate revenue $\hat R_h(\cdot,\cdot)\leftarrow \E[\max\{\tilde b_h^{-}(\cdot,\cdot),\hat \rho_h^+(\cdot,\cdot)\}\ind(\tilde b_h^{+}(\cdot,\cdot)\ge \hat \rho^+_h(\cdot,\cdot) )]$.\label{algoline:r_est}
        \FOR {$h=H,\ldots,1$ \textbf{do} \COMMENT{Estimate $Q$-function and optimal policy.} \killdo} \label{algoline:p_est_start}
        \STATE $\Lambda_h\leftarrow \sum_{\tau=1}^{\textrm{buffer.e} (\tilde{k}+1)}\phi(x_h^\tau,\upsilon_h^\tau)\phi(x_h^\tau,\upsilon_h^\tau)^T+\lambda I$.\hfill $\rhd$ We set $\lambda=1$ in this paper.
        \STATE $\omega_h \leftarrow \Lambda_h^{-1} \sum_{\tau=1}^{\textrm{buffer.e} (\tilde{k}+1)} \phi(x_h^\tau,\upsilon_h^\tau)[\max_{\upsilon} \hat Q_{h+1} (x_{h+1}^\tau,\upsilon)]$.
        \STATE  $\hat Q^{\pi_{\tilde{k}+1}}_h(\cdot,\cdot)\leftarrow \min\{\omega_h^T\phi(\cdot,\cdot)+\hat R(\cdot,\cdot)+\poly (\log K)\|\phi(\cdot,\cdot)\|_{\Lambda_h^{-1}},3H\}$. \label{algoline:ucb}
        \STATE $\pi^{\upsilon}_{\tilde{k}+1,h}(\cdot)\gets\argmax_v \hat Q^{\pi_{\tilde k+1}}_h(\cdot,v)$.
        \STATE $\pi^{\rho_i}_{\tilde{k}+1,h}(\cdot)\leftarrow \hat  \rho_{ih}(\cdot,\pi^{\upsilon}_{\tilde{k}+1,h}(\cdot)) $.
        \ENDFOR \label{algoline:p_est_end}
        \STATE Return $\{\hat Q^{\pi_{\tilde k+1}}_h(\cdot,\cdot)\}_{h=1}^H$ and $\{\pi_{\tilde k+1,h}(\cdot)\}_{h=1}^H$.
    \end{algorithmic}
\end{algorithm}

We let $\tilde b^{+}$ and $\rho^+$ denote the highest truthful bid and the highest reserve price, respectively. Similarly, let $\tilde b^{-}$ and $\rho^-$ denote the second-highest. \Cref{algo:estimate} estimates the Q-function optimistically by dividing the problem into two halves: per-step revenue estimation (Lines~\ref{algoline:thetaest} to~\ref{algoline:r_est}) and transition estimation (Lines~\ref{algoline:p_est_start} to~\ref{algoline:p_est_end}). In the first half, we use~\Cref{algo:thetahat} to estimate all $\theta_{ih}$, which in turn gives estimates for bidders' rewards in the form of $\hat{\mu}_{ih}$. We then feed the reward function estimates to Line~\ref{algoline:rhoest}, yielding an estimate for the optimal reserve price. With Algorithms~\ref{algo:pizero},~\ref{algo:buffer}, and~\ref{algo:KnownF}, the effects of untruthful reports are controlled, and we can ensure that the revenue estimate is sufficiently close to the ground truth. With $\rho_{ih}$ estimated, we then obtain revenue estimates for all states and item choices via Line~\ref{algoline:r_est}. Consequently, we decide both nearly optimal reserve prices and the order of items, addressing the second challenge of regret minimization. 

While the rest of~\Cref{algo:estimate} resembles a typical LSVI-UCB algorithm~\citep{jin2020provably}, we highlight several key differences. First, we use the plug-in revenue estimate, whereas existing works estimate the Q-function with the empirically observed rewards. To accommodate the plug-in estimate, here $\omega_{h}$ estimates $\PP_{h} V_{h + 1}$, the transition operator applied to the V-function, as opposed to $\BB_{h} V_{h + 1}$, which uses the Bellman evaluation operator instead. Lastly, in Line~\ref{algoline:ucb} we link the uncertainty of revenue to the uncertainty bonus typically seen in linear MDPs, thereby obtaining an optimistic estimate of the Q-function induced by revenue. We conjecture the transition estimation procedure in \Cref{algo:estimate} can be changed to other suitable online RL algorithms under other function approximation assumptions.

In summary, in this section, we address the first and third challenges. The first challenge is addressed mainly by a novel technique dubbed ``buffer periods" and the third one through nontrivial extensions to the LSIV-UCB framework. \re{By combining the loss from incentivizing a truthful mechanism and learning the underlying model to set reserve prices, we get the final \Cref{algo:KnownF}, which explores efficiently and achieves the following regret upper bound, and then addresses the second challenge of regret minimization.}

\subsection{Regret Bound When \texorpdfstring{$F(\cdot)$}{} is Known}
We introduce the following assumptions before we bound the regret. These regularity assumptions are commonly found in economics literature \citep{kleiber2003statistical, bagnoli2006log}.

\begin{assumption}\label{assumptionf}
    Market noise pdf $f$ is bounded, i.e. there exist constants $c_{1}, C_1$ such that $c_{1} \leq f\leq C_1$.
\end{assumption}

\begin{assumption}\label{assumptionfdiff}
    Market noise pdf $f$ is differentiable and its derivative is bounded. That is, there exists a constant $L$ such that $|f'|\le L$.
\end{assumption}

\begin{assumption}\label{logconcave}
    Market noise cdf $F(\cdot)$ and $1-F(\cdot)$ are log-concave.
\end{assumption}

At a high level, Assumptions~\ref{assumptionf} and~\ref{assumptionfdiff} ensure that the pdf $f$ is generally well-behaved, namely, bounded and smooth. Assumption~\ref{logconcave} is a popular assumption in economics that ensures the validity of the Myerson lemma~\citep{myerson1981optimal,kleiber2003statistical,bagnoli2006log}. We further remark that these assumptions are mild and are satisfied by commonly used distributions such as the truncated Gaussian distribution and the uniform distribution~\citep{golrezaei2019dynamic}.
\re{\begin{remark}
    We note that \Cref{logconcave} is in fact made redundant by \Cref{assumptionf} because we have a quite ``smooth'' distribution with bounded differential. Then, once we have a good estimation for the parameters, ``smooth'' $F(\cdot)$ leads to a good estimation of the reward function. Nevertheless, we retain this assumption as it streamlines our proof by avoiding discussion of market stability with multi-optimal reserve prices and getting bogged down in tedious regret decomposition. 
\end{remark}}

We are now ready to state our results.
If we set $\poly (\log K)=C_7+C_6 H \log^2K$ in \Cref{algo:estimate}, where constant $C_6$ is determined in \Cref{lem:estimater} and constant $C_7=B_8 H^{\frac{3}{2}} \log K$ with constant $B_8$ determined in \Cref{lem:omegaandQ}, then we have \Cref{thm:knownf}.

\begin{theorem}\label{thm:knownf}
Under \Cref{assumption:linearmdp}, \ref{assumptionf}, \ref{assumptionfdiff} and \ref{logconcave}, for any fixed failure probability $\delta\in(0,1)$, with probability at least $1 - \delta$, \Cref{algo:KnownF} achieves at most $\tilde \cO (\sqrt{H^5K})$ revenue regret, where $\tilde \cO (\cdot)$ hides only absolute constants and logarithmic terms.
\end{theorem}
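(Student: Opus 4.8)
The plan is to run the optimism-in-the-face-of-uncertainty template for linear MDPs (\citet{jin2020provably}), after first splitting the regret so that the three nonstandard ingredients — buffer periods, the randomized-price branch $\pi_{\rm rand}$, and the nonlinear unobservable revenue — become either a lower-order overhead or a controllable estimation error. Write $\text{Regret}(K)=\sum_k\text{SubOpt}_k(\pi_k)$ and partition $[K]$ into episodes that (i) fall inside some buffer period, (ii) actually draw the $\pi_{\rm rand}$ branch at some step, and (iii) the remaining ``clean'' episodes in which the greedy $\pi_{\tilde k}$ is executed. For (i): the switching test $\Lambda_h^k\succeq 2\Lambda_h^{\mathtt{buffer.e}(\tilde k)}$ in line~\ref{algoline:unknownFupdate} can fire at most $O(H\log K)$ times overall, since each firing at coordinate $h$ multiplies $\det\Lambda_h$ by at least $2^d$ while $1\le\det\Lambda_h^k\le((K+d)/d)^d$; hence there are $O(H\log K)$ buffers, each of length $\tfrac{3\log k}{\log(1/\gamma)}=O(\log K)$, so the clean episodes are all but $\tilde\cO(H)$ of the $K$ episodes and the buffers contribute at most $\tilde\cO(H^2)$ to the regret, each episode costing at most $3H$. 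For (ii): the $\pi_{\rm rand}$ branch is drawn with probability $\le 1/K$ per episode, so with high probability only $\tilde\cO(1)$ episodes are affected (cost $\tilde\cO(H)$), and $|V_1^{\rm mixture}-V_1^{\pi_{\tilde k}}|\le 3H/K$ absorbs the gap between the executed mixture and $\pi_{\tilde k}$. It remains to bound $\sum_{k\text{ clean}}\bigl(V_1^{\pi^\star}(x_1)-V_1^{\pi_{\tilde k}}(x_1)\bigr)$.

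\textbf{Incentives: bids are approximately truthful.} I would show that under \Cref{algo:KnownF} every rational impatient bidder misreports by at most $\varepsilon_k$ with $\sum_k\varepsilon_k=\tilde\cO(1)$, coupling three facts: (a) manipulating the seller's deployed policy changes a bidder's utility in any single future episode by at most $3H$; (b) because a full buffer period of length $\tfrac{3\log k}{\log(1/\gamma)}$ is inserted before any policy switch, a manipulation at episode $k$ cannot affect the deployed policy for that many episodes, so its discounted value is at most $\gamma^{3\log k/\log(1/\gamma)}\cdot 3H = 3Hk^{-3}$, and $\sum_k 3Hk^{-3}=\tilde\cO(1)$; and (c) the $\pi_{\rm rand}$ branch, together with the density lower bound $c_1\le f$ of \Cref{assumptionf}, charges an immediate expected penalty that is strictly increasing in the size of any misreport, so the bidder's best response keeps misreports at the scale that balances (b) against (c). I expect this to be the most delicate step.

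\textbf{Estimation and optimism.} Conditioned on approximate truthfulness, \eqref{algo:thetahat} is a constrained generalized-linear-model fit with link $F$; \Cref{assumptionf,assumptionfdiff} ($c_1\le f\le C_1$, $|f'|\le L$) make the loss curved along the explored feature directions, so a self-normalized concentration bound controls $|\hat\mu_{ih}-\mu_{ih}|$ there, up to the aggregate-$\tilde\cO(1)$ slack left by residual untruthfulness and the reconciliation between the empirical covariance of \eqref{algo:thetahat} and the regularized $\Lambda_h$. Propagating through line~\ref{algoline:rhoest} (the revenue objective $y\mapsto y(1-F(y-1-\mu))$ is Lipschitz with a stable maximizer on the compact range $[0,3]$, so $|\hat\rho_{ih}-\rho_{ih}^\star|$ is controlled similarly) and line~\ref{algoline:r_est} (the auction revenue is Lipschitz in the reserves and in $\mu$) gives the \Cref{lem:estimater}-type bound $|\hat R_h(x,\upsilon)-R_h(x,\upsilon)|\le C_6H\log^2K\,\|\phi(x,\upsilon)\|_{\Lambda_h^{-1}}+\tilde\cO(1)$, where the factor $H$ tracks the $3H$ one-shot value of a manipulation. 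For the transition part, a self-normalized bound plus a covering argument over the class of $\hat V_{h+1}$ iterates controls $|\omega_h^\top\phi-(\PP_h\hat V_{h+1})(x,\upsilon)|$ by $C_7\|\phi\|_{\Lambda_h^{-1}}$ with $C_7=B_8H^{3/2}\log K$ (the \Cref{lem:omegaandQ}-type bound). Thus the bonus $\poly(\log K)\|\phi\|_{\Lambda_h^{-1}}=(C_7+C_6H\log^2K)\|\phi\|_{\Lambda_h^{-1}}$ dominates both errors, giving $\hat Q_h^{\pi_{\tilde k}}\ge\mathcal{B}_h\hat V_{h+1}\ge Q_h^{\pi^\star}$ pointwise, hence $\text{SubOpt}_k\le\hat V_1^{\pi_{\tilde k}}(x_1)-V_1^{\pi_{\tilde k}}(x_1)$.

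\textbf{Telescoping, summation, and the main obstacle.} Telescoping the optimism gap along the realized clean trajectory, standard linear-MDP algebra gives
\[
\hat V_1^{\pi_{\tilde k}}(x_1)-V_1^{\pi_{\tilde k}}(x_1)\le\sum_{h=1}^H 2\poly(\log K)\,\bigl\|\phi(x_h^k,\upsilon_h^k)\bigr\|_{\Lambda_h^{-1}}+\xi_k,
\]
with $\{\xi_k\}$ a martingale-difference sequence (increments $\le 3H$) summing to $\tilde\cO(\sqrt{H^3K})$ by Azuma--Hoeffding. Summing over clean episodes and using the elliptical potential lemma, $\sum_{k=1}^K\|\phi(x_h^k,\upsilon_h^k)\|_{\Lambda_h^{-1}}\le\sqrt{K\sum_k\|\phi\|_{\Lambda_h^{-1}}^2}=\tilde\cO(\sqrt K)$; multiplying by the $\tilde\cO(H^{3/2})$ bonus magnitude and summing over $h\in[H]$ produces the leading $\tilde\cO(H\cdot H^{3/2}\cdot\sqrt K)=\tilde\cO(\sqrt{H^5K})$, which together with the $\tilde\cO(\sqrt{H^3K})$ martingale term and the $\tilde\cO(H^2)$ overhead from the first step gives the claim. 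The crux is the incentive analysis above together with the first half of the estimation step: making the buffer-length / discount / $\pi_{\rm rand}$-penalty trade-off quantitative, and then carrying the small residual untruthfulness through the \emph{unregularized} empirical covariance of \eqref{algo:thetahat} so that it only feeds the $\tilde\cO(1)$ slack and never the leading $\sqrt K$ term — which is also exactly why the transition half of \Cref{algo:estimate} keeps its own $\lambda I$-regularized $\Lambda_h$, preserving optimism on unexplored feature directions.
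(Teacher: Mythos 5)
Your plan follows the same template as the paper's proof: split the regret into a buffer-period piece, a $\pi_{\rm rand}$ piece, and a main piece handled by LSVI-UCB optimism with a bonus $(C_7+C_6H\log^2K)\|\phi\|_{\Lambda_h^{-1}}$, and your $H$-scaling bookkeeping ($H\cdot H^{3/2}\cdot\sqrt K$ plus a lower-order martingale) lands on the same $\tilde\cO(\sqrt{H^5K})$ for the same reasons (cf. the paper's $\Delta_1,\ldots,\Delta_5$ decomposition and Lemmas~\ref{lem:buffer}, \ref{numofpi0}, \ref{lem:estimater}, \ref{lem:omegaandQ}, \ref{lem:lsvi}). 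Two places, however, are left at a level that would not survive scrutiny without the additional bookkeeping the paper does.

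First, your three-way partition (buffer / $\pi_{\rm rand}$ / clean) collapses what the paper keeps separate: the suboptimality of $\pi_{\tilde k}$ measured against the \emph{truthful counterfactual} value $\tilde V$ (the paper's $\Delta_1$), the episodes where an over/underbid actually flips an auction outcome (the paper's $\Delta_4$, controlled via the set $\mathtt L$ of Lemma~\ref{lem:bound_lie}), and the $\tilde V - V$ gap in episodes where the outcome is unchanged but payments shift slightly (the paper's $\Delta_5$, controlled via Lemma~\ref{overbid}). Your optimism telescope $\hat V_1^{\pi_{\tilde k}}-V_1^{\pi_{\tilde k}}\le \sum_h 2\,\poly(\log K)\|\phi\|_{\Lambda_h^{-1}}+\xi_k$ is only valid against the Bellman backup with the \emph{truthful} revenue; the actual $V_1^{\pi_{\tilde k}}$ that enters the regret is the revenue under strategic bids. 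If you instead try to absorb this into a per-state ``$+\,\tilde\cO(1)$'' slack in $|\hat R_h-R_h|$, that slack is incurred every episode and sums to $\tilde\cO(K)$, destroying the bound. What makes the paper's argument work is (i) the per-episode over/underbid magnitude is $\le C_3 H/K$ (Lemma~\ref{overbid}), so $\Delta_5\le C_3H^2$ after summing over $K$ episodes and $H$ steps, (ii) outcome flips are rare, $|\mathtt L|\le C_4H\log^2K$, so $\Delta_4=\tilde\cO(H^2)$, and (iii) the ``six-lie'' Lemma~\ref{6lie} shows the GLM loss under untruthful outcomes differs from the truthful one by at most $6\mathtt L$, so the $\hat\theta$ estimate degrades only by a $\tilde\cO(\sqrt{H}\log K)$ term, not by a per-episode amount. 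Your proposal gestures at this (``only feeds the $\tilde\cO(1)$ slack and never the leading $\sqrt K$ term'') but does not supply the mechanism, and this is precisely where the paper's Lemmas~\ref{overbid}, \ref{lem:bound_lie}, and~\ref{6lie} live.

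Second, a small factual slip: the switching test $\Lambda_h^{\mathtt{buffer.e}(\tilde k)}{}^{-1}\succeq 2(\Lambda_h^k)^{-1}$ raises $\det\Lambda_h$ by a factor of at least $2$, not $2^d$ (Lemma~\ref{doubledet}, from \citet{gao2021provably}), so the number of switches at a fixed $h$ is $\cO(d\log K)$ rather than $\cO(\log K)$. This is absorbed into $\tilde\cO(\cdot)$ here, since only $H,K$ dependence is made explicit, but the claimed factor $2^d$ is not what the determinant lemma gives.
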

\begin{proof}
See \Cref{sec:proofknownf} for a detailed proof.
\end{proof}

As we discussed previously, when $H = 1$, our result cannot be compared to existing works that focus on the stochastic bandit setting due to our need to explore the action space $\Upsilon$ (see~\citet{broder2012dynamic,drutsa2020reserve,drutsa2017horizon,golrezaei2019dynamic} for works that achieves $\tilde{\cO}(1)$ revenue regret in the stochastic bandit setting). The closest work we are aware of is~\citet{cesa2014regret}, which obtains a similar $\tilde{\cO}(\sqrt{K})$ regret in the adversarial multi-armed bandit setting, matched by our bounds.

\section{Unknown Market Noise Distribution}\label{sec:unknownF}
We now discuss when the market noise distribution is unknown. Recall from previous discussions that our second challenge lies in minimizing revenue regret when the market noise distribution is unknown. Existing techniques, similar to the one in~\citet{golrezaei2019dynamic}, incorporate pure exploration rounds to address the challenge, yet necessitate a $\tilde{\cO}(K^{2/3})$ revenue regret. In this section, we instead introduce a novel technique dubbed ``simulation", which eliminates the need for pure exploration rounds and achieves instead a $\tilde{\cO}(\sqrt{K})$ regret. While the first and third challenges have been previously addressed, the approaches in Section~\ref{sec:KnownF} also require careful adjustments, as the unknown market noise distribution makes a direct application of these approaches impossible. We detail our techniques and procedures in the rest of this section.

\subsection{CLUB Algorithm When \texorpdfstring{$F(\cdot)$}{} is Unknown}
\re{Similarly, there are three steps to do auction design when $F(\cdot)$ is unknown. First, we leverage \Cref{algo:pizero} and \Cref{algo:buffer_unknownF} to motivate an approximately truthful mechanism. Second, we utilize \Cref{algo:estimateunknownF} in coordination with the newly proposed \Cref{algo:simulation} to estimate the underlying MDP and set reserve prices. We motivate truthfulness through buffer periods and quantify the uncertainty by constructing corresponding ellipsoid bounds. Finally, we add up all these uncertainties and minimize regret with high probability.}

{\noindent \textbf{Addressing Challenge 1: Untruthfulness.}} When the market noise distribution is unknown, the techniques used in Section~\ref{sec:KnownF} cannot be applied directly, necessitating careful adaptations. We summarize the changes to these techniques, beginning by introducing~\Cref{algo:buffer_unknownF}, the counterpart to~\Cref{algo:buffer}, for when $F(\cdot)$ is unknown. The key difference lies in the optimization subroutine called in Line~\ref{algoline:buffer_unknownF_estimate}, which is required for addressing the third challenge when the market noise distribution $F(\cdot)$ is unknown.

\begin{algorithm}[htbp]
    \caption{Buffer Period with Unknown $F(\cdot)$}
    \label{algo:buffer_unknownF}
    \begin{algorithmic}[1]
      \STATE Receives buffer start $\mathtt{buffer.s}(\tilde{k} + 1) = k$ and end $\mathtt{buffer.e}(\tilde{k} + 1) = k + \frac{3\log K}{\log(1/\gamma)}$.
      \STATE Do nothing for all episodes $\mathtt{buffer.s}(\tilde{k} + 1) \leq k < \mathtt{buffer.e}(\tilde{k} + 1)$, i.e., do nothing during the buffer period before the end.
      \STATE At the end of the buffer period, update policy estimate $\pi_{\tilde{k} + 1}$ and Q-function estimate $\hat{Q}_h^{\pi_{\tilde{k} + 1}}(\cdot,\cdot)$ using {\Cref{algo:estimateunknownF}}, and then increment buffer period counter $\tilde{k} \gets \tilde{k} + 1$.\label{algoline:buffer_unknownF_estimate}
    \end{algorithmic}
\end{algorithm}

We then discuss~\Cref{algo:unKnownF}, a close variant of~\Cref{algo:KnownF}, whose biggest change lies in the update schedule in Line~\ref{algoline:unknownFupdate}. \Cref{algo:KnownF} maintains only an accurate estimate of the underlying MDP, achieved with a low switching cost style update schedule, which in turn deters untruthful bidding. On the other hand, \Cref{algo:unKnownF} needs accurate estimates of both the MDP and the market noise distribution $F(\cdot)$. We force additional updates whenever $k$ is a power of 2, also ensuring that $\hat{F}(\cdot)$ is close to $F(\cdot)$. As the number of updates remains in $\cO(\log K)$, the extraneous updates do not affect the regret asymptotically.

\begin{algorithm}[htbp]
    \caption{ \underline{C}ontextual-\underline{L}SVI-\underline{U}CB-\underline{B}uffer (CLUB) with Unknown $F$}
    \label{algo:unKnownF}
    \begin{algorithmic}[1]
      \STATE Initialize policy estimate $\pi_{0}$, buffer period counter $\tilde{k} = 0$, buffer period starting points $\mathtt{buffer.s}(0) = 1$, and buffer period end points $\mathtt{buffer.e}(0) = 1$.
      \FOR {episodes $k = 1, \ldots, K$}
      \STATE Execute mixture policy $\frac{1}{HK}\circ\pi_{\rm rand}+(1-\frac{1}{HK})\circ \pi_{\tilde{k}}$, collecting outcomes $q_{ih}^{\tau}$ and updating matrices $\Lambda_h^k\leftarrow \sum_{\tau = 1}^{k}\phi(x^{\tau}_h,\upsilon^{\tau}_h)\phi(x^{\tau}_h,\upsilon^{\tau}_h)^T + I$ for all $h \in [H]$.
      \STATE If there exists $h \in [H]$ such that $(\Lambda_h^{\mathtt{buffer.e}(\tilde{k})})^{-1} \not \preceq 2(\Lambda_h^k)^{-1}$ {or $\log_{2}(k)$ is an integer}, schedule a new buffer period starting at $\mathtt{buffer.s}(\tilde{k} + 1) = k$ and ending at $\mathtt{buffer.e}(\tilde{k} + 1) = k + \frac{3\log K}{\log(1/\gamma)}$ {using~\Cref{algo:buffer_unknownF}}\re{, and set $k\leftarrow \mathtt{buffer.e}(\tilde{k} + 1)$.}  \label{algoline:unknownFupdate_alt}
      \ENDFOR
    \end{algorithmic}
\end{algorithm}

Similar to Section~\ref{sec:KnownF}, these techniques, namely the buffer periods and the update schedule, ensure that the impatient bidders are sufficiently truthful. However, for estimating $\theta_{ih}$, as we do not know $F(\cdot)$, the optimization problem in~\Cref{algo:thetahat} no longer applies. Fortunately, we know that whenever $\pi_{\rm rand}$ is executed, assuming the bidders are truthful, $\Pr(q_{i}^\tau = 1) = \frac{1}{3N}(2 - \langle \phi(x_h^\tau, \upsilon_h^\tau),\theta\rangle)$ conditioned on $x_h^\tau, \upsilon_h^\tau$, as the bidder $i$ and the reserve price $\rho_{ih}^\tau$ are drawn uniformly at random. Leveraging this observation, we quickly realize that we can simply use the outcomes from when $\pi_{\rm rand}$ is executed to estimate the bidders' rewards, even when $F(\cdot)$ is unknown. Unfortunately, using the observation naively introduces the second challenge: minimizing revenue regret when $F(\cdot)$ is unknown.

{\noindent \textbf{Addressing Challenge 2: Regret Minimization.}} An intuitive way to incorporate the previous observation is to simply perform pure exploration rounds with $\pi_{\rm rand}$, similar to the technique in~\citet{golrezaei2019dynamic}. However, doing so incurs $\tilde{\cO}(K^{2/3})$ revenue regret, as $\pi_{\rm rand}$ does not set the reserve prices optimally and we are not exploring and exploiting simultaneously. To balance exploration and exploitation, we propose a new technique that we dub ``simulation", which allows us to continue exploiting with the mixture policy.

\begin{algorithm}[htbp]
    \begin{minipage}{1\textwidth}
    \caption{Simulation}
    \label{algo:simulation}
    \begin{algorithmic}[1]
        \FOR{$h = 1, \ldots, H$ and $\tau = 1, \ldots, K$}
            \STATE Generate virtual reserve prices $\tilde{\rho}_{ih}^\tau$ by selecting one bidder $i\in[N]$ uniformly at random. Let $\tilde{\rho}_{ih}^\tau \sim \textrm{Unif}([0, 3])$ and set all other reserve prices to infinity, i.e. $\tilde{\rho}_{jh}^\tau = \infty$ for all $j \neq i$.
            \STATE Use real bidding data $b_{ih}^\tau$ and simulated reserve prices $\tilde{\rho}_{ih}^\tau$ to simulate outcome $\tilde{q}_{ih}^\tau$ for all  $i \in [N]$, namely set $b_{ih}^\tau = \ind(b_{ih}^\tau \geq \tilde{\rho}_{ih}^\tau)$ for all $i \in [N]$.
        \ENDFOR
        \STATE Return the simulated outcomes $\{\tilde q_{ih}^k\}$.
    \end{algorithmic}
    \end{minipage}
\end{algorithm}

Here we introduce a new random variable $\tilde{q}_{ih}^\tau = \ind(b_{ih}^\tau \geq \tilde{\rho}_{ih}^\tau)$, where for each $h, \tau$ we select one $i \in [N]$ uniformly at random and then draw $\tilde{\rho}_{ih}^\tau$ from $\textrm{Unif}([0, 3])$. For all $j \neq i$ we set $\tilde{\rho}_{ih}^\tau$ to $\infty$. At a high level, $\tilde{q}_{ih}^\tau$ ``simulates" executing $\pi_{\rm rand}$: holding $x_h^\tau$ and $\upsilon_h^\tau$ constant, what would be the outcome if we were to act according to $\pi_{\rm rand}$ instead? As we do not need to execute $\pi_{\rm rand}$, revenue regret can be decreased. Furthermore, $\tilde{q}_{ih}^\tau$ still enjoys the same resilience towards untruthful reporting that $q_{ih}^\tau$ does. Indeed, when the bidder overbid or underbid by a small amount, the number of times $\tilde{q}_{ih}^{\tau}$ changes could be controlled effectively. 

More technically,~\Cref{algo:simulation} is critical for two reasons. First, the difference between $\hat{F}(\cdot)$ and $F(\cdot)$ decays at a rate of $O(1/\sqrt{K})$. If we simply use \Cref{algo:thetahat}, only replacing $F(\cdot)$ with $\hat{F}(\cdot)$, the estimation error is roughly on the order of $\tilde{\cO}(\sqrt{{\rm buffer.e}(\tilde k+1)})$ which precludes achieving $\tilde{\cO}(\sqrt{K})$ regret. Second, replacing $\tilde{q}_{ih}^\tau$ with $q_{ih}^\tau$ does not work, as we need to de-bias the estimator when we switch from $F(\cdot)$ to the uniform distribution induced by $\pi_{\rm rand}$. Even when the bidders report truthfully, we cannot guarantee that $\Pr({q}_{ih}^\tau = 1 \given x_{h}^\tau, \upsilon_{h}^\tau)$ could be related to $\frac{1}{3N}(1 + \langle \phi(x_h^\tau, \upsilon_h^\tau), \theta_{ih}\rangle)$. Consequently, it would be hard to ensure that when all bidders are truthful, the estimator $\hat{\theta}_{ih}^\tau$ would converge to $\theta_{ih}$.

{\noindent \textbf{Addressing Challenge 3: Nonlinear Revenue.}} 
With the first challenge addressed by carefully adjusting techniques in Section~\ref{sec:KnownF} and the second by the simulation technique detailed in~\Cref{algo:simulation}, we now discuss the third challenge: provably efficient reinforcement learning when the revenue is nonlinear and $F(\cdot)$ is unknown. We start with summarizing how we simultaneously estimate $\theta_{ih}$ and $F(\cdot)$ in the form of~\Cref{algo:Fandthetahat}.
\begin{equation}
    \label{algo:Fandthetahat}
    \begin{aligned}
        &\hat{\theta}_{ih}=\argmin_{\|\theta\|\le 2\sqrt{d}}\sum_{\tau=1}^{\mathtt{buffer.e}(\tilde{k}+1)} (3N\tilde{q}_{ih}^\tau - (1 + \langle\phi(x_h^\tau,\upsilon_h^\tau),\theta\rangle))^2,\\
        &\hat F(z)=\frac{1}{N\mathtt{buffer.e}(\tilde{k} + 1)H}\sum_{i=1}^N \sum_{\tau=1}^{\mathtt{buffer.e}(\tilde{k} + 1)} \sum_{h=1}^H \ind(b_{i\tau h}-1-\langle \phi_h^\tau ,\hat \theta_{ih}\rangle\le z).
    \end{aligned}
\end{equation} We note that we are simply using a histogram to estimate $F(\cdot)$ and, as we have successfully decoupled the estimation error of $F(\cdot)$ from that of $\theta_{ih}$, using histogram is sufficient for achieving $\tilde{\cO}(\sqrt{K})$ revenue regret. We then introduce~\Cref{algo:estimateunknownF}, whose key difference with ~\Cref{algo:estimate} lies in the added uncertainty due to $\hat{F}(\cdot)$ and the inclusion of the simulation subroutine. Similar to Section~\ref{sec:KnownF}, the procedure then provides us with sufficiently accurate policy and Q-function estimates, resolving our third and final challenge.
\begin{algorithm}[htbp]
    \begin{minipage}{1\textwidth}
    \caption{Estimation of $\hat Q_h^{\pi_{\tilde k+1}}(\cdot,\cdot)$ with Unknown $F(\cdot)$}
    \label{algo:estimateunknownF}
    \begin{algorithmic}[1]
        \STATE {Collect simulation outcome $\tilde q$ using \Cref{algo:simulation}.}
        \STATE {Estimate $\hat \theta_{ih}, \hat{F}(\cdot)$ using \Cref{algo:Fandthetahat}.}
        \STATE Estimate $\hat \mu_{ih}(\cdot,\cdot)\leftarrow \langle\phi(\cdot,\cdot),\hat \theta_{ih}\rangle$.
        \STATE Set reserve price $\hat \rho_{ih}(\cdot,\cdot)=\argmax_y{y(1-\hat F(y-1-\hat \mu (\cdot,\cdot)))}$.
        \STATE Estimate revenue $\hat R_h(\cdot,\cdot)\leftarrow \E[\max\{\tilde b_h^{-}(\cdot,\cdot),\hat \rho_h^+(\cdot,\cdot)\}\ind(\tilde b_h^{+}(\cdot,\cdot)\ge \hat \rho^+_h(\cdot,\cdot) )]$.
        \FOR {$h=H,\ldots,1$ \textbf{do} \COMMENT{Estimate $Q$-function and optimal policy.} \killdo}         
        \STATE   $\Lambda_h\leftarrow \sum_{\tau=1}^{\mathtt{buffer.e}(\tilde{k} + 1)}\phi(x_h^\tau,\upsilon_h^\tau)\phi(x_h^\tau,\upsilon_h^\tau)^T+\lambda I$.\hfill $\rhd$ We set $\lambda=1$ in this paper.
        \STATE   $\omega_h \leftarrow \Lambda_h^{-1} \sum_{\tau=1}^{\mathtt{buffer.e}(\tilde{k} + 1)} \phi(x_h^\tau,\upsilon_h^\tau)[\max_a \hat Q_{h+1} (x_{h+1}^\tau,a)]$.
        \STATE   $\hat Q^{\pi_{\tilde k+1}}_h(\cdot,\cdot)\leftarrow \min\{\omega_h^T\phi(\cdot,\cdot)+\hat R(\cdot,\cdot)+\poly _1(\log K) \|\phi(\cdot,\cdot)\|_{\Lambda_h^{-1}}+{\frac{\poly _2(\log K)}{\sqrt{\mathtt{buffer.e}(\tilde{k} + 1)}}},3H\}$
        \STATE   $\pi^\upsilon_{\tilde{k}+1,h}(\cdot)\gets\argmax_v \hat Q^{\pi_{\tilde k+1}}_h(\cdot,v)$.
        \STATE   $\pi^{\rho_i}_{\tilde{k}+1,h}(\cdot)\leftarrow \hat  \rho_{ih}(\cdot,\pi^a_{\tilde{k}+1,h}(\cdot)) $.
        \ENDFOR
        \STATE Return $\{\hat Q^{\pi_{\tilde k+1}}_h(\cdot,\cdot)\}_{h=1}^H$ and $\{\pi_{\tilde k+1,h}(\cdot)\}_{h=1}^H$.
    \end{algorithmic}
    \end{minipage}
\end{algorithm}

In summary, we have addressed all three challenges when the market noise distribution is unknown. The first challenge is resolved by carefully adjusting the techniques introduced in Section~\ref{sec:KnownF}, ensuring that they are still valid when $F(\cdot)$ is unknown. For the second challenge we feature a novel technique dubbed ``simulation" that allows us to ``simulate" pure exploration rounds without actually executing them, reducing revenue regret. For the third challenge, we build off of the simulation technique and introduce a new estimation procedure for jointly estimating $F(\cdot)$ and $\theta$.

\subsection{Regret Bound of CLUB Algorithm When \texorpdfstring{$F(\cdot)$}{} is Unknown}
We now argue that~\Cref{algo:unKnownF} achieves $\tilde \cO (\sqrt{K})$ regret. We begin with a slight detour, making a basic assumption on the hypothesis class for $F(\cdot)$.

\begin{assumption}\label{assumption:coveringnumber}
The market noise distribution $F(\cdot)$ belongs to a distribution family $\mathcal{F}$. 
\end{assumption}

We further let $\cN_{\epsilon}(\mathcal{F})$ be the $\epsilon$-covering number of $\cF$ with respect to the metric that ${\rm dist}(F,G)=\sup_x|F(x)-G(x)|$. We now have our main theorem when the noise distribution is unknown.
If we let $\poly _1(\log K)=C_{15}+C_{13}H \log^2 K $ and $\poly _2(\log K)=C_{14} H^2\log^4K$ in \Cref{algo:estimateunknownF}, where $C_{15}=D_7 H^\frac{3}{2}$ and the constant $D_{7}$ is determined in \Cref{lem:unknownomegaandQ}, constants $C_{13}$ and $C_{14}$ are determined in \Cref{unknownboundR}, we would attain the following regret guarantee.

\begin{theorem}\label{thm:unknownf}
Under Assumptions \ref{assumption:linearmdp}, \ref{assumptionf}, \ref{assumptionfdiff}, \ref{logconcave} and~\ref{assumption:coveringnumber}, when $F(\cdot)$ is unknown, for any fixed failure probability $\delta\in(0,1)$, \Cref{algo:unKnownF} achieves at most $\tilde \cO (H^3\sqrt{K}+H^{2.5}\sqrt{ K \log \cN_{1/K}(\cF) })$ regret with probability at least $1-\delta$ in the worst case, where $\tilde \cO (\cdot)$ hides only absolute constants and logarithmic terms.
\end{theorem}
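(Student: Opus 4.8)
The plan is to mirror the known-$F$ analysis of \Cref{thm:knownf}, inserting the extra error terms that arise from replacing $F$ by the histogram estimate $\hat F$ and from using the simulated outcomes $\tilde q$. I would begin with the standard regret decomposition for optimistic LSVI-style algorithms: with probability $1-\delta$, the suboptimality in each episode is controlled by (i) the optimism gap, which is nonnegative once the bonus dominates the estimation error in $\hat Q_h^{\pi_{\tilde k+1}}$, plus (ii) the sum of the bonuses along the executed trajectory. The bonus now has two pieces, the usual elliptical term $\poly_1(\log K)\|\phi(\cdot,\cdot)\|_{\Lambda_h^{-1}}$ and the new flat term $\poly_2(\log K)/\sqrt{\mathtt{buffer.e}(\tilde k+1)}$ coming from the $\hat F$-uncertainty in the plug-in revenue $\hat R_h$. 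Summing the elliptical term over episodes gives the familiar $\tilde\cO(H^{2}\sqrt{dK})$ contribution via the elliptical potential lemma (with the low-switching schedule only costing logarithmic factors), and summing the flat term gives $\tilde\cO(\poly_2(\log K)\sqrt{K})$ since $\mathtt{buffer.e}(\tilde k+1)\asymp k$ on the executed episodes; these together produce the $H^3\sqrt{K}+H^{2.5}\sqrt{K\log\cN_{1/K}(\cF)}$ shape after the $H$-dependence of $\poly_1,\poly_2$ is tracked.

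The three ingredients that must be established before that decomposition closes are: (1) \emph{approximate truthfulness} — the buffer periods of \Cref{algo:buffer_unknownF} together with the update schedule of \Cref{algo:unKnownF} force the discounted surplus any impatient bidder can extract from misreporting to be $\tilde\cO(1)$, so the total number of episodes in which $b_{ih}^\tau$ or $\tilde q_{ih}^\tau$ differs from its truthful value is $\tilde\cO(1)$ per $(i,h)$; here I would reuse the argument behind Section~\ref{sec:KnownF}, noting that $\tilde q_{ih}^\tau$ inherits the same "needs a large mis-report to flip the indicator" robustness as $q_{ih}^\tau$, which is exactly the point stressed after \Cref{algo:simulation}. (2) \emph{Accuracy of $\hat\theta_{ih}$} — from \eqref{algo:Fandthetahat}, since $3N\tilde q_{ih}^\tau$ is (under truthful bidding) an unbiased estimate of $1+\langle\phi_h^\tau,\theta_{ih}\rangle$ conditioned on $(x_h^\tau,\upsilon_h^\tau)$, a ridge-regression / self-normalized-martingale concentration argument gives $\|\hat\theta_{ih}-\theta_{ih}\|_{\Lambda_h}=\tilde\cO(\sqrt d)$ up to the $\tilde\cO(1)$ corruption from (1); note this step does \emph{not} depend on $\hat F$, which is the decoupling the paper advertises. (3) \emph{Accuracy of $\hat F$} — by the Dvoretzky–Kiefer–Wolfowitz inequality applied to the roughly $N H\,\mathtt{buffer.e}(\tilde k+1)$ residuals $b_{i\tau h}-1-\langle\phi_h^\tau,\hat\theta_{ih}\rangle$, plus a perturbation bound converting the $\hat\theta$-error into a shift of the empirical CDF, one gets $\sup_z|\hat F(z)-F(z)|=\tilde\cO(1/\sqrt{\mathtt{buffer.e}(\tilde k+1)})$; the covering number $\cN_{1/K}(\cF)$ enters through a union bound needed to make the DKW bound hold uniformly over the $\hat F$ that could be selected. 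Combining (2) and (3), Assumptions~\ref{assumptionf}–\ref{assumptionfdiff} (boundedness and smoothness of $f$) and the log-concavity Assumption~\ref{logconcave} (so that $\hat\rho_{ih}=\argmax_y y(1-\hat F(y-1-\hat\mu_{ih}))$ is well-defined and stable) let me show $|\hat R_h-R_h|=\tilde\cO(\|\phi\|_{\Lambda_h^{-1}}+1/\sqrt{\mathtt{buffer.e}(\tilde k+1)})$, which is precisely what makes the two-part bonus valid and certifies optimism (this is the content attributed to \Cref{unknownboundR} and \Cref{lem:unknownomegaandQ}).

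With truthfulness, parameter accuracy, revenue accuracy, and optimism in hand, the final assembly is: $\mathrm{Regret}(K)\le \sum_k \big[2\cdot(\text{bonus along }\tau_k)\big]+(\text{corruption cost})+(\text{cost of executing }\pi_{\rm rand})$. The $\pi_{\rm rand}$-cost is $\tilde\cO(H)$ because the mixture weight is $1/(HK)$; the corruption cost is $\tilde\cO(1)$ by step (1); and the bonus sum is the $\tilde\cO(H^3\sqrt K+H^{2.5}\sqrt{K\log\cN_{1/K}(\cF)})$ term. I expect the main obstacle to be step (1) intertwined with (3): one must argue truthfulness to trust the residuals feeding $\hat F$, yet the buffer-period length $3\log k/\log(1/\gamma)$ and the "update when $\log_2 k$ is an integer" rule must be calibrated so that (a) the number of policy switches stays $\cO(\log K)$ (else the elliptical potential sum blows up), (b) every impatient bidder's lookahead horizon before a switch is long enough that misreporting is unprofitable, and (c) $\hat F$ is refreshed often enough that its $1/\sqrt{\mathtt{buffer.e}}$ error is never stale relative to the current covariance matrix. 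Balancing these three simultaneously — essentially showing the buffer schedule is consistent with both the low-switching requirement and the DKW refresh requirement — is the delicate part; the rest is bookkeeping of $H$ and $\log$ factors analogous to \Cref{thm:knownf}.
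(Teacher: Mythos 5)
Your outline tracks the paper's proof closely in its main ingredients and order: buffer periods plus the update schedule force approximate truthfulness (matching \Cref{lem:unknownbuffer}, \Cref{lem:unknownbound_lie}); $\hat\theta_{ih}$ is controlled by a ridge/self-normalized argument decoupled from $\hat F$ (matching \Cref{lem:unknownglm}, \Cref{lem:unknowndiffmu}); $\hat F$ is controlled by DKW (\Cref{lem:boundF}); revenue accuracy combines the two into a two-part bonus (\Cref{unknownboundR}, \Cref{lem:unknownomegaandQ}); and the final regret is assembled from the bonus sum, buffer cost, $\pi_{\rm rand}$ cost, and corruption cost, mirroring the $\Delta_1$ through $\Delta_5$ decomposition of \Cref{proof sketch}. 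Your identification of the scaling $\poly_2/\sqrt{\mathtt{buffer.e}(\tilde k)}\asymp\poly_2/\sqrt{k}$ giving the $H^3\sqrt K$ piece is also the same as \Cref{lem:unknownlsvi}.

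There is one genuine conceptual gap: your explanation of how $\log\cN_{1/K}(\cF)$ enters the bound is wrong, and would not survive an attempt to write out step (3). You claim it arises from ``a union bound needed to make the DKW bound hold uniformly over the $\hat F$ that could be selected.'' But DKW is already uniform in the evaluation point $z$ and is applied to a single fixed truth $F$; in \Cref{lem:boundF} the only union bound is over the $\cO(K)$ update episodes, which yields a $\log(K/\delta)$ factor, not $\log\cN(\cF)$. The covering number of $\cF$ actually enters one level up, in the LSVI transition-estimation step: because $\hat R_h$ (and hence each candidate value function $\hat V_{h+1}$) depends on the selected $\hat F$, the value-function class $\mathcal{V}$ whose uniform self-normalized concentration you need (\Cref{unknownfiltration}) has log covering number that contains a $\log\cN_{\epsilon/12N^2}(\cF)$ term, as computed in \Cref{lem:unknowncovering}. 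That term then propagates into the bonus magnitude $C_{15}$ via \Cref{lem:unknownVandPV} and \Cref{lem:unknownomegaandQ}, which is what produces the $H^{2.5}\sqrt{K\log\cN_{1/K}(\cF)}$ piece. If you tried to realize your version you would find DKW needs no such union bound and you would be left without the covering argument that the transition-estimation concentration actually requires. Two smaller inaccuracies, both lower-order: the corruption cost $\Delta_4+\Delta_5$ is $\tilde\cO(H^3)$ rather than $\tilde\cO(1)$, and the stale-policy regret during buffer periods ($\Delta_2$, of order $\tilde\cO(H^2\log^2 K)$) should appear explicitly as a separate term rather than being folded silently into the bonus sum.
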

\begin{proof}
See \Cref{sec:proofunknownf} for a detailed proof.
\end{proof}

We highlight that when $\cN_{1/K}(\cF)$ is polynomial in $K$, an implicit assumption found in \citet{kong2021online,foster2021statistical,jin2021bellman},~\Cref{thm:unknownf} shows that~\Cref{algo:unKnownF} achieves $\tilde{\cO}(\sqrt{K})$ regret, improving over revenue regret guarantees found in~\citet{amin2014repeated, golrezaei2019dynamic} with only mild additional assumptions on the nonparametric hypothesis class $\cF$. 
\re{Our result is able to beat the well-known $\Omega(K^{2/3})$ revenue lower bound in \citet{kleinberg2003value} with the help of Assumptions~\ref{assumptionf} and~\ref{assumptionfdiff} for similar but not totally the same scenarios to be fair.}
Nevertheless, as argued previously, these assumptions are satisfied by widely-used parametric distribution families such as the uniform distribution and the truncated normal distribution~\citep{golrezaei2019dynamic}, hence our result still remains broadly applicable. \re{The way \citet{kleinberg2003value} constructs regret lower bound is to find a special case containing no information. As they say ``the expected
revenue per buyer is a constant independent of the offer price outside the interval of good prices'', it provides nothing useful for learning. However, with \Cref{assumptionf}, it guarantees the information in each exploration and partial out this extreme situation.}

\re{Finally, we highlight that both bounds in \Cref{sec:KnownF,sec:unknownF} match corresponding lower bounds with respect to $K$.
    From the $\Omega(\sqrt{K})$ lower bound in \citet{jin2020provably}, we directly know that results in \Cref{thm:knownf,thm:unknownf} match corresponding regret lower bounds, as the problem in \citet{jin2020provably} is a subproblem of our problem.
}

\section{Proof Sketch}\label{proof sketch}
Before sketching out the proof techniques, we take a slight detour and discuss how revenue regret could be decomposed. Recall that $\pi_{\tilde{k}}$ denotes the optimistic policy estimate maintained from episode $\mathtt{buffer.e}(\tilde{k}) + 1$ to $\mathtt{buffer.s}(\tilde{k} + 1)$, namely the estimate from the end of the $\tilde{k}$-th buffer period to the start of the $(\tilde{k} + 1)$-th. We also recall that $\pi^*$ is the optimal item choice and pricing policy when the seller knows the bidders' reward functions, the transition kernel $\PP$ and the market noise distribution $F(\cdot)$ beforehand, and we use $V^{\pi^*}$ to denote the revenue's V-function for the optimal policy $\pi^*$. 

We now introduce several new notations that will be used in the rest of the section. We use $\pi_k$ to denote the policy executed at episode $k$. Intuitively, the policy $\pi_k$ consists of some steps in which the corresponding $\pi_{\tilde{k}}$ is executed and some steps where $\pi_{\rm rand}$ is executed. Let $\ind(k \in \mathtt{buffer})$ indicate the event that there exists some integer $\tilde{k}$ such that $k \in [\mathtt{buffer.s}(\tilde{k}), \mathtt{buffer.e}(\tilde{k})]$, i.e. the episode $k$ is within a buffer period. To better highlight the effect of untruthfulness, we let $\tilde{V}$ denote the optimistic V-function estimate if all bidders were to report truthfully.
\subsection{Regret Decomposition}
The regret can be decomposed into the following five parts.
\begin{enumerate}
    \item \textbf{$\Delta_1=\sum_{k=1}^K [V_1^{\pi^*}(x_1)- \tilde V_1^{\pi_k}(x_1)]\ind(\pi_k=\pi_{\tilde k} \ {\rm and} \  k\not \in {\mathtt{buffer}})$}. The term $\Delta_1$ is due to the seller not knowing the bidders' reward functions and the underlying transition dynamics of the MDP. The term is nonzero even if we were to assume that all bidders report truthfully due to the uncertainty of the environment.
    \item  \textbf{$\Delta_2=\sum_{k=1}^K [V_1^{\pi^*}(x_1)- V_1^{\pi_k}(x_1)]\ind(k\in \mathtt{buffer})$}. The second term comes from the buffer periods, which cause suboptimality as we intentionally delay the policy update in order to further punish untruthfulness. While conducive to more truthful reports, a delayed update schedule induces regret as the policy estimate is stale during these buffer periods.
    \item \textbf{$\Delta_3=\sum_{k=1}^K [V_1^{\pi^*}(x_1)- V_1^{\pi_k}(x_1)]\ind({\rm exists } \, h\, {\rm such\, that}\  \pi_{k,h}=\pi_{\rm rand}\ {\rm and} \  k\not \in {\mathtt{buffer}})$}. The third term $\Delta_3$ is caused by $\pi_{\rm rand}$, as it sets reserve prices and chooses items entirely randomly.
    \item \textbf{$\Delta_4=\sum_{k=1}^K [V_1^{\pi^*}(x_1)- V_1^{\pi_k}(x_1)]\ind( k\in \mathtt{L} \ {\rm and} \  k\not \in \mathtt{buffer})$.} We only provide intuition behind the term $\mathtt{L}$ and defer its precise mathematical definition to~\Cref{eqn:lie_set_defn_knownF} for when $F(\cdot)$ is known and~\Cref{eqn:defn_lie_unknownF} for when $F(\cdot)$ is not. The term $\mathtt{L}$ is a collection of episode indices where the bidders' untruthful bids alter the outcome of the multi-phase auction, through either $q_{ih}$ or $\tilde{q}_{ih}$. At a high level, while we could measure the revenue suboptimality of the selected reserve prices if the bidders are truthful, the seller's revenue could be harmed arbitrarily by bidders who underbid/overbid so much that the auction's outcome itself is altered. The term $\Delta_4$ then measures the effect of the changed outcomes due to untruthful bidding.
    \item \textbf{$\Delta_5=\sum_{k=1}^K [\tilde V_1^{\pi_k}(x_1)-  V_1^{\pi_k}(x_1)]\ind(\pi_k=\pi_{\tilde k} \ {\rm and} \  k\not \in \mathtt{buffer})$}. Compared to $\Delta_4$, which measures the effect of changed outcomes due to untruthfulness, $\Delta_5$ measures the effect of changed bids due to untruthfulness. Intuitively, a bidder who overbids/underbids a small amount would not affect the auction's outcome, but could change the amount the seller charges slightly. We measure the effect with $\Delta_5$.
\end{enumerate}

With easy algebra calculation, we have the following proposition.
\begin{proposition}
With $\Delta_1$ to $\Delta_5$ defined as above, it holds that ${\rm Regret}\le \Delta_1+\Delta_2+\Delta_3+\Delta_4+\Delta_5$.
\end{proposition}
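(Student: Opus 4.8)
The plan is to prove the decomposition by a telescoping argument over the $K$ episodes, carefully tracking which of several mutually-exclusive-up-to-overlap events each episode falls into. The starting point is the trivial identity $\mathrm{Regret}(K) = \sum_{k=1}^K [V_1^{\pi^*}(x_1) - V_1^{\pi_k}(x_1)]$, so it suffices to bound each summand $V_1^{\pi^*}(x_1) - V_1^{\pi_k}(x_1)$ by the sum of the corresponding summands appearing in $\Delta_1,\dots,\Delta_5$. First I would observe that for every episode $k$, exactly one of the following holds: (i) $k$ lies in a buffer period, i.e.\ $\ind(k\in\mathtt{buffer})=1$; (ii) $k\notin\mathtt{buffer}$ and some step uses $\pi_{\rm rand}$; (iii) $k\notin\mathtt{buffer}$ and $\pi_k=\pi_{\tilde k}$ (no step uses $\pi_{\rm rand}$). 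In case (i) the summand is exactly the $\Delta_2$ contribution, and since $V_1^{\pi_k}\ge 0$ and all other $\Delta_i$ contributions for this $k$ vanish, we are done for that $k$. In case (ii) the summand is exactly the $\Delta_3$ contribution and again the other terms vanish for this $k$.

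The only case requiring real work is (iii). Here I would further split on whether $k\in\mathtt{L}$. If $k\in\mathtt{L}$ (and $k\notin\mathtt{buffer}$, $\pi_k=\pi_{\tilde k}$), the summand is exactly the $\Delta_4$ contribution, and the remaining terms for this $k$ vanish, so the bound holds with equality-as-upper-bound since we are simply not using the $\Delta_1$ and $\Delta_5$ slots. If instead $k\notin\mathtt{L}$, then the episode contributes to neither $\Delta_2,\Delta_3,\Delta_4$, and we must write
\#
V_1^{\pi^*}(x_1) - V_1^{\pi_k}(x_1) = \big[V_1^{\pi^*}(x_1) - \tilde V_1^{\pi_k}(x_1)\big] + \big[\tilde V_1^{\pi_k}(x_1) - V_1^{\pi_k}(x_1)\big],
\#
which is precisely the sum of the $\Delta_1$ summand and the $\Delta_5$ summand for this $k$ (both indicators $\ind(\pi_k=\pi_{\tilde k}\text{ and }k\notin\mathtt{buffer})$ are active). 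Summing over all $k$ and regrouping by which $\Delta_i$ each contribution lands in yields $\mathrm{Regret}(K)\le\Delta_1+\Delta_2+\Delta_3+\Delta_4+\Delta_5$; the inequality (rather than equality) arises only because in cases (i), (ii), (iii)-with-$k\in\mathtt{L}$ we discard a nonnegative quantity (namely we bound $-V_1^{\pi_k}\le 0$ where needed, using that revenue V-functions are nonnegative), and because the indicator events used to define the $\Delta_i$ may not be perfectly disjoint, so some $k$ could be double-counted in a way that only inflates the right-hand side.

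The main obstacle — really the only subtlety — is verifying that the case split above is exhaustive and that the indicator predicates in the definitions of $\Delta_1,\dots,\Delta_5$ line up exactly with the cases, in particular that $\Delta_1$ and $\Delta_5$ share the indicator $\ind(\pi_k=\pi_{\tilde k}\text{ and }k\notin\mathtt{buffer})$ while $\Delta_4$ carries an extra $\ind(k\in\mathtt{L})$, so that $k\in\mathtt{L}$ episodes are handled by $\Delta_4$ alone and $k\notin\mathtt{L}$ episodes by $\Delta_1+\Delta_5$. One should also double-check the degenerate possibility that $k$ is simultaneously in a buffer period and $\pi_k=\pi_{\tilde k}$; by construction (Algorithms~\ref{algo:buffer} and~\ref{algo:KnownF}) nothing is executed during buffer episodes in a way that affects this accounting, and the buffer indicator is mutually exclusive with the $\pi_k=\pi_{\tilde k},\,k\notin\mathtt{buffer}$ indicator by the explicit ``$k\notin\mathtt{buffer}$'' clause, so no conflict arises. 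Everything else is the ``easy algebra'' the proposition advertises: termwise addition of finitely many nonnegative pieces.
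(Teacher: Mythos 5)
Your proposal is correct and follows essentially the same argument as the paper: the indicator events $\ind(k\in\mathtt{buffer})$, $\ind(\exists h:\pi_{k,h}=\pi_{\rm rand},\,k\notin\mathtt{buffer})$, $\ind(\pi_k=\pi_{\tilde k},\,k\notin\mathtt{buffer})$, and $\ind(k\in\mathtt{L},\,k\notin\mathtt{buffer})$ together cover every episode, the summand $V_1^{\pi^*}(x_1)-V_1^{\pi_k}(x_1)$ is nonnegative by optimality of $\pi^*$, and $\Delta_1+\Delta_5$ telescopes to that summand on the relevant episodes. One small inaccuracy worth fixing: in case (iii) with $k\in\mathtt{L}$ the $\Delta_1$ and $\Delta_5$ terms do not ``vanish'' --- their shared indicator $\ind(\pi_k=\pi_{\tilde k},\,k\notin\mathtt{buffer})$ is still satisfied --- but this double-counting is harmless exactly for the reason you give at the end, and the nonnegativity that saves you there is $V_1^{\pi^*}(x_1)\ge V_1^{\pi_k}(x_1)$ (optimality of $\pi^*$), not the fact that $V_1^{\pi_k}(x_1)\ge 0$.
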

\begin{proof}
Since our benchmark is the maximum revenue when everything is common knowledge, it holds that $V_1^{\pi^*}(x_1)\ge V_1^{\pi_k}(x_1)$ at any time. It is because that $V^{\pi^*}$ is no less than the revenue achieved when existing hidden information with any policy due to its optimality.

Since $\Delta_1+\Delta_5=\sum_{k=1}^K [V_1^{\pi^*}(x_1)-  V_1^{\pi_k}(x_1)]\ind(\pi_k=\pi_{\tilde k} \ {\rm and} \  k\not \in \mathtt{buffer})$ and $\ind(k\in \mathtt{buffer})+\ind(\pi_k=\pi_{\tilde k} \ {\rm and} \  k\not \in {\mathtt{buffer}})+\ind({\rm exists } \, h\, {\rm such\, that}\  \pi_{k,h}=\pi_{\rm rand}\ {\rm and} \  k\not \in {\mathtt{buffer}})+\ind( k\in \mathtt{L} \ {\rm and} \  k\not \in \mathtt{buffer})\ge 1$, it holds that 
\[
{\rm Regret}=\sum_{k=1}^K V_1^{\pi^*}(x_1)-  V_1^{\pi_k}(x_1)\le \Delta_1+\Delta_2+\Delta_3+\Delta_4+\Delta_5,
\]which ends the proof.
\end{proof}

\subsection{Proof Techniques}
With the sources of revenue regret sketched out, we summarize the high-level intuition behind our proof, which mainly comprises the following steps.

\paragraph{Step 1: Limit the magnitude of untruthful reporting.}
As we discussed in~\Cref{sec:KnownF}, reducing the frequency at which we update the policies and including the buffer periods forces bidders to wait before they can gain from untruthful reporting. When the bidders are impatient, the amount they can gain from untruthful reports is then upper-bounded. With the help of $\pi_{\rm rand}$, we are also always punishing the bidders for untruthful reports. Combining the two halves, we can control the total amount by which bidders overbid or underbid, as overbidding or underbidding too much would decrease their utilities. Moreover, by directly controlling the ``amount" of overbidding and underbidding, we are able to upper bound $\Delta_5$, the part of the revenue regret due to untruthfulness. 
The step corresponds to \Cref{overbid} in \Cref{sec:proofknownf}.

\paragraph{Step 2: Control the number of times $q_{ih}^{k}$ change due to untruthfulness.} 
Since we are using $q_{ih}^{k}$, as opposed to $b_{ih}^{k}$, to learn the bidder's reward functions, to ensure the estimates' accuracy, we only need to show that the $q_{ih}^{k}$'s are close to their values when the bidders are truthful. As $q_{ih}^{k}$'s are outcomes of an auction, bidders need to overbid or underbid by a significant amount in order to alter $q_{ih}^{k}$. Combined with the previous step, we can limit the number of times $q_{ih}^{k}$ is altered due to untruthful behavior. With the number of changes controlled, we can also control $\Delta_4$. The step corresponds to \Cref{lem:bound_lie} and \Cref{lem:unknownbound_lie}.

\paragraph{Step 3: Prove the estimates of personal parameters and noise distribution are good.}
Having shown that the bidders provide us with sufficiently truthful reports, we connect our work to RL with \emph{generalized
} linear function approximation in \citet{wang2020optimism} to show $\hat{\theta}_{ih}$ is sufficiently accurate and apply the Dvoretzky–Kiefer–Wolfowitz inequality to show $\hat{F}(\cdot)$ is sufficiently accurate~\citep{dvoretzky1956asymptotic}. When the market noise distribution is known, the step corresponds to \Cref{lem:glm} and \Cref{lem:diffmu}. When the market noise distribution is unknown, the step corresponds to \Cref{lem:unknownglm}, \Cref{lem:unknowndiffmu} and \Cref{lem:boundF}.

\paragraph{Step 4: Prove $\hat R(\cdot,\cdot)\approx  R(\cdot,\cdot)$ and extend LSVI-UCB to non-linear reward function.} 
By applying Taylor expansion to the revenue function $R_{h}$, we relate the accuracy of $\hat{R}(\cdot, \cdot)$ to the accuracy of $\hat{\theta}_{ih}$, which is shown to be accurate in the previous step. We can then show our policy estimate $\pi_{\tilde{k}}$ is approximately optimal with standard LSVI-UCB analysis. Steps 3 and 4 then combine to control $\Delta_1$. The step corresponds to \Cref{lem:estimater}, \Cref{lem:lsvi}, \Cref{unknownboundR} and \Cref{lem:unknownlsvi}.

\paragraph{Step 5: Limit the effects of $\pi_{\rm rand}$ and buffer periods.} 
We finally control the revenue regret due to $\pi_{\rm rand}$ and buffer periods. A key observation is that the number of times in which $\pi_{\rm rand}$ is executed and the length of the buffer periods are all in $\tilde{\cO}(1)$ and hence do not harm our regret asymptotically. Consequently, terms $\Delta_2$ and $\Delta_3$ are controlled effectively. For $\pi_{\rm rand}$, the step corresponds to \Cref{numofpi0}, and as for buffer periods, it corresponds to \Cref{lem:buffer} and \Cref{lem:unknownbuffer}.

\re{
\section{Numerical Experiments}
Here, we present numerical simulations to compare the performance of \Cref{algo:unKnownF} with several baseline policies in different settings\footnote{Code is available at \url{https://github.com/Air-8/SPA_CLUB}.}. To be specific, we compare the performances of CLUB (i.e. \Cref{algo:unKnownF}), SCORP~\citep{golrezaei2019dynamic} and NPAC-S~\citep{golrezaei2023incentive} in contextual bandit settings (i.e. $H=1$) and the performances of CLUB and NPAC-S in MDP settings. In all experiments, we assume that the noise distribution $F(\cdot)$ is unknown. The numerical experiment written in Python 3.10.9 runs on a laptop with an Apple M2 CPU. All three algorithms use less than 30 seconds to calculate 10000 episodes, which shows their practicality in reality. We delay more details and further robustness experiments in \Cref{app:exp}.
}
\begin{figure}[!htbp]
    \centering
    \begin{subfigure}[b]{0.3\textwidth}
        \includegraphics[width=\textwidth]{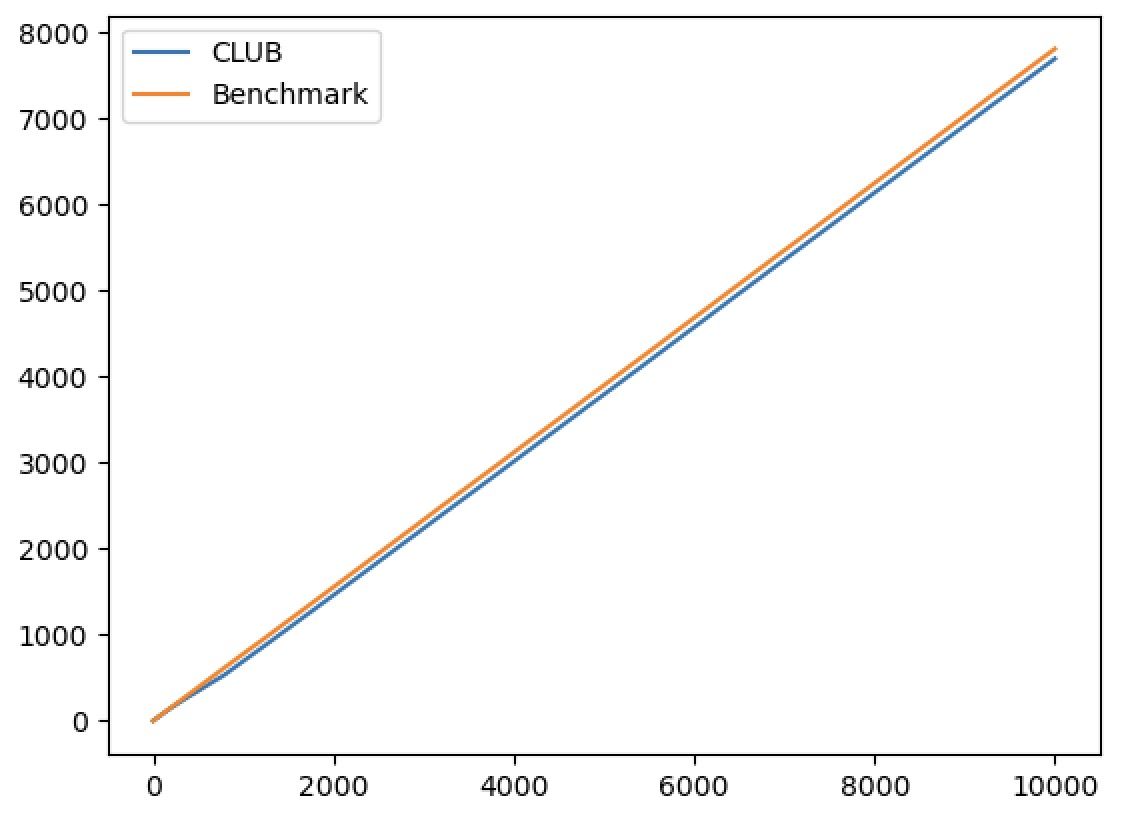}
        \caption{The performance of CLUB against the benchmark.}
        \label{fig:sub1}
    \end{subfigure}
    \hfill
    \begin{subfigure}[b]{0.3\textwidth}
        \includegraphics[width=\textwidth]{ 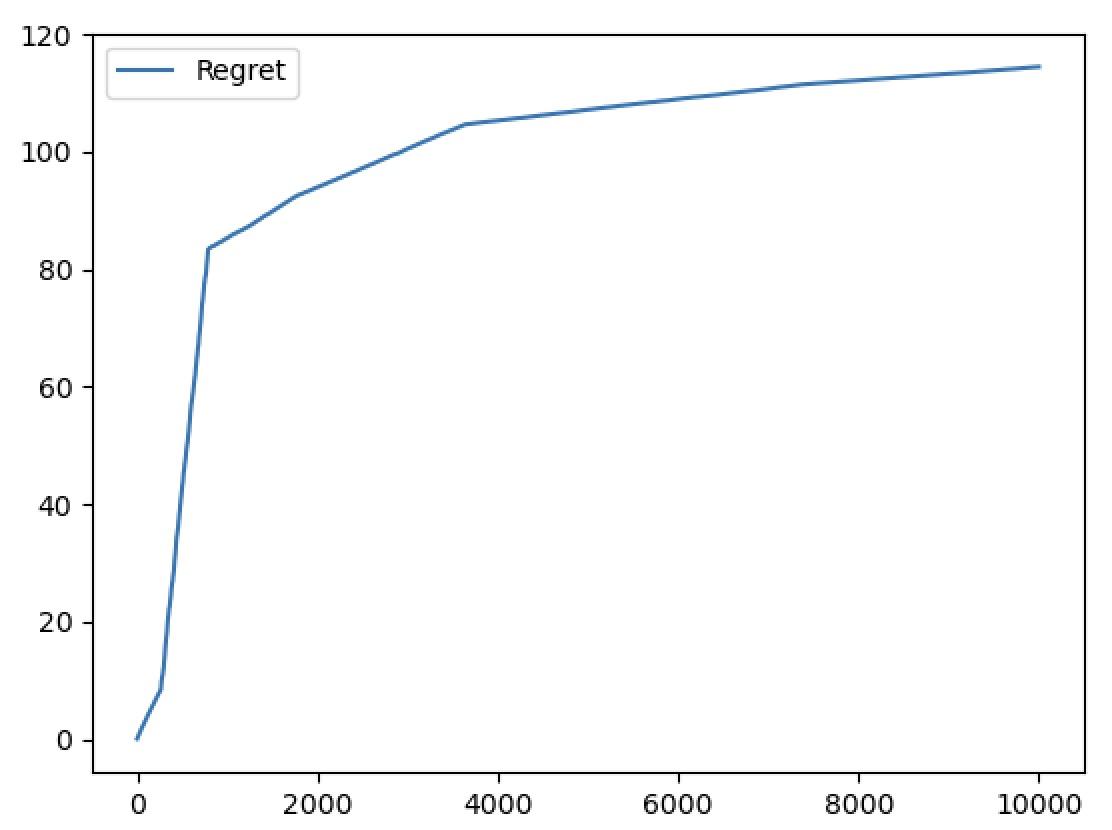}
        \caption{The regret accumulation of CLUB.}
        \label{fig:sub2}
    \end{subfigure}
    \hfill
    \begin{subfigure}[b]{0.3\textwidth}
        \includegraphics[width=\textwidth]{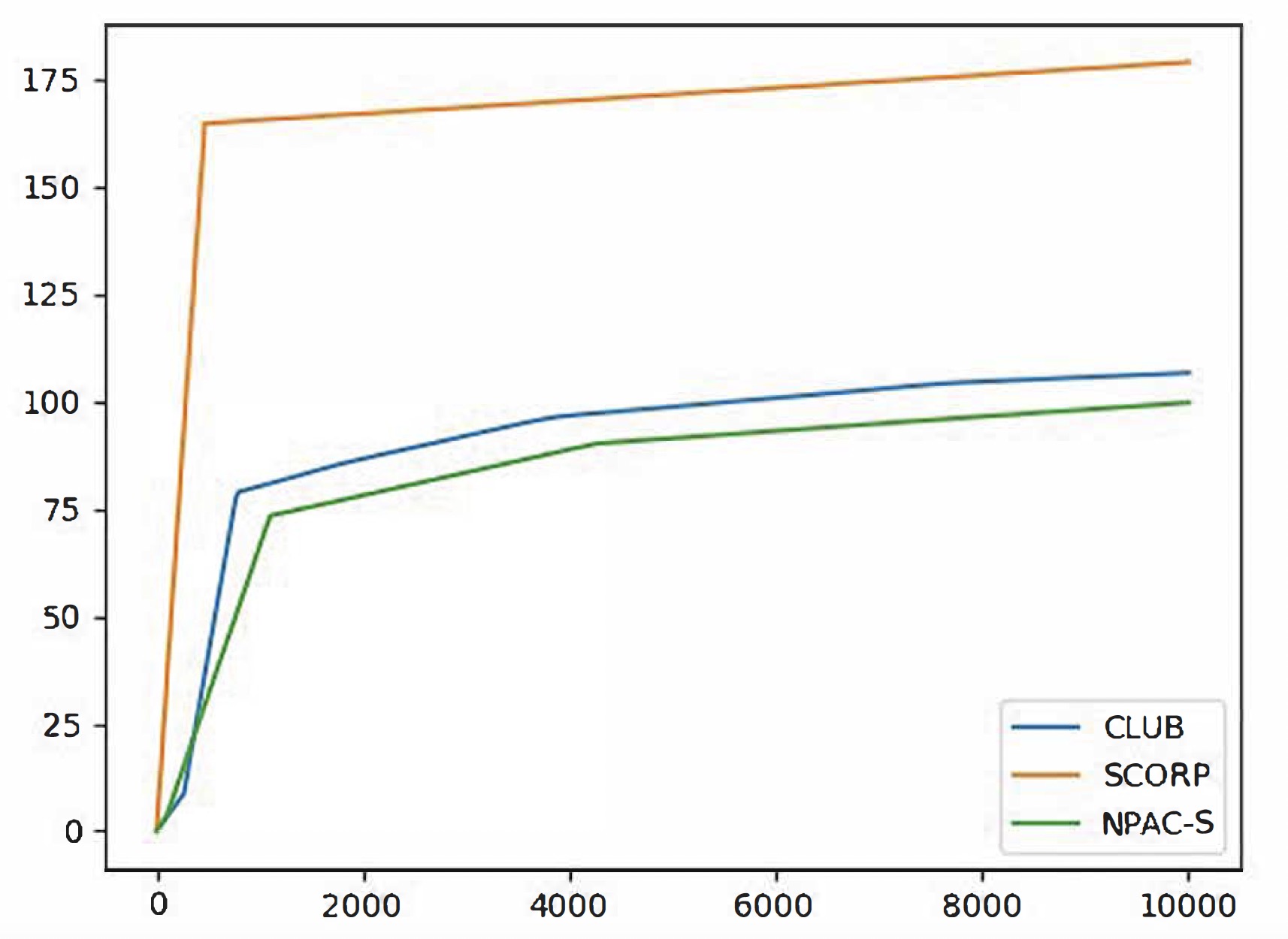}
        \caption{The average performances of three algorithms.}
        \label{fig:sub3}
    \end{subfigure}
    \caption{\re{Experiment results for the contextual bandit setting: \Cref{fig:sub1} compares the revenue achieved by CLUB and benchmark (the maximum revenue when everything is common knowledge), showing CLUB obtains more than 98\% revenue. \Cref{fig:sub2} shows the sublinear regret associated with our CLUB algorithm as the curve trend is below linear. \Cref{fig:sub3} exhibits that CLUB is comparable with NPAC-S, overwhelming SCORP.} 
    }
    \label{fig:contextual}
\end{figure}

\re{In contextual bandit setting, we set $K=10000$, $\gamma=0.9$ for each setting and repeat the procedure for $n=30$ trails for each algorithm. We show results in \Cref{fig:contextual}. \Cref{fig:sub1,fig:sub2} show results in one trial, where we find that CLUB can obtain more than 98\% revenue compared with the benchmark, in which the underlying model is common knowledge. At the same time, \Cref{fig:sub2} testifies $\tilde O(\sqrt{K})$-shaped regret. In \Cref{fig:sub3}, we show the average regrets among all 30 trials of these three different algorithms. The average regrets in 30 trials are 106.62, 178.96 and 99.69 respectively. 
Because the scale of regret depends on the specific instances, and most of the variance comes from randomness in the instances themselves, we report the regret of each trial and the number of winning times in \Cref{tab:expbandits}, which better reflects the performance of the algorithms.
As for the number of winning times, CLUB wins 15 times while NPAC-S wins 14 times. SCORP only wins once. Therefore, we conclude that the performances of CLUB and NPAC-S are comparable, overwhelming the performance of SCORP. Since SCORP doesn't work well even in contextual bandit settings, we only compare CLUB and NPAC-S under MDP.}

\re{In the MDP setting, we incorporate $K=10000$, $H=2$, $\gamma=0.9$ and also conduct $n=30$ trails for both two algorithms. We show the corresponding results in \Cref{fig:MDP}. Our CLUB can obtain more than 98\% revenue (c.f. \Cref{fig:MDP+sub1}) against the benchmark which highlights its great performance. In \Cref{fig:MDP+sub2}, it's clear to see the $\tilde O(\sqrt{K})$-shaped regret. Among all 30 trails, CLUB wins all 30 times. As for average regrets, it's 203.07 for CLUB and 756.31 for NPAC-S. Therefore, we can conclude that under the MDP setting, CLUB sufficiently works better than NPAC-S. Together with experiments under contextual bandits, our experiments are in favor of CLUB algorithms, which show the importance of our newly proposed techniques.}
\begin{figure}[!htbp]
    \centering
    \begin{subfigure}[b]{0.3\textwidth}
        \includegraphics[width=\textwidth]{ 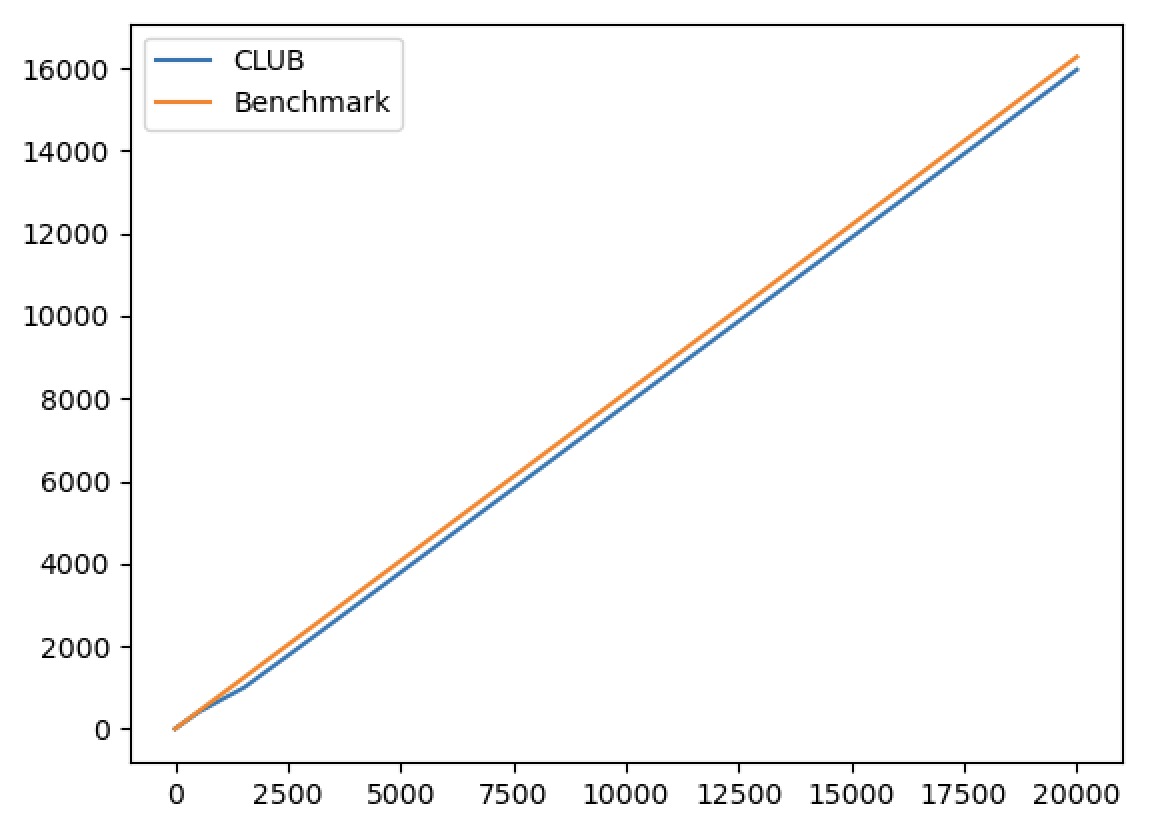}
        \caption{The performance of CLUB against the benchmark.}
        \label{fig:MDP+sub1}
    \end{subfigure}
    \hfill
    \begin{subfigure}[b]{0.3\textwidth}
        \includegraphics[width=\textwidth]{ 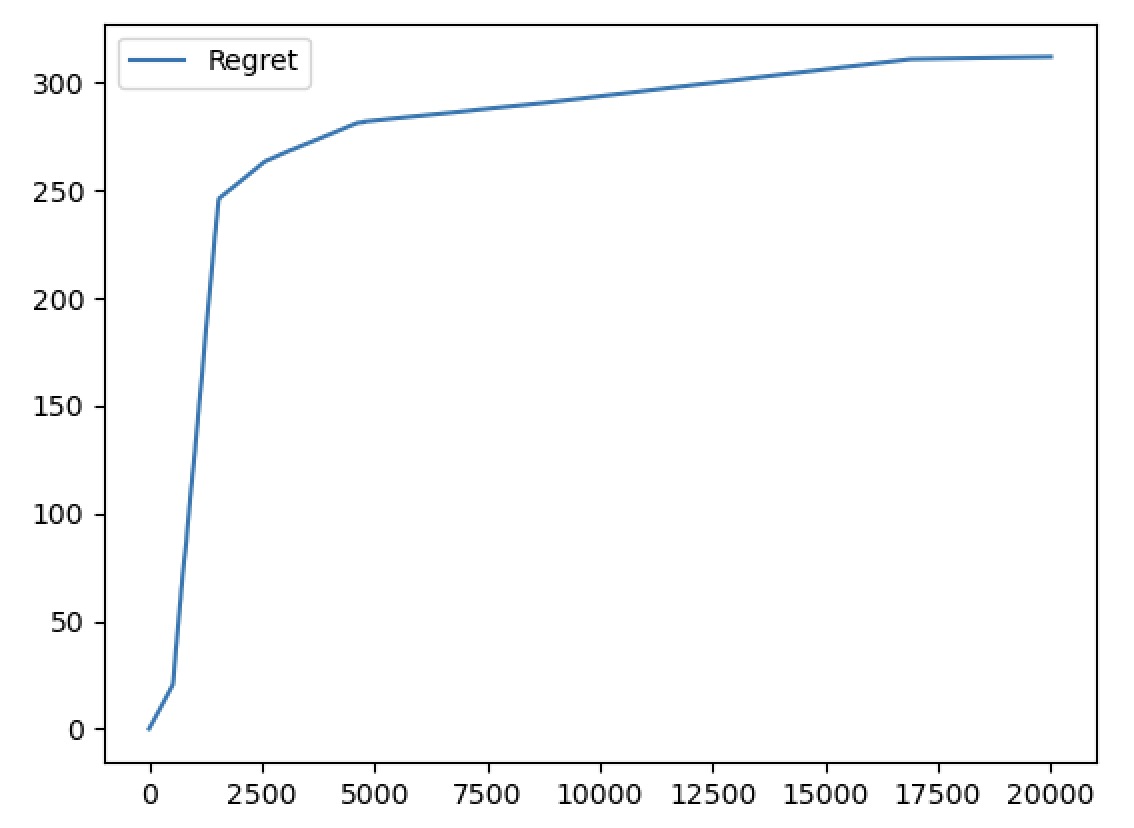}
        \caption{The regret accumulation of CLUB.}
        \label{fig:MDP+sub2}
    \end{subfigure}
    \hfill
    \begin{subfigure}[b]{0.3\textwidth}
        \includegraphics[width=\textwidth]{ 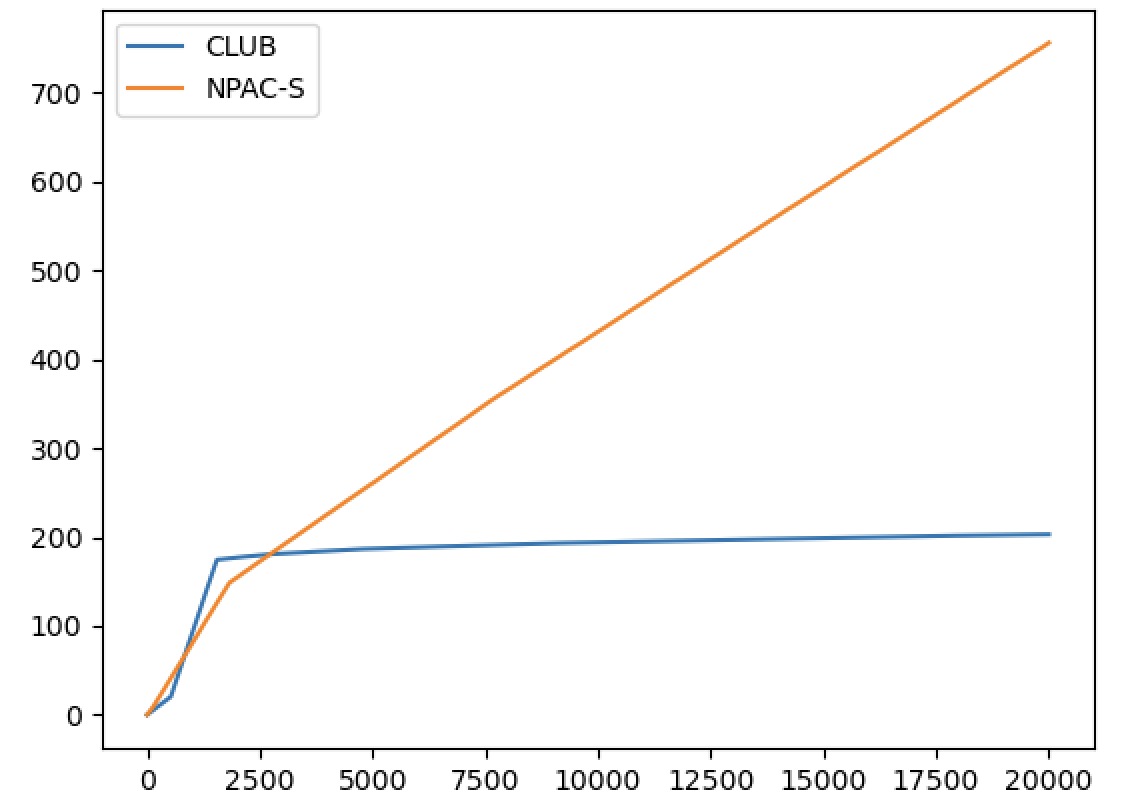}
        \caption{The average performances of two algorithms.}
        \label{fig:MDP+sub3}
    \end{subfigure}
    \caption{\re{Experiment results for the MDP setting: \Cref{fig:MDP+sub1} compares the revenue achieved by CLUB and benchmark (the maximum revenue when everything is common knowledge), showing CLUB obtains more than 98\% revenue. \Cref{fig:MDP+sub2} shows the sublinear regret associated with our CLUB algorithm, as the curve trend is below linear. \Cref{fig:MDP+sub3} exhibits that compared with NPAC-S, CLUB has less regret, testifying to its optimality.}}
    \label{fig:MDP}
\end{figure}

\section{Conclusion and Discussion}
In this paper, we propose a multi-phase second-price auction mechanism based on reinforcement learning.
We highlight that when market noise distribution is unknown, our algorithm achieves $\tilde \cO (H^3\sqrt{K})$ regret, improving upon the $\tilde{\cO}(K^{2/3})$ guarantees in~\citet{amin2014repeated,golrezaei2019dynamic}, using a new method to deal with unknown distribution. Our work is also the first to introduce the notion of ``buffer periods'', a concept crucial to bringing existing techniques in the bandit setting to the more general MDP setting.

Questions raise themselves for future explorations. Is it possible to further generalize our results to RL with general function approximation under bounded Bellman Eluder dimensions~\citep{jin2021bellman}? Can we optimize the dependence on horizon $H$ and feature dimension $d$? 
\re{Is it possible to consider a continuous representation of buyers and form a mean-field game?} 
We leave these interesting questions as potential next steps.

\acks{The authors thank the anonymous Reviewers and the AE for their very constructive comments.
Part of the work was done when Rui Ai was an undergraduate at Peking University, and he was partially supported by the elite undergraduate training program of School of Mathematical Sciences at Peking University. 
Zhaoran Wang acknowledges National Science Foundation (Awards 2235451, 2225087, 2211210, CAREER-2048075, 2015568, 2008827, 1934931/2216970), Simons Institute (Theory of Reinforcement Learning), Amazon, J.P. Morgan, Two Sigma, and Google for their support.
Zhuoran Yang
acknowledges Simons Institute (Theory of Reinforcement
Learning).
}


\newpage

\appendix

\re{
\section{Detailed Comparison with \citet{golrezaei2019dynamic}}\label{app:comp}
There are three different models and corresponding algorithms named CORP, CORP-\uppercase\expandafter{\romannumeral2}, and SCORP, respectively, in \citet{golrezaei2019dynamic}. We compare them with our model one by one.}

\re{CORP considers a contextual bandit setting with known noise distribution achieving $\tilde O(1)$ regret. However, as we mentioned before, accommodating underlying MDP, $\Omega(\sqrt{K})$ regret lower bound is inevitable. In \Cref{sec:KnownF}, we propose our optimal CLUB algorithm matching the lower bound. }

\re{CORP-\uppercase\expandafter{\romannumeral2} considers an unknown but parametric noise distribution and achieves $\tilde O(\sqrt{K})$ regret. However, in \Cref{sec:unknownF}, we consider an unknown and non-parametric noise distribution. Therefore, compared with the setting for CORP-\uppercase\expandafter{\romannumeral2}, our model is strictly much harder for the following two-fold reasons. We need to consider the extra MDP structure and non-parametric noise distribution. Moreover, since CORP-\uppercase\expandafter{\romannumeral2} doesn't have enough horizons to explore, it doesn't work well under our MDP setting and cannot achieve its original $\tilde O(\sqrt{K})$ regret. }

\re{SCORP considers time-varying and non-parametric noise distribution achieving $\tilde O(K^{2/3})$ regret. We share the similarity that both of these settings can't be parameterized, which means that we lose the opportunity to utilize some concentration inequalities directly, and we need to bypass these obstacles to achieve sublinear regrets. There are two main differences between our model and SCORP. First, the underlying MDP makes our problem harder than that of SCORP. Second, we consider a fixed noise distribution and use a different benchmark, making these two models not directly comparable. 
As a result, our algorithm achieves $\tilde{\cO}(\sqrt{K})$ regret with mild additional assumptions on the shape of $F(\cdot)$ (c.f. \Cref{assumptionf} and \ref{assumptionfdiff}). Although it is hard to compare the difficulties between our setting and SCORP in strict order, we believe they have a similar degree of difficulty. As we mentioned in \Cref{subsec:related_work}, the work \citep{amin2014repeated} explores a scenario with a non-parametric yet fixed distribution setting, experiencing a regret of $\tilde O(K^{2/3})$. This observation suggests that the primary challenge might arise from the non-parametric nature of the problem, as opposed to the time-varying setting. Moreover, we should highlight that an $\tilde{O}(K^{2/3})$ regret is inevitable even though the distribution is fixed corresponding to a saddle point for SCORP, as they spend ``too many'' episodes to explore while we don't ``waste'' time to do pure exploration so that balance the exploration-exploitation tradeoff better and achieve better regret bounds. Objectively, our method will suffer $\Omega(K^{2/3})$ regret lower bound for a time-varying model, and it's of independent interest for future research.}

\section{Omitted Proof in \texorpdfstring{\Cref{sec:KnownF}}{sec:KnownF}} \label{sec:proofknownf}
In this section, we show some useful lemmas in order to prove theorems in \Cref{sec:KnownF}. We organize the section as follows. Firstly, we introduce lemmas to bound the effect of untruthful bidding. Then, we will show that we are able to estimate unknown parameters accurately. Finally, combining them leads to bounded regret with high probability.
\subsection{Useful Lemmas for Proving \texorpdfstring{\Cref{thm:knownf}}{thm:knownf}}
Now, we begin to prove our conclusions. First of all, we show the following lemma to bound the number of buffers. Buffer episodes represent those episodes in buffer periods.
\begin{lemma}\label{lem:buffer}
Under \Cref{assumption:linearmdp} about linear MDP, it holds that the number of buffer episodes is not larger than $\frac{3HC_2\log^2 K}{\log \frac{1}{\gamma}}$. Then, the number of corresponding steps is not larger than $\frac{3H^2C_2\log^2 K}{\log \frac{1}{\gamma}}$, where $C_2$ is a constant that only depends on $d$ and $\lambda$.
\end{lemma}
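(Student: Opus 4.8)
The plan is to bound the number of buffer periods first, and then convert that into a bound on the total number of buffer episodes and buffer steps. The key structural fact is the switching rule in line~\ref{algoline:unknownFupdate} of \Cref{algo:KnownF}: a new buffer period is triggered only when there exists $h \in [H]$ with $(\Lambda_h^{\mathtt{buffer.e}(\tilde k)})^{-1} \succeq 2(\Lambda_h^k)^{-1}$, equivalently $\det(\Lambda_h^k) \geq 2\det(\Lambda_h^{\mathtt{buffer.e}(\tilde k)})$ (up to the standard equivalence between the Loewner-order doubling condition and determinant doubling, using that $\Lambda_h^k \succeq \Lambda_h^{\mathtt{buffer.e}(\tilde k)}$ since the sum defining $\Lambda_h^k$ only accumulates rank-one PSD terms as $k$ grows). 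So each time a buffer is triggered, the determinant of at least one of the $H$ covariance matrices has at least doubled since the last update for that index.

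First I would fix a coordinate $h \in [H]$ and count how many times the trigger can be ``blamed'' on index $h$. Since $\Lambda_h^1 = I + \phi\phi^\top \succeq I$ (so $\det \Lambda_h \geq 1$ initially, in fact $\det\Lambda_h^k\ge\lambda^d$ after the $\lambda I$ regularization in \Cref{algo:estimate}; here $\lambda=1$) and $\|\phi(x,\upsilon)\| \le 1$ implies $\Lambda_h^K \preceq (K+\lambda) I$, we get $\det(\Lambda_h^K) \le (K+1)^d$. Each trigger attributable to $h$ multiplies $\det(\Lambda_h^{\cdot})$ by a factor of at least $2$ relative to its value at the previous update, so the number of such triggers is at most $\log_2\big((K+1)^d / \lambda^d\big) = d\log_2\!\big((K+1)/\lambda\big) = \cO(d \log K)$. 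Summing over all $h \in [H]$, the total number of buffer periods is at most $H \cdot d \log_2((K+1)/\lambda) =: C_2' H \log K$ for a constant $C_2'$ depending only on $d$ and $\lambda$; absorbing constants this is $\le C_2 H \log K / 2$ or so, which in particular is $\le \tfrac{1}{3}\cdot\frac{3 C_2 H \log K}{\log(1/\gamma)}\cdot\log K$ once we note $\log(1/\gamma)$ and an extra $\log K$ factor only help — the statement as written has room to spare, so I would just define $C_2$ to be whatever constant (depending on $d,\lambda$) makes the displayed bound hold.

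Next I would bound the number of buffer \emph{episodes}. By construction each buffer period $\tilde k$ has length $\mathtt{buffer.e}(\tilde k+1) - \mathtt{buffer.s}(\tilde k+1) = \frac{3\log k}{\log(1/\gamma)} \le \frac{3\log K}{\log(1/\gamma)}$. Multiplying the number of buffer periods ($\le C_2 H \log K$, after renaming constants) by the maximum length of each ($\le \frac{3\log K}{\log(1/\gamma)}$) gives total buffer episodes $\le \frac{3 H C_2 \log^2 K}{\log(1/\gamma)}$, which is exactly the first claim. For the second claim, each episode consists of $H$ steps, so the number of buffer steps is $H$ times the number of buffer episodes, i.e.\ at most $\frac{3 H^2 C_2 \log^2 K}{\log(1/\gamma)}$.

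The main obstacle — really the only non-bookkeeping point — is justifying the passage from the Loewner-order trigger condition $(\Lambda_h^{\mathtt{buffer.e}(\tilde k)})^{-1} \succeq 2(\Lambda_h^k)^{-1}$ to a statement that makes a scalar potential (the log-determinant) jump by a constant each time. One has to observe that $\Lambda_h^k \succeq \Lambda_h^{\mathtt{buffer.e}(\tilde k)}$ always holds (monotonicity of the accumulating Gram matrix), so $A := \Lambda_h^{\mathtt{buffer.e}(\tilde k)}$ and $B := \Lambda_h^k$ satisfy $A \preceq B$ and $A^{-1} \succeq 2 B^{-1}$, i.e.\ $B^{-1/2} A B^{-1/2} \preceq \tfrac12 I$; taking determinants gives $\det(A)/\det(B) \le 2^{-d} \le 1/2$, which is the ``at least doubles'' fact used above. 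After that everything is the standard elliptical-potential / low-switching-cost counting argument, and the remaining steps are the routine arithmetic carried out above.
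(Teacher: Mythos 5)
Your proposal is correct and follows essentially the same argument as the paper: bound the number of buffer periods via the low-switching-cost / determinant-doubling potential argument (the paper cites Lemmas 1--2 of \citet{gao2021provably} for these two facts, while you re-derive the doubling step directly, even getting the stronger factor $2^d$ from the trigger condition $A^{-1}\succeq 2B^{-1}$), then multiply by the per-period length $\tfrac{3\log K}{\log(1/\gamma)}$ and finally by $H$ for steps.
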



Because of the existence of a buffer, the bidder will not overbid or underbid a lot in the other episodes. Then, we have the following lemma.
\begin{lemma}\label{overbid}
Apart from the buffer periods, a rational bidder won't overbid or underbid for more than $\frac{3H\sqrt{2N}}{K\sqrt{1-\gamma}}$, denoted by $\frac{C_3 H}{K}$.
\end{lemma}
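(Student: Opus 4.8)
The plan is to bound the total discounted utility a rational bidder can gain from any deviation from truthful bidding, and then argue that if the cumulative amount of over/underbidding (outside buffer periods) exceeds the claimed threshold, the deviation is strictly dominated by truthful bidding. The key leverage is the mixture policy: at every step $h$ of every episode $k$, with probability $\frac{1}{HK}$ the seller executes $\pi_{\rm rand}$, which picks bidder $i$ with probability $\frac1N$ and offers him a reserve drawn from $\textrm{Unif}([0,3])$. Conditioned on being the selected bidder under $\pi_{\rm rand}$, if bidder $i$ deviates from the truthful bid $r_{ih}$ by an amount $\delta_{ih}^k$ (in either direction), then with probability proportional to $\delta_{ih}^k/3$ over the uniform draw of the reserve, the deviation changes his outcome in a way that strictly costs him: underbidding by $\delta$ forfeits a winning item on which he had positive surplus, and overbidding by $\delta$ makes him win and pay above his value. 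Either way, a standard second-price-auction calculation shows the expected one-step utility loss from deviating by $\delta_{ih}^k$ is at least on the order of $\frac{1}{HK}\cdot\frac{1}{N}\cdot\frac{(\delta_{ih}^k)^2}{c}$ for an absolute constant $c$ (the quadratic arises because both the probability of an outcome flip and the surplus lost at the flip scale linearly in $\delta$).

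First I would set up the per-step loss bound above carefully, being explicit that it holds pointwise regardless of the other bidders' (possibly adversarial) bids and regardless of the state, since $\pi_{\rm rand}$ isolates bidder $i$ by sending all other reserves to infinity. Next I would bound the maximum possible \emph{gain} from untruthfulness: any manipulation of the learned policy $\pi_{\tilde k}$ can only help the bidder starting from the next learning period, and by the update schedule together with the buffer periods (Algorithms~\ref{algo:buffer} and~\ref{algo:KnownF}), the next time $\pi_{\tilde k}$ changes is at least $\frac{3\log k}{\log(1/\gamma)}$ episodes away. Since the bidder discounts at rate $\gamma$, the discount factor applied to any such future gain is at most $\gamma^{3\log k/\log(1/\gamma)} = k^{-3}$, and the per-episode utility is at most $O(H)$ (rewards lie in $[0,3]$), so the total discounted gain available from deviating in episode $k$ is $O(H k^{-3}\sum_{k'} \gamma^{k'} ) = O(H/((1-\gamma)k^{3}))$, which summed over all $k$ is $O(H/(1-\gamma))$ — a constant independent of $K$.

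Then I would combine the two: writing $D_i = \sum_{k,h}\delta_{ih}^k \ind(k\notin\mathtt{buffer})$ for the total amount of over/underbidding outside buffers, the total utility loss is at least $\sum_{k,h}\frac{c'}{HKN}(\delta_{ih}^k)^2 \geq \frac{c'}{HKN}\cdot\frac{D_i^2}{HK}$ by Cauchy–Schwarz (there are at most $HK$ terms), i.e. $\Omega\!\left(\frac{D_i^2}{H^2K^2N}\right)$, while the total utility gain is $O(H/(1-\gamma))$. For a rational bidder the loss cannot exceed the gain, so $D_i^2 = O\!\left(\frac{H^3 K^2 N}{1-\gamma}\right)$, giving $D_i = O\!\left(\frac{H^{3/2}K\sqrt N}{\sqrt{1-\gamma}}\right)$; tracking constants yields the stated $\frac{3H\sqrt{2N}}{K\sqrt{1-\gamma}}$ bound once one uses the correct normalization (note the paper's $\delta_{ih}^k$ is presumably measured per-step so that the $K^2$ in the denominator of the mixture weight $\tfrac{1}{HK}$ interacts with the summation to flip $K$ into the numerator of the squared quantity — I would double-check the exact exponent of $K$ against the mixture weight $\frac{1}{HK}$, since that is where an off-by-one in the power of $K$ most easily creeps in).

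The main obstacle I anticipate is the per-step quadratic lower bound on the utility loss from a deviation of size $\delta$. One must handle three subtleties: (i) the bidder's true value $r_{ih}$ is random through $z_{ih}\sim F$, so the "flip probability" is an expectation over both $F$ and the $\textrm{Unif}([0,3])$ reserve, and one needs $f$ bounded below (Assumption~\ref{assumptionf}, $f\geq c_1$) only if the value itself must clear a region — but since the reserve is uniform, the cleaner route is to condition on $z_{ih}$ and use only the uniform reserve, avoiding reliance on $f$; (ii) the deviation could be adaptive, depending on the realized bids of others, so the bound must be stated conditionally and then summed; and (iii) one must ensure the deviation cannot simultaneously help in the current auction while being net-costly under $\pi_{\rm rand}$ — but in a second-price auction truthful bidding is a dominant strategy within a single round, so any deviation is (weakly) costly in-round and strictly costly with the probability identified above, which closes the argument.
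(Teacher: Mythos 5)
Your conceptual ingredients are right — the loss comes from $\pi_{\rm rand}$ (selected with probability $\tfrac{1}{HKN}$, uniform reserve on $[0,3]$, so a deviation of size $\delta$ flips the outcome with probability $\delta/3$ and loses on average $\delta/2$ at the flip, giving a quadratic-in-$\delta$ loss), and the gain is capped by the buffer delay $t=\tfrac{3\log k}{\log(1/\gamma)}$ together with impatience, since $\gamma^{t}=k^{-3}$ and per-episode utility is $O(H)$. This is exactly the same machinery the paper uses.

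The gap is in the aggregation step, not a constant-tracking issue. Lemma~\ref{overbid} is a \emph{per-step} bound: it is later consumed in the definition of the thresholds $\alpha=C_3H/K$ inside \Cref{lem:bound_lie} (the sets $S_i$, $O_i$ of large individual deviations) and in controlling $\Delta_5$ (``less than $\frac{C_3H}{K}$ \emph{each step}''). Your Cauchy--Schwarz argument only bounds the cumulative amount $D_i=\sum_{k,h}\delta_{ih}^k$, and a cumulative constraint cannot rule out the bidder putting his entire deviation budget into a single step, which would wreck the downstream uses. Moreover the cumulative bound you obtain, $D_i=O\bigl(H^{3/2}K\sqrt N/\sqrt{1-\gamma}\bigr)$, is a factor of roughly $K^2/\sqrt H$ larger than the $HK\cdot\beta/K=O(H^2\sqrt N/\sqrt{1-\gamma})$ that the per-step bound would aggregate to, so the discrepancy you flagged is not an off-by-one in the $K$-exponent but a structural loss from summing. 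The paper avoids this by comparing, for a \emph{single} step's deviation of size $\geq \beta/K$, the immediate expected loss $\tfrac{1}{NHK}\cdot\tfrac{\beta/K}{3}\cdot\tfrac{\beta/K}{2}=\tfrac{3H}{K^3(1-\gamma)}$ against the maximum discounted future gain $\gamma^{t}\cdot\tfrac{3H}{1-\gamma}=\tfrac{3H}{K^3(1-\gamma)}$ achievable from influencing the policy after the buffer: any such deviation is weakly dominated, hence never played by a rational bidder. If you replace your Cauchy--Schwarz aggregation with this pointwise loss-versus-gain comparison, the rest of your argument (including your careful treatment of conditioning on $z_{ih}$ so as to rely only on the uniform reserve, and the dominant-strategy observation for subtlety (iii)) goes through and recovers the stated constant.
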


Then we define $\mathtt{L}$ as the number of steps the bidder doesn't bid his true value and change the outcome of the auction. Then, it holds the following lemma with the help of \Cref{overbid}. We formalize the definition of $\mathtt{L}$ for any given $i,h$ as follows.
\begin{equation}
\label{eqn:lie_set_defn_knownF}
    \mathtt{L}=\{k: \ind(v_{ih}^kw>\max\{b_{-ih}^{k+},\rho_{ih}^k\})\neq \ind(b_{ih}^k>\max\{b_{-ih}^{k+},\rho_{ih}^k\})\}.
\end{equation}
\begin{lemma}\label{lem:bound_lie}
    With probability at least $1-\delta$, it holds that for any given $i$, $h$
    \[
    \mathtt{L}\le  \frac{3HC_2\log^2 K}{\log \frac{1}{\gamma}}+4{C_1C_3 H}+8\log (\frac{2NH}{\delta}) \le C_4 H \log^2K,
    \]
    where $C_4$ is a constant independent of $K$ and $H$.
\end{lemma}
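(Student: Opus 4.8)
\textbf{Proof plan for \Cref{lem:bound_lie}.}

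The plan is to bound $|\mathtt{L}|$ by splitting the episodes into those inside buffer periods and those outside, and handling the two cases separately. The buffer contribution is immediate from \Cref{lem:buffer}: the total number of buffer episodes is at most $\frac{3HC_2\log^2 K}{\log(1/\gamma)}$, so at most that many indices of $\mathtt{L}$ can come from buffer episodes, which accounts for the first term in the bound. For the remaining episodes, the key is that by \Cref{overbid} a rational bidder, outside of buffer periods, never moves his bid by more than $\frac{C_3 H}{K}$ away from his true value $v_{ih}^k$. Hence, for $k\notin\mathtt{buffer}$, the indicator $\ind(b_{ih}^k > \max\{b_{-ih}^{k+},\rho_{ih}^k\})$ can differ from $\ind(v_{ih}^k > \max\{b_{-ih}^{k+},\rho_{ih}^k\})$ only when the true value $v_{ih}^k$ lies within distance $\frac{C_3 H}{K}$ of the threshold $\max\{b_{-ih}^{k+},\rho_{ih}^k\}$ — i.e., the outcome flips only if the ``gap'' between true valuation and the relevant price is at most $\frac{C_3 H}{K}$.

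Next I would control how many such ``near-threshold'' episodes can occur. Writing $v_{ih}^k = 1 + \mu_{ih}(x_h^k,\upsilon_h^k) + z_{ih}^k$ with $z_{ih}^k \iid F$, and conditioning on the history (so that $\mu_{ih}$, $b_{-ih}^{k+}$, and $\rho_{ih}^k$ are determined before $z_{ih}^k$ is drawn), the event that $v_{ih}^k$ falls in an interval of width $2\cdot\frac{C_3 H}{K}$ around the threshold has probability at most $2C_1\cdot\frac{C_3 H}{K}$ by \Cref{assumptionf} (the density $f$ is bounded above by $C_1$). Summed over the at most $K$ episodes (for fixed $i,h$) this has conditional expectation at most $2C_1 C_3 H$. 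I would then apply a concentration inequality for sums of (conditionally) independent Bernoulli-type indicators — a Bernstein / Chernoff bound for martingale differences — to conclude that with probability at least $1 - \frac{\delta}{NH}$ the number of near-threshold episodes is at most $4C_1 C_3 H + 8\log(2NH/\delta)$ or so; a union bound over the $N$ bidders and $H$ steps then gives the stated $1-\delta$ guarantee and the second and third terms of the bound. Finally, absorbing $\log(1/\gamma)$, $C_1 C_3$, and $\log(2NH/\delta)$ into a single constant and using $\log^2 K \gtrsim \log(NH/\delta)$ up to constants (or simply folding in a $\log$ factor) yields $\mathtt{L} \le C_4 H\log^2 K$ for an appropriate $C_4$ independent of $K$ and $H$.

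The main obstacle I anticipate is the concentration step: the indicators $\ind(z_{ih}^k \in \text{near-threshold interval})$ are not i.i.d.\ because the interval endpoints depend on the realized history (other bidders' bids, the chosen reserve, and the state), which is itself shaped by the bidders' possibly strategic play. The right framework is to treat them as a sum of a predictable (conditional-probability) part plus a martingale-difference part, bound the predictable part deterministically via \Cref{overbid} and \Cref{assumptionf}, and bound the martingale part by a freedom-type or Bernstein-type inequality; care is needed because $\mathtt{L}$ as defined compares the \emph{observed} bid to the true value, so one must also verify that \Cref{overbid} legitimately applies to every non-buffer episode simultaneously (which it does, since \Cref{overbid} bounds the cumulative over- and under-bidding). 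A secondary subtlety is that one should in principle account for the small probability that $\pi_{\rm rand}$ is invoked, but since that only changes $\rho_{ih}^k$ (still a valid, history-measurable threshold) the same argument goes through unchanged.
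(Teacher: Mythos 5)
Your plan matches the paper's proof in all essentials: decompose $\mathtt{L}$ using \Cref{lem:buffer} to absorb the buffer episodes, use \Cref{overbid} to argue that outside buffer periods a bid can differ from the true value only by $\tfrac{C_3 H}{K}$, observe that a flip then forces the true value to lie within that distance of the (history-measurable) threshold, bound the per-step conditional probability by $C_1$ times the interval width via \Cref{assumptionf}, and concentrate via a multiplicative Chernoff/Freedman-type martingale inequality with a union bound over $(i,h)$. The paper organizes the split slightly differently (by magnitude of the over/under-bid rather than by buffer membership, through the sets $S_i, O_i$), but this yields the same three terms and relies on the same two lemmas, so the argument is the same; one small note is that \Cref{overbid} gives a per-episode magnitude bound, not a cumulative one, but that only makes your ``applies simultaneously'' concern easier, not harder.
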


Now, we bound the number of steps we use $\pi_{\rm rand}$ instead of $\pi_{\tilde{k}}$. Especially, we regard $\pi_{\rm rand}$ as the policy used in the situation that happens with probability $\frac{1}{KH}$.

\begin{lemma}\label{numofpi0}
With probability at least $1-\delta$, the number of steps using $\pi_{\rm rand}$ is smaller than $\max\{4,1+\frac{4}{3}\log \frac{1}{\delta}\}$.
\end{lemma}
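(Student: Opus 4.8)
The plan is to observe that the number of steps on which $\pi_{\rm rand}$ is executed is nothing but a sum of independent Bernoulli indicators, and then apply a Chernoff (Poisson-tail) bound. First I would set up the random variable. The mixture policy $\frac{1}{HK}\circ \pi_{\rm rand} + (1-\frac{1}{HK})\circ \pi_{\tilde k}$ flips, at each of the $HK$ steps indexed by episode $k\in[K]$ and stage $h\in[H]$, an independent coin and acts according to $\pi_{\rm rand}$ with probability $\frac{1}{HK}$; crucially these coins are mutually independent and independent of the states visited and bids submitted. Hence, writing $N_{\rm rand}$ for the number of steps using $\pi_{\rm rand}$, we have $N_{\rm rand}=\sum_{k=1}^K\sum_{h=1}^H B_{k,h}$ with $B_{k,h}\iid \mathrm{Bernoulli}(\frac{1}{HK})$, so that $\E[N_{\rm rand}]=1$.

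Next I would apply a multiplicative Chernoff bound. From $\E[e^{\lambda N_{\rm rand}}] = \bigl(1+(e^\lambda-1)/(HK)\bigr)^{HK}\le \exp(e^\lambda-1)$ and Markov's inequality with the optimal choice $\lambda=\log a$, one obtains $\P(N_{\rm rand}\ge a)\le (e/a)^a e^{-1}$ for every $a\ge 1$. It then remains to pick $a$ so that this right-hand side is at most $\delta$, which after taking logarithms is exactly the inequality $a\log a - a + 1 \ge \log(1/\delta)$.

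Finally I would verify that $a=\max\{4,\,1+\frac{4}{3}\log(1/\delta)\}$ satisfies this inequality. If $\log(1/\delta)$ is small enough that $a=4$ is the active branch, one checks directly that $4\log 4 - 3$ exceeds $\log(1/\delta)$ throughout that regime. If instead $a = 1+\frac{4}{3}\log(1/\delta)\ge 4$, substituting $\log(1/\delta)=\frac{3}{4}(a-1)$ reduces the claim to $\log a \ge \frac{7}{4}(1-1/a)$ for $a\ge 4$; this holds because $g(a):=\log a - \frac{7}{4}+\frac{7}{4a}$ satisfies $g(4)>0$ and $g'(a)=\frac{1}{a^2}(a-\frac{7}{4})>0$ for $a\ge 4$, so $g$ is positive on $[4,\infty)$.

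As for the main obstacle: there really is no deep one here — this is a routine concentration estimate. The only points deserving care are (i) confirming that the per-step randomization of the mixture policy is genuinely independent of the environment, so that $N_{\rm rand}$ is an exact binomial sum rather than a process adapted to the filtration, and (ii) the elementary but slightly fiddly constant bookkeeping in the two-case split above; a looser $2^{-a}$-type tail bound would yield essentially the same constant with less arithmetic.
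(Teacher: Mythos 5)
Your proof is correct and follows essentially the same strategy as the paper: model the number of $\pi_{\rm rand}$-steps as a sum of $HK$ i.i.d.\ Bernoulli$(1/HK)$ indicators with mean $1$, then apply a concentration inequality. The only difference is the specific tail bound: the paper applies Bernstein's inequality to $\sum_i X_i - 1 \geq t$ (exploiting $\sum_i \var(X_i) = 1 - 1/HK \leq 1$ and $|X_i| \leq 1$), then chooses $t = \max\{3, \tfrac{4}{3}\log\tfrac{1}{\delta}\}$ so that, using $1 + t/3 \leq 2t/3$ for $t \geq 3$, the exponent is at most $-\tfrac{3t}{4} \leq -\log\tfrac{1}{\delta}$; you instead derive the Poisson-tail Chernoff bound $\P(N_{\rm rand} \geq a) \leq (e/a)^a e^{-1}$ directly from the moment generating function and then verify $a\log a - a + 1 \geq \log\tfrac{1}{\delta}$ by a two-case split. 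Both give the identical constant $\max\{4, 1 + \tfrac{4}{3}\log\tfrac{1}{\delta}\}$ (since $a = 1 + t$), and your arithmetic in both branches checks out, so this is an acceptable substitute.
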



Now, we will show the wedge between $\hat \mu_{ih}(\cdot,\cdot)$ and $\mu_{ih}(\cdot,\cdot)$ for any bidder $i$ and step $h$. It holds the following lemma.
\begin{lemma}\label{lem:glm}
We use $\theta_{ih}^*$ to denote the true parameter and $\hat \theta_{ih}$ to represent the outcome from \Cref{algo:thetahat} in episode $\mathtt{buffer.e} (\tilde{k})$. Therefore, under \Cref{assumptionf} and \Cref{assumptionfdiff}, for any $i$ and $h$, it holds the following union bound that $C_5$ is a constant and 
\[
\sqrt{(\hat \theta_{ih}-\theta_{ih}^*)^T\Lambda^{\mathtt{buffer.e} (\tilde{k})}(\hat \theta_{ih}-\theta_{ih}^*)} \le C_5 \sqrt{H}\log K,
\]
with probability at least $1-\delta$, conditional on \textrm{Good Event} $\mathscr{E}$.
\end{lemma}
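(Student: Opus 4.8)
\textbf{Proof proposal for \Cref{lem:glm}.}

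The plan is to cast the estimation of $\theta_{ih}$ as a generalized linear model (GLM) problem with link function $1 - F(\cdot)$ and then invoke the GLM concentration machinery of \citet{wang2020optimism}, with the twist that the observed responses $q_{ih}^\tau$ are not quite Bernoulli draws under the true parameter because bidders may bid untruthfully. The clean case first: conditioned on $x_h^\tau, \upsilon_h^\tau$, and the price-related quantity $m_{ih}^\tau$, a truthful bidder wins with probability $1 - F(m_{ih}^\tau - 1 - \langle \phi(x_h^\tau,\upsilon_h^\tau), \theta_{ih}^*\rangle)$, so the least-squares objective in \eqref{algo:thetahat} is exactly the negative log-pseudo-likelihood of a GLM with bounded, strictly increasing, Lipschitz link (Assumptions~\ref{assumptionf} and~\ref{assumptionfdiff} give $c_1 \le f \le C_1$ and $|f'| \le L$, which supply the uniform bounds on the link's derivative and the self-concordance-type control needed). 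Standard GLM regret/estimation bounds — e.g., via a Bernstein-type martingale inequality on $\sum_\tau \phi_h^\tau \xi_{ih}^\tau$ where $\xi_{ih}^\tau = q_{ih}^\tau - (1 - F(m_{ih}^\tau - 1 - \langle\phi_h^\tau,\theta_{ih}^*\rangle))$ is a bounded martingale difference — yield a bound of the form $\|\hat\theta_{ih} - \theta_{ih}^*\|_{\Lambda^{\mathtt{buffer.e}(\tilde k)}} \lesssim \sqrt{d\log(\cdot)}/c_1$, i.e. $\tilde\cO(1)$ per $(i,h)$ pair.

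The steps, in order: (1) write the first-order optimality / strong-convexity argument for the objective in \eqref{algo:thetahat}, using the lower bound $f \ge c_1$ on the regularized Hessian to convert the gradient bound at $\theta_{ih}^*$ into an estimation-error bound in the $\Lambda$-norm; (2) bound the gradient $\sum_\tau \phi_h^\tau \xi_{ih}^\tau$ in $\Lambda^{-1}$-norm by a self-normalized martingale concentration inequality (the ridge term $\lambda I = I$ handles the boundary $\|\theta\| \le 2\sqrt d$ and keeps $\Lambda$ invertible), picking up the $\sqrt{\log K}$ and a $\log\det$ term that is $\tilde\cO(1)$ in $K$; (3) take a union bound over all $i \in [N]$ and $h \in [H]$, costing a $\log(NH/\delta)$ inside the square root; (4) on the \textrm{Good Event} $\mathscr E$, feed in the untruthfulness control: by \Cref{overbid} and \Cref{lem:bound_lie}, the responses $q_{ih}^\tau$ differ from their truthful counterparts on at most $\mathtt{L} \le C_4 H\log^2 K$ episodes, and on the remaining episodes the bids, hence $m_{ih}^\tau$, are within $C_3 H/K$ of truthful — so the ``corrupted'' gradient differs from the idealized truthful martingale by at most $(\text{Lipschitz const}) \cdot (\mathtt{L} + K \cdot C_3 H/K) \lesssim H\log^2 K$ in an appropriate norm, which after translating through the $\Lambda$-norm and taking square roots contributes the extra $\sqrt H$ factor; (5) collect terms: $\sqrt{d\log(NH/\delta)} + \sqrt H\log K \lesssim \sqrt H\log K$ absorbing constants into $C_5$.

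The main obstacle I expect is Step (4): handling the discrepancy between the observed $q_{ih}^\tau$ and the idealized truthful-bidder GLM responses in a way that the GLM concentration still applies. The martingale structure is broken once bidders react strategically to the (random) policy, so the argument cannot be ``$\hat\theta_{ih}$ concentrates because the noise is a martingale'' verbatim; instead one must split the objective's gradient into a genuine martingale part (over the coin flips of $\pi_{\rm rand}$ and the market noise $z_{ih}^\tau$, conditioned on the adversarial bids) plus a deterministic perturbation part controlled in size by \Cref{overbid} and \Cref{lem:bound_lie}. Making the conditioning precise — what is measurable with respect to what, so that $\E[\xi_{ih}^\tau \mid \mathcal F_{\tau-1}] = 0$ really holds for the martingale component even though bids are chosen by a strategic adversary who has seen past policies — is the delicate part, and is presumably exactly where the definition of \textrm{Good Event} $\mathscr E$ and the ``buffer period'' construction are doing their work. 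Everything else (strong convexity from $f \ge c_1$, self-normalized bound, union bound, constant bookkeeping) is routine.
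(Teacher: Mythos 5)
Your proposal captures the same high-level strategy as the paper's proof: cast \eqref{algo:thetahat} as a GLM with link $1-F(\cdot)$, exploit $c_1 \le f \le C_1$ and $|f'| \le L$ for the link-derivative bounds, observe that the martingale structure lives in the market noise $z_{ih}^\tau$, and absorb the effect of untruthful bidding via the corruption count $\mathtt{L}$ from \Cref{lem:bound_lie} and the $C_3 H/K$ overbid control from \Cref{overbid}. That is exactly the skeleton of the paper's argument.

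A few technical choices differ, and the paper's are slightly cleaner at the points you flag as delicate. First, the paper works with the auxiliary outcome $\tilde q_\tau$, the indicator that bidder $i$'s \emph{truthful} valuation clears the \emph{observed} threshold $m_\tau$. Conditioning on the realized $m_\tau$ (which already absorbs all other bidders' possibly-untruthful bids) makes $\E[\tilde q_\tau \mid \cF_{\tau-1}] = 1 - F(m_\tau - 1 - \langle\phi_\tau,\theta^*\rangle)$ hold exactly with no perturbation term, so the martingale structure you worried about is obtained by fiat rather than by a measurability argument; the strategic adversary's influence is all pushed into $m_\tau$, which is conditioned on, and into the $\mathtt L$ rounds where $\hat q_\tau \neq \tilde q_\tau$. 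This also makes your ``$K \cdot C_3 H/K$'' perturbation of the gradient unnecessary for this lemma — that overbid margin is used in \Cref{lem:bound_lie} to bound $\mathtt L$, not separately here. Second, instead of a first-order-optimality plus strong-convexity argument followed by a self-normalized martingale bound on the score, the paper proves a raw basic inequality (Lemma~\ref{6lie}: $\hat\theta$ is within $6\mathtt L$ of being a minimizer of the truthful objective), rewrites the resulting cross term via the mean-value coefficient $D_\tau = \int_0^1 f'_{m_\tau}(\cdot)\,ds$, applies a covering argument over pairs $(\hat\theta,\theta^*)$ together with Azuma--Hoeffding, and solves the self-bounding quadratic in $\sqrt{V(\hat\phi)}$ before converting to the $\Lambda$-norm using $D_\tau^2 \ge c_1^2$. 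The two routes buy essentially the same bound; the paper's covering-plus-quadratic route avoids invoking a self-normalized inequality with a data-dependent regression matrix, while yours would be somewhat shorter given a ready-made self-normalized GLM lemma. Overall your proposal is sound and would go through; the one substantive refinement you should adopt from the paper is the conditioning-on-$m_\tau$ trick, which dissolves the measurability worry you raised at the end.
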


Then, we are ready to have the bound for $\hat \mu$. It holds the following lemma:
\begin{lemma}\label{lem:diffmu}
Conditional on \textrm{Good Event} $\mathscr{E}$, it holds that
\[
|\hat \mu_{ih}^k(\cdot,\cdot)-\mu_{ih}^k(\cdot,\cdot)|\le C_5 \sqrt{H}\log K \|\phi(\cdot,\cdot)\|_{({\Lambda_h^{\mathtt{buffer.e} (\tilde{k})}})^{-1}},
\]where $\mathtt{buffer.e} (\tilde{k})$ is the last episode using \Cref{algo:thetahat} before episode k, similarly hereinafter.
\end{lemma}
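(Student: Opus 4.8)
The plan is to derive Lemma~\ref{lem:diffmu} as an immediate consequence of Lemma~\ref{lem:glm} via a single Cauchy--Schwarz step, with essentially all the work already done. First I would unpack the two quantities being compared: by line~\ref{algoline:thetaest} of \Cref{algo:estimate} we have $\hat\mu_{ih}(x,\upsilon)=\langle\phi(x,\upsilon),\hat\theta_{ih}\rangle$, while Assumption~\ref{assumption:linearmdp} gives $\mu_{ih}(x,\upsilon)=\langle\phi(x,\upsilon),\theta_{ih}^*\rangle$. Hence the pointwise error is exactly the linear functional $|\hat\mu_{ih}^k(x,\upsilon)-\mu_{ih}^k(x,\upsilon)|=|\langle\phi(x,\upsilon),\hat\theta_{ih}-\theta_{ih}^*\rangle|$, where $\hat\theta_{ih}$ is the estimate produced at the end of the most recent buffer period, i.e.\ at episode $\mathtt{buffer.e}(\tilde k)$.

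Next I would write $\Lambda := \Lambda_h^{\mathtt{buffer.e}(\tilde k)}$, which is positive definite because $\lambda I$ is added in its definition, and insert the factorization $\phi=\Lambda^{-1/2}\Lambda^{1/2}\phi$ so that
\[
\bigl|\langle\phi,\hat\theta_{ih}-\theta_{ih}^*\rangle\bigr|=\bigl|\langle\Lambda^{-1/2}\phi,\ \Lambda^{1/2}(\hat\theta_{ih}-\theta_{ih}^*)\rangle\bigr|\le \|\phi\|_{\Lambda^{-1}}\cdot\|\hat\theta_{ih}-\theta_{ih}^*\|_{\Lambda}
\]
by the ordinary Cauchy--Schwarz inequality. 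The second factor is precisely the quantity controlled in Lemma~\ref{lem:glm}: conditional on the Good Event $\mathscr{E}$, $\sqrt{(\hat\theta_{ih}-\theta_{ih}^*)^T\Lambda(\hat\theta_{ih}-\theta_{ih}^*)}\le C_5\sqrt{H}\log K$ uniformly over all $i,h$. Substituting this bound into the display above yields the claimed estimate with the same constant $C_5$.

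The only thing requiring care --- and the closest thing to an obstacle --- is index bookkeeping: one must make sure the covariance matrix appearing in the norm is the one associated with the episode $\mathtt{buffer.e}(\tilde k)$ at which $\hat\theta_{ih}$ was actually computed, which is exactly how Lemma~\ref{lem:glm} is phrased, so no mismatch arises. The union bound over bidders $i$, steps $h$, and the $\mathcal{O}(\log^2 K)$ many update times is already folded into $\mathscr{E}$ and Lemma~\ref{lem:glm}, so the present statement is deterministic given that event and needs no fresh probabilistic argument.
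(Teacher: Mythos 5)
Your proposal is correct and is essentially the paper's own argument: the paper proves this lemma by combining an auxiliary Cauchy--Schwarz statement (\Cref{cauchy}), which bounds $|\langle\phi,\hat\theta-\theta\rangle|$ by $\|\hat\theta-\theta\|_{\Lambda}\,\|\phi\|_{\Lambda^{-1}}$ with $\Lambda=\Lambda_h^{\mathtt{buffer.e}(\tilde k)}$, with \Cref{lem:glm}, just as you do. Your index bookkeeping and the observation that the probabilistic content is already encapsulated in $\mathscr{E}$ match the paper's reasoning.
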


Now, we focus on the gap between $R(\cdot,\cdot)$ and the estimate $\hat R(\cdot,\cdot)$. We are ready to show the following lemma.

First of all, we introduce some notations. $R_h^k(\cdot,\cdot)=\sum_{i=1}^N \E [\max \{r_{ih}^{k-} ,\alpha_{ih}^{k*}\}\ind (r_{ih}^k\ge \max \{r_{ih}^{k-} ,\alpha_{ih}^{k*}\} )]$ and $\hat R_h^k(\cdot,\cdot)=\sum_{i=1}^N \E [\max \{\hat r_{ih}^{k-} ,\alpha_{ih}^k\}\ind (\hat r_{ih}^k\ge \max \{\hat r_{ih}^{k-} ,\alpha_{ih}^k\} )]$. In short, $R(\cdot,\cdot)$ is the expectation of revenue if we choose the optimal reserve price $\alpha_{ih}^{k*}$ for every bidder based on the knowledge of $\mu_{ih}^k(\cdot,\cdot)$ and everyone bids truthfully based on his valuation. Respectively, $\hat R(\cdot,\cdot)$ is the one we choose reserve price $\alpha_{ih}^k$ with the estimation of $\mu_{ih}^k(\cdot,\cdot)$, i.e., $\hat \mu_{ih}^k(\cdot,\cdot)$.
\begin{lemma}\label{lem:estimater}
When \Cref{lem:diffmu} holds, we have 
\[
|R_h^k(\cdot,\cdot)-\hat R_h^k(\cdot,\cdot)|\le C_6 H\log^2 K \|\phi(\cdot,\cdot)\|_{({\Lambda_h^{\mathtt{buffer.e} (\tilde{k})}})^{-1}},
\]where $C_6$ is a constant independent of $K$ and $H$.
\end{lemma}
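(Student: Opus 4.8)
The plan is to bound the pointwise revenue estimation error $|R_h^k(\cdot,\cdot) - \hat R_h^k(\cdot,\cdot)|$ by decomposing it into two contributions: one coming from using the estimated reserve prices $\alpha_{ih}^k$ in place of the optimal ones $\alpha_{ih}^{k*}$, and one coming from using the estimated valuation distributions (i.e.\ the distributions of $\hat r_{ih}^k = 1 + \hat\mu_{ih}^k + z_{ih}$) in place of the true ones. Since revenue from each bidder takes the form $\E[\max\{r^-,\alpha\}\ind(r \ge \max\{r^-,\alpha\})]$, I would write $R_h^k - \hat R_h^k = \sum_{i=1}^N (R_{ih}^k - \hat R_{ih}^k)$ and control each summand separately, so that the final bound picks up a factor of $N$ absorbed into the constant.

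First I would exploit the optimality of $\alpha_{ih}^{k*}$ for the true problem. Writing $g_{ih}(\alpha) = \alpha(1 - F(\alpha - 1 - \mu_{ih}^k))$ for the ``posted-price'' revenue against a single bidder with reserve $\alpha$, Assumption~\ref{logconcave} (log-concavity of $1-F$) guarantees $g_{ih}$ is unimodal with a unique maximizer $\alpha_{ih}^{k*}$, and Assumptions~\ref{assumptionf}--\ref{assumptionfdiff} give that $g_{ih}''$ is bounded below in magnitude near the optimum, so $g_{ih}(\alpha_{ih}^{k*}) - g_{ih}(\alpha) = \cO((\alpha - \alpha_{ih}^{k*})^2)$. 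Next, since $\hat\alpha_{ih}^k = \argmax_y y(1 - F(y - 1 - \hat\mu_{ih}^k))$ and $|\hat\mu_{ih}^k - \mu_{ih}^k| \le C_5\sqrt{H}\log K\,\|\phi\|_{(\Lambda_h^{\mathtt{buffer.e}(\tilde k)})^{-1}}$ by Lemma~\ref{lem:diffmu}, a standard perturbation argument for argmaxes of smooth strongly-concave-near-optimum functions gives $|\hat\alpha_{ih}^k - \alpha_{ih}^{k*}| = \cO(|\hat\mu_{ih}^k - \mu_{ih}^k|)$. Combining these, the suboptimality from the reserve price is $\cO(|\hat\mu_{ih}^k - \mu_{ih}^k|^2)$ — second order — but to be safe and keep a clean $H\log^2 K$ bound I would just bound it by $\cO(|\hat\mu_{ih}^k - \mu_{ih}^k|)$ since $|\hat\mu - \mu| \le 1$.

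Then I would handle the distributional-shift part: with the reserve price fixed (say at $\hat\alpha_{ih}^k$), the revenue functional is a Lipschitz function of the location shift $\mu$, because $r_{ih}^k$ and $\hat r_{ih}^k$ are just $z_{ih}$ shifted by $1+\mu_{ih}^k$ and $1+\hat\mu_{ih}^k$ respectively, and the integrand $\max\{r^-,\alpha\}\ind(\cdot)$ is bounded (valuations lie in $[0,3]$) while $F$ has density bounded by $C_1$ (Assumption~\ref{assumptionf}); a coupling/total-variation argument then shows the expected revenue changes by at most $\cO(C_1 |\hat\mu_{ih}^k - \mu_{ih}^k|)$ — this is where I would be slightly careful since there are two order statistics $r^+$ and $r^-$ among the bidders, but a union bound over the $N$ bidders whose valuation distributions shifted, together with the density bound, controls it. Summing over $i$ and invoking Lemma~\ref{lem:diffmu} for each term produces $|R_h^k - \hat R_h^k| \le C_6 H \log^2 K \,\|\phi(\cdot,\cdot)\|_{(\Lambda_h^{\mathtt{buffer.e}(\tilde k)})^{-1}}$; the extra $\log K$ (making it $\log^2 K$ rather than $\log K$ from Lemma~\ref{lem:diffmu}) comes from the high-probability union bounds needed to make the coupling arguments hold simultaneously across bidders/steps, or alternatively from a slightly loose bounding of the argmax perturbation.

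The main obstacle I anticipate is the argmax-perturbation step: showing $|\hat\alpha_{ih}^k - \alpha_{ih}^{k*}|$ is controlled by $|\hat\mu_{ih}^k - \mu_{ih}^k|$ requires genuine use of the curvature of the posted-price revenue at its optimum, which is exactly what Assumptions~\ref{assumptionf}, \ref{assumptionfdiff}, and~\ref{logconcave} are there to supply (log-concavity for uniqueness/unimodality, bounded density and bounded $f'$ for a uniform lower bound on $|g''|$ near the peak). One must also ensure the optima stay in a compact interval bounded away from degenerate behavior so the curvature bound is uniform in $(x,\upsilon)$ and in $k$; the boundedness of valuations in $[0,3]$ and the support of $F$ in $[-1,1]$ make this possible. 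Everything else is routine: Taylor expansion of $g_{ih}$, a Lipschitz/coupling bound for the distributional shift, and summation over the $N$ bidders.
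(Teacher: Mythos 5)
Your proposal follows essentially the same route as the paper's proof: the paper introduces the intermediate quantity $\tilde R_h^k$ (true valuations, estimated reserve), giving exactly your two-term decomposition into a reserve-price error — handled by Taylor expansion around the Myerson-optimal reserve (Lemma C.3 of Golrezaei et al., giving $W'(\alpha^*)=0$ and $|W''|\le B_4$) combined with the argmax-perturbation bound $|\alpha^{k*}-\alpha^k|\le|\langle\phi,\theta-\hat\theta\rangle|$ (Lemma C.4) — and a distributional-shift error, handled by a Lipschitz/indicator bound using $|f|\le C_1$. The only substantive remark is that your explanation for the $\log^2 K$ factor is misattributed: in the paper it arises purely from keeping the second-order term $(C_5\sqrt{H}\log K\,\|\phi\|_{\Lambda^{-1}})^2$ and absorbing one copy of $\|\phi\|_{\Lambda^{-1}}$ via $\|\phi\|_{\Lambda^{-1}}\le 1/\sqrt\lambda$, not from any additional union bound; your alternative of crushing $|\hat\mu-\mu|^2\lesssim|\hat\mu-\mu|$ gives a strictly tighter $\sqrt{H}\log K$ bound, which of course still satisfies the stated inequality.
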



Let's have an example when $N=1$, i.e., there is only one bidder.
\begin{example}
In this situation, $R(\cdot,\cdot)=\alpha^*(1-F(\alpha^*-1-\mu(\cdot,\cdot)))$ and $\hat R(\cdot,\cdot)=\alpha(1-F(\alpha-1-\hat\mu(\cdot,\cdot)))$. Therefore,
\[
|R(\cdot,\cdot)-\hat R(\cdot,\cdot)|\le (6C_1+1)C_5\sqrt{H}\log K \|\phi(\cdot,\cdot)\|_{{\Lambda}^{-1}},
\]
which is consistent with \Cref{lem:estimater}.
\end{example}

Now, we focus on the regret not in buffer caused by \Cref{algo:estimate}, denoted by $\Delta_1$. In order to facilitate the understanding, we rewrite the definition of $\Delta_1$ explicitly as follows.
\[
    \Delta_1=\sum_{\tau=1}^K[V_1^{\pi^*}(x_1^k)-\tilde V_1^{\pi_{\tilde k}}(x_1^k)]\ind(k\not \in \mathtt{buffer} ).
\]
Let's revisit our thought of bounding regret. We use empirical data to estimate unknown parameters, and then we assume that bidders will bid truthfully to construct the estimation of the R-function and Q-function. Then, we chase down the greedy policy. Therefore, when we take the expectation operator, we assume truthful bidding. Since $\Delta_5$ is easy to bound, we focus on how to bound $\Delta_1$. With a little abuse of notation, we will use $V(\cdot)$ to replace $\tilde V(\cdot)$ from now on.

Then, we have the following lemma.
\begin{lemma}\label{lem:lsvi}
Under \Cref{assumption:linearmdp}, \Cref{assumptionf} and \Cref{assumptionfdiff}, if we set $\poly (\log K)=(C_7+C_6 H \log^2K)\|\phi(\cdot,\cdot)\|_{({\Lambda_h^{\mathtt{buffer.e} (\tilde{k})}})^{-1}}$ in \Cref{algo:estimate}, where $C_7=B_8 H^{\frac{3}{2}} \log K$ and $B_8$ is determined in \Cref{lem:omegaandQ}, it holds that with probability at least $1-2\delta$,
\[
{\Delta}_1 \le C_8 \sqrt{H^5K\log^5 K} ,
\]where $C_8$ is a constant independent of $H$ and $K$.
\end{lemma}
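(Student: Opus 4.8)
The plan is to run the standard LSVI-UCB analysis of~\citet{jin2020provably}, but with two modifications forced by our setting: the reward is the plug-in revenue estimate $\hat R$ rather than an empirical reward, and $\omega_h$ targets $\PP_h V_{h+1}$ rather than $\BB_h V_{h+1}$. First I would establish the key ``good event'' on which the transition-estimation error is controlled: for all $h$ and all $(x,\upsilon)$,
\[
\bigl|\omega_h^\top\phi(x,\upsilon) - (\PP_h \hat V_{h+1})(x,\upsilon)\bigr| \le B_8 H^{3/2}\log K \,\|\phi(x,\upsilon)\|_{(\Lambda_h^{\mathtt{buffer.e}(\tilde k)})^{-1}},
\]
which is the content of \Cref{lem:omegaandQ} and is proved via a self-normalized concentration bound (an $\varepsilon$-net over the class of value functions of the form $\min\{\omega^\top\phi + \hat R + \beta\|\phi\|_{\Lambda^{-1}}, 3H\}$, together with the elliptical-potential machinery). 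Combining this with \Cref{lem:estimater}, which gives $|\hat R_h^k - R_h^k|\le C_6 H\log^2 K\,\|\phi\|_{\Lambda^{-1}}$, the total bonus $\poly(\log K) = (C_7 + C_6 H\log^2 K)\|\phi\|_{\Lambda^{-1}}$ is exactly large enough to dominate both error sources, yielding optimism: $\hat Q_h^{\pi_{\tilde k}}(x,\upsilon)\ge Q_h^{\pi^*}(x,\upsilon)$ for all $h,x,\upsilon$, hence $\tilde V_1^{\pi_{\tilde k}}(x_1)\ge V_1^{\pi^*}(x_1)$ when we feed in truthful bids.

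Next I would run the recursive decomposition. Since $\pi_{\tilde k}$ is greedy with respect to $\hat Q$, on each non-buffer episode $k$ we get the per-step inequality
\[
\tilde V_1^{\pi_{\tilde k}}(x_1^k) - V_1^{\pi^*}(x_1^k) \le \sum_{h=1}^H \bigl(\hat Q_h - \BB_h \hat V_{h+1}\bigr)(x_h^k,\upsilon_h^k) + \text{martingale terms},
\]
and each term $\hat Q_h - \BB_h \hat V_{h+1}$ is bounded by twice the bonus $2\poly(\log K) = 2(C_7 + C_6 H\log^2 K)\|\phi(x_h^k,\upsilon_h^k)\|_{(\Lambda_h^{\mathtt{buffer.e}(\tilde k)})^{-1}}$ on the good event. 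Because of the low-switching update rule (line~\ref{algoline:unknownFupdate}), the matrix $\Lambda_h^{\mathtt{buffer.e}(\tilde k)}$ used in the bonus satisfies $(\Lambda_h^{\mathtt{buffer.e}(\tilde k)})^{-1}\preceq 2(\Lambda_h^k)^{-1}$, so up to a constant factor we may replace it by the current-episode matrix $\Lambda_h^k$. Then the elliptical potential lemma gives $\sum_{k=1}^K \|\phi(x_h^k,\upsilon_h^k)\|_{(\Lambda_h^k)^{-1}} \le \sqrt{K\cdot 2d\log K}$, and summing over $h$ contributes a factor $H$; the martingale terms are handled by Azuma–Hoeffding, contributing $\tilde\cO(\sqrt{H^3 K})$, which is lower-order. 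Collecting the factors: $H$ (sum over steps) $\times$ $\sqrt{H^3}\log K$ (size of $\poly(\log K)$, since $C_7 = B_8 H^{3/2}\log K$ dominates $C_6 H\log^2 K$ in the $H$-exponent but the $C_6$ term is what ultimately sets the $H$-power as $H^{3/2}$... one must track both) $\times$ $\sqrt{dK\log K}$ gives $\tilde\cO(\sqrt{H^5 K})$, i.e. $C_8\sqrt{H^5 K\log^5 K}$ after absorbing logs, with the two failure probabilities (good event plus martingale concentration) summing to $2\delta$.

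The main obstacle will be establishing optimism and the uniform bound in \Cref{lem:omegaandQ} cleanly in the presence of the plug-in revenue: one must verify that the covering-number argument still goes through when the ``reward'' appearing inside $\hat V_{h+1}$ is itself a random function $\hat R$ depending on the data, and that the extra $\tilde\cO(H^{3/2})$-sized slack $C_7$ genuinely suffices. A second, more subtle point is the interplay with the buffer/low-switching structure: the bonus is computed with the stale matrix $\Lambda_h^{\mathtt{buffer.e}(\tilde k)}$, and one must ensure the switching criterion $(\Lambda_h^{\mathtt{buffer.e}(\tilde k)})^{-1}\preceq 2(\Lambda_h^k)^{-1}$ holds throughout each learning block so that the stale bonus is still a valid (up to constant) confidence width for the current episode — this is exactly where \Cref{lem:buffer} and the design of line~\ref{algoline:unknownFupdate} are used. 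Everything else (the recursive value-difference identity, elliptical potential, Azuma–Hoeffding) is routine once these two ingredients are in place.
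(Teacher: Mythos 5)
Your proposal follows essentially the same route as the paper's proof: establish optimism via a covering-number and self-normalized-concentration argument over the value-function class that includes the data-dependent plug-in $\hat R$ (the content of \Cref{lem:omegaandQ} together with \Cref{filtration}, \Cref{lem:covering}, \Cref{lem:VandPV}), run the recursive value-difference decomposition of \Cref{lem:recursive}, absorb the stale bonus matrix using the $(\Lambda_h^{\mathtt{buffer.e}(\tilde k)})^{-1}\preceq 2(\Lambda_h^k)^{-1}$ condition enforced by \Cref{algo:KnownF}, and close with the elliptical-potential lemma and Azuma--Hoeffding. One small slip worth fixing: the quantity that telescopes into $\sum_h (\hat Q_h - \mathcal{B}_h\hat V_{h+1})(x_h^k,\upsilon_h^k)$ plus martingale terms is $\hat V_1(x_1^k) - V_1^{\pi_{\tilde k}}(x_1^k)$ (optimistic estimate minus value of the executed policy), not $\hat V_1(x_1^k) - V_1^{\pi^*}(x_1^k)$; combined with optimism $\hat V_1 \ge V_1^{\pi^*}$ this gives exactly the bound on $\Delta_1$ you are after.
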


\subsection{Proof of \texorpdfstring{\Cref{thm:knownf}}{thm:knownf}}
Let's make a decomposition of the regret at first. It holds that
\[
\textrm{Regret}\le 
\Delta_1+\Delta_2+\Delta_3+\Delta_4+\Delta_5.
\]

$\Delta_1$ is defined in \Cref{lem:lsvi} and with probability at least $1-2\delta$, $\Delta_1\le C_8 \sqrt{H^5K\log^5 K}$. 

$\Delta_2$ comes from the use of a buffer. With \Cref{lem:buffer}, it holds that $\Delta_2\le 3H\frac{3HC_2 \log^2K}{\log \frac{1}{\gamma}}$.

$\Delta_3$ comes from the use of policy $\pi_{\rm  rand}$. By applying \Cref{numofpi0}, it holds that $\Delta_3\le 3H \max\{4,1+\frac{4}{3}\log \frac{1}{\delta}\}$ with probability at least $1-\delta$.

$\Delta_4$ comes from the consequence from the existence of $\mathtt{L}$. Due to \Cref{lem:bound_lie}, we have $\Delta_4\le N H (4{C_1C_3 H}+8\log (\frac{2NH}{\delta})) 3H=3NH^2 (4{C_1C_3 H}+8\log (\frac{2NH}{\delta}))$, with probability at least $1-\delta$. As we have already considered the loss from buffer in $\Delta_2$, there is no need for us to consider it in $\Delta_4$.

$\Delta_5$ comes from the difference between the expectation of revenue when buyers bid truthfully and the actual expectation of revenue when buyers overbid or underbid, but it does not change the outcome. Since we already consider the loss from buffer, the size of overbid or underbid we should think about is less than $\frac{C_3H}{K}$ thanks to \Cref{overbid}. Therefore, the difference between the expectation of revenue when buyers bid truthfully and the actual expectation of revenue when buyers overbid or underbid without changing the outcome is less than $\frac{C_3H}{K}$ each step. So, it holds that $\Delta_5\le C_3 H^2$.

When estimating $\hat R(\cdot,\cdot)$, we have at most probability $\delta$ not satisfying the inequality in \Cref{lem:glm}.

Consequently, we set $\delta=\frac{p}{5}$, and it ends our proof.\qed 

\section{Omitted Proof in \texorpdfstring{\Cref{sec:unknownF}}{sec:unknownF}}\label{sec:proofunknownf}
Compared to \Cref{sec:proofknownf}, this section introduces a well-performed estimator to estimate the underlying distribution. With its help, we prove corresponding theorems when the market noise distribution is unknown.
\subsection{Useful Lemmas for Proving \texorpdfstring{\Cref{thm:unknownf}}{thm:unknownf}}
 In order to estimate noise distribution, we have the following lemma \citep{dvoretzky1956asymptotic} to bound the gap between the true distribution and the empirical distribution. We assume that $\hat F(\cdot)$ and $\hat f(\cdot)$ inherit all the properties of $F(\cdot)$ and $f(\cdot)$, because we can easily use some smooth kernels\footnote{It may introduce a constant 2 when describing the distance of two distributions. However, it doesn't matter as we consider order only.} to achieve this goal. However, in order to make the paper easy to understand, we do not explicitly write down the choice of a smooth kernel.

\begin{lemma}\label{lem:DKW}
    Given $t \in \mathbb N$, let $m_1, m_2, \dotsc, m_t$ be real-valued independent and identically distributed random variables with cumulative distribution function $F(\cdot)$. Let $\hat F_t(\cdot)$ denote the associated empirical distribution function defined by $\hat F_t(x)=\frac{1}{t}\sum_{i=1}^t \mathbf{1}_{\{m_i\le x\}}$ where $x\in \mathbb{R}$. Then with probability $1-\delta$, it holds
    \[ \sup_x |\hat F_t(x)-F(x)| \leq \sqrt{\frac{1}{2} \log \frac{2}{\delta}} t^{-\frac{1}{2}}. \]
\end{lemma}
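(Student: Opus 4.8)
The statement is the Dvoretzky--Kiefer--Wolfowitz inequality in the sharp (Massart) form, and I would prove it along the classical route. First, observe that the claimed high-probability bound is exactly the inversion of a two-sided tail estimate: it suffices to show
\[
\Pr\!\left(\sup_x |\hat F_t(x) - F(x)| > \epsilon\right) \le 2 e^{-2 t \epsilon^2}\quad\text{for all }\epsilon \ge 0,
\]
since setting $2 e^{-2 t \epsilon^2} = \delta$ yields $\epsilon = \sqrt{\tfrac12 \log(2/\delta)}\, t^{-1/2}$. The first step is a reduction to the uniform distribution via the probability integral transform: put $U_i := F(m_i)$ and let $\hat G_t$ be the empirical c.d.f.\ of $U_1,\dots,U_t$; then a short monotonicity argument gives $\sup_x|\hat F_t(x) - F(x)| \le \sup_{u\in[0,1]}|\hat G_t(u) - u|$, and (handling non-continuous $F$ by a standard coupling/smoothing argument) one may assume $U_1,\dots,U_t$ are i.i.d.\ $\mathrm{Unif}[0,1]$. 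Thus it is enough to control the Kolmogorov--Smirnov statistic of a uniform sample.

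The second step reduces the two-sided bound to a one-sided one: by a union bound together with the symmetry $U_i\mapsto 1-U_i$,
\[
\Pr\!\left(\sup_u|\hat G_t(u)-u|\ge\epsilon\right)\le 2\,\Pr\!\left(D_t^+\ge\epsilon\right),\qquad D_t^+:=\sup_u\bigl(\hat G_t(u)-u\bigr),
\]
so it remains to prove $\Pr(D_t^+\ge\epsilon)\le e^{-2t\epsilon^2}$. Since $\hat G_t$ is a step function with jumps at the order statistics $U_{(1)}\le\cdots\le U_{(t)}$, one has the exact identity $D_t^+=\max_{1\le j\le t}\bigl(j/t - U_{(j)}\bigr)$, which recasts $\{D_t^+\ge\epsilon\}$ as the event that the uniform empirical process first crosses the line $u\mapsto u+\epsilon$, equivalently that a suitable binomial partial-sum process exceeds a linear boundary.

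The heart of the argument is then this one-sided deviation bound \emph{with the sharp constant $2$ in the exponent and no polynomial-in-$t$ prefactor}; the naive approach of union-bounding over the $t+1$ candidate crossing levels $j/t$ loses a factor of order $t$. I would obtain the sharp estimate by the reflection / exponential-supermartingale argument: decompose $\Pr(D_t^+\ge\epsilon)$ according to the first-passage index, apply a Chernoff bound to the binomial increments with the exponential parameter tuned to the linear boundary, and optimize the resulting exponent --- this is precisely the delicate constant-chasing of \citet{dvoretzky1956asymptotic} (originally with a suboptimal constant, later sharpened to the factor $2$ by Massart). Combining this one-sided bound with the factor-$2$ union bound and the uniform reduction yields the displayed two-sided tail estimate, and inverting it proves the lemma. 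The main obstacle is exactly the sharp one-sided bound: everything else is routine, but obtaining the clean $t^{-1/2}$ rate \emph{without} an extra $\sqrt{\log t}$ requires the reflection-type argument rather than a naive discretization or a generic VC-type uniform deviation inequality (the latter would in fact also suffice for the use made of this lemma in the paper, at the cost of that logarithmic factor).
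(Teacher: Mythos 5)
The paper does not prove this lemma at all: it is imported verbatim as the Dvoretzky--Kiefer--Wolfowitz inequality with Massart's sharp constant, cited to \citet{dvoretzky1956asymptotic} and used as a black box. Your proposal is therefore necessarily a ``different route'' in the trivial sense that you attempt to reconstruct the proof the paper omits. Your outline is the correct classical one --- inversion of the tail bound, reduction to the uniform case via the probability integral transform, two-sided to one-sided reduction by symmetry and a union bound, and then the sharp one-sided exponential bound $\Pr(D_t^+\ge\epsilon)\le e^{-2t\epsilon^2}$ --- and you correctly locate the only genuinely hard step, namely obtaining the constant $2$ in the exponent with no polynomial prefactor, which cannot be had from naive discretization or a generic VC bound. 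Two caveats. First, as written your argument is a roadmap rather than a proof: the decisive one-sided estimate is itself deferred to ``the delicate constant-chasing'' of Dvoretzky et al.\ and Massart, so you end up in the same place as the paper, just with more scaffolding; if the goal is a self-contained proof, that step still has to be executed. Second, a minor precision issue: Massart's one-sided bound $\Pr(D_t^+>\epsilon)\le e^{-2t\epsilon^2}$ holds only for $\epsilon\ge\sqrt{\log 2/(2t)}$, so the union-bound step should note that for smaller $\epsilon$ the two-sided claim $2e^{-2t\epsilon^2}\ge 1$ is vacuous, which closes the gap. Your closing observation --- that a cruder uniform deviation bound with an extra $\sqrt{\log t}$ would also suffice for how the lemma is used downstream in \Cref{lem:boundF} --- is accurate and worth keeping.
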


Now, similar to the methodology in \Cref{sec:proofknownf}, we state the following lemmas in parallel.

\begin{lemma}\label{lem:unknownbuffer}
Under \Cref{assumption:linearmdp} about linear MDP, it holds that the number of buffer episodes is not larger than $C_{9} H\log^2 K$. Then, the number of corresponding steps is not larger than $C_{9} H^2\log^2 K$, where $C_{9}$ is a constant that only depends on $d$ and $\lambda$.
\end{lemma}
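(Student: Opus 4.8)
The plan is to mirror the proof of \Cref{lem:buffer} and to account for the single extra triggering rule in line~\ref{algoline:unknownFupdate_alt} of \Cref{algo:unKnownF}: a new buffer period is scheduled in episode $k$ exactly when either (i) there is some $h\in[H]$ with $(\Lambda_h^{\mathtt{buffer.e}(\tilde k)})^{-1}\succeq 2(\Lambda_h^k)^{-1}$, or (ii) $\log_2 k$ is an integer. I would bound the number of buffer periods of each type separately, and then bound the number of buffer \emph{episodes} by multiplying the number of buffer periods by the (uniform) buffer length $\tfrac{3\log k}{\log(1/\gamma)}\le \tfrac{3\log K}{\log(1/\gamma)}$, and the number of buffer \emph{steps} by a further factor of $H$.

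For type (i), fix $h$ and track the quantity $\det(\Lambda_h^{\mathtt{buffer.e}(\tilde k)})$ as $\tilde k$ increases. Since $\mathtt{buffer.e}(\tilde k)\le k$ we always have $\Lambda_h^k\succeq\Lambda_h^{\mathtt{buffer.e}(\tilde k)}\succ 0$, and by operator-antitonicity of matrix inversion the condition $(\Lambda_h^{\mathtt{buffer.e}(\tilde k)})^{-1}\succeq 2(\Lambda_h^k)^{-1}$ is equivalent to $\Lambda_h^k\succeq 2\Lambda_h^{\mathtt{buffer.e}(\tilde k)}$, which forces $\det(\Lambda_h^k)\ge 2^d\det(\Lambda_h^{\mathtt{buffer.e}(\tilde k)})$. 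Under \Cref{assumption:linearmdp} we have $\|\phi(x,\upsilon)\|\le 1$, so $\tr(\Lambda_h^k)\le \lambda d + k$ and hence $\det(\Lambda_h^k)\le (\lambda + k/d)^d$ by AM--GM, while $\det(\Lambda_h^1)\ge \lambda^d$. Each type-(i) firing at step $h$ thus multiplies $\det(\Lambda_h^{\mathtt{buffer.e}(\cdot)})$ by at least $2^d$, so type (i) can fire for a fixed $h$ at most $\log_{2^d}\!\big((\lambda+K/d)^d/\lambda^d\big)=\log_2(1+K/(\lambda d))=\cO(\log K)$ times. The only point to verify here is that an intervening type-(ii) trigger merely advances the reference $\mathtt{buffer.e}(\tilde k)$ to a point with weakly larger $\Lambda_h$, which can only make rule (i) harder to satisfy and hence cannot inflate this count. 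Summing over $h\in[H]$ gives $\cO(H\log K)$ buffer periods of type (i).

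For type (ii), $\log_2 k$ is an integer for at most $\lfloor\log_2 K\rfloor+1=\cO(\log K)$ values of $k\le K$, so there are $\cO(\log K)$ buffer periods of type (ii). Adding the two counts, the total number of buffer periods is $\cO(H\log K)$; since every buffer period lasts at most $\tfrac{3\log K}{\log(1/\gamma)}$ episodes, the number of buffer episodes is $\cO(H\log^2 K)=C_9 H\log^2 K$ for a constant $C_9$ depending only on $d$ and $\lambda$ (with the $1/\log(1/\gamma)$ factor absorbed into $C_9$, exactly as the $1/\log\frac1\gamma$ appears in \Cref{lem:buffer}). Finally, each episode consists of exactly $H$ steps, so the number of buffer steps is at most $C_9 H^2\log^2 K$, which is the claim.

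I expect this lemma to be essentially routine; the only mildly delicate point, and the one I would be most careful about, is the bookkeeping in the previous paragraph showing that interleaving the two triggering rules does not break the determinant-doubling argument for rule (i) --- resetting the reference matrix upward is harmless --- together with correctly tracking that $C_9$ must absorb the $1/\log(1/\gamma)$ coming from the fixed buffer length.
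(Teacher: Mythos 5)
Your proposal is correct and follows essentially the same route as the paper: the paper's proof likewise splits the buffer endpoints into the determinant-doubling triggers (rule (i)) and the power-of-two triggers (rule (ii)), invokes the bound from \Cref{lem:buffer} for the former, counts $\lfloor\log_2 K\rfloor+1$ for the latter, and combines. Your write-up is a bit more careful in one respect the paper glosses over: you explicitly note that an intervening rule-(ii) trigger only advances the reference $\mathtt{buffer.e}(\tilde k)$ to a point with a larger covariance matrix, so the determinant sequence stays monotone and the $\cO(\log K)$-doublings-per-$h$ count is unaffected, whereas the paper simply cites \Cref{lem:buffer} without remarking on the interleaving. (One small mismatch worth flagging, though it does not affect correctness: the paper's statement claims $C_9$ depends only on $d$ and $\lambda$, but, as your own bookkeeping shows, the uniform buffer length carries a $1/\log(1/\gamma)$ factor that must live inside $C_9$, just as in \Cref{lem:buffer} it is written out explicitly.)
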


Recall that when market noise distribution is unknown, we implement \Cref{algo:simulation} to generate $\tilde q$ and we use $\tilde q$ to estimate $\theta$ instead of $q$. Therefore, $\mathtt{L}$ there considers simulation outcome $\tilde q$ rather than real outcome $q$. We formalize the definition of ${\mathtt{L}}$ there as follows, and we use $\tilde \rho$ to represent the reserve price in \Cref{algo:simulation}.
\[
\mathtt{L}=\{k: \ind(v_{ih}^k>\max\{b_{-ih}^{k+},\tilde \rho_{ih}^k\})\neq \ind(b_{ih}^k>\max\{b_{-ih}^{k+},\tilde \rho_{ih}^k\})\}.
\]
\begin{equation}
    \label{eqn:defn_lie_unknownF}
    \mathtt{L}=\{k: \ind(v_{ih}^k>\max\{b_{-ih}^{k+},\tilde \rho_{ih}^k\})\neq \ind(b_{ih}^k>\max\{b_{-ih}^{k+},\tilde \rho_{ih}^k\})\}.
\end{equation}
\begin{lemma}\label{lem:unknownbound_lie}
    With probability at least $1-\delta$, it holds that for any given $i$, $h$
    \[
    \mathtt{L}\le  C_{9}H\log^2 K+4{C_1C_3 H}+8\log (\frac{2NH}{\delta}) \le C_{10} H \log^2K,
    \]
    where $C_3$ is defined in \Cref{overbid} and $C_{10}$ is a constant independent of $K$ and $H$.
\end{lemma}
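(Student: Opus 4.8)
\textbf{Proof proposal for \Cref{lem:unknownbound_lie}.}

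The plan is to mirror the argument for \Cref{lem:bound_lie} in the known-distribution case, but with the simulated reserve prices $\tilde\rho_{ih}^k$ drawn from $\mathrm{Unif}([0,3])$ playing the role that $\rho_{ih}^k$ played before. First I would decompose the index set $\mathtt{L}$ from \eqref{eqn:defn_lie_unknownF} into two disjoint contributions: the episodes that fall inside a buffer period, and the episodes outside buffer periods. For the first contribution I simply invoke \Cref{lem:unknownbuffer}, which bounds the total number of buffer episodes by $C_9 H\log^2 K$; every such episode contributes at most one to $\mathtt{L}$, giving the $C_9 H\log^2 K$ term. For the episodes outside buffer periods, \Cref{overbid} applies verbatim (its statement is about rational bidders outside buffer periods and does not depend on whether $F$ is known), so a rational bidder's cumulative deviation from truthful bidding is at most $C_3 H/K$ across the non-buffer episodes.

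Next I would bound how many of those small-deviation non-buffer episodes can actually flip the simulated indicator $\ind(b_{ih}^k \ge \max\{b_{-ih}^{k+},\tilde\rho_{ih}^k\})$ relative to its truthful counterpart. The indicator changes only when $v_{ih}^k$ and $b_{ih}^k$ straddle the threshold $\max\{b_{-ih}^{k+},\tilde\rho_{ih}^k\}$, i.e.\ when this threshold lies in the interval between the true value and the reported bid. Since $\tilde\rho_{ih}^k$ is drawn uniformly on $[0,3]$ independently of the bid, conditioned on the realized deviation $|b_{ih}^k - v_{ih}^k|$ the probability that the uniform draw lands in the straddling window is at most $\tfrac13|b_{ih}^k - v_{ih}^k|$ (and, using \Cref{assumptionf}, the $b_{-ih}^{k+}$ part contributes an analogous density-bounded probability, absorbing the constant $C_1$). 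Summing the per-episode flip probabilities over all non-buffer episodes and using $\sum_k |b_{ih}^k - v_{ih}^k| \le C_3 H/K \cdot (\text{something}) $—more precisely the total-deviation bound from \Cref{overbid} multiplied by $K$ episodes gives $O(C_1 C_3 H)$—yields an expected number of flips of order $C_1 C_3 H$. A Chernoff/Bernstein bound on this sum of independent indicators (independence coming from the fresh uniform draws in \Cref{algo:simulation}) then upgrades the expectation bound to a high-probability bound, with the additive $8\log(2NH/\delta)$ slack and a union bound over the $NH$ pairs $(i,h)$ absorbing the failure probability into $\delta$.

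Collecting the three pieces—$C_9 H\log^2 K$ from buffers, $4C_1 C_3 H$ from the expected flip count, and $8\log(2NH/\delta)$ from the concentration/union-bound slack—gives exactly the stated intermediate bound, and since each of the three terms is $O(H\log^2 K)$ (with $\log(1/\delta)$ treated as a constant), they combine into $C_{10} H\log^2 K$ for an appropriate constant $C_{10}$ independent of $H$ and $K$. The main obstacle I anticipate is the concentration step: because the deviations $|b_{ih}^k - v_{ih}^k|$ are themselves chosen adaptively by a strategic bidder who sees past data, one must be careful that the flip indicators, while conditionally independent given the bid sequence (the randomness is the independent uniform draws $\tilde\rho_{ih}^k$), still admit a clean martingale/Freedman-type concentration; the bidder cannot use the simulation randomness to his advantage since \Cref{algo:simulation}'s draws are fresh and unobserved at bidding time, and this is the fact that makes the argument go through.
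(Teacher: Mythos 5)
Your overall skeleton matches the paper's: decompose $\mathtt{L}$ into buffer and non-buffer contributions, bound the former by \Cref{lem:unknownbuffer}, bound per-step flip probabilities outside buffers, apply a martingale concentration bound, and union bound over $(i,h)$. The paper's own proof is essentially a pointer: it observes that the proof of \Cref{lem:bound_lie} ``is conditional on reserve price and others' bid,'' so replacing $q$ by $\tilde q$ (and \Cref{lem:buffer} by \Cref{lem:unknownbuffer}) is all that is needed. You are reproducing that argument, which is good, but the specific per-step probability step you propose diverges from the paper's in a way that creates a gap.

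The paper (and \Cref{lem:bound_lie}) conditions on the \emph{threshold} $m = \max\{b_{-ih}^{k+},\tilde\rho_{ih}^k\}$ and treats bidder $i$'s own noise $z_{ih}^k$ as the free randomness: given $m$ and the bidder's (arbitrary, possibly adversarial) strategy with a deviation bounded by $\alpha$, a flip forces $v_{ih}^k \in [m, m+\alpha)$, hence $z_{ih}^k$ in a window of width $\alpha$, and \Cref{assumptionf} gives $\Pr(\cdot\,|\,\mathcal F_k)\le C_1\alpha$. This is robust because $z_{ih}^k$ is independent of $m$ conditional on the history, and its density is bounded by assumption. Your proposal instead conditions on $(v_{ih}^k,b_{ih}^k)$ and integrates out $\tilde\rho_{ih}^k$ (fine, this gives $|v-b|/3$) \emph{and} $b_{-ih}^{k+}$, which you claim ``contributes an analogous density-bounded probability'' via \Cref{assumptionf}. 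That last claim is where the argument breaks. The quantity $b_{-ih}^{k+}$ is the maximum of the \emph{reported bids} of the other strategic bidders. Nothing in \Cref{assumptionf} controls its density: if some bidder $j\neq i$ plays, say, a constant bid independent of $z_{jh}^k$, then $b_{jh}^k$ has a point mass and $b_{-ih}^{k+}$ has no density at all, so $\Pr(b_{-ih}^{k+}\in \textrm{window})$ can be $\Theta(1)$ for an arbitrarily small window. In that case, whenever $b_{-ih}^{k+}$ lies in the straddling window the flip occurs with probability close to one regardless of $\tilde\rho_{ih}^k$, and your bound fails. The fix is exactly the paper's: do not try to integrate out $b_{-ih}^{k+}$; condition on it (and on $\tilde\rho_{ih}^k$) and push the randomness onto $z_{ih}^k$, whose density is guaranteed bounded by $C_1$.

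One smaller point: even in the favorable case where $b_{-ih}^{k+}$ happened to have bounded density, a union/max over the $N-1$ other bidders would introduce an $N$-dependent constant in place of the clean $4C_1C_3H$ stated in the lemma; the paper's conditioning approach produces the constant in the statement directly. Your buffer-count, concentration, and union-bound steps are otherwise consistent with the paper's argument.
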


\begin{lemma}\label{lem:unknownglm}
We use $\theta_{ih}^*$ to denote the true parameter and $\hat \theta_{ih}$ to represent the outcome from \Cref{algo:Fandthetahat} in episode $\mathtt{buffer.e} (\tilde{k})$. Therefore, under \Cref{assumptionf} and \Cref{assumptionfdiff}, for any $i$ and $h$, it holds the following union bound that $C_{11}$ is a constant and
\[
\sqrt{(\hat \theta_{ih}-\theta_{ih}^*)^T\Lambda^{\mathtt{buffer.e} (\tilde{k})}(\hat \theta_{ih}-\theta_{ih}^*)} \le C_{11} \sqrt{H}\log K,
\]
with probability at least $1-\delta$, conditional on \textrm{Good Event} $\mathscr{E}$.
\end{lemma}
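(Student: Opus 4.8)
The plan is to observe that, unlike the estimator \eqref{algo:thetahat} analyzed in \Cref{lem:glm}, which is a \emph{generalized} linear regression with the unknown link $F$, the estimator \eqref{algo:Fandthetahat} for $\hat\theta_{ih}$ is an ordinary constrained least-squares fit of $3N\tilde q_{ih}^\tau$ onto $1+\langle\phi(x_h^\tau,\upsilon_h^\tau),\theta\rangle$; hence the proof reduces to a standard self-normalized least-squares concentration argument plus a bias term coming from untruthful bidding. First I would pin down the conditional mean of the simulated outcome: in \Cref{algo:simulation} the virtual bidder index is uniform on $[N]$ and the virtual reserve is drawn $\tilde\rho_{ih}^\tau\sim\mathrm{Unif}([0,3])$, both fresh and independent of the history, and since a truthful bid equals $r_{ih}^\tau=1+\mu_{ih}^\tau+z_{ih}^\tau\in[0,3]$ with $\E[z_{ih}^\tau]=0$, this yields $\E[\,3N\bar q_{ih}^\tau-1\mid\mathcal F_{\tau-1},x_h^\tau,\upsilon_h^\tau\,]=\langle\phi(x_h^\tau,\upsilon_h^\tau),\theta_{ih}^*\rangle$, where $\bar q_{ih}^\tau$ is the simulated outcome that a \emph{truthful} bid would have produced. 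This identifies the signal part of the regression target.

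Next I would decompose $3N\tilde q_{ih}^\tau-1=\langle\phi^\tau,\theta_{ih}^*\rangle+\xi^\tau+\beta^\tau$, where $\xi^\tau:=3N\bar q_{ih}^\tau-1-\langle\phi^\tau,\theta_{ih}^*\rangle$ is a mean-zero, $(3N+1)$-bounded martingale-difference sequence for the natural filtration, and $\beta^\tau:=3N(\tilde q_{ih}^\tau-\bar q_{ih}^\tau)$ is the bias due to untruthful reporting. The structural point is that $\beta^\tau$ vanishes unless untruthfulness flips the simulated outcome, i.e.\ unless $\tau\in\mathtt L$ as in \eqref{eqn:defn_lie_unknownF}, where $|\beta^\tau|\le 3N$; on the good event $\mathscr E$, \Cref{lem:unknownbound_lie} (the place where \Cref{assumptionf} and \Cref{assumptionfdiff} enter, through \Cref{overbid} and \Cref{lem:unknownbuffer}) bounds $|\mathtt L|\le C_{10}H\log^2K$. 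Since $\theta_{ih}^*$ is feasible for $\|\theta\|\le 2\sqrt d$, the basic inequality for least squares, after adding $\|\hat\theta_{ih}-\theta_{ih}^*\|^2$ to both sides to pass from $\sum_\tau\phi^\tau(\phi^\tau)^\top$ to $\Lambda:=\Lambda_h^{\mathtt{buffer.e}(\tilde k)}=\sum_\tau\phi^\tau(\phi^\tau)^\top+I$, gives
\[
\|\hat\theta_{ih}-\theta_{ih}^*\|_\Lambda^2\;\le\;2\Big\langle\textstyle\sum_\tau(\xi^\tau+\beta^\tau)\phi^\tau,\,\hat\theta_{ih}-\theta_{ih}^*\Big\rangle+\|\hat\theta_{ih}-\theta_{ih}^*\|^2,
\]
and, using $\|\hat\theta_{ih}-\theta_{ih}^*\|\le 3\sqrt d$ and Cauchy–Schwarz in the $\Lambda$/$\Lambda^{-1}$ norms, solving the resulting quadratic yields $\|\hat\theta_{ih}-\theta_{ih}^*\|_\Lambda\le 2\|\sum_\tau\xi^\tau\phi^\tau\|_{\Lambda^{-1}}+2\|\sum_{\tau\in\mathtt L}\beta^\tau\phi^\tau\|_{\Lambda^{-1}}+3\sqrt d$.

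It then remains to bound the two $\Lambda^{-1}$-norms. For the noise term I would apply the standard self-normalized martingale concentration bound (the same tool used in \Cref{lem:glm}), getting $\|\sum_\tau\xi^\tau\phi^\tau\|_{\Lambda^{-1}}\le\tilde\cO(N\sqrt d)$ with probability at least $1-\delta/(NH)$, then union bound over $i\in[N]$ and $h\in[H]$. For the bias term, Cauchy–Schwarz gives $\|\sum_{\tau\in\mathtt L}\beta^\tau\phi^\tau\|_{\Lambda^{-1}}\le\sqrt{\sum_{\tau\in\mathtt L}(\beta^\tau)^2}\cdot\sqrt{\sum_{\tau\in\mathtt L}\|\phi^\tau\|_{\Lambda^{-1}}^2}\le 3N\sqrt{|\mathtt L|}\cdot\sqrt{\sum_\tau\|\phi^\tau\|_{\Lambda^{-1}}^2}$, and the elliptical-potential (log-determinant) lemma bounds $\sum_\tau\|\phi^\tau\|_{\Lambda^{-1}}^2\le 2d\log(1+K/d)=\tilde\cO(d)$; together with $|\mathtt L|\le C_{10}H\log^2K$ this is $\tilde\cO(N\sqrt d\,\sqrt H\log K)$. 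Absorbing $N$, $d$, and logarithmic factors into $C_{11}$ gives $\|\hat\theta_{ih}-\theta_{ih}^*\|_\Lambda\le C_{11}\sqrt H\log K$ on $\mathscr E$ with the claimed probability.

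I expect the main obstacle to be the bias term: although a single untruthful episode can shift the regression target by as much as $\Theta(N)$ and there are $|\mathtt L|=\tilde\cO(H)$ such episodes, their combined contribution to $\|\hat\theta_{ih}-\theta_{ih}^*\|_\Lambda$ must come out $\tilde\cO(\sqrt H)$ rather than the naive $\tilde\cO(H)$, precisely because aggregating in the $\Lambda^{-1}$-norm and invoking the elliptical-potential lemma recovers a square root; getting this accounting right is what ultimately makes the $\tilde\cO(\sqrt K)$ end-to-end regret possible. A secondary point requiring care is confirming that $\xi^\tau$ is a genuine mean-zero martingale difference — the randomness guaranteeing this is exactly the fresh, independent draw of the virtual bidder and virtual reserve in \Cref{algo:simulation} together with the mean-zero market noise, conditioned on the realized $(x_h^\tau,\upsilon_h^\tau)$, which is the whole reason the simulation device is introduced.
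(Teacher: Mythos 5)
Your proposal is correct, but it takes a genuinely different route than the paper. The paper's own proof of \Cref{lem:unknownglm} is a one-line reduction to \Cref{lem:glm}: it observes that the simulation replaces the unknown link $z\mapsto 1-F(\cdot)$ with the \emph{known, linear} link $z\mapsto\tfrac{1}{3N}(1+z)$ (after verifying $\E[\tilde q_{ih}^\tau\mid x_h^\tau,\upsilon_h^\tau]=\tfrac{1}{3N}(1+\langle\phi_h^\tau,\theta_{ih}^*\rangle)$ under truthful bidding), and then simply re-runs the generalized-linear analysis of \Cref{lem:glm} with $c_1=C_1=\tfrac{1}{3N}$. That GLM analysis absorbs the $\mathtt L$ flips through \Cref{6lie} (adding $6\mathtt L$ to the empirical risk), handles the martingale noise by Azuma--Hoeffding together with an $\epsilon$-cover over pairs $(\theta,\theta')$ — the cover is needed there because the effective regression weight $D_\tau(\hat\theta,\theta^*)$ depends on the random estimate — and finally converts to the $\Lambda$-norm via $D_\tau^2\ge c_1^2$.

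You instead notice that once the link is linear there is no random derivative factor to chase, so the whole covering step can be dropped: the basic inequality for constrained least squares directly produces $\|\hat\theta_{ih}-\theta_{ih}^*\|_\Lambda \lesssim \|\sum_\tau\xi^\tau\phi^\tau\|_{\Lambda^{-1}} + \|\sum_{\tau\in\mathtt L}\beta^\tau\phi^\tau\|_{\Lambda^{-1}} + \sqrt d$, the noise term is controlled by a self-normalized martingale bound (no $\theta$-dependence, hence no cover), and the bias term is handled by triangle inequality plus Cauchy--Schwarz plus $\sum_\tau\|\phi^\tau\|_{\Lambda^{-1}}^2\le d$ (the trace identity of \Cref{lem:sumphi}, which is even a bit tighter than the log-det elliptical-potential bound you invoke — either works once you absorb logs). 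Both routes land on $\tilde\cO(\sqrt{H}\log K)$; yours is cleaner and more transparent for the linear sub-case, while the paper's choice to recycle \Cref{lem:glm} keeps the two noise-distribution regimes on an identical codepath. Your treatment of the martingale structure (fresh simulation randomness plus fresh market noise, conditional on $\phi_h^\tau$) and the role of \Cref{lem:unknownbound_lie}/\Cref{overbid} in making $|\mathtt L|=\tilde\cO(H)$ are exactly what the paper uses implicitly via \Cref{6lie}. No gap.
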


Same as \Cref{lem:diffmu}, we have the following lemma.
\begin{lemma}\label{lem:unknowndiffmu}
Conditional on \textrm{Good Event} $\mathscr{E}$, it holds that
\[
|\hat \mu_{ih}^k(\cdot,\cdot)-\mu_{ih}^k(\cdot,\cdot)|\le C_{11} \sqrt{H}\log K \|\phi(\cdot,\cdot)\|_{({\Lambda_h^{\mathtt{buffer.e} (\tilde{k})}})^{-1}}.
\]
\end{lemma}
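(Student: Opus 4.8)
\textbf{Proof proposal for \Cref{lem:unknowndiffmu}.}
The plan is to reduce the statement to \Cref{lem:unknownglm} via the linear parametrization and a single application of Cauchy--Schwarz in the weighted inner product induced by $\Lambda_h^{\mathtt{buffer.e}(\tilde k)}$. This mirrors exactly the argument used to derive \Cref{lem:diffmu} from \Cref{lem:glm} in the known-$F$ case; the only bookkeeping point is to make sure the covariance matrix appearing in \Cref{lem:unknownglm} is the same one used to define the norm $\|\phi(\cdot,\cdot)\|_{(\Lambda_h^{\mathtt{buffer.e}(\tilde k)})^{-1}}$ here, which it is, since $\hat\theta_{ih}$ denotes the output of \Cref{algo:Fandthetahat} at episode $\mathtt{buffer.e}(\tilde k)$, the last update before episode $k$.

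First I would invoke \Cref{assumption:linearmdp}, which gives $\mu_{ih}(x,\upsilon)=\langle\phi(x,\upsilon),\theta_{ih}^*\rangle$, together with the definition $\hat\mu_{ih}(\cdot,\cdot)=\langle\phi(\cdot,\cdot),\hat\theta_{ih}\rangle$ from \Cref{algo:estimateunknownF}, so that
\[
|\hat\mu_{ih}^k(x,\upsilon)-\mu_{ih}^k(x,\upsilon)| = \bigl|\langle\phi(x,\upsilon),\hat\theta_{ih}-\theta_{ih}^*\rangle\bigr|.
\]
Next I would write $\langle\phi,\hat\theta_{ih}-\theta_{ih}^*\rangle=\langle (\Lambda_h^{\mathtt{buffer.e}(\tilde k)})^{-1/2}\phi,\ (\Lambda_h^{\mathtt{buffer.e}(\tilde k)})^{1/2}(\hat\theta_{ih}-\theta_{ih}^*)\rangle$ and apply Cauchy--Schwarz to get
\[
\bigl|\langle\phi,\hat\theta_{ih}-\theta_{ih}^*\rangle\bigr| \le \|\phi\|_{(\Lambda_h^{\mathtt{buffer.e}(\tilde k)})^{-1}}\cdot\sqrt{(\hat\theta_{ih}-\theta_{ih}^*)^T\Lambda_h^{\mathtt{buffer.e}(\tilde k)}(\hat\theta_{ih}-\theta_{ih}^*)}.
\]
Finally, conditional on the Good Event $\mathscr{E}$, \Cref{lem:unknownglm} bounds the second factor by $C_{11}\sqrt H\log K$, and combining the two displays yields the claim with the same constant $C_{11}$.

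There is essentially no hard step here: once \Cref{lem:unknownglm} is in hand, the lemma is a one-line Cauchy--Schwarz estimate. The only thing to be careful about is that the bound in \Cref{lem:unknownglm} is stated for the ellipsoidal norm $\sqrt{(\hat\theta_{ih}-\theta_{ih}^*)^T\Lambda^{\mathtt{buffer.e}(\tilde k)}(\hat\theta_{ih}-\theta_{ih}^*)}$ (not a Euclidean norm), so no additional eigenvalue argument on $\Lambda_h$ is needed and we do not need positive-definiteness of the empirical (unregularized) covariance matrix. Hence the whole argument goes through verbatim for $\tilde q$-based estimation just as it did for the $q$-based one.
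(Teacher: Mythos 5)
Your proposal is correct and matches the paper's argument: the paper derives \Cref{lem:unknowndiffmu} exactly as it derives \Cref{lem:diffmu}, namely by combining the weighted Cauchy--Schwarz bound $|\langle\phi,\hat\theta-\theta^*\rangle|\le\sqrt{(\hat\theta-\theta^*)^T\Lambda(\hat\theta-\theta^*)}\,\|\phi\|_{\Lambda^{-1}}$ (the paper's \Cref{cauchy}) with the elliptical-norm estimate from \Cref{lem:unknownglm}. Your observation that no positive-definiteness beyond the regularized $\Lambda_h^{\mathtt{buffer.e}(\tilde k)}$ is needed is also consistent with the paper.
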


Now, we introduce a lemma bounding the gap between the noise distribution $F(\cdot)$ and $\hat F(\cdot)$. 
\begin{lemma}\label{lem:boundF}
Conditional on \textrm{Good Event} $\mathscr{E}$, it holds with probability at least $1-\delta$ that for any $x$ in episode $\mathtt{buffer.e} (\tilde{k})$
\begin{align*}
|F(x)-\hat F(x)|\le& \sqrt{\frac{1}{2}\log \frac{2K}{\delta}} {(NH\mathtt{buffer.e} (\tilde{k}))}^{-\frac{1}{2}}+\frac{C_1C_3 H}{K}+\frac{C_{9} H \log^2 K}{\mathtt{buffer.e} (\tilde{k})}\\
&+C_1C_{11}\sqrt{H}\log K \overline{\|\phi(x_h^\tau,\upsilon_h^\tau)\|}_{({\Lambda_h^{\mathtt{buffer.e} (\tilde{k})}})^{-1}} \\
\le& C_{12} \frac{H\log^2 K}{\sqrt{\mathtt{buffer.e} (\tilde{k})}},    
\end{align*}
where $C_{12}$ is a constant.
\end{lemma}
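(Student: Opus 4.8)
Throughout I condition on the good event $\mathscr{E}$ (so that \Cref{overbid}, \Cref{lem:unknownbuffer} and \Cref{lem:unknowndiffmu} apply) and abbreviate $T := \mathtt{buffer.e}(\tilde k)$, so that $\hat F(x) = \frac{1}{NHT}\sum_{i=1}^N\sum_{\tau=1}^T\sum_{h=1}^H \ind(y_{ih}^\tau \le x)$ with $y_{ih}^\tau := b_{i\tau h} - 1 - \langle \phi(x_h^\tau,\upsilon_h^\tau),\hat\theta_{ih}\rangle$. The idea is to interpose the idealized empirical distribution $\bar F(x) := \frac{1}{NHT}\sum_{i,\tau,h}\ind(z_{ih}^\tau \le x)$ of the true market-noise realizations and to bound $|F(x) - \hat F(x)| \le |F(x) - \bar F(x)| + |\bar F(x) - \hat F(x)|$ term by term. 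For the first piece, the $z_{ih}^\tau$ are i.i.d.\ from $F$, so \Cref{lem:DKW} with $t = NHT$ samples gives $\sup_x|\bar F(x) - F(x)| \le \sqrt{\tfrac{1}{2}\log(2/\delta')}\,(NHT)^{-1/2}$ with probability $1-\delta'$; since $T$ is data-dependent and takes one of at most $K$ values, I would take $\delta' = \delta/K$ and union bound over the $K$ candidate endpoints, yielding the first stated term $\sqrt{\tfrac{1}{2}\log(2K/\delta)}\,(NHT)^{-1/2}$ uniformly.

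For $|\bar F(x) - \hat F(x)|$, note that had bidder $i$ bid truthfully we would have $z_{ih}^\tau = r_{ih}^\tau - 1 - \langle\phi(x_h^\tau,\upsilon_h^\tau),\theta_{ih}\rangle$, so
\[ y_{ih}^\tau - z_{ih}^\tau = (b_{i\tau h} - r_{ih}^\tau) + \langle \phi(x_h^\tau,\upsilon_h^\tau),\, \theta_{ih} - \hat\theta_{ih}\rangle . \]
I split the index set into buffer steps and non-buffer steps. By \Cref{lem:unknownbuffer} there are at most $N C_{9}H^2\log^2 K$ buffer terms; bounding each indicator difference by $1$, they contribute at most $\frac{C_{9}H\log^2 K}{T}$ after dividing by $NHT$, which is the third stated term. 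For a non-buffer term, \Cref{overbid} gives $|b_{i\tau h} - r_{ih}^\tau| \le C_3 H/K$ and \Cref{lem:unknowndiffmu} gives $|\langle\phi(x_h^\tau,\upsilon_h^\tau),\theta_{ih}-\hat\theta_{ih}\rangle| \le C_{11}\sqrt{H}\log K\,\|\phi(x_h^\tau,\upsilon_h^\tau)\|_{(\Lambda_h^{T})^{-1}}$, hence $|y_{ih}^\tau - z_{ih}^\tau| \le \xi_h^\tau := C_3 H/K + C_{11}\sqrt{H}\log K\,\|\phi(x_h^\tau,\upsilon_h^\tau)\|_{(\Lambda_h^{T})^{-1}}$. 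Then $\ind(y_{ih}^\tau \le x)$ and $\ind(z_{ih}^\tau \le x)$ differ only when $z_{ih}^\tau$ lies within $\xi_h^\tau$ of $x$, and since $\xi_h^\tau$ is measurable with respect to the features and covariance matrices (not the current noises), I would use a shifted-argument version of the DKW/VC bound from the first piece together with $f \le C_1$ (\Cref{assumptionf}) to bound the aggregate fraction of such flips by $C_1\cdot\frac{1}{HT}\sum_{\tau,h}\xi_h^\tau$ plus an $O((NHT)^{-1/2})$ fluctuation; the main piece equals $\frac{C_1C_3 H}{K} + C_1C_{11}\sqrt{H}\log K\,\overline{\|\phi(x_h^\tau,\upsilon_h^\tau)\|}_{(\Lambda_h^{T})^{-1}}$, producing the remaining two stated terms, while the fluctuation is absorbed into the first term.

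Finally, to collapse the four-term bound into $C_{12}H\log^2 K/\sqrt{T}$, I would use $T\le K$ (so $1/K \le 1/T \le 1/\sqrt{T}$) for the first and third terms, and the elliptical-potential bound $\sum_{\tau\le T}\|\phi(x_h^\tau,\upsilon_h^\tau)\|_{(\Lambda_h^{T})^{-1}}^2 \le 2d\log(1+T)$, which via Cauchy--Schwarz gives $\overline{\|\phi(x_h^\tau,\upsilon_h^\tau)\|}_{(\Lambda_h^{T})^{-1}} \le \sqrt{2d\log(1+T)/T} = O(\sqrt{\log K/T})$ for the fourth term; absorbing $N,d,C_1,C_3,C_{9},C_{11}$ and logarithmic factors into $C_{12}$ gives the claim.

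\textbf{Main obstacle.} The delicate step is the passage from $\bar F$ to $\hat F$: one must legitimately pull the data-dependent shifts $\xi_h^\tau$ out of the noise indicators so that $f\le C_1$ converts perturbation magnitude into CDF discrepancy, while keeping the residual stochastic error at the $\tilde{\cO}((NHT)^{-1/2})$ scale. This needs careful filtration bookkeeping --- the feature $\phi(x_h^\tau,\upsilon_h^\tau)$ depends on noises from episodes before $\tau$ but not on $z_{ih}^\tau$ itself, since transitions ignore bidder rewards --- together with a uniform-in-$x$ concentration for the shifted empirical process. Everything else is routine manipulation of the already-established lemmas.
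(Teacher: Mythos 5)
Your proposal follows essentially the same route as the paper: apply DKW to the idealized empirical distribution of the true noise realizations to obtain the $(NHT)^{-1/2}$ term (with a union bound over the at most $K$ update times), handle the buffer episodes by the crude count from \Cref{lem:unknownbuffer}, and convert the two shift sources — the untruthfulness bound $C_3H/K$ from \Cref{overbid} and the $\hat\theta$ error from \Cref{lem:unknowndiffmu} — into CDF discrepancy via the density bound $f\le C_1$, finishing with Cauchy--Schwarz/elliptical-potential to control the average $\|\phi\|_{(\Lambda^T_h)^{-1}}$. The only difference is that you make explicit the intermediate empirical process $\bar F$ and the resulting fluctuation in the flip-count argument, which the paper glosses over by directly comparing $F$ to the ``c.d.f.\ that $\hat F$ estimates''; your version is slightly more careful but leads to the same bound.
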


Now, we begin to bound the wedge of $R(\cdot,\cdot)$ and $\hat R(\cdot,\cdot)$ corresponding to $\hat F(\cdot)$. It holds the following lemma.
\begin{lemma}\label{unknownboundR}
Conditional on \textrm{Good Event} $\mathscr{E}$, we have 
\[
|R_h^k(\cdot,\cdot)-\hat R_h^k(\cdot,\cdot)|\le C_{13}H \log^2 K \|\phi(\cdot,\cdot) \|_{(\Lambda_h^{\mathtt{buffer.e} (\tilde{k})})^{-1}}+C_{14} \frac{H^2\log^4K}{\sqrt{\mathtt{buffer.e} (\tilde{k})}},
\]where $C_{13}$ and $C_{14}$ are constants independent of $K$ and $H$.
\end{lemma}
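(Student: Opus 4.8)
\emph{Proof idea.} The plan is to adapt the known-$F$ argument behind \Cref{lem:estimater}, isolating the one genuinely new source of error --- the discrepancy between $F$ and $\hat F$ --- and controlling it through \Cref{lem:boundF}. Write $\Psi(\{\alpha_i\},\{\mu_i\},G)$ for the expected per-step revenue when bidder $i$'s reward has mean $1+\mu_i$ and noise drawn i.i.d.\ from $G$ and the personalized reserves are $\{\alpha_i\}$; this is exactly the quantity appearing inside the expectations defining $R_h^k$ and $\hat R_h^k$, so $R_h^k=\Psi(\alpha^{k*},\mu^k,F)$ with $\alpha^{k*}$ maximizing $\Psi(\cdot,\mu^k,F)$, while $\hat R_h^k=\Psi(\hat\alpha^k,\hat\mu^k,\hat F)$ with $\hat\alpha^k$ maximizing $\Psi(\cdot,\hat\mu^k,\hat F)$ (this is how the reserve is set in \Cref{algo:estimateunknownF}). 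Since an optimal reserve never exceeds the support of the bids, all reserves may be restricted to $[0,3]^N$.

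The first step is an envelope-type reduction that lets us avoid estimating $|\hat\alpha^k-\alpha^{k*}|$ directly. Decomposing
\[
R_h^k-\hat R_h^k=\big[\Psi(\alpha^{k*},\mu^k,F)-\Psi(\alpha^{k*},\hat\mu^k,\hat F)\big]+\big[\Psi(\alpha^{k*},\hat\mu^k,\hat F)-\Psi(\hat\alpha^k,\hat\mu^k,\hat F)\big],
\]
the second bracket is $\le 0$ by optimality of $\hat\alpha^k$ for $\Psi(\cdot,\hat\mu^k,\hat F)$; together with the symmetric bound for $\hat R_h^k-R_h^k$ (using optimality of $\alpha^{k*}$), one gets
\[
|R_h^k-\hat R_h^k|\le\sup_{\alpha\in[0,3]^N}\big|\Psi(\alpha,\mu^k,F)-\Psi(\alpha,\hat\mu^k,\hat F)\big|.
\]
Here \Cref{logconcave} is invoked implicitly, to guarantee well-posedness of the two argmax problems. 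Next, for fixed $\alpha$ insert the hybrid term $\Psi(\alpha,\hat\mu^k,F)$ and split into a $\mu$-perturbation and an $F$-perturbation:
\[
\big|\Psi(\alpha,\mu^k,F)-\Psi(\alpha,\hat\mu^k,\hat F)\big|\le\big|\Psi(\alpha,\mu^k,F)-\Psi(\alpha,\hat\mu^k,F)\big|+\big|\Psi(\alpha,\hat\mu^k,F)-\Psi(\alpha,\hat\mu^k,\hat F)\big|.
\]
The first term is handled exactly as in the proof of \Cref{lem:estimater}: a mean-value expansion of $\Psi$ in the means, using $\|f\|_\infty\le C_1$ from \Cref{assumptionf} (and \Cref{assumptionfdiff} for the remainder), bounds it by $O(NC_1)\max_i|\hat\mu_{ih}^k-\mu_{ih}^k|$, and \Cref{lem:unknowndiffmu} then turns this into the term $C_{13}H\log^2K\,\|\phi(\cdot,\cdot)\|_{(\Lambda_h^{\mathtt{buffer.e}(\tilde k)})^{-1}}$ once the $H$ and $\log K$ factors are collected. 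For the second term, $\Psi(\alpha,\hat\mu^k,\cdot)$ is an expectation of a function of the $N$ i.i.d.\ noise variables that is bounded by $3$ and, as a function of any single coordinate with the others fixed, piecewise monotone with $O(1)$ pieces (raising $z_i$ can make bidder $i$ cross his winning threshold at most once); a coordinate-by-coordinate hybrid together with integration by parts therefore gives $|\Psi(\alpha,\hat\mu^k,F)-\Psi(\alpha,\hat\mu^k,\hat F)|\le O(N)\sup_x|F(x)-\hat F(x)|$, and \Cref{lem:boundF} bounds the right-hand side by $C_{14}\,H^2\log^4K/\sqrt{\mathtt{buffer.e}(\tilde k)}$. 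Since neither bound depends on $\alpha$, taking the supremum and combining the three steps yields the claim.

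The main obstacle I anticipate is the $F$-perturbation step: relating a perturbation of the \emph{nonlinear, order-statistic-based} revenue functional to the uniform (Kolmogorov) distance $\sup_x|F-\hat F|$, rather than to an $L^2$ or Wasserstein-type distance. This is precisely what makes the DKW-based bound of \Cref{lem:boundF} (hence \Cref{lem:DKW}) the right tool and why the crude histogram estimator of $F$ in \eqref{algo:Fandthetahat} is adequate; establishing the $O(1)$ total-variation bound on the per-coordinate revenue slices (handling ties, reserve clearing, and the fact that only the winner pays) is the delicate part. By contrast, the $\mu$-perturbation is already subsumed by the proof of \Cref{lem:estimater}. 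The remaining work --- tracking the exact $H$ and $\log K$ powers through Steps~2 and~3, where $C_{13}$ and $C_{14}$ are pinned down --- is routine but tedious.
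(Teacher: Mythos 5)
Your proof is correct, but it takes a genuinely different — and cleaner — route than the paper's.

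The paper's proof decomposes $R_h^k - \hat R_h^k$ through an intermediate quantity $\tilde R_h^k$ (true $\mu^k$ and $F$ but estimated reserve $\hat\alpha^k$), and then controls $|R_h^k - \tilde R_h^k|$ by a second-order Taylor expansion around the Myerson optimum (using $W'(\alpha^*)=0$ and $|W''|\le B_4$). This forces the paper to bound $|\alpha^{k*}-\hat\alpha^k|$ explicitly via \Cref{lem:unknownreservegap}, which in turn requires a density estimator $\hat f$ and the histogram-rate bound $|f-\hat f|\lesssim \sqrt{H}\log K/(\mathtt{buffer.e}(\tilde k))^{1/4}$ from \Cref{lem:boundf}; after squaring, this produces the $H^2\log^4 K/\sqrt{\mathtt{buffer.e}(\tilde k)}$ term in the statement. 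Your envelope-type decomposition exploits optimality of \emph{both} $\alpha^{k*}$ (for $\Psi(\cdot,\mu^k,F)$) and $\hat\alpha^k$ (for $\Psi(\cdot,\hat\mu^k,\hat F)$) to eliminate the reserve-price-gap term entirely, reducing the lemma to $\sup_\alpha|\Psi(\alpha,\mu^k,F)-\Psi(\alpha,\hat\mu^k,\hat F)|$ and then splitting into a $\mu$-perturbation (first-order, by mean-value with $\|f\|_\infty\le C_1$) and an $F$-perturbation (controlled purely by $\|F-\hat F\|_\infty$ via DKW). This bypasses the need to estimate the density $f$ at all — which is a real simplification, since the paper's algorithm only ever forms $\hat F$, and the use of $\hat f$ is a purely analytic device — and yields a somewhat sharper intermediate bound ($\sqrt{H}\log K$ on the $\mu$-part and $H\log^2 K/\sqrt{\mathtt{buffer.e}(\tilde k)}$ on the $F$-part, both dominated by the stated rates). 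Two minor loose ends: (i) the Myerson-lemma step that identifies $\hat\rho_{ih}=\argmax_y y(1-\hat F(y-1-\hat\mu))$ with the maximizer of $\Psi(\cdot,\hat\mu^k,\hat F)$ needs \Cref{logconcave} to hold for $\hat F$ as well, which the paper ensures by a smoothing-kernel footnote you should invoke explicitly; and (ii) your parenthetical appeal to \Cref{assumptionfdiff} "for the remainder" is unnecessary once the envelope step has removed the second-order Taylor term — the $\mu$-perturbation is purely first-order in your route.
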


We define $\Delta_1$ as the one in \Cref{lem:lsvi} of \Cref{sec:proofknownf}.
\begin{lemma}\label{lem:unknownlsvi}
Under \Cref{assumption:linearmdp}, \Cref{assumptionf} \Cref{assumptionfdiff} and \Cref{assumption:coveringnumber}, if we set $\poly _1(\log K)=C_{15}+C_{13}H \log^2 K $ and $\poly _2(\log K)=C_{14} H^2\log^4K$ in \Cref{algo:estimateunknownF}, where $C_{15}=D_7 H^\frac{3}{2}$ and $D_{7}$ is determined in \Cref{lem:unknownomegaandQ}, it holds that with probability at least $1-2\delta$,
\[
\Delta_1 \lesssim \tilde \cO (H^3\sqrt{K}) .
\]
\end{lemma}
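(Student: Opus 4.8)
The plan is to adapt the LSVI-UCB regret analysis used in the proof of \Cref{lem:lsvi} to the three new features of \Cref{algo:estimateunknownF}: the plug-in revenue estimate $\hat R$ in place of the Bellman evaluation operator, the stale covariance matrices $\Lambda_h^{\mathtt{buffer.e}(\tilde k)}$ produced by the lazy-and-forced update schedule of \Cref{algo:unKnownF}, and the extra uncertainty term $\poly_2(\log K)/\sqrt{\mathtt{buffer.e}(\tilde k+1)}$ arising from estimating $F(\cdot)$ by the histogram in \eqref{algo:Fandthetahat}. Throughout we condition on the intersection of the Good Event $\mathscr E$ with the events of \Cref{lem:unknownglm} (accuracy of $\hat\theta_{ih}$), \Cref{lem:unknowndiffmu}, \Cref{lem:boundF} (accuracy of $\hat F$), \Cref{unknownboundR} (accuracy of $\hat R$), and the self-normalized concentration bound for the least-squares vector $\omega_h$ furnished by \Cref{lem:unknownomegaandQ}; a union bound over the $\tilde\cO(1)$ distinct policies (\Cref{lem:unknownbuffer}) and over $i,h$ keeps the extra failure probability at $\delta$, and a further $\delta$ is spent on an Azuma--Hoeffding bound for a martingale term, yielding the claimed $1-2\delta$. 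It is convenient to restrict attention to episodes in which $\pi_k=\pi_{\tilde k}$ throughout, since the $\tilde\cO(1)$ episodes (\Cref{numofpi0}) that contain a $\pi_{\rm rand}$ step contribute only $\tilde\cO(H)$ to $\Delta_1$.

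The first step is \emph{optimism}: by backward induction on $h$, $\hat Q_h^{\pi_{\tilde k}}(x,\upsilon)\ge \tilde Q_h^{\pi^*}(x,\upsilon)$ for every $(x,\upsilon)$. The inductive step decomposes $\hat Q_h-\tilde Q_h^{\pi^*}$ into (i) the revenue gap $\hat R_h-R_h$, controlled by \Cref{unknownboundR}; (ii) the transition-estimation gap $\omega_h^\top\phi-(\PP_h\max_\upsilon\hat Q_{h+1})$, controlled by \Cref{lem:unknownomegaandQ}; and (iii) $\PP_h(\max_\upsilon\hat Q_{h+1}-\tilde V_{h+1}^{\pi^*})\ge 0$ by the induction hypothesis. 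The two-part bonus is designed precisely so that $\poly_1(\log K)\|\phi\|_{\Lambda_h^{-1}}$ dominates the $\|\phi\|_{\Lambda_h^{-1}}$-components of (i) and (ii) simultaneously — hence $\poly_1=C_{15}+C_{13}H\log^2 K$ with $C_{15}=D_7H^{3/2}$ absorbing the transition term and $C_{13}H\log^2K$ the revenue term — while $\poly_2(\log K)/\sqrt{\mathtt{buffer.e}(\tilde k+1)}=C_{14}H^2\log^4K/\sqrt{\mathtt{buffer.e}(\tilde k+1)}$ absorbs the residual $1/\sqrt{\mathtt{buffer.e}}$-component of the revenue error, which is itself the propagation through a Taylor expansion of the revenue function of the $\tilde\cO(1/\sqrt{\mathtt{buffer.e}})$ error in $\hat F$ coming from the DKW bound (\Cref{lem:DKW}, \Cref{lem:boundF}).

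The second step is the telescoping recursion and the summation. Optimism gives $V_1^{\pi^*}(x_1^k)-\tilde V_1^{\pi_{\tilde k}}(x_1^k)\le \hat V_1^{\pi_{\tilde k}}(x_1^k)-\tilde V_1^{\pi_{\tilde k}}(x_1^k)$; unrolling this along the episode-$k$ trajectory produces, at each step $h$, twice the bonus at $(x_h^k,\upsilon_h^k)$ plus a martingale difference $\zeta_h^k$, so that $\Delta_1\le\sum_{k\notin\mathtt{buffer}}\sum_{h=1}^H\bigl(2\poly_1(\log K)\|\phi_h^k\|_{(\Lambda_h^{\mathtt{buffer.e}})^{-1}}+2\poly_2(\log K)/\sqrt{\mathtt{buffer.e}}+\zeta_h^k\bigr)$. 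For the first sum, the low-switching criterion $(\Lambda_h^{\mathtt{buffer.e}(\tilde k)})^{-1}\preceq 2(\Lambda_h^k)^{-1}$ replaces the stale matrix by the current one at the cost of a $\sqrt2$ factor, after which the elliptical-potential lemma gives $\sum_k\|\phi_h^k\|_{(\Lambda_h^k)^{-1}}\le\sqrt{K}\sqrt{2d\log(1+K/d)}$; multiplying by $\poly_1=\tilde\cO(H^{3/2})$ and summing over $h$ yields $\tilde\cO(H^{5/2}\sqrt K)$. For the second sum, the forced updates at every power of $2$ ensure each estimate $\pi_{\tilde k}$ is used for fewer than $\mathtt{buffer.e}(\tilde k)$ episodes, so $\sum_{k\text{ using }\pi_{\tilde k}}1/\sqrt{\mathtt{buffer.e}(\tilde k)}<\sqrt{\mathtt{buffer.e}(\tilde k)}\le\sqrt K$, and since there are only $\tilde\cO(1)$ periods, $\sum_k1/\sqrt{\mathtt{buffer.e}}=\tilde\cO(\sqrt K)$; multiplying by $\poly_2=\tilde\cO(H^2)$ and summing over $h$ gives the dominant $\tilde\cO(H^3\sqrt K)$. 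Finally $|\zeta_h^k|\le 6H$, so Azuma--Hoeffding over the $\le HK$ terms gives $\tilde\cO(\sqrt{H^3K})$ with probability $\ge1-\delta$. Adding the three contributions (and absorbing $d$ and $\log\cN_{1/K}(\cF)$ into $\tilde\cO$) proves $\Delta_1\lesssim\tilde\cO(H^3\sqrt K)$.

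I expect the main obstacle to be the optimism step — in particular verifying that the two-part bonus genuinely dominates the revenue-estimation error of \Cref{unknownboundR}. Since $\hat R$ is a nonlinear functional of both $\hat\theta_{ih}$ and the data-dependent $\hat F$, one must keep the $\|\phi\|_{\Lambda^{-1}}$-error and the $1/\sqrt{\mathtt{buffer.e}}$-error strictly separated as they flow through the recursion, and the concentration of $\omega_h$ against the target $\max_\upsilon\hat Q_{h+1}$ only holds uniformly after a covering-number union bound over $\cF$ (because $\hat Q_{h+1}$ inherits $\hat F$'s dependence on the data); this is where $\cN_{1/K}(\cF)$ enters, and it is a more delicate version of the $\hat Q$-class covering argument of standard linear-MDP analysis. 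Everything downstream of optimism is a routine sum of bonuses, the only non-standard bookkeeping being the power-of-$2$ argument that controls $\sum_k1/\sqrt{\mathtt{buffer.e}}$.
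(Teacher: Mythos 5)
Your proposal is correct and follows essentially the same route as the paper: optimism via the two-part bonus (\Cref{lem:unknownomegaandQ}, \Cref{unknownboundR}), the recursion of \Cref{lem:unknownrecursive}, the elliptical-potential and Azuma--Hoeffding bounds inherited from \Cref{lem:lsvi} giving $\tilde\cO(\sqrt{H^5K})$, and the forced power-of-2 updates to control $\sum_k 1/\sqrt{\mathtt{buffer.e}(\tilde k)} = \tilde\cO(\sqrt K)$, which yields the dominant $\tilde\cO(H^3\sqrt K)$ term. Your per-period counting of that last sum is a cosmetic variant of the paper's direct inequality $k\le 2\,\mathtt{buffer.e}(\tilde k)$, and you correctly locate where $\cN_{1/K}(\cF)$ enters via the covering-number argument.
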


\subsection{Proof of \texorpdfstring{\Cref{thm:unknownf}}{thm:unknownf}}
It is similar to the proof of \Cref{thm:knownf}. The only difference comes from \Cref{lem:boundF}. The probability of \textrm{Bad Event} $\mathscr{E}^c$ is now less than $6\delta$. Then, we set $\delta=\frac{p}{6}$ and it ends the proof. \qed

\section{Auxiliary Lemmas and Proofs in \texorpdfstring{\Cref{sec:proofknownf}}{sec:proofknownf}}
In this section, we prove the lemmas mentioned in \Cref{sec:proofknownf} in detail. It is organized by the order of lemmas.
\subsection{Proof of \texorpdfstring{\Cref{lem:buffer}}{lem:buffer}}
First of all, we have the following lemmas.
\begin{lemma}[Lemma 2, \citep{gao2021provably}]\label{doubledet}
Assume $m\le n$, $A=\sum_{\tau=1}^m \phi_\tau \phi_\tau^T+\lambda I$. $B=\sum_{\tau=1}^n \phi_\tau \phi_\tau^T+\lambda I$, where $\phi_\tau$ is abridge for $\phi(x_\tau, \upsilon_\tau)$, similarly hereinafter. Then if $A^{-1} \not \prec 2B^{-1}$, we have 

\[
\log \det B \ge \log \det A +\log 2.
\]
\end{lemma}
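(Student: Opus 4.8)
The plan is to reduce the determinant inequality to a statement about the eigenvalues of a single positive-definite matrix. First I would record the two elementary facts that drive the argument. Since $n \ge m$ and every rank-one term $\phi_\tau\phi_\tau^T$ is positive semidefinite, we have $B = A + \sum_{\tau=m+1}^n \phi_\tau\phi_\tau^T \succeq A$, and both $A$ and $B$ are positive definite thanks to the $\lambda I$ summand. Set $M := A^{-1/2} B A^{-1/2}$; then $M \succeq I$, so all $d$ eigenvalues of $M$ are at least $1$, and moreover $\det B = \det A \cdot \det M$.

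Next I would rewrite the hypothesis $A^{-1} \not\prec 2B^{-1}$ in terms of $M$. Conjugating the strict Loewner inequality $A^{-1} \prec 2B^{-1}$ by the invertible matrix $A^{1/2}$ gives the equivalent statement $I \prec 2 A^{1/2} B^{-1} A^{1/2} = 2 M^{-1}$, i.e.\ $M^{-1} \succ \tfrac12 I$, which (as $M^{-1}$ is symmetric positive definite) is equivalent to $\lambda_{\max}(M) < 2$. Negating both sides, the hypothesis $A^{-1} \not\prec 2B^{-1}$ is precisely the assertion that $\lambda_{\max}(M) \ge 2$.

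Finally I would combine the two observations. Writing the eigenvalues of $M$ as $\lambda_1 \ge \cdots \ge \lambda_d$, the first step gives $\lambda_i \ge 1$ for every $i$ and the second gives $\lambda_1 \ge 2$, so $\det M = \prod_{i=1}^d \lambda_i \ge 2$. Hence $\det B = \det A \cdot \det M \ge 2 \det A$, and taking logarithms yields $\log \det B \ge \log \det A + \log 2$, as claimed.

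There is no genuine obstacle here; the only point requiring care is the bookkeeping of the congruence transformation, so that the negation of the strict inequality $A^{-1}\not\prec 2B^{-1}$ gets correctly translated into ``the largest eigenvalue of $M$ is at least $2$'' rather than into some statement about the smallest eigenvalue. Getting the direction of that equivalence right is the single place an error could slip in.
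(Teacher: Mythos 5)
Your proof is correct. The paper itself does not prove this lemma (it is imported verbatim from \citet{gao2021provably}), and your argument — conjugating by $A^{1/2}$ to reduce everything to the eigenvalues of $M = A^{-1/2}BA^{-1/2}$, noting $M \succeq I$ forces all eigenvalues $\geq 1$ while the negated Loewner hypothesis forces $\lambda_{\max}(M) \geq 2$, hence $\det M \geq 2$ — is the standard and complete way to establish it; the one delicate point you flag (that negating $M \prec 2I$ cleanly yields $\lambda_{\max}(M)\ge 2$ because the comparison is against a scalar multiple of the identity) is handled correctly.
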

\begin{lemma}[Lemma 1, \citep{gao2021provably}]\label{bounddet}
 Since $\|\phi_\tau\|\le 1$. Let $A=\sum_{\tau=1}^K \phi_\tau \phi_\tau^T+\lambda I$, then we have 
\[
\log \det A \le d\log d +d \log(K+\lambda)\le K_1 \log K.
\]
\end{lemma}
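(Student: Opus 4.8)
The plan is to control $\det A$ through the eigenvalues of $A$, exploiting that $A$ is a $d \times d$ positive definite matrix with an easily bounded trace. Let $\lambda_1(A) \ge \cdots \ge \lambda_d(A) > 0$ denote the eigenvalues of $A$. Since $A = \sum_{\tau=1}^K \phi_\tau \phi_\tau^T + \lambda I$ and $\|\phi_\tau\| \le 1$ for all $\tau$, the trace satisfies $\tr A = \sum_{\tau=1}^K \|\phi_\tau\|^2 + \lambda d \le K + \lambda d$. Because every eigenvalue of a positive semidefinite matrix is at most its trace, this gives $\lambda_i(A) \le K + \lambda d \le d(K + \lambda)$ for every $i \in [d]$, where the last step uses $d \ge 1$.

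The second step is to multiply these bounds: $\det A = \prod_{i=1}^d \lambda_i(A) \le \bigl(d(K+\lambda)\bigr)^d$, and then take logarithms to obtain $\log \det A \le d \log d + d \log(K + \lambda)$, which is the first inequality in the statement. Equivalently, one may invoke the AM--GM inequality directly in the form $\det A \le (\tr A / d)^d$; since $\tr A / d \le K/d + \lambda \le K + \lambda$, this even yields the sharper bound $\log \det A \le d\log(K+\lambda)$, but I would keep the stated form for consistency with the cited reference.

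Finally, to reach the form $K_1 \log K$, I would use the elementary estimate $\log(K + \lambda) \le \log K + \lambda/K \le \log K + \lambda$ valid for $K \ge 1$, so that $\log \det A \le d \log K + (d \log d + d\lambda)$. Restricting to $K \ge 3$ so that $\log K \ge 1$, and setting $K_1 = d + d\log d + d\lambda$, one gets $\log \det A \le K_1 \log K$ with $K_1$ depending only on $d$ and $\lambda$, as claimed.

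There is no genuine obstacle here: the argument is entirely elementary linear algebra together with AM--GM. The only care needed is in the bookkeeping of constants (verifying that $K_1$ depends only on $d$ and $\lambda$) and in the monotone estimates relating $K + \lambda d$, $d(K+\lambda)$, and $\log(K+\lambda)$ to $\log K$. I would present the trace/AM--GM step as the core of the proof and relegate the passage to $K_1 \log K$ to a one-line remark.
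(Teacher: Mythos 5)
Your proof is correct and is essentially the standard argument for this determinant bound: the paper itself imports the lemma from \citet{gao2021provably} without reproving it, and the cited proof rests on exactly the trace/AM--GM control of $\det A$ that you use. Your one caveat is worth keeping: the final inequality $\log\det A \le K_1\log K$ with $K_1$ depending only on $d$ and $\lambda$ genuinely requires $\log K$ bounded away from $0$ (e.g.\ $K\ge 3$), which is harmless for the asymptotic regret analysis but should be stated.
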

Therefore, for $2{(\Lambda^{\mathtt{buffer.s} (\tilde{k}+1)})}^{-1}\not \succeq (\Lambda^{\mathtt{buffer.e} (\tilde{k})})^{-1}$ and $\mathtt{buffer.e} (\tilde{k}+1)\ge \mathtt{buffer.s} (\tilde{k}+1)$, it holds that $2(\Lambda^{\mathtt{buffer.e} (\tilde{k}+1)})^{-1}\not \succeq (\Lambda^{\mathtt{buffer.e} (\tilde{k})})^{-1}$. Therefore, $\det \Lambda^{\mathtt{buffer.e} (\tilde{k}+1)} \ge 2 \det \Lambda^{\mathtt{buffer.e} (\tilde{k})}$. Then, using \Cref{doubledet}, we know that for any $h$ and $k$, it holds $\log \det \Lambda_h^k\le K_1 \log K$. We have $\log \det \Lambda_h^0=d \log \lambda$. Combining \Cref{doubledet}, we have that the number of buffer episodes for any $h$ is not larger than $\frac{3\log K}{\log \frac{1}{\gamma}}\frac{K_1 \log K-d \log \lambda}{\log 2}$. Then, there is a constant $C_2$ satisfying $K_1 \log K-d \log \lambda\le C_2 \log 2 \log K$. Therefore, the total number of episodes in buffer is not larger than $\frac{3HC_2\log^2 K}{\log \frac{1}{\gamma}}$. For the number of total steps, it is obvious that it is smaller than $H$ times the number of episodes. Then, it ends the proof. \qed 

\subsection{Proof of \texorpdfstring{\Cref{overbid}}{overbid}}
\citet{myerson1981optimal} shows that the optimal strategy for a one-round second-price auction is to bid truthfully. Therefore, if a bidder overbids or underbids for more than $\frac{3H\sqrt{2N}}{K\sqrt{1-\gamma}}$, his loss holds that
\[
{\mathtt{L}oss} \ge \frac{1}{NHK}\frac{\beta}{2K}\frac{1}{3}\frac{\beta}{K}=\frac{3H}{K^3(1-\gamma)},
\]where $\beta=\frac{3H\sqrt{2N}}{\sqrt{1-\gamma}}$.

The inequality holds since with probability $\frac{1}{KHN}$, the policy will be $\pi_{\rm rand}$ and the bidder is selected, and the total loss is higher than the loss with policy $\pi_{\rm rand}$. With a uniform reserve price, the probability that a loss happens is $\frac{\beta}{3K}$. Then, average loss is $\frac{\beta}{2K}$. Due to the existence of a buffer, the overbid or underbid can only make an influence on policy $t=\frac{3\log K}{\log \frac{1}{\gamma}}$ episodes later. Because of the existence of a discount rate, an upper bound of revenue for each buyer after $t$ episodes is $\frac{\gamma^t}{1-\gamma}3H=\frac{3H}{K^3(1-\gamma)}$. 

Therefore, with the assumption that buyers are all rational, it finishes the proof.\qed

The proof also illuminates why we need to keep $\pi_{\rm rand}$ in every episode instead of only utilizing buffer periods to explore. We run $\pi_{\rm rand}$ so that each untruthful bid will suffer an immediate loss. Note that there are buffer periods before the policies are updated. Therefore, any possible reward from manipulating the policy estimate is delayed by the length of the buffer period. Consequently, rational bidders would reduce their extent of untruthful bidding, as they need to offset the immediate losses incurred by $\pi_{\rm rand}$. If we only run $\pi_{\rm rand}$ in the buffer period as we hope to explore, we cannot bound the extent of untruthful bidding in other horizons. Then, there is no guarantee
for regret bounds.

\subsection{Proof of \texorpdfstring{\Cref{lem:bound_lie}}{bound lie}}
For convenience, similar to \citet{golrezaei2019dynamic}, we define  
\[
L_i=\{t:t\in[0,K]\ {\rm and} \ \ind(v_{i}^{t}\ge m_{i}^{t}) \neq \ind(b_{i}^{t}\ge m_{i}^{t})\},
\]for each buyer $i$.

We define $o_{i}^{t}=(b_{i}^{t}-v_{i}^{t})_+$ and $s_{i}^{t}=(v_{i}^{t}-b_{i}^{t})_+$, where $t=1,\ldots,K$ given $h$. When we can determine the subscript through the context, we omit the subscript $h$ for convenience.

Then we define $q_{i}^{t}$, which is a binary variable. It equals one if buyer $i$ wins and zero if loses. Therefore, we have $S_i=\{t:t\in[1,K], q_{i}^{t}=0\  and\  s_{i}^{t}\ge \alpha\}$ and $O_i=\{t:q_{i}^{t}=1\ and \ o_{i}^{t}\ge \alpha\}$.
As a result, $L_i=L_i^s\bigcup L_i^o$, where $L_i^s=\{t:\ind(v_{i}^{t}\ge r_{i}^{t})=1, \ind(b_{i}^{t}\ge r_{i}^{t})=0\}$ and $L_i^o=\{t:\ind(v_{i}^{t}\ge r_{i}^{t})=0, \ind(b_{i}^{t}\ge r_{i}^{t})=1\}$. Finally, we have $S_i^c=\{t:q_{i}^{t}=1\ or\ s_{i}^t\le \alpha\}$. So, $|L_i^s|=|S_i\bigcap L_i^s|+|S_i^c\bigcap L_i^s|$.

To bound $|(S_i\bigcap L_i^s)\bigcup(O_i\bigcap L_i^o)|$: using \Cref{lem:buffer} and \Cref{overbid}, we have that if we set $\alpha=C_3\frac{H}{K}$, it is bounded by $\frac{3HC_2\log^2 K}{\log \frac{1}{\gamma}}$.

To bound $|S_i^c\bigcap L_i^s|$: it means that underbid changes the outcome, and the level of underbid is smaller than $\alpha$. Since $|f|\le C_1$, it holds for origin $x$:
\[
\Pr(t\in S_i^c\bigcap L_i^s |\mathcal{F}_t)\le \int_x^{x+\alpha}f(z)dz\le C_1\alpha.
\]
Let's define $\xi_t=\ind(t\in S_i^c\bigcap L_i^s)$ while $\omega_t=\Pr(t\in S_i^c \bigcap L_i^s \given \mathcal{F}_t)$. Then $|S_i^c\bigcap L_i^s|=\sum_{t=1}^{K}\xi_t$ and $\E (\xi_t-\omega_t \given \mathcal{F}_t)=0$.

Using Azuma-Hoeffding inequality \citep{hoeffding1994probability}, it holds that
\[
\Pr(|S_i^c\bigcap L_i^s|\ge \frac{1+\iota}{1-\epsilon}\sum_1^{K}\omega_t)\le \exp(-\epsilon\iota\sum_1^{K}\omega_t).
\]

Let $A=\sum_1^{K}\omega_t\le {K}C_1\alpha$, $\epsilon=\frac{1}{2}$ and $\iota=\frac{2}{A}\log(\frac{2NH}{\delta})$, we have 
\[
|S_i^c\bigcap L_i^s|\le 2(1+\iota)A\le 2KC_1\alpha+4\log (\frac{2NH}{\delta}),
\] with probability at least $1-\frac{\delta}{2NH}$.

Similarly, we bound $|O_i^c\bigcap L_i^o|$ with the same bound that
\[
|O_i^c\bigcap L_i^o|\le 2KC_1\alpha+4\log (\frac{2NH}{\delta}),
\]
with probability at least $1-\frac{\delta}{2NH}$.

Then, we set $\alpha=\frac{C_3H}{K}$ and combine the items all to obtain 
\[
|L_i|\le  \frac{3HC_2\log^2 K}{\log \frac{1}{\gamma}}+4{C_1C_3 H}+8\log (\frac{2NH}{\delta}),
\]
with probability at least $1-\frac{\delta}{NH}$.

With the same methodology, we obtain the union bound for any given $i$ and $h$ with probability at least $1-\delta$ that
\[
 \mathtt{L}\le 
 \frac{3HC_2\log^2 K}{\log \frac{1}{\gamma}}+4{C_1C_3 H}+8\log (\frac{2NH}{\delta}) ,
\]
and it finishes the proof.\qed

\subsection{Proof of \texorpdfstring{\Cref{numofpi0}}{numofpi0}}
We use random variables $X_1,\ldots,X_{KH}$ to represent whether $\pi_{\rm rand}$ is used. If we choose policy $\pi_{\rm rand}$, then $X=1$, or $X=0$ otherwise.

Using Bernstein inequalities \citep{bernstein1924modification}, it holds that 
\[
\Pr(\sum_{i=1}^{KH} X_i-KH\frac{1}{KH}\ge t)\le \exp\{\frac{-t^2/2}{(1-1/KH)+t/3}\},
\]
since $X-\frac{1}{KH}$ has mean zero and $\var(X)=\frac{1}{KH}(1-\frac{1}{KH})$.

Therefore, set $t=\max\{3,\frac{4}{3}\log \frac{1}{\delta}\}$, the right side is smaller than $\delta$ and it finishes the proof. \qed 

\subsection{Proof of \texorpdfstring{\Cref{lem:glm}}{lem:glm}}

First of all, we omit subscripts $i$ and $h$ for convenience, and we will get the union bound in the end. 

Then, we introduce some notations. We use $\tilde q_\tau$ to represent the outcome that every bidder bids truthfully and $\hat q_\tau$ to represent the outcome with real bidding. Then $\hat \theta$ and $\tilde \theta$ correspond to $\{\hat q_\tau\}$ and $\{\tilde q_\tau\}$.

Now, we focus on buyer $i$ and step $h$, so we omit subscripts $i$ and $h$ from now on. We have the following lemma at first:
\begin{lemma}\label{6lie}
Under \Cref{algo:thetahat}, it holds that
\[
\sum_{\tau=1}^{\mathtt{buffer.e} (\tilde{k})} (\tilde q_\tau -1+ F(m_\tau-1-\langle \phi_\tau,\hat{\theta} \rangle))^2 \le \sum_{\tau=1}^{\mathtt{buffer.e} (\tilde{k})} (\tilde q_\tau -1+ F(m_\tau-1-\langle \phi_\tau,\theta^* \rangle))^2+6 \mathtt{L},
\]
where $\mathtt{L}\le C_4 H \log^2 K $ due to \Cref{lem:bound_lie}.
\end{lemma}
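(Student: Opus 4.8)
The plan is to reduce the claim to the optimality property of the empirical‑risk minimizer $\hat\theta$, absorbing the discrepancy between the truthful outcomes $\tilde q_\tau$ and the realized outcomes $\hat q_\tau$ into an $O(\mathtt{L})$ error. Concretely, set $T := \mathtt{buffer.e}(\tilde{k})$ and, for each $\tau \le T$, write $c_\tau(\theta) := F(m_\tau - 1 - \langle \phi_\tau,\theta\rangle) - 1$, so that the summand on both sides of the claim is $(\tilde q_\tau + c_\tau(\theta))^2$ while the objective minimized in \Cref{algo:thetahat} is $\sum_{\tau=1}^T (\hat q_\tau + c_\tau(\theta))^2$. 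The argument rests on three elementary observations: (i) $\tilde q_\tau,\hat q_\tau \in \{0,1\}$ and $|\tilde q_\tau - \hat q_\tau| = \mathbf{1}(\tau \in \mathtt{L})$, since $\mathtt{L}$ is, by its definition in \eqref{eqn:lie_set_defn_knownF}, exactly the set of rounds on which untruthful bidding flips the auction outcome for the pair $(i,h)$ under consideration; (ii) because $F$ is valued in $[0,1]$, $|\hat q_\tau + c_\tau(\theta)| \le 1$ and $|\tilde q_\tau + c_\tau(\theta)| \le 1$ for every $\theta$; and (iii) $\theta^*$ is feasible for the argmin in \Cref{algo:thetahat}, as $\|\theta^*\| = \|\theta_{ih}\| \le \sqrt d \le 2\sqrt d$ by \Cref{assumption:linearmdp}.

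\textbf{Key steps.} I would chain three inequalities. \emph{Swap $\tilde q \to \hat q$ at $\hat\theta$:} expanding $(\tilde q_\tau + c_\tau(\hat\theta))^2 = (\hat q_\tau + c_\tau(\hat\theta))^2 + 2(\hat q_\tau + c_\tau(\hat\theta))(\tilde q_\tau - \hat q_\tau) + (\tilde q_\tau - \hat q_\tau)^2$ and invoking (i)--(ii), the two correction terms are bounded in absolute value by $2 + 1 = 3$ and vanish off $\mathtt{L}$, so $\sum_{\tau=1}^T (\tilde q_\tau + c_\tau(\hat\theta))^2 \le \sum_{\tau=1}^T (\hat q_\tau + c_\tau(\hat\theta))^2 + 3\mathtt{L}$. \emph{Optimality:} by (iii) and the definition of $\hat\theta$, $\sum_{\tau=1}^T (\hat q_\tau + c_\tau(\hat\theta))^2 \le \sum_{\tau=1}^T (\hat q_\tau + c_\tau(\theta^*))^2$. \emph{Swap $\hat q \to \tilde q$ at $\theta^*$:} the identical expansion with $\hat\theta$ replaced by $\theta^*$ gives $\sum_{\tau=1}^T (\hat q_\tau + c_\tau(\theta^*))^2 \le \sum_{\tau=1}^T (\tilde q_\tau + c_\tau(\theta^*))^2 + 3\mathtt{L}$. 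Adding the three displays yields the claim with constant $3 + 3 = 6$, and the stated bound $\mathtt{L} \le C_4 H \log^2 K$ is precisely \Cref{lem:bound_lie}. (The constant $6$ reflects the crude $\le 3$ bound on the per‑round correction and is not optimized; a sharper accounting would improve it, but this is irrelevant downstream.)

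\textbf{Main obstacle.} There is no genuine analytic difficulty here — once set up, this is the textbook ``empirical‑risk minimizer versus comparator'' estimate — and the only point requiring care is the bookkeeping behind observation (i): one must verify that the number of indices on which the truthful outcome $\tilde q_\tau$ and the realized outcome $\hat q_\tau$ disagree is governed \emph{exactly} by the lie set of \eqref{eqn:lie_set_defn_knownF}, and that $m_\tau$ and $\phi_\tau$ (which depend on the rival bids and the realized state--item pair) may legitimately be treated as common fixed data across the two sums, so that $\hat\theta$'s optimality for the realized‑outcome objective transfers to the counterfactual‑truthful objective up to the $O(\mathtt{L})$ slack. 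With that identification in place, the remaining manipulations are the two expansions above.
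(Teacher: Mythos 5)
Your proof is correct and follows essentially the same argument as the paper's: expand the square, bound the cross terms using $|\tilde q_\tau - \hat q_\tau| = \ind(\tau \in \mathtt{L})$ and $F \in [0,1]$ to obtain a $3\mathtt{L}$ slack for each swap, then invoke ERM optimality of $\hat\theta$ with $\theta^*$ feasible. The only (minor) streamlining is that you apply $\hat\theta$'s optimality directly against $\theta^*$, giving a three-link chain, whereas the paper routes through the auxiliary minimizer $\tilde\theta$ of the truthful-outcome objective and uses its optimality as well, yielding a four-link chain; both give the same $6\mathtt{L}$ bound.
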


\subsubsection{Proof of \texorpdfstring{\Cref{6lie}}{6lie}}
Since there are at most $\mathtt{L}$ steps that overbid or underbid changes the outcome, $\hat q_\tau$ and $\tilde q_\tau$ differ in at most $\mathtt{L}$ different points. Since $\tilde q_\tau$ and $\hat q_\tau$ belong to $\{0,1\}$, we have 
\[
\sum_{\tau=1}^{\mathtt{buffer.e} (\tilde{k})} (\hat q_\tau-1)^2\le \sum_{\tau=1}^{\mathtt{buffer.e} (\tilde{k})} (\tilde q_\tau-1)^2+\mathtt{L}.
\]

Then, since $F(\cdot) \in [0,1]$, it holds that 
\[
-2\sum_\tau (1-\hat  q_\tau) F(m_\tau-1-\langle \phi_\tau,\theta \rangle)\le -2 \sum_\tau  (1-\tilde q_\tau) F(m_\tau-1-\langle \phi_\tau,\theta \rangle)+2\mathtt{L}.
\]
for any $\theta$.

Therefore, it holds that 
\begin{equation}\label[ineq]{3lie}
\sum_\tau (\tilde q_\tau -1+F(m_\tau-1-\langle \phi_\tau,\theta \rangle))^2 \le \sum_\tau (\hat q_\tau -1+F(m_\tau-1-\langle \phi_\tau,\theta \rangle))^2 +3 \mathtt{L},    
\end{equation}

for any $\theta$.

Finally, with the optimality of $\hat \theta$ and $\tilde \theta$, it holds that 
\begin{align*}
    &\sum_\tau (\tilde q_\tau -1+F(m_\tau-1-\langle \phi_\tau,\hat \theta \rangle))^2\\
    \le& \sum_\tau (\hat q_\tau -1+F(m_\tau-1-\langle \phi_\tau,\hat\theta \rangle))^2+3\mathtt{L}\\
    \le& \sum_\tau (\hat q_\tau -1+F(m_\tau-1-\langle \phi_\tau,\tilde\theta \rangle))^2+3\mathtt{L}\\
    \le& \sum_\tau (\tilde q_\tau -1+F(m_\tau-1-\langle \phi_\tau,\tilde\theta \rangle))^2+6\mathtt{L}\\
    \le& \sum_\tau (\tilde q_\tau -1+F(m_\tau-1-\langle \phi_\tau,\theta^* \rangle))^2+6\mathtt{L}.
\end{align*}
The first and third inequalities hold due to \Cref{3lie}. The second and last inequalities hold because of the optimality of $\hat \theta$ and $\tilde \theta$. Then, it finishes the proof. \qed 

Then we use $f_{m_\tau}(\langle \phi_\tau, \theta\rangle)$ to represent $F(m_\tau-1-\langle \phi_\tau, \theta\rangle)$ in shorthand.

Therefore, with \Cref{6lie}, we have 
\[
\sum_\tau [f_{m_\tau}(\langle \phi_\tau, \hat \theta\rangle)-f_{m_\tau}(\langle \phi_\tau, \theta^*\rangle)]\le 2 |\sum_\tau \xi_\tau (f_{m_\tau}(\langle \phi_\tau, \hat\theta\rangle)-f_{m_\tau}(\langle \phi_\tau, \theta^*\rangle))|+6\mathtt{L},
\]where $\xi_\tau=(1-\tilde q_\tau) -f_{m_\tau}(\langle \phi_\tau, \theta^*\rangle)$. The inequality holds because of simple rearrangement.

Then, we have 
\begin{align*}
f_{m_\tau}(\langle \phi_\tau, \hat\theta\rangle)-f_{m_\tau}(\langle \phi_\tau, \theta^*\rangle)&=\int_{\langle\phi_\tau,\theta^*\rangle}^{\langle\phi_\tau,\hat\theta\rangle}  f'_{m_\tau}(s)ds\\
&=\langle\phi_\tau,\hat \theta-\theta^*\rangle \int_0^1 f'_{m_\tau}(\langle \phi_\tau, s\hat \theta+(1-s) \theta^*\rangle)ds \\
&=\langle\phi_\tau,\hat \theta-\theta^*\rangle D_\tau,
\end{align*}where $D_\tau=\int_0^1 f'_{m_\tau}(\langle \phi_\tau, s\hat \theta+(1-s) \theta^*\rangle)ds.$

So, it holds that 
\[
\sum_\tau D_\tau^2(\langle \phi_\tau ,\hat \theta-\theta^*\rangle)^2 \le 2|\sum_\tau \xi_\tau D_\tau \langle \phi_\tau,\hat \theta-\theta^*\rangle|+6\mathtt{L}.
\]

Since $\|\theta\|\le \sqrt{d}$, we use $V_\epsilon$ which is a set of ball with radius $\epsilon$ to cover $\mathcal{B}(0,\sqrt{d})\times\mathcal{B}(0,\sqrt{d})$. Then, the cardinality of $V_\epsilon$ is smaller than $B_1 (\frac{\sqrt{d}}{\epsilon})^{2d}=\frac{B_2}{\epsilon^{2d}}$, where $B_1$ and $B_2$ are constants only depending on dimension d. Thanks to \Cref{assumptionf} and \Cref{assumptionfdiff}, we have $|f''|\le L$ and $|D_\tau|\le C_1$. 

Therefore, for any $(\hat \theta,\theta^*)$, there exists $(\theta,\theta')$, which is the center of a ball in $V_\epsilon$, so that $\|(\hat \theta,\theta^*)-(\theta,\theta')\|\le \epsilon$. In this way, it holds that
\begin{align*}
&|\langle \phi_\tau, D_\tau(\theta,\theta') (\theta-\theta')-D_\tau (\hat \theta,\theta^*) (\hat \theta-\theta^*)\rangle|\\
\le & 2\sqrt{d} |D_\tau(\theta,\theta')-D_\tau (\hat{\theta},\theta^*)|+|D_\tau|(\|\theta-\hat \theta\|+\|\theta'-\theta^*\|)\\
 \le & 2L\sqrt{d}\epsilon+C_1\epsilon \\
 \le & (2L\sqrt{d}+C_1)\epsilon.
\end{align*}
The first inequality holds since $\|\theta\|\le \sqrt{d}$. The second inequality holds since $|f''|\le L$ and $|D_\tau|\le C_1$.

Therefore, it holds that 
\[
\|\sum_\tau \xi_\tau \langle \phi_\tau, D_\tau(\hat \theta ,\theta^*)(\hat \theta -\theta^*)\|\le \|\sum_\tau \xi_\tau \langle \phi_\tau, D_\tau( \theta ,\theta')( \theta -\theta')\|+ (2L\sqrt{d}+C_1)\mathtt{buffer.e} (\tilde{k})\epsilon,
\] since $|\xi_\tau|\le 1$.

Let's define the following shorthands
\[
V(\phi)=\sum_\tau \langle\phi_t,D_\tau (\theta-\theta')\rangle^2,
\]
\[
V(\hat \phi)=\sum_\tau \langle\phi_t,D_\tau (\hat\theta-\theta^*)\rangle^2.
\]

Therefore, by applying the inequality above, we have
\begin{equation}\label[ineq]{V}
V(\phi)\le V(\hat \phi)+4C_1\sqrt{d} (2L\sqrt{d}+C_1)\mathtt{buffer.e} (\tilde{k})\epsilon.    
\end{equation}

The inequality holds because of the square difference formula.

Since for positive number $a$ $b$ and $c$, if $a\le b+c$, than $\sqrt{a}\le \sqrt{b}+\sqrt{c}$. So, it holds that
\begin{equation}\label[ineq]{sqrtv}
\sqrt{V(\phi)}\le \sqrt{V(\hat \phi)}+\sqrt{4C_1\sqrt{d} (2L\sqrt{d}+C_1)\mathtt{buffer.e} (\tilde{k})\epsilon}.    
\end{equation}

Since $\theta^*$ is the true parameter and $\xi_\tau=(1-\tilde q_\tau) -f_{m_\tau}(\langle \phi_\tau, \theta^*\rangle)$ which is determined by truthful bid, it holds $\E (\xi_\tau\given \phi_{1:\tau},\xi_{1:\tau-1})=0$ whose value is determined by $z_\tau$ only. Due to Azuma-Hoeffding inequality \citep{hoeffding1994probability}, it holds that
\begin{equation}\label[ineq]{glmhoeff}
\Pr[|\sum_\tau \xi_\tau D_\tau \langle\phi_\tau ,\theta-\theta'\rangle|\ge \sqrt{\log \frac{2B_2HN}{\delta\epsilon^{2d}}V(\phi)}]\le \frac{\delta}{HN},    
\end{equation}
for any $(\theta,\theta')$ with probability at least $1-\frac{\delta}{HN}$.

Therefore, it holds that 
\begin{align*}
V(\hat \phi)\le& 4C_1\sqrt{d} (2L\sqrt{d}+C_1)\mathtt{buffer.e} (\tilde{k})\epsilon+   V(\phi)\\
\le& 4C_1\sqrt{d} (2L\sqrt{d}+C_1)\mathtt{buffer.e} (\tilde{k})\epsilon+ 2 \sqrt{\log \frac{2B_2HN}{\delta\epsilon^{2d}}V(\phi)} +6\mathtt{L}\\
\le &4C_1\sqrt{d} (2L\sqrt{d}+C_1)\mathtt{buffer.e} (\tilde{k})\epsilon+ 2 \sqrt{\log \frac{2B_2HN}{\delta\epsilon^{2d}}}\big[\sqrt{V(\hat \phi)}\\
&+\sqrt{4C_1\sqrt{d} (2L\sqrt{d}+C_1)\mathtt{buffer.e} (\tilde{k})\epsilon}\big]+6\mathtt{L}\\
=&   4C_1\sqrt{d} (2L\sqrt{d}+C_1)+ 2 \sqrt{\log \frac{2B_2HN \mathtt{buffer.e} (\tilde{k})^{2d}}{\delta}}\big[\sqrt{V(\hat \phi)}\\
&+\sqrt{4C_1\sqrt{d} (2L\sqrt{d}+C_1)}\big]+6\mathtt{L}\\
\le& 4C_1\sqrt{d} (2L\sqrt{d}+C_1)+ 2 \sqrt{\log \frac{2B_2HN \mathtt{buffer.e} (\tilde{k})^{2d}}{\delta}}\big[\sqrt{V(\hat \phi)}\\
&+\sqrt{4C_1\sqrt{d} (2L\sqrt{d}+C_1)}\big]+6C_4H \log^2 K.
\end{align*}
The first inequality holds due to \Cref{V} while the second one holds due to \Cref{glmhoeff} and \Cref{6lie}. The third inequality holds because of \Cref{sqrtv}. The equality holds since we set $\epsilon=\frac{1}{\mathtt{buffer.e} (\tilde{k})}$. The final inequality holds because of \Cref{lem:bound_lie}.

Finally, applying the root formula of the quadratic equation, it is obvious that there exists a constant $B_3>0$ such that $V(\hat \phi)\le B_3 H\log^2 K$.

Similar to \citet{wang2020optimism}, we have 
\[
\sqrt{(\hat \theta_{ih}-\theta_{ih}^*)^T\Lambda^{\mathtt{buffer.e} (\tilde{k})}(\hat \theta_{ih}-\theta_{ih}^*)} \le c_1^{-1}\sqrt{V(\hat \phi)}+2\sqrt{d\lambda},
\]for any $i$ and $h$ with probability at least $1-\delta$.

It holds since 
\[
\sqrt{(\hat \theta-\theta^*)^T\Lambda^{\mathtt{buffer.e} (\tilde{k})}(\hat \theta-\theta^*)}\le \sqrt{(\hat \theta-\theta^*)^T(\sum_\tau \phi_\tau \phi_\tau^T) (\hat \theta-\theta^*)}+\sqrt{(\hat \theta-\theta^*)^T(\lambda I)(\hat \theta-\theta^*)}.
\]
Then, we have $D_\tau^2\ge c_1^2$ and $\|(\hat \theta_{ih}-\theta_{ih}^*)\|_{\lambda I}\le 2\sqrt{d\lambda}$.

In the end, we find that there exists a constant $C_5$ that satisfies 
\[
\sqrt{(\hat \theta_{ih}-\theta_{ih}^*)^T\Lambda^{\mathtt{buffer.e} (\tilde{k})}(\hat \theta_{ih}-\theta_{ih}^*)} \le C_5 \sqrt{H}\log K,
\]
which ends the proof.\qed 

\subsection{Proof of \texorpdfstring{\Cref{lem:diffmu}}{lem:diffmu}}
Using the Cauchy inequality, we have the following statement:
\begin{lemma}\label{cauchy}
It holds that
\[
|\langle\phi(x,v),\hat \theta-\theta\rangle|\le \sqrt{(\hat \theta-\theta)^T\Lambda (\hat \theta-\theta)}  \|\phi(x,v)\|_{\Lambda^{-1}}.  
\]  
Specially, taking $\Lambda=\Lambda_h^{\mathtt{buffer.e} (\tilde{k})}=\sum_{\tau=1}^{\mathtt{buffer.e} (\tilde{k})} \phi(x_h^\tau,\upsilon_h^\tau) \phi(x_h^\tau,\upsilon_h^\tau)^T +\lambda I$, the inequality holds.  
\end{lemma}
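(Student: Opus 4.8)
The plan is to recognize \Cref{cauchy} as a direct instance of the Cauchy--Schwarz inequality in the inner product induced by the positive definite matrix $\Lambda$. First I would record that $\Lambda$ is symmetric positive definite: in the general statement this is part of the hypothesis, and in the special case $\Lambda = \Lambda_h^{\mathtt{buffer.e}(\tilde k)} = \sum_{\tau=1}^{\mathtt{buffer.e}(\tilde k)} \phi(x_h^\tau,\upsilon_h^\tau)\phi(x_h^\tau,\upsilon_h^\tau)^T + \lambda I$ it follows because the sum of rank-one terms $\phi\phi^T$ is positive semidefinite while $\lambda I \succ 0$ for $\lambda > 0$. Consequently the symmetric square root $\Lambda^{1/2}$ and its inverse $\Lambda^{-1/2}$ are well defined, symmetric, and positive definite.

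Next I would insert the identity $I = \Lambda^{-1/2}\Lambda^{1/2}$ into the bilinear form. Setting $u = \Lambda^{-1/2}\phi(x,v)$ and $w = \Lambda^{1/2}(\hat\theta-\theta)$, we have $\langle \phi(x,v),\hat\theta-\theta\rangle = \phi(x,v)^T(\hat\theta-\theta) = u^T w$. Applying the ordinary Cauchy--Schwarz inequality $|u^T w| \le \|u\|_2\,\|w\|_2$ then gives
\[
|\langle\phi(x,v),\hat\theta-\theta\rangle| \le \|\Lambda^{-1/2}\phi(x,v)\|_2 \cdot \|\Lambda^{1/2}(\hat\theta-\theta)\|_2.
\]
Finally I would identify the two Euclidean norms with the quantities appearing in the statement: $\|\Lambda^{-1/2}\phi(x,v)\|_2^2 = \phi(x,v)^T\Lambda^{-1}\phi(x,v) = \|\phi(x,v)\|_{\Lambda^{-1}}^2$ and $\|\Lambda^{1/2}(\hat\theta-\theta)\|_2^2 = (\hat\theta-\theta)^T\Lambda(\hat\theta-\theta)$. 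Substituting these back yields exactly the claimed inequality, and the ``special'' case is then immediate once positive definiteness of $\Lambda_h^{\mathtt{buffer.e}(\tilde k)}$ has been noted.

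There is essentially no obstacle here. The only point demanding any care is verifying that $\Lambda$ is invertible — indeed positive definite — so that $\Lambda^{-1/2}$ is meaningful and the manipulation above is legitimate; this is guaranteed by the regularizer $\lambda I$ with $\lambda > 0$. Everything else is routine linear algebra, so I expect the proof to be a few lines.
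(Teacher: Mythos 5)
Your proof is correct and matches the paper's intent: the paper asserts the lemma with the one-line justification ``Using Cauchy inequality'' and leaves the standard $\Lambda^{1/2}$ factorization implicit, which you simply spell out. The observation that $\lambda I$ with $\lambda>0$ ensures positive definiteness is the right thing to check and is indeed the only substantive point.
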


Then \Cref{cauchy} and \Cref{lem:glm} lead to \Cref{lem:diffmu}.\qed 

\subsection{Proof of \texorpdfstring{\Cref{lem:estimater}}{lem:estimator}}
Firstly, we define $\tilde  R_h^k(\cdot,\cdot)=\sum_{i=1}^N \E [\max \{ r_{ih}^{k-} ,\alpha_{ih}^k\}\ind (r_{ih}^k\ge \max \{r_{ih}^{k-} ,\alpha_{ih}^k )]$. Then, $|R_h^k(\cdot,\cdot)-\hat R_h^k(\cdot,\cdot)|\le |R_h^k(\cdot,\cdot)-\tilde R_h^k(\cdot,\cdot)|+|\tilde R_h^k(\cdot,\cdot)-\hat R_h^k(\cdot,\cdot)|$.

To bound $|\tilde R_h^k(\cdot,\cdot)-\hat R_h^k(\cdot,\cdot)|$, we have 
\begin{align*}
|\tilde R_h^k(\cdot,\cdot)-\hat R_h^k(\cdot,\cdot)|\le&
    \sum_{i=1}^N \E |[\max \{ r_{ih}^{k-} ,\alpha_{ih}^k\}\ind (r_{ih}^k\ge \max \{r_{ih}^{k-} ,\alpha_{ih}^k )]\\
    &-[\max \{\hat r_{ih}^{k-} ,\alpha_{ih}^k\}\ind (\hat r_{ih}^k\ge \max \{\hat r_{ih}^{k-} ,\alpha_{ih}^k )]|\\
    &\le \sum_{i=1}^N\Delta_1+\Delta_2+\Delta_3\\
    &\le (1+6C_1)N C_5 \sqrt{H}\log K \|\phi(\cdot,\cdot)\|_{({\Lambda_h^{\mathtt{buffer.e} (\tilde{k})}})^{-1}},
\end{align*}
where 
\begin{align*}
\Delta_1=&|[\max \{ r_{ih}^{k-} ,\alpha_{ih}^k\}\ind (r_{ih}^k\ge \max \{r_{ih}^{k-} ,\alpha_{ih}^k \})]\\
&-[\max \{\hat r_{ih}^{k-} ,\alpha_{ih}^k\}\ind (r_{ih}^k\ge \max \{r_{ih}^{k-} ,\alpha_{ih}^k\} )]|, 
\end{align*}
\begin{align*}
 \Delta_2=&|[\max \{\hat r_{ih}^{k-} ,\alpha_{ih}^k\}\ind (r_{ih}^k\ge \max \{r_{ih}^{k-} ,\alpha_{ih}^k \})]\\
 &-[\max \{\hat r_{ih}^{k-} ,\alpha_{ih}^k\}\ind (\hat r_{ih}^k\ge \max \{r_{ih}^{k-} ,\alpha_{ih}^k\} )]|     
\end{align*}
and 
\begin{align*}
\Delta_3=&|[\max \{\hat r_{ih}^{k-} ,\alpha_{ih}^k\}\ind (\hat r_{ih}^k\ge \max \{r_{ih}^{k-} ,\alpha_{ih}^k\} )]\\
&-[\max \{\hat r_{ih}^{k-} ,\alpha_{ih}^k\}\ind (\hat r_{ih}^k\ge \max \{\hat r_{ih}^{k-} ,\alpha_{ih}^k\})]|.    
\end{align*}

The first inequality holds due to the properties of convex functions. The second inequality holds due to the triangle inequality. The third inequality holds since $\Delta_1 \le |\max \{ r_{ih}^{k-} ,\alpha_{ih}^k\}-\max \{ \hat r_{ih}^{k-} ,\alpha_{ih}^k\}|\le |r-\hat r|$, $\Delta_2\le 3C_1 |r-\hat r|$ and $\Delta_3\le 3C_1 |r-\hat r|$. The reason why $\Delta_2\le 3C_1 |r-\hat r| $ is $\max \{\hat r,\alpha\}\le 3$ and $\E| \ind(r_{ih}^k\ge \max \{r_{ih}^{k-} ,\alpha_{ih}^k\} )- \ind (\hat r_{ih}^k\ge \max \{r_{ih}^{k-} ,\alpha_{ih}^k \})|\le C_1 |r-\hat r|$.

To bound $|R_h^k(\cdot,\cdot)-\tilde R_h^k(\cdot,\cdot)|$, we have the following lemmas. We define $W_{ih}^k(\alpha)=\E [\max \{v_{ih}^{k-},\alpha\}\ind (v_{ih}^k\ge \max\{v_{ih}^{k-},\alpha\})\given \phi_h^k]$ at first.
\begin{lemma}[Lemma C.3. \citep{golrezaei2019dynamic}] \label{lem:differ0}
Since $\alpha_{ih}^{k*}$ is determined by Myerson Lemma \citep{myerson1981optimal}, we have $W_{ih}^{'k}(\alpha_{ih}^{k*})=0$. Furthermore, there exists a constant $B_4$ that for any $\alpha$ between $\alpha_{ih}^k$ and $\alpha_{ih}^{k*}$, we have $|W_{ih}^{''k} (\alpha)|\le B_4$ for any $i$ and $h$, under assumption \Cref{assumptionf}, \Cref{assumptionfdiff} and \Cref{logconcave}.
\end{lemma}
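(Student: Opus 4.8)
The plan is to obtain a closed form for $W_{ih}^k(\alpha)$, differentiate it twice, and read off both claims. Fix $i,h,k$ and condition throughout on $\phi_h^k$; this pins down every bidder's reward mean $\mu_{jh}^k=\langle\phi_h^k,\theta_{jh}\rangle$, and since the noises $\{z_{jh}\}_j$ are independent, the truthful valuations $v_{jh}^k=1+\mu_{jh}^k+z_{jh}$ are conditionally independent, each supported in $[0,3]$ with density bounded by $C_1$ (Assumption~\ref{assumptionf}). Write $G,g$ for the CDF and pdf of bidder $i$'s valuation and $H^-,h^-$ for those of the highest competing valuation $r_{ih}^{k-}=\max_{j\ne i}v_{jh}^k$. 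Splitting $W_{ih}^k(\alpha)=\E[\max\{r_{ih}^{k-},\alpha\}\ind(v_{ih}^k\ge\max\{r_{ih}^{k-},\alpha\})]$ according to whether $r_{ih}^{k-}<\alpha$ and using conditional independence gives
\[
W_{ih}^k(\alpha)=\alpha\,H^-(\alpha)\bigl(1-G(\alpha)\bigr)+\int_{\alpha}^{\infty}t\bigl(1-G(t)\bigr)h^-(t)\,dt .
\]

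First I would differentiate once: two of the four resulting terms cancel, leaving $W_{ih}^{\prime k}(\alpha)=H^-(\alpha)\bigl[(1-G(\alpha))-\alpha g(\alpha)\bigr]$. The bracket is exactly the first-order condition for the Myerson objective $y\mapsto y\bigl(1-G(y)\bigr)=y\bigl(1-F(y-1-\mu_{ih}^k)\bigr)$. Since this objective is $0$ at both ends of the bounded interval on which it is positive and strictly positive in between, its maximizer $\alpha_{ih}^{k*}$ is interior, and unique by log-concavity of $1-F$ (Assumption~\ref{logconcave}); hence $\alpha_{ih}^{k*}$ satisfies $(1-G(\alpha_{ih}^{k*}))-\alpha_{ih}^{k*}g(\alpha_{ih}^{k*})=0$, so $W_{ih}^{\prime k}(\alpha_{ih}^{k*})=0$, which is the first claim.

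Next I would differentiate a second time, obtaining
\[
W_{ih}^{\prime\prime k}(\alpha)=h^-(\alpha)\bigl[(1-G(\alpha))-\alpha g(\alpha)\bigr]+H^-(\alpha)\bigl[-2g(\alpha)-\alpha g'(\alpha)\bigr],
\]
and bound each factor: $0\le H^-\le 1$, $0\le 1-G\le 1$; the density of the maximum of the $N-1$ competing valuations obeys $h^-(\alpha)\le(N-1)C_1$ (it is a sum of $N-1$ terms, each at most a single competitor's density times products of CDFs); $g(\alpha)\le C_1$ and $|g'(\alpha)|=|f'(\alpha-1-\mu_{ih}^k)|\le L$ by Assumptions~\ref{assumptionf} and~\ref{assumptionfdiff}; and any $\alpha$ between $\alpha_{ih}^k$ and $\alpha_{ih}^{k*}$ lies in a fixed bounded interval $[0,C]$, since both reserve prices maximize functions that vanish outside a bounded set. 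Combining yields $|W_{ih}^{\prime\prime k}(\alpha)|\le (N-1)C_1(1+C C_1)+(2C_1+C L)=:B_4$, a constant independent of $i,h,k$, which is the second claim.

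The calculus is routine; the two points needing care are (i) justifying that $\alpha_{ih}^{k*}$ is an \emph{interior} critical point of the Myerson objective, so that the first-order condition genuinely holds (this is where log-concavity and the boundary behavior of $y(1-F(y-1-\mu))$ are used), and (ii) controlling the density $h^-$ of the maximum of the competing valuations, which is where the dependence on $N$ and $C_1$ enters $B_4$. I expect (ii), together with pinning down a uniform bound on the range of the relevant reserve prices, to be the main bookkeeping obstacle, though nothing here is deep.
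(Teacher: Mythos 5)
The paper does not prove this lemma itself; it cites it directly from Golrezaei et al.\ (2019, Lemma~C.3), so there is no in-paper proof to compare against. Your reconstruction is mathematically correct and follows the standard argument used in that reference: write $W_{ih}^k(\alpha)$ in closed form via the conditional-independence decomposition, observe the cancellation that yields $W'_{ih}{}^k(\alpha)=H^-(\alpha)\bigl[(1-G(\alpha))-\alpha g(\alpha)\bigr]$, identify the bracket with Myerson's first-order condition (whose unique interior zero is guaranteed by log-concavity), and then bound $W''_{ih}{}^k$ termwise using the assumed bounds on $f$, $f'$, the density of the highest competitor, and the fact that the relevant reserve prices live in a fixed compact interval. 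The only small point worth flagging is that the observation $W'_{ih}{}^k(\alpha_{ih}^{k*})=0$ does not actually require $\alpha_{ih}^{k*}$ to be an interior maximizer of $W_{ih}^k$ itself: because $\alpha_{ih}^{k*}$ is \emph{defined} as the monopoly reserve solving $(1-G(\alpha))-\alpha g(\alpha)=0$, the factorization $W'_{ih}{}^k=H^-\cdot[(1-G)-\alpha g]$ makes the first claim immediate, and the interiority/uniqueness discussion is only needed to justify that such a zero exists and is the argmax of $y(1-G(y))$. Otherwise the proof is complete and matches the approach in the cited source.
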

\begin{lemma}[Lemma C.4. \citep{golrezaei2019dynamic}] \label{lem:reservegap}
Under \Cref{logconcave}, it holds that
\[
|\alpha_{ih}^{k*}-\alpha_{ih}^k|\le |\langle \phi_h^k, \theta_{ih}-\hat \theta_{ih}\rangle|.
\]
\end{lemma}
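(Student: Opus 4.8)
The plan is to prove the pointwise statement that the single‑bidder monopoly reserve, viewed as a function of the bidder's mean parameter, is $1$‑Lipschitz, and then instantiate it at $\mu_{ih}^k=\langle\phi_h^k,\theta_{ih}\rangle$ and $\hat\mu_{ih}^k=\langle\phi_h^k,\hat\theta_{ih}\rangle$, whose difference is exactly $\langle\phi_h^k,\theta_{ih}-\hat\theta_{ih}\rangle$. Fix a bidder whose valuation is $1+\mu+z$ with $z\sim F$, and write the monopoly revenue at reserve $y\ge 0$ as $R_\mu(y)=y\,(1-F(y-1-\mu))$; this is precisely the objective maximized in \Cref{algo:estimate} when choosing $\hat\rho_{ih}$. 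By \Cref{assumptionf} we have $f\ge c_1>0$ on $[-1,1]$, and by \Cref{logconcave} the function $1-F$ is log‑concave, hence $F$ is Myerson‑regular. Consequently $R_\mu$ has a unique maximizer $\alpha(\mu)$, which — since $R_\mu(0)=0$ and $R_\mu(y)\to 0$ as $y$ grows ($F$ being supported on $[-1,1]$) — is interior and characterized by the first‑order condition $1-F(s)=y\,f(s)$ with $s:=y-1-\mu$, equivalently by the bidder's virtual value vanishing at $\alpha(\mu)$, exactly as in \citet{myerson1981optimal}.

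The core step is to re‑express this first‑order condition so that the $\mu$‑dependence is transparent. Eliminating between $1-F(s)=y\,f(s)$ and $y=1+\mu+s$ gives, simultaneously, $\alpha(\mu)=1+\mu+s(\mu)$ and $\alpha(\mu)=\tfrac{1-F(s(\mu))}{f(s(\mu))}$, where $s(\mu)$ is the unique root of $h(s):=\tfrac{1-F(s)}{f(s)}-s-1=\mu$. Log‑concavity of $1-F$ is equivalent to $s\mapsto\tfrac{1-F(s)}{f(s)}$ being nonincreasing, so $h$ is strictly decreasing and hence $s(\cdot)$ is nonincreasing. This yields two monotonicity facts: from $\alpha(\mu)=\tfrac{1-F(s(\mu))}{f(s(\mu))}$, composing the nonincreasing $s(\cdot)$ with the nonincreasing $\tfrac{1-F}{f}$ shows $\alpha(\cdot)$ is \emph{nondecreasing}; and from $\alpha(\mu)-\mu=1+s(\mu)$, the map $\mu\mapsto\alpha(\mu)-\mu$ is \emph{nonincreasing}. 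Combining them, for $\mu_1\le\mu_2$ we obtain $0\le\alpha(\mu_2)-\alpha(\mu_1)\le\mu_2-\mu_1$, i.e. $|\alpha(\mu_1)-\alpha(\mu_2)|\le|\mu_1-\mu_2|$, and plugging in the two means finishes the argument.

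The part requiring care — and the step I expect to be the main obstacle — is the bookkeeping that justifies reducing to an interior first‑order condition uniformly over the parameter values that actually arise. One must check that the optimal $s(\mu)$ lies in $(-1,1)$, where $f$ is bounded below (on $(-2,-1]$ one has $F\equiv 0$, so $R_\mu$ is strictly increasing there and the maximizer cannot lie in that range), and one must handle the fact that $\hat\mu_{ih}^k$ is only an estimate, bounded in norm but possibly negative enough that the unconstrained optimizer falls below the feasibility floor $y\ge 0$; in that case $\alpha(\hat\mu_{ih}^k)$ is clamped to $0$, and since clamping a nondecreasing $1$‑Lipschitz function at a constant preserves both properties, the bound survives. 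The monotonicity argument itself is short; this is Lemma~C.4 of \citet{golrezaei2019dynamic}, and the above is the natural route to reconstruct it.
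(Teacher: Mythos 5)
The paper itself does not prove this lemma; it is invoked directly as Lemma~C.4 of \citet{golrezaei2019dynamic}, so there is no in‑text proof to compare against. Your reconstruction is correct and is the natural argument. The reduction to showing that $\mu\mapsto\alpha(\mu)$ is nondecreasing and $1$‑Lipschitz is exactly right, since $\langle\phi_h^k,\theta_{ih}-\hat\theta_{ih}\rangle=\mu_{ih}^k-\hat\mu_{ih}^k$. The two monotonicity facts check out: log‑concavity of $1-F$ gives that the inverse hazard rate $g(s)=\frac{1-F(s)}{f(s)}$ is nonincreasing, hence $h(s)=g(s)-s-1$ is strictly decreasing, hence $s(\mu)=h^{-1}(\mu)$ is nonincreasing; then $\alpha(\mu)=g(s(\mu))$ is nondecreasing as a composition of two nonincreasing maps, while $\alpha(\mu)-\mu=1+s(\mu)$ is nonincreasing; combining yields $0\le\alpha(\mu_2)-\alpha(\mu_1)\le\mu_2-\mu_1$ for $\mu_1\le\mu_2$. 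This is equivalent to the implicit‑function‑theorem route (which would give $\frac{d\alpha}{d\mu}=1+\frac{1}{g'(s)-1}\in[0,1]$ since $g'\le 0$) but is slightly more elementary because it never differentiates $g$. Your handling of the boundary cases is also sound: for $s<-1$ one has $F\equiv 0$ so $R_\mu(y)=y$ is strictly increasing and the maximizer cannot lie there, forcing $s(\mu)\ge -1$ where $f\ge c_1$; and if $\hat\mu$ is negative enough that the unconstrained maximizer would be below $0$, clamping to $0$ preserves both monotonicity and the Lipschitz constant, so the bound survives. No gaps.
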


By applying \Cref{lem:reservegap}, we have 
\begin{align*}
    |R_h^k(\cdot,\cdot)-\tilde R_h^k(\cdot,\cdot)|&\le \sum_{i=1}^N \frac{B_4}{2}(\alpha_{ih}^{k*}-\alpha_{ih}^k)^2\\
    &\le N\frac{B_4}{2}(\langle \phi_h^k, \theta_{ih}-\hat \theta_{ih}\rangle)^2\\
    &\le N\frac{B_4}{2} C_5^2 H \log^2K \|\phi(\cdot,\cdot)\|_{(\Lambda_h^{\mathtt{buffer.e} (\tilde{k})})^{-1}}^2\\
    &\le N\frac{B_4}{2} C_5^2 H \log^2K \|\phi(\cdot,\cdot)\|_{(\Lambda_h^{\mathtt{buffer.e} (\tilde{k})})^{-1}}\frac{1}{\sqrt{\lambda}}.
\end{align*}
The first inequality holds due to the Taylor expansion. The second inequality holds due to \Cref{lem:reservegap}, while the third one holds due to \Cref{lem:diffmu}. The last inequality holds since $\|\phi\|_{\Lambda^{-1}}\le \frac{1}{\lambda}$.
\re{\begin{remark}\label{redundant}
    Without \Cref{logconcave}, we can get the last inequality from the integral form of $R(\cdot,\cdot)$. For example, when $N=1$, it holds that $R(\cdot,\cdot)=\alpha(1-F(\alpha-1-\langle \phi,\theta\rangle))$. Then, $|R-\tilde R|\le 3C_1|\langle\theta-\hat\theta,\phi\rangle|$ due to \Cref{assumptionf}. It shows that \Cref{logconcave} is actually redundant as \Cref{assumptionf} exists.
\end{remark}}

Combining the differences $|\tilde R_h^k(\cdot,\cdot)-\hat R_h^k(\cdot,\cdot)|$ and $|R_h^k(\cdot,\cdot)-\tilde R_h^k(\cdot,\cdot)|$, it holds that 
\[
|R_h^k(\cdot,\cdot)-\hat R_h^k(\cdot,\cdot)|\le [(1+6C_1)C_5\sqrt{H}\log K+\frac{B_4}{2\sqrt{\lambda}}C_5^2H\log^2K]N \|\phi(\cdot,\cdot)\|_{(\Lambda_h^{\mathtt{buffer.e} (\tilde{k})})^{-1}}.
\]
Therefore, there exists a constant $C_6$ which is independent of $H$ and $K$, satisfying 
\[
|R_h^k(\cdot,\cdot)-\hat R_h^k(\cdot,\cdot)|\le C_6H\log^2K \|\phi(\cdot,\cdot)\|_{(\Lambda_h^{\mathtt{buffer.e} (\tilde{k})})^{-1}},
\]
and it ends the proof.\qed 

\subsection{Proof of \texorpdfstring{\Cref{lem:lsvi}}{lem:lsvi}}
In order to prove \Cref{lem:lsvi}, we have the following lemmas for help.
\begin{lemma}\label{lem:boundomega1}
For any fixed policy $\pi$, let $\{\omega_h^\pi\}_{h\in[H]}$ be the corresponding vectors such that $Q_h^\pi(\cdot,\cdot)=R(\cdot,\cdot)+\langle \phi(\cdot,\cdot), \omega_h^\pi\rangle$ for any $h$. Then, it holds that 
\[
\|\omega_h^\pi\|\le 3H\sqrt{d},
\]for any $h$.
\end{lemma}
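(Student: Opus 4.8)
The plan is to read off $\omega_h^\pi$ directly from the linear structure of the transition kernel and then bound its Euclidean norm using the boundedness of the per-step revenue together with the normalization $\|\mathcal{M}_h(\cS)\|\le\sqrt d$ from Assumption~\ref{assumption:linearmdp}; this is the linear-MDP bookkeeping step, the analogue of the corresponding argument in \citet{jin2020provably}.

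First I would write the Bellman equation for the fixed policy $\pi$: for an action $a=(\upsilon,\rho)$,
\[
Q_h^\pi(x,a) = R_h(x,a) + (\PP_h V_{h+1}^\pi)(x,\upsilon), \qquad V_{H+1}^\pi\equiv 0,
\]
using that the transition $\PP_h$, and hence the continuation value, depends on the action only through the item choice $\upsilon$ and never through the reserves $\rho$. Substituting $\PP_h(x'\,|\,x,\upsilon)=\langle\phi(x,\upsilon),\mathcal{M}_h(x')\rangle$ from Assumption~\ref{assumption:linearmdp} and pulling the inner product out of the integral gives $Q_h^\pi(x,a)=R_h(x,a)+\langle\phi(x,\upsilon),\omega_h^\pi\rangle$ with
\[
\omega_h^\pi := \int_{\cS} V_{h+1}^\pi(x')\,d\mathcal{M}_h(x'),
\]
which is precisely the claimed decomposition, with the term $R(\cdot,\cdot)$ absorbing all the reserve-price dependence of $Q_h^\pi$.

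Next I would bound $\|V_{h+1}^\pi\|_\infty$. Under truthful bidding each reward $r_{ih}(x,\upsilon)$ lies in $[0,3]$, so bids lie in $[0,3]$; since at most one bidder wins a round and pays $m_{ih}\le b_{ih}\le 3$ when he does, the per-step expected revenue in~\eqref{eqn:defn_revenue} satisfies $0\le R_h(\cdot,\cdot)\le 3$, hence $0\le V_{h+1}^\pi(\cdot)\le 3(H-h)\le 3H$. Then, for any unit vector $v\in\R^d$,
\[
|\langle v,\omega_h^\pi\rangle| = \Bigl|\int_{\cS} V_{h+1}^\pi(x')\,v^\top d\mathcal{M}_h(x')\Bigr| \le \|V_{h+1}^\pi\|_\infty\,\|\mathcal{M}_h(\cS)\| \le 3H\sqrt d,
\]
using the standard estimate $|\int g\,d\mu|\le\|g\|_\infty\|\mu\|_{\mathrm{TV}}$ applied coordinatewise (as in \citet{jin2020provably}) together with Assumption~\ref{assumption:linearmdp}; taking the supremum over $v$ yields $\|\omega_h^\pi\|\le 3H\sqrt d$.

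I do not anticipate a genuine obstacle: the statement is routine once the linear structure is invoked. The two points that deserve a line of care are (i) the observation that the reserve component $\rho$ can be pushed entirely into $R(\cdot,\cdot)$ because the transition operator never sees it, so $\omega_h^\pi$ is well defined as a function of the $(x,\upsilon)$-features alone and the decomposition is exact; and (ii) the signed-measure manipulation in the last display, which is the usual coordinatewise Cauchy--Schwarz argument and relies on the paper's normalization of $\mathcal{M}_h$.
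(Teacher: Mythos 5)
Your argument is correct and follows the same route as the paper's proof: identify $\omega_h^\pi=\int V_{h+1}^\pi\,d\mathcal{M}_h$ via the Bellman equation and linearity of $\PP_h$, bound $\|V_{h+1}^\pi\|_\infty\le 3H$, and conclude using $\|\mathcal{M}_h(\cS)\|\le\sqrt d$. You have simply spelled out the coordinatewise signed-measure estimate and the remark that the reserve component $\rho$ is absorbed into $R(\cdot,\cdot)$, both of which the paper leaves implicit.
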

\begin{proof}
Since it holds
\[
Q_h^\pi(\cdot,\cdot)=(R+\mathbb{P}_h V_{h+1}^\pi)(\cdot.\cdot),
\]
and the linearity of MDP, we have
\[
\omega_h^\pi=\int V_{h+1}^\pi(\cdot)d\mathcal{M}_h(\cdot).
\]
Therefore, considering $|V|\le 3H$ and $\|\mathcal{M}_h(\mathcal{S})\|\le \sqrt{d}$, \Cref{lem:boundomega1} holds.
\end{proof}
\begin{lemma}\label{lem:boundomege2}
For any $(k,h)\in [K]\times [H]$, the vector $\omega_h^{\mathtt{buffer.e} (\tilde{k})}$ in \Cref{algo:estimate} satisfies:
\[
\|\omega_h^{\mathtt{buffer.e} (\tilde{k})}\|=\|\omega_h^{k}\|\le 3H \sqrt{\frac{d\mathtt{buffer.e} (\tilde{k})}{\lambda}}\le 3H \sqrt{\frac{dk}{\lambda}}.
\]
\end{lemma}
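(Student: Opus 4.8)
The plan is to follow the standard LSVI-UCB weight-norm bound (cf.\ Lemma B.2 of \citet{jin2020provably}), paying attention to three features specific to our setting: the value truncation level is $3H$ rather than $H$, the regression data runs only up to the relevant buffer end, and the weight vector is frozen between buffer ends. First I would record the trivial identity $\|\omega_h^{\mathtt{buffer.e}(\tilde k)}\| = \|\omega_h^k\|$: the policy and Q-function estimates—and hence $\omega_h$—are updated only at the ends of buffer periods (\Cref{algo:buffer}, line~\ref{algoline:unknownFupdate} of \Cref{algo:KnownF}), so the weight vector in force at episode $k$ is exactly the one computed from the data available at $\mathtt{buffer.e}(\tilde k)$. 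This reduces the claim to bounding the norm of
\[
\omega_h = \Lambda_h^{-1}\sum_{\tau=1}^{\mathtt{buffer.e}(\tilde k)}\phi_\tau\big[\max_\upsilon \hat Q_{h+1}(x_{h+1}^\tau,\upsilon)\big],\qquad \Lambda_h=\sum_{\tau=1}^{\mathtt{buffer.e}(\tilde k)}\phi_\tau\phi_\tau^T+\lambda I,
\]
where $\phi_\tau$ abbreviates $\phi(x_h^\tau,\upsilon_h^\tau)$.

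Second, I would observe that $0\le \max_\upsilon \hat Q_{h+1}(\cdot,\cdot)\le 3H$. The upper bound is immediate from the explicit truncation $\min\{\cdot,3H\}$ in line~\ref{algoline:ucb} of \Cref{algo:estimate}; the lower bound follows by backward induction on $h$ from $\hat Q_{H+1}\equiv 0$, using that $\hat R(\cdot,\cdot)\ge 0$ (it is an expectation of a nonnegative quantity) and that the uncertainty bonus $\poly(\log K)\|\phi(\cdot,\cdot)\|_{\Lambda_h^{-1}}$ is nonnegative. The constant $3$ is inherited from the fact that the per-step revenue lies in $[0,3]$.

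Third, for an arbitrary $v\in\RR^d$ I would bound, using Cauchy--Schwarz in the $\Lambda_h^{-1}$ inner product,
\[
|v^T\omega_h| \le \sum_{\tau=1}^{\mathtt{buffer.e}(\tilde k)} 3H\,\big|v^T\Lambda_h^{-1}\phi_\tau\big| \le 3H\,\|v\|_{\Lambda_h^{-1}}\sum_{\tau=1}^{\mathtt{buffer.e}(\tilde k)}\|\phi_\tau\|_{\Lambda_h^{-1}},
\]
then apply Cauchy--Schwarz in $\tau$ together with the elliptical-potential (trace) identity
\[
\sum_{\tau=1}^{\mathtt{buffer.e}(\tilde k)}\|\phi_\tau\|_{\Lambda_h^{-1}}^2 = \tr\!\big(\Lambda_h^{-1}(\Lambda_h-\lambda I)\big) \le d
\]
to get $\sum_\tau \|\phi_\tau\|_{\Lambda_h^{-1}} \le \sqrt{d\,\mathtt{buffer.e}(\tilde k)}$. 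Combining with $\|v\|_{\Lambda_h^{-1}}\le \|v\|/\sqrt\lambda$ gives $|v^T\omega_h|\le 3H\|v\|\sqrt{d\,\mathtt{buffer.e}(\tilde k)/\lambda}$, and taking the supremum over unit $v$ yields $\|\omega_h\|\le 3H\sqrt{d\,\mathtt{buffer.e}(\tilde k)/\lambda}$; the last inequality is just $\mathtt{buffer.e}(\tilde k)\le k$.

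I do not expect a genuine obstacle here: the computation is routine. The only things to get right are that the relevant truncation level is $3H$ (because value functions live in $[0,3H]$), that the summation range in the definition of $\omega_h$ matches the $\Lambda_h$ entering the trace bound, and—a minor care point—the nonnegativity of $\max_\upsilon \hat Q_{h+1}$, which is why I would spell out the backward induction rather than just invoking the $\min\{\cdot,3H\}$ clamp.
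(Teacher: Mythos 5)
Your proposal is correct and follows essentially the same argument as the paper: reduce to the frozen weight at $\mathtt{buffer.e}(\tilde k)$, bound $\max_\upsilon \hat Q_{h+1}$ by $3H$, and apply Cauchy--Schwarz together with the elliptical-potential bound $\sum_\tau \phi_\tau^T\Lambda_h^{-1}\phi_\tau\le d$ and $\Lambda_h^{-1}\preceq\lambda^{-1}I$. Your explicit backward-induction check that $\hat Q_{h+1}\ge 0$ is a small rigor bonus the paper leaves implicit (it only quotes $Q\le 3H$ where $|\hat Q_{h+1}|\le 3H$ is what is used).
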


\begin{proof}
Since we only update at episode $\mathtt{buffer.e} (\tilde{k})$, $\omega_h^{k}$ is the same as $\omega_h^{\mathtt{buffer.e} (\tilde{k})}$.

For any vector $\nu\in \mathbb{R}^d$, we have
\begin{align*}
|\nu^T \omega_h^{\mathtt{buffer.e} (\tilde{k})}|&=|\nu^T (\Lambda_k^{\mathtt{buffer.e} (\tilde{k})})^{-1}\sum_{\tau=1}^{\mathtt{buffer.e} (\tilde{k})} \phi_h^\tau \max_a Q_{h+1}(\cdot,\cdot)|\\
&\le \sum_\tau 3H |\nu^T (\Lambda_k^{\mathtt{buffer.e} (\tilde{k})})^{-1} \phi_h^\tau|\\
&\le 3H\sqrt{[\sum_\tau \nu^T (\Lambda_k^{\mathtt{buffer.e} (\tilde{k})})^{-1} \nu][\sum_\tau (\phi_h^\tau)(\Lambda_k^{\mathtt{buffer.e} (\tilde{k})})^{-1}\phi_h^\tau]}\\
&\le 3H\|\nu\|\sqrt{\frac{d\mathtt{buffer.e} (\tilde{k})}{\lambda}}.
\end{align*}
The first inequality holds since $Q\le 3H$, while the second inequality holds due to the Cauchy inequality. The third inequality holds since $(\Lambda_k^{\mathtt{buffer.e} (\tilde{k})})^{-1}\preceq \frac{1}{\lambda}I$ and the following lemma.
\end{proof}

\begin{lemma}[Lemma D.1. \citep{jin2020provably}]\label{lem:sumphi}
 Let $\Lambda^{\mathtt{buffer.e} (\tilde{k})}=\lambda I+ \sum_{\tau=1}^{\mathtt{buffer.e} (\tilde{k})}\phi_\tau \phi_\tau^T$ where $\phi_\tau \in \mathbb{R}^d$ and $\lambda>0$. Then it holds 
\[
\sum_{\tau=1}^{\mathtt{buffer.e} (\tilde{k})} \phi_\tau^T (\Lambda^{\mathtt{buffer.e} (\tilde{k})})^{-1}\phi_\tau\le d. 
\]
\end{lemma}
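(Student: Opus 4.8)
The statement is a purely linear-algebraic identity --- the ``elliptical potential'' bound in its simplest, fixed-matrix form --- so the plan is to prove it directly by a trace manipulation, with no probabilistic input whatsoever. Throughout, abbreviate $n := \mathtt{buffer.e}(\tilde k)$ and $\Lambda := \Lambda^{n} = \lambda I + \sum_{\tau=1}^{n}\phi_\tau\phi_\tau^\top$. Since $\lambda > 0$ and $\sum_{\tau=1}^n \phi_\tau\phi_\tau^\top$ is positive semidefinite, $\Lambda$ is symmetric positive definite and in particular invertible, so $\Lambda^{-1}$ and every scalar below are well defined.

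The first step is to rewrite each summand as a trace: for each $\tau$, by the cyclic property of the trace, $\phi_\tau^\top \Lambda^{-1}\phi_\tau = \tr(\Lambda^{-1}\phi_\tau\phi_\tau^\top)$. Summing over $\tau = 1,\dots,n$ and using linearity of $\tr$,
\[
\sum_{\tau=1}^{n}\phi_\tau^\top \Lambda^{-1}\phi_\tau \;=\; \tr\!\Big(\Lambda^{-1}\sum_{\tau=1}^{n}\phi_\tau\phi_\tau^\top\Big) \;=\; \tr\!\big(\Lambda^{-1}(\Lambda - \lambda I)\big) \;=\; \tr(I_d) - \lambda\,\tr(\Lambda^{-1}),
\]
where the middle equality just substitutes $\sum_{\tau=1}^n \phi_\tau\phi_\tau^\top = \Lambda - \lambda I$. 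Since $\tr(I_d) = d$, the left-hand side equals $d - \lambda\,\tr(\Lambda^{-1})$.

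The final step is to bound the correction term: because $\Lambda \succ 0$, its inverse $\Lambda^{-1}$ is positive definite, so all of its eigenvalues are strictly positive and $\tr(\Lambda^{-1}) > 0$; together with $\lambda > 0$ this yields $\sum_{\tau=1}^{n}\phi_\tau^\top \Lambda^{-1}\phi_\tau = d - \lambda\,\tr(\Lambda^{-1}) < d \le d$, which is the claimed bound. I do not expect any genuine obstacle here --- the only two points needing a line of justification are the invertibility of $\Lambda$, which is exactly what the hypothesis $\lambda > 0$ provides, and the nonnegativity of $\lambda\,\tr(\Lambda^{-1})$. An alternative route would relate the sum to its ``running'' analogue $\sum_{\tau=1}^n \phi_\tau^\top(\Lambda^{\tau})^{-1}\phi_\tau$ and bound the latter by a $\log\det$ telescoping; but since here every term uses the single fixed matrix $\Lambda = \Lambda^{n}$, the trace identity above is an exact equality and is strictly shorter, so that is the approach I would carry out.
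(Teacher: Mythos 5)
Your proof is correct and self-contained; the paper itself does not re-prove this lemma but simply cites it as Lemma D.1 of \citet{jin2020provably}, whose argument is the same trace computation you give (writing $\phi_\tau^\top \Lambda^{-1}\phi_\tau = \tr(\Lambda^{-1}\phi_\tau\phi_\tau^\top)$, summing, and using $\sum_\tau \phi_\tau\phi_\tau^\top = \Lambda - \lambda I$ together with positive definiteness of $\Lambda^{-1}$). The only cosmetic difference is that the cited proof typically states $\sum_\tau\phi_\tau\phi_\tau^\top \preceq \Lambda$ and bounds the trace from above, whereas you compute the exact value $d - \lambda\,\tr(\Lambda^{-1})$ and then drop the nonnegative correction; both yield the same bound.
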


Thus, with $\|\omega_h^{\mathtt{buffer.e} (\tilde{k})}\|=\max_{\nu:\|\nu\|=1}|\nu^T \omega_h^{\mathtt{buffer.e} (\tilde{k})}|$, it ends the proof.\qed 

In order to prove the next lemma, we introduce two useful lemmas first. 

\begin{lemma}\label{filtration}
For any given $h$, suppose $\{x_\tau\}_{\tau=1}^\infty$ being a stochastic process on state space $\mathcal{S}$ with corresponding filtration $\{\mathcal{F}_\tau\}_{\tau=0}^\infty$. Let $\{\phi_\tau\}_{\tau=1}^\infty$ be an $\mathbb{R}^d$-valued stochastic process when $\phi_\tau\in \mathcal{F}_{\tau-1}$. Since $\|\phi_\tau\|\le 1$ and $\Lambda_{\mathtt{buffer.e} (\tilde{k})}=\lambda I +\sum_{\tau=1}^{\mathtt{buffer.e} (\tilde{k})}\phi_\tau\phi_\tau^T$, then for any $\delta$, with probability at least $1-\delta$, for any $k$ corresponding to $\mathtt{buffer.e} (\tilde{k})$ and any $V\in \mathcal{V}$ so that $\sup_x|V(x)|\le 3H$, we have
\begin{align*}
\|\sum_{\tau=1}^k \phi_\tau \{V(x_\tau)-\E [V(x_\tau)\given \mathcal{F}_{\tau-1}]\}\|_{\Lambda_{\mathtt{buffer.e} (\tilde{k})}^{-1}}^2\le& \frac{54C_2 H^3\log ^2 K}{\lambda \log \frac{1}{\gamma}}+\frac{32k^2\epsilon^2}{\lambda}\\
&+144H^2[\frac{d}{2}\log\frac{k+\lambda}{\lambda}+\log \frac{\mathcal{N}_\epsilon}{\delta}],
\end{align*}
where $\mathcal{N}_\epsilon$ is the $\epsilon$-covering number of $\mathcal{V}$ with respect to the distance ${\rm dist}(V,V')=\sup_x(V(x)-V'(x))$.
\end{lemma}
\begin{proof}
First of all, we have
\begin{align*}
&\|\sum_{\tau=1}^k \phi_\tau \{V(x_\tau)-\E [V(x_\tau)\given \mathcal{F}_{\tau-1}]\}\|_{\Lambda_{\mathtt{buffer.e} (\tilde{k})}^{-1}}^2\\
\le & 2 \times2\|\sum_{\tau=1}^k \phi_\tau \{V(x_\tau)-\E [V(x_\tau)\given \mathcal{F}_{\tau-1}]\}\ind\{k\not \in {\mathtt{buffer}} \}\|_{\Lambda_{k}^{-1}}^2+2\times3H\frac{1}{\lambda}3H\frac{3HC_2\log^2 K}{\log\frac{1}{\gamma}}\\
\le &4 \|\sum_{\tau=1}^k \phi_\tau \{V(x_\tau)-\E [V(x_\tau)\given \mathcal{F}_{\tau-1}]\}\|_{\Lambda_{k}^{-1}}^2+\frac{54C_2 H^3\log ^2 K}{\lambda \log \frac{1}{\gamma}} .
\end{align*}
Firstly, we have $(a+b)^2\le 2a^2+2b^2$. Then, it holds since we divide the episodes into two parts: the ones in the buffer and the ones not. For the ones in buffer, due to the definition of $\mathtt{buffer.e} (\tilde{k})$, it is easy to prove that it is smaller than $4\|\sum_{\tau=1}^k \phi_\tau \{V(x_\tau)-\E [V(x_\tau)\given \mathcal{F}_{\tau-1}]\}\ind\{k\not \in \mathtt{buffer} \}\|_{\Lambda_{k}^{-1}}^2$. As for the one not in buffer, $\frac{54C_2 H^3\log ^2 K}{\lambda \log \frac{1}{\gamma}}$ is a trivial bound due to \Cref{lem:buffer} and $V(\cdot)\le 3H$.

Therefore, with Lemma D.4. in \citet{jin2020provably}, we simply replace its $H$ with our upper bound of $V(\cdot)$, i.e., $3H$, and it finishes our proof.
\end{proof}

\begin{lemma}\label{lem:covering}
Let $\mathcal{V}$ denote a class of functions mapping from $\mathcal{S}$ to $\mathbb{R}$ with the following parametric form
\[
V(\cdot)=\min \{\max_a \omega^T \phi(\cdot,\upsilon)+\hat R(\cdot,\upsilon)+\beta\|\phi(\cdot,\upsilon)\|_{\Lambda^{-1}},3H\},
\]
where $\|\omega\|\le L$, $\beta\in[0,B]$ and the minimum eigenvalue satisfies $\lambda_{\min}(\Lambda)\ge \lambda$. Suppose $\|\phi(\cdot,\cdot)\|\le 1$ and let $\mathcal{N}_\epsilon$ be the $\epsilon$-covering number of $\mathcal{V}$ with respect to the distance ${\rm dist}(V,V')=\sup_x|V(x)-V'(x)|$. Then, it holds 
\[
\log \mathcal{N}_\epsilon\le d\log(1+\frac{8L}{\epsilon})+d^2 \log (1+\frac{32\sqrt{d}B^2}{\lambda\epsilon^2})+dN\log(1+\frac{8NB_5\sqrt{d}}{\epsilon}),
\] where $B_5$ is a constant.
\end{lemma}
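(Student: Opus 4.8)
The plan is to run the standard $\epsilon$-net argument for optimistic value-function classes (as in the covering-number lemma of \citet{jin2020provably}), but carrying along the two extra groups of parameters that enter here through the plug-in revenue estimate $\hat R$. First I would reparametrize the class: set $A := \beta^2 \Lambda^{-1}$, so that the bonus term becomes $\beta\|\phi(\cdot,\upsilon)\|_{\Lambda^{-1}} = \sqrt{\phi(\cdot,\upsilon)^\top A\,\phi(\cdot,\upsilon)}$ with $A\succeq 0$ and $\|A\|_{\mathrm F}\le \sqrt d\,\|A\|_{\mathrm{op}}\le \sqrt d\,B^2/\lambda$ (using $\beta\le B$ and $\lambda_{\min}(\Lambda)\ge\lambda$); and observe that $\hat R(\cdot,\upsilon)$ is fully determined, via lines~\ref{algoline:rhoest}--\ref{algoline:r_est} of \Cref{algo:estimate}, by the $N$ reward-parameter estimates $\hat\theta_1,\dots,\hat\theta_N$, each constrained to the ball $\{\|\theta\|\le 2\sqrt d\}$ by \eqref{algo:thetahat}. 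Hence every $V\in\mathcal V$ is indexed by a tuple $(\omega, A, \hat\theta_1,\dots,\hat\theta_N)$ ranging over a product of a Euclidean ball, a bounded set of PSD matrices, and $N$ Euclidean balls.

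Next I would prove a Lipschitz estimate of $V$ in these parameters under $\mathrm{dist}(\cdot,\cdot)$. Since $t\mapsto \min\{t,3H\}$ and $g\mapsto \max_\upsilon g(\cdot,\upsilon)$ are both $1$-Lipschitz contractions, it suffices to bound, uniformly over $(x,\upsilon)$, the change in $\omega^\top\phi(x,\upsilon) + \hat R(x,\upsilon) + \sqrt{\phi(x,\upsilon)^\top A\,\phi(x,\upsilon)}$. Using $\|\phi\|\le 1$ one gets $|\langle\phi,\omega_1-\omega_2\rangle|\le\|\omega_1-\omega_2\|$ by Cauchy--Schwarz, and $|\sqrt{\phi^\top A_1\phi}-\sqrt{\phi^\top A_2\phi}|\le\sqrt{|\phi^\top(A_1-A_2)\phi|}\le\sqrt{\|A_1-A_2\|_{\mathrm F}}$ via $|\sqrt a-\sqrt b|\le\sqrt{|a-b|}$. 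The only substantive piece is the revenue sensitivity $|\hat R_{\hat\theta^{(1)}}(x,\upsilon)-\hat R_{\hat\theta^{(2)}}(x,\upsilon)|\le B_5\sqrt d\sum_{i=1}^N\|\hat\theta_i^{(1)}-\hat\theta_i^{(2)}\|$, which is exactly the chain of estimates already carried out in the proof of \Cref{lem:estimater} ($\hat\theta_{ih}\mapsto\hat\mu_{ih}\mapsto\hat\rho_{ih}\mapsto\hat R_h$): \Cref{assumptionf}, \Cref{assumptionfdiff}, and \Cref{logconcave} guarantee that the optimal reserve $\hat\rho_{ih}(\cdot,\cdot)=\mathrm{argmax}_y\, y(1-F(y-1-\hat\mu_{ih}))$ depends Lipschitz-continuously on $\hat\mu_{ih}$ (through $W_{ih}''$ being bounded below, as in \Cref{lem:reservegap} and \Cref{lem:differ0}), and that $\hat R_h$ is in turn Lipschitz in the $\hat\mu_{ih}$, with the constant $B_5$ absorbing $c_1, C_1, L, B_4$. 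Combining the three pieces yields $\mathrm{dist}(V_1,V_2)\le \|\omega_1-\omega_2\| + \sqrt{\|A_1-A_2\|_{\mathrm F}} + B_5\sqrt d\sum_i\|\hat\theta_i^{(1)}-\hat\theta_i^{(2)}\|$.

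Finally I would build the cover as a product of $\epsilon$-nets for the individual parameters, with sub-scales chosen (up to absolute constants) so that each of the three terms above is at most $\epsilon/3$: an $(\epsilon/8)$-net of $\{\|\omega\|\le L\}\subset\R^d$, of size $\le(1+8L/\epsilon)^d$; an $(\epsilon/8)^2$-net in Frobenius norm of $\{A\succeq 0:\|A\|_{\mathrm F}\le \sqrt d B^2/\lambda\}\subset\R^{d\times d}$, of size $\le(1+32\sqrt d B^2/(\lambda\epsilon^2))^{d^2}$; and, for each $i\in[N]$, an $\epsilon/(8NB_5\sqrt d)$-net of $\{\|\theta\|\le 2\sqrt d\}\subset\R^d$, of size $\le(1+8NB_5\sqrt d/\epsilon)^{d}$, hence $(1+8NB_5\sqrt d/\epsilon)^{dN}$ jointly. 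By the Lipschitz estimate, the product of any nearby tuples from these nets approximates $V$ within $\epsilon$ in $\mathrm{dist}$; multiplying the three cardinalities and taking logarithms gives $\log\mathcal N_\epsilon\le d\log(1+8L/\epsilon)+d^2\log(1+32\sqrt d B^2/(\lambda\epsilon^2))+dN\log(1+8NB_5\sqrt d/\epsilon)$. I expect the single nontrivial step to be the $\hat R$ sensitivity bound: it is the only place where the economics-flavored regularity assumptions and the Myerson-lemma structure are invoked, and it is what generates the constant $B_5$ and the $dN$ term; the remaining two estimates and the net counting are routine.
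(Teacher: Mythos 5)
Your proposal takes essentially the same route as the paper's proof: reparametrize the bonus via $A=\beta^{2}\Lambda^{-1}$, decompose $\mathrm{dist}(V_{1},V_{2})$ into the contributions of $\omega$, $A$, and $\hat R$ (the paper cites Lemma~D.6 of \citet{jin2020provably} for the first two pieces rather than re-deriving them), reduce the $\hat R$ term to Lipschitz continuity in the $\hat\theta_{i}$'s, and count a product net. The one substantive piece, the $\hat R$ sensitivity bound, is the same in both: $\hat R$ is Lipschitz in each $\hat\mu_{i}$ with constant $B_{5}$ because $f$ is bounded and $\mu_{i}$ lives in a compact interval, and $\hat\mu_{i}=\langle\phi,\hat\theta_{i}\rangle$ converts that into Lipschitz dependence on the parameters.

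One small bookkeeping flaw: your stated Lipschitz bound carries an extra $\sqrt d$,
\[
|\hat R_{1}(x,\upsilon)-\hat R_{2}(x,\upsilon)|\le B_{5}\sqrt d\sum_{i}\|\hat\theta^{(1)}_{i}-\hat\theta^{(2)}_{i}\|,
\]
but no such factor appears: since $\|\phi\|\le 1$, the correct chain is $|\hat R_{1}-\hat R_{2}|\le\sum_{i}B_{5}|\langle\phi,\hat\theta^{(1)}_{i}-\hat\theta^{(2)}_{i}\rangle|\le\sum_{i}B_{5}\|\hat\theta^{(1)}_{i}-\hat\theta^{(2)}_{i}\|$ with constant exactly $B_{5}$. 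With the extra $\sqrt d$ and your chosen net resolution $\epsilon/(8NB_{5}\sqrt d)$ for each $\hat\theta_{i}$ (a ball of radius $2\sqrt d$), the per-bidder cover size would come out to $(1+32NB_{5}d/\epsilon)^{d}$, not the stated $(1+8NB_{5}\sqrt d/\epsilon)^{d}$. The paper instead uses Lipschitz constant $B_{5}$ with resolution $\epsilon/(2NB_{5})$, which yields $(1+8NB_{5}\sqrt d/\epsilon)^{d}$ as claimed. Dropping the spurious $\sqrt d$ from your Lipschitz estimate and adjusting the net scale accordingly brings your count into agreement with the lemma.
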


\begin{proof}
Due to Lemma D.6. in \citet{jin2020provably}, it holds that
\[
{\rm dist}(V_1,V_2)\le \|\omega_1-\omega_2\|+\sqrt{\|A_1-A_2\|_F}+\sup_{x,\upsilon}|\hat R_1(x,\upsilon)-\hat R_2(x,\upsilon)|,
\]where $A=\beta^2 \Lambda^{-1}$.
Let $C_\omega$ be an $\frac{\epsilon}{4}$-cover of $\{\omega\in\mathbb{R}^d\given \|\omega\|\le L\}$, and then it holds $|C_\omega|\le (1+\frac{8L}{\epsilon})^d$. Similarly, for $\frac{\epsilon^2}{16}$-cover for $\{A\}$, we have $|C_A|\le [1+\frac{32B^2\sqrt{d}}{\lambda \epsilon^2}]^{d^2}$.

Now, in order to bound the covering number corresponding to $\hat R(x,\upsilon)$, we show that it links to $\{\hat \theta_i\}_{i=1}^N$ first. As $\hat R(\cdot,\cdot)$ is function of $\{\hat \mu_i\}_{i=1}^N$ and $F(\cdot)$ is differentiable with $|f|\le C_1$, it holds that $\frac{\partial \hat R}{\partial \mu_i}\le B_5$ for any $i$, where $B_5$ is a constant. $B_5$ is bounded since $\mu_i\in[0,1]$ and the interval $[0,1]$ is compact. Therefore, since $\hat \mu=\langle \phi,\hat \theta\rangle$, it holds that
\begin{align*}
\sup_{x,\upsilon}|\hat R_1(x,\upsilon)-\hat R_2(x,\upsilon)|&\le \sup_{\phi:\|\phi\|\le 1} \sum_{i=1}^N B_5 |(\hat \theta_{1i}-\hat \theta_{2i})^T\phi|\\
&\le \sum_{i=1}^N B_5 \|\hat \theta_{1i}-\hat \theta_{2i}\|.
\end{align*}
Therefore, it holds that combining $\frac{\epsilon}{2NB_5}$-cover for $\hat\theta_i$,
\[
|C_{\hat R}|\le (1+\frac{8NB_5\sqrt{d}}{\epsilon})^{dN}.
\] Then, it finishes the proof.
\end{proof}

Now, with lemmas prepared, we have the following lemma.
\begin{lemma}\label{lem:VandPV}
For any $\delta$, with probability at least $1-\delta$, there exists constants $B_6$ and $B_7$ independent of $K$ and $H$ so that 
\begin{align*}
\forall{(k,h)}\in [K]\times[H]:\  \|\sum_{\tau=1}^k \phi_h^\tau [\hat V_{h+1}^k (x_{h+1}^\tau)-\mathbb{P}\hat V_{h+1}^k(x^\tau_h,\upsilon_h^\tau)]\|^2_{{(\Lambda_h^{\mathtt{buffer.e} (\tilde{k})})}^{-1}}\le& B_6 H^3\log^2 K\\
&+B_7 H^2 \log C_7.    
\end{align*}
\end{lemma}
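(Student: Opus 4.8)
The plan is to run the standard LSVI-UCB uniform-concentration argument (as in \citet{jin2020provably}), with the extra complication coming from buffer periods already absorbed into \Cref{filtration}. The obstacle is the usual one: $\hat V_{h+1}^k$ is measurable with respect to the data $\{(x_h^\tau,\upsilon_h^\tau,x_{h+1}^\tau)\}_{\tau\le k}$, so a martingale bound cannot be applied to it directly; instead one union-bounds over an $\epsilon$-net of the function class to which $\hat V_{h+1}^k$ belongs, then transfers the bound back by a Lipschitz argument.

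First I would check that $\hat V_{h+1}^k(\cdot)=\max_\upsilon \hat Q^{\pi_{\tilde k+1}}_{h+1}(\cdot,\upsilon)$ lies in the class $\mathcal{V}$ of \Cref{lem:covering}. By line~\ref{algoline:ucb} of \Cref{algo:estimate} it has exactly the required form $\min\{\max_\upsilon \omega_h^T\phi(\cdot,\upsilon)+\hat R(\cdot,\upsilon)+\beta\|\phi(\cdot,\upsilon)\|_{\Lambda_h^{-1}},3H\}$ with bonus coefficient $\beta:=C_7+C_6H\log^2K$, with $\lambda=1$ so that $\lambda_{\min}(\Lambda_h)\ge 1$, and, by \Cref{lem:boundomege2}, with $\|\omega_h\|\le 3H\sqrt{dk}\le 3H\sqrt{dK}=:L$; these bounds are uniform over $h$ and over every $k$ that can arise as a buffer endpoint. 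Plugging $L$, $B=\beta$, and $\lambda=1$ into \Cref{lem:covering} gives
\[
\log\mathcal{N}_\epsilon\le d\log\Big(1+\tfrac{8L}{\epsilon}\Big)+d^2\log\Big(1+\tfrac{32\sqrt{d}\,\beta^2}{\epsilon^2}\Big)+dN\log\Big(1+\tfrac{8NB_5\sqrt{d}}{\epsilon}\Big).
\]

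Next I would invoke \Cref{filtration} with the process $\{\phi_h^\tau\}$ and this function class: with probability at least $1-\delta$, simultaneously for every $h\in[H]$ and every buffer endpoint $k$,
\[
\Big\|\sum_{\tau=1}^k \phi_h^\tau\{\hat V_{h+1}^k(x_{h+1}^\tau)-\mathbb{P}\hat V_{h+1}^k(x_h^\tau,\upsilon_h^\tau)\}\Big\|^2_{(\Lambda_h^{\mathtt{buffer.e}(\tilde k)})^{-1}}\le \frac{54C_2H^3\log^2K}{\log(1/\gamma)}+32k^2\epsilon^2+144H^2\Big[\tfrac{d}{2}\log(k+1)+\log\tfrac{\mathcal{N}_\epsilon}{\delta}\Big].
\]
Taking $\epsilon=1/K$ makes the discretization term $32k^2\epsilon^2\le 32$ a constant and turns $\log\mathcal{N}_{1/K}$ into a quantity of order $\mathrm{poly}(d,N)\,(\log K+\log H+\log\beta)$; since $\beta\le C_7+C_6H\log^2K$ and $C_7=B_8H^{3/2}\log K$, one has $\log\beta=O(\log C_7+\log\log K)=O(\log C_7)$. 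Collecting terms, the $H^3\log^2K/\log(1/\gamma)$ summand dominates the $H^2\log K$, $H^2\log H$, and $H^2\log(1/\delta)$ contributions, while the residual $H^2\log\beta$ piece is $O(H^2\log C_7)$; absorbing the $d$, $N$, $B_5$, and $1/\log(1/\gamma)$ factors and the fixed $\delta$ into the constants yields $B_6H^3\log^2K+B_7H^2\log C_7$ for absolute constants $B_6,B_7$. For $k$ not a buffer endpoint the estimate is unchanged from the most recent update, so the same bound applies.

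The main obstacle is bookkeeping rather than any new idea: one must make the covering uniform over all $h$ and all admissible buffer endpoints $k$ — in particular keeping the parameter ranges $\|\omega_h\|\le L$ and $\beta\le B$ independent of $k$ — and correctly carry the dependence of $\hat V_{h+1}^k$ on $\hat R$ (hence on $\{\hat\theta_{ih}\}_{i\in[N]}$) into the last term of \Cref{lem:covering}; the buffer-period correction itself is already packaged inside \Cref{filtration}.
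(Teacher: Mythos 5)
Your proposal is correct and follows essentially the same route as the paper: combine \Cref{lem:boundomege2} (to bound $\|\omega_h\|$), \Cref{lem:covering} (to bound the log-covering number of the value-function class, including the $\hat R$-dependent piece), and \Cref{filtration} (the buffer-aware uniform self-normalized bound), then collect terms into $B_6H^3\log^2K+B_7H^2\log C_7$. The only cosmetic differences from the paper's proof are that you fix $L=3H\sqrt{dK}$ rather than the $k$-dependent $L=3H\sqrt{dk/\lambda}$ and take $\epsilon=1/K$ rather than $\epsilon=dH/k$; both choices make the discretization term $O(1)$ versus the paper's $O(d^2H^2)$ and yield the same final order, so this is a matter of bookkeeping rather than substance, and your remark that values of $k$ between buffer endpoints inherit the bound from the most recent update is exactly the point the paper relies on implicitly.
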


\begin{proof}
Combining \Cref{lem:boundomege2}, \Cref{filtration} and \Cref{lem:covering}, we set $L=3H\sqrt{\frac{dk}{\lambda}}$. With \Cref{algo:estimate}, we have $B=C_7+C_6H\log ^2 K$. Then we have
\begin{align*}
  &\|\sum_{\tau=1}^k \phi_h^\tau [\hat V_{h+1}^k (x_{h+1}^\tau)-\mathbb{P}\hat V_{h+1}^k(x^\tau_h,\upsilon_h^\tau)]\|^2_{({\Lambda_h^{\mathtt{buffer.e} (\tilde{k})})}^{-1}}\\
  \le&  \frac{54C_2H^3\log ^2K}{\lambda\log\frac{1}{\gamma}}+72dH^2\log\frac{k+\lambda}{\lambda}+144H^2d\log(1+\frac{24H}{\epsilon}\sqrt{\frac{dk}{\lambda}})+144H^2\log \frac{KH}{\delta}\\
  &+144H^2d^2 \log[1+\frac{32\sqrt{d}(C_7+C_6H\log^2 K)^2}{\lambda \epsilon^2}]+144H^2dN\log(1+\frac{8NB_5\sqrt{d}}{\epsilon})+\frac{32k^2\epsilon^2}{\lambda}.
\end{align*}
Therefore, by setting $\lambda=1$ and $\epsilon=\frac{dH}{k}$, then we have the right side of the inequality is $\cO (H^3\log ^2 K+H^2\log C_7)$, and it finishes our proof.
\end{proof}

Now, let's show the determination of $C_7$. 
\begin{lemma}\label{lem:omegaandQ}
There exist a constant $B_8$ so that $C_7=B_8H^\frac{3}{2} \log K$, and for any fixed policy $\pi$, on \textrm{Good Event} $\mathscr{E}$, i.e., all inequalities hold, we have for all $(x,\upsilon,h,k)\in \mathcal{S}\times\Upsilon\times[H]\times[K]$ that:
\[
\langle\phi(\cdot,\cdot),\omega_h^k\rangle+\hat R_h^k(\cdot,\cdot) -  Q^\pi_h(\cdot,\cdot)=\mathbb{P}_h(\hat V_{h+1}^k-V^\pi_{h+1})(\cdot,\cdot)+\Delta_h^k(\cdot,\cdot),
\] where $\Delta_h^k(\cdot,\cdot)\le (C_7+C_6H\log ^2K)\|\phi(\cdot,\cdot)\|_{(\Lambda_h^{\mathtt{buffer.e} (\tilde{k})})^{-1}}$.
\end{lemma}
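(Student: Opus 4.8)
The plan is to run the standard LSVI-UCB error decomposition of \citet{jin2020provably}, adapted to the one feature special to our setting: the regression coefficient $\omega_h^k$ in \Cref{algo:estimate} targets $\PP_h V_{h+1}$ rather than the Bellman target, so the nonlinear revenue enters only through the estimation error $\hat R_h^k - R_h^k$, which \Cref{lem:estimater} already controls. First I would use the Bellman equation for the fixed policy $\pi$ and the linear-MDP structure of \Cref{assumption:linearmdp}: since $Q_h^\pi(\cdot,\cdot) = R_h^k(\cdot,\cdot) + (\PP_h V_{h+1}^\pi)(\cdot,\cdot)$ and $\PP_h(x'\given x,\upsilon) = \langle\phi(x,\upsilon),\mathcal{M}_h(x')\rangle$, there is a vector $\omega_h^\pi = \int V_{h+1}^\pi(x')\,d\mathcal{M}_h(x')$ with $(\PP_h V_{h+1}^\pi)(\cdot,\cdot) = \langle\phi(\cdot,\cdot),\omega_h^\pi\rangle$ and, by \Cref{lem:boundomega1}, $\|\omega_h^\pi\|\le 3H\sqrt d$; here $R_h^k$ is the true per-step revenue bounded in \Cref{lem:estimater}. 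This already gives
\[
\langle\phi(\cdot,\cdot),\omega_h^k\rangle + \hat R_h^k(\cdot,\cdot) - Q_h^\pi(\cdot,\cdot) = \langle\phi(\cdot,\cdot),\omega_h^k - \omega_h^\pi\rangle + \bigl(\hat R_h^k(\cdot,\cdot) - R_h^k(\cdot,\cdot)\bigr).
\]

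Next I would expand $\omega_h^k - \omega_h^\pi$. Abbreviating $\Lambda_h := \Lambda_h^{\mathtt{buffer.e}(\tilde k)} = \lambda I + \sum_\tau \phi_h^\tau(\phi_h^\tau)^T$, I substitute $\omega_h^k = \Lambda_h^{-1}\sum_\tau \phi_h^\tau \hat V_{h+1}^k(x_{h+1}^\tau)$ from \Cref{algo:estimate}, use $\omega_h^\pi = \Lambda_h^{-1}\bigl(\lambda\omega_h^\pi + \sum_\tau \phi_h^\tau(\phi_h^\tau)^T\omega_h^\pi\bigr)$ with $(\phi_h^\tau)^T\omega_h^\pi = (\PP_h V_{h+1}^\pi)(x_h^\tau,\upsilon_h^\tau)$, and add and subtract $(\PP_h\hat V_{h+1}^k)(x_h^\tau,\upsilon_h^\tau)$ inside the sum. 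Collapsing $\Lambda_h^{-1}\sum_\tau\phi_h^\tau(\phi_h^\tau)^T\nu_h^k = \nu_h^k - \lambda\Lambda_h^{-1}\nu_h^k$ with $\nu_h^k := \int\bigl(\hat V_{h+1}^k - V_{h+1}^\pi\bigr)(x')\,d\mathcal{M}_h(x')$, this yields
\[
\omega_h^k - \omega_h^\pi = \underbrace{-\lambda\Lambda_h^{-1}\bigl(\omega_h^\pi + \nu_h^k\bigr)}_{(\mathrm{I})} + \underbrace{\Lambda_h^{-1}\textstyle\sum_\tau \phi_h^\tau\bigl[\hat V_{h+1}^k(x_{h+1}^\tau) - (\PP_h\hat V_{h+1}^k)(x_h^\tau,\upsilon_h^\tau)\bigr]}_{(\mathrm{II})} + \nu_h^k.
\]
Since $\langle\phi(\cdot,\cdot),\nu_h^k\rangle = \bigl(\PP_h(\hat V_{h+1}^k - V_{h+1}^\pi)\bigr)(\cdot,\cdot)$, which is exactly the $\mathbb{P}_h(\hat V_{h+1}^k - V_{h+1}^\pi)(\cdot,\cdot)$ term of the statement, I set $\Delta_h^k(\cdot,\cdot) := \langle\phi(\cdot,\cdot),(\mathrm{I})\rangle + \langle\phi(\cdot,\cdot),(\mathrm{II})\rangle + \bigl(\hat R_h^k(\cdot,\cdot) - R_h^k(\cdot,\cdot)\bigr)$, so that the claimed identity holds by construction.

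It then remains to bound $\Delta_h^k$ on the Good Event $\mathscr{E}$, term by term, all in the $\|\cdot\|_{\Lambda_h^{-1}}$ geometry. For $(\mathrm{I})$, Cauchy–Schwarz gives $|\langle\phi,(\mathrm{I})\rangle|\le\sqrt\lambda\,\|\omega_h^\pi+\nu_h^k\|\,\|\phi\|_{\Lambda_h^{-1}}$ (using $\lambda_{\min}(\Lambda_h)\ge\lambda$), and $\|\omega_h^\pi\|\le 3H\sqrt d$ together with $\|\nu_h^k\|\le 3H\sqrt d$ (both value functions lie in $[0,3H]$ and $\|\mathcal{M}_h(\mathcal{S})\|\le\sqrt d$) makes this $O(H\sqrt{d\lambda})\,\|\phi\|_{\Lambda_h^{-1}}$. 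For $(\mathrm{II})$, Cauchy–Schwarz gives $|\langle\phi,(\mathrm{II})\rangle|\le\|\phi\|_{\Lambda_h^{-1}}\,\bigl\|\sum_\tau\phi_h^\tau[\hat V_{h+1}^k(x_{h+1}^\tau) - \PP_h\hat V_{h+1}^k(x_h^\tau,\upsilon_h^\tau)]\bigr\|_{\Lambda_h^{-1}}$, and \Cref{lem:VandPV} bounds the last factor by $\sqrt{B_6 H^3\log^2 K + B_7 H^2\log C_7}$. Finally $|\hat R_h^k - R_h^k|\le C_6 H\log^2 K\,\|\phi\|_{\Lambda_h^{-1}}$ by \Cref{lem:estimater}. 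Summing, $\Delta_h^k(\cdot,\cdot)\le\bigl(O(H\sqrt{d\lambda}) + \sqrt{B_6 H^3\log^2 K + B_7 H^2\log C_7} + C_6 H\log^2 K\bigr)\,\|\phi(\cdot,\cdot)\|_{\Lambda_h^{-1}}$.

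The last step — and the only genuinely delicate one — is the self-consistent choice of $C_7$: I would set $C_7 = B_8 H^{3/2}\log K$ and observe that $\log C_7$ enters the bound on $(\mathrm{II})$ only logarithmically, so $\sqrt{B_6 H^3\log^2 K + B_7 H^2\log(B_8 H^{3/2}\log K)} + O(H\sqrt{d\lambda}) = O\bigl(H^{3/2}\log K\bigr)$ with an implicit constant that does not grow with $B_8$; hence one may pick $B_8$ (depending only on $d,\lambda,B_6,B_7$) large enough that the $(\mathrm{I})$ and $(\mathrm{II})$ contributions together are at most $B_8 H^{3/2}\log K = C_7$, leaving $\Delta_h^k(\cdot,\cdot)\le(C_7 + C_6 H\log^2 K)\,\|\phi(\cdot,\cdot)\|_{(\Lambda_h^{\mathtt{buffer.e}(\tilde k)})^{-1}}$ exactly as claimed. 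I expect the main obstacle to be precisely untangling this mild circularity between $C_7$ and \Cref{lem:VandPV} (whose own statement already contains $\log C_7$); by contrast the add-and-subtract bookkeeping in the earlier steps is routine, made so by the fact that $\omega_h^k$ regresses onto $\PP_h V_{h+1}$ rather than the Bellman target, which is exactly what lets the nonlinear revenue split off cleanly into the $\hat R_h^k - R_h^k$ term.
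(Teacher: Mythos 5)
Your proof is correct and takes essentially the same route as the paper: you regroup the paper's three terms $\delta_1,\delta_2,\delta_3$ as $(\mathrm{I})+(\mathrm{II})+\nu_h^k$ (your $(\mathrm{I})$ absorbing the $-\lambda\Lambda_h^{-1}\nu_h^k$ remainder that the paper carries separately inside $\delta_3$), but the decomposition, the appeal to \Cref{lem:boundomega1}, \Cref{lem:VandPV}, and \Cref{lem:estimater}, and the self-consistent resolution of the $\log C_7$ dependence in the choice of $B_8$ all match the paper's argument.
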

\begin{proof}
Due to Bellman equation, we know that for any $(x,\upsilon,h)\in\mathcal{S}\times\Upsilon\times[H]$, it holds
\[
Q_h^\pi(\cdot,\cdot)=R_h(\cdot,\cdot)+\langle \phi(\cdot,\cdot),\omega_h^\pi\rangle=(R_h+\mathbb{P}_h V_{h+1}^\pi)(\cdot,\cdot).
\]
Therefore, it gives
\[
\langle\phi(\cdot,\cdot),\omega_h^k\rangle+\hat R_h^k(\cdot,\cdot) -  Q^\pi_h(\cdot,\cdot)=\langle \phi(\cdot,\cdot),\omega_h^k-\omega_h^\pi\rangle +(\hat R_h^k-R_h)(\cdot,\cdot).
\]

Then, since $\omega_h^k=\omega_h^{\mathtt{buffer.e} (\tilde{k})}$, it holds that
\begin{align*}
    \omega_h^k-\omega_h^\pi&=(\Lambda_h^{\mathtt{buffer.e} (\tilde{k})})^{-1}\sum_{\tau=1}^{\mathtt{buffer.e} (\tilde{k})}\phi_h^\tau \hat V_{h+1}^k(x_{h+1}^\tau)-\omega_h^\pi\\
    &=(\Lambda_h^{\mathtt{buffer.e} (\tilde{k})})^{-1}\{-\lambda \omega_h^\pi+\sum_{\tau=1}^{\mathtt{buffer.e} (\tilde{k})}\phi_h^\tau[\hat V_{h+1}^k(x_{h+1}^\tau)-\mathbb{P}_h V_{h+1}^\pi(x_h^\tau,\upsilon_h^\tau)] \}\\
    &= \delta_1+\delta_2+\delta_3,
\end{align*} where
\[
\delta_1=-\lambda (\Lambda_h^{\mathtt{buffer.e} (\tilde{k})})^{-1} w_h^\pi,
\]
\[
\delta_2=(\Lambda_h^{\mathtt{buffer.e} (\tilde{k})})^{-1} \sum_{\tau=1}^{\mathtt{buffer.e} (\tilde{k})}\phi_h^\tau[\hat V_{h+1}^k(x_{h+1}^\tau)-\mathbb{P}_h \hat V_{h+1}^k(x_h^\tau,\upsilon_h^\tau)],
\]
\[
\delta_3=(\Lambda_h^{\mathtt{buffer.e} (\tilde{k})})^{-1} \sum_{\tau=1}^{\mathtt{buffer.e} (\tilde{k})}\phi_h^\tau \mathbb{P}_h(\hat V_{h+1}^k-V_{h+1}^\pi)(x_h^\tau,\upsilon_h^\tau).
\]

Then, we begin to bound items corresponding to $\delta_1$, $\delta_2$, and $\delta_3$ individually.

Firstly, it holds
\begin{align*}
|\langle \phi(\cdot,\cdot),\delta_1\rangle|\le& \sqrt{\lambda}\|w_h^\pi\|\|\phi(\cdot,\cdot)\|_{(\Lambda_h^{\mathtt{buffer.e} (\tilde{k})})^{-1}} \\
\le & 3H\sqrt{d\lambda}\|\phi(\cdot,\cdot)\|_{(\Lambda_h^{\mathtt{buffer.e} (\tilde{k})})^{-1}}.
\end{align*}
The first inequality holds due to Cauchy inequality and $\Lambda_{\mathtt{buffer.e} (\tilde{k})}\succeq \lambda I$. The second inequality holds due to \Cref{lem:boundomega1}.

Secondly, it holds that
\[
|\langle \phi(\cdot,\cdot),\delta_ 2\rangle|\le \sqrt{B_6 H^3\log^2 K+B_7 H^2 \log C_7}\|\phi(\cdot,\cdot)\|_{(\Lambda_h^{\mathtt{buffer.e} (\tilde{k})})^{-1}}.
\]
It holds because of \Cref{lem:VandPV}.

Lastly, we have
\begin{align*}
    \langle \phi(\cdot,\cdot),\delta_3\rangle&=\langle \phi(\cdot,\cdot),(\Lambda_h^{\mathtt{buffer.e} (\tilde{k})})^{-1} \sum_{\tau=1}^{\mathtt{buffer.e} (\tilde{k})}\phi_h^\tau \mathbb{P}_h(\hat V_{h+1}^k-V_{h+1}^\pi)(x_h^\tau,\upsilon_h^\tau)\rangle\\
    &=\langle \phi(\cdot,\cdot),(\Lambda_h^{\mathtt{buffer.e} (\tilde{k})})^{-1} \sum_1^{\mathtt{buffer.e} (\tilde{k})} \phi_h^\tau (\phi_h^\tau)^T\int (\hat V_{h+1}^k-V_{h+1}^\pi)(x')d\mathcal{M}_h(x')\rangle\\
    &=\langle \phi(\cdot,\cdot),\int (\hat V_{h+1}^k-V_{h+1}^\pi)(x')d\mathcal{M}_h(x')\rangle-\lambda \langle \phi(\cdot,\cdot),\int (\hat V_{h+1}^k-V_{h+1}^\pi)d\mathcal{M}_h\rangle\\
    &= \mathbb{P}_h(\hat V^k_{h+1}-V_{h+1}^\pi)(\cdot,\cdot)-\lambda \langle \phi(\cdot,\cdot), (\Lambda_h^{\mathtt{buffer.e} (\tilde{k})})^{-1}\int(\hat V_{h+1}^k-V_{h+1}^\pi)(x')d\mathcal{M}_h(x')\rangle\\
    &\le \mathbb{P}_h(\hat V^k_{h+1}-V_{h+1}^\pi)(\cdot,\cdot)+3H\sqrt{d\lambda}\|\phi(\cdot,\cdot)\|_{(\Lambda_h^{\mathtt{buffer.e} (\tilde{k})})^{-1}}.
\end{align*}
The second and fourth equations hold due to the definition of the operator $\mathbb{P}_h$. The third equation holds due to simple algebraic arrangement. The inequality holds due to Cauchy inequality, $V(\cdot)\le 3H$ and $\Lambda_{\mathtt{buffer.e} (\tilde{k})}\succeq\lambda I$.

With the bounds in hand, we have $\Delta_k^h(\cdot,\cdot)\le(3H\sqrt{d\lambda}+ \sqrt{B_6 H^3\log^2 K+B_7 H^2 \log C_7}+3H\sqrt{d\lambda}+C_6 H\log ^2 K)\|\phi(\cdot,\cdot)\|_{(\Lambda_h^{\mathtt{buffer.e} (\tilde{k})})^{-1}} $. Then, it is obviously that there exists a constant $B_8$, so that $B_8 H^{\frac{3}{2}} \log K\ge 3H\sqrt{d\lambda}+ \sqrt{B_6 H^3\log^2 K+B_7 H^2 \log C_7}+3H\sqrt{d\lambda} $ and it finishes the proof.
\end{proof}

Now, we are ready to show the reason why we chose such a bonus. We have the following lemma.
\begin{lemma}\label{lem:ucb}
Under the setting of \Cref{thm:knownf}, on the \textrm{Good Event} $\mathscr{E}$, it holds that for any $(x,\upsilon,h,k)\in \mathcal{S}\times\Upsilon\times[H]\times[K]$,
\[
\hat Q_h^k(x,\upsilon)\le Q^{\pi^*}_h(x,\upsilon).
\]
\end{lemma}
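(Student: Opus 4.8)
The plan is to prove the bound by downward induction on $h$, from $h=H+1$ down to $h=1$, working throughout on the \textrm{Good Event} $\mathscr{E}$. The base case $h=H+1$ is immediate, since $\hat Q_{H+1}^k\equiv 0\equiv Q_{H+1}^{\pi^*}$. For the inductive step, suppose $\hat Q_{h+1}^k(x,\upsilon)\le Q_{h+1}^{\pi^*}(x,\upsilon)$ for all $(x,\upsilon)\in\cS\times\Upsilon$. Taking the maximum over item choices and using the reserve-price convention — $\pi^*$ already plays the revenue-maximizing reserve at each $(x,\upsilon)$ and the transition kernel does not depend on the reserve, so $\max_\upsilon Q_{h+1}^{\pi^*}(x,\upsilon)=V_{h+1}^{\pi^*}(x)$ — gives the pointwise monotonicity $\hat V_{h+1}^k(x)=\max_\upsilon\hat Q_{h+1}^k(x,\upsilon)\le V_{h+1}^{\pi^*}(x)$, hence $(\mathbb{P}_h\hat V_{h+1}^k)(x,\upsilon)\le(\mathbb{P}_h V_{h+1}^{\pi^*})(x,\upsilon)$ for every $(x,\upsilon)$.

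Next I would apply \Cref{lem:omegaandQ} with $\pi=\pi^*$, which on $\mathscr{E}$ yields the identity $\langle\phi(x,\upsilon),\omega_h^k\rangle+\hat R_h^k(x,\upsilon)=Q_h^{\pi^*}(x,\upsilon)+(\mathbb{P}_h(\hat V_{h+1}^k-V_{h+1}^{\pi^*}))(x,\upsilon)+\Delta_h^k(x,\upsilon)$, where the residual $\Delta_h^k$ aggregates the ridge-bias term $\delta_1$, the self-normalized noise term $\delta_2$ (controlled by \Cref{lem:VandPV}), and the revenue-estimation error $\hat R_h^k-R_h^k$ (controlled by \Cref{lem:estimater}, which in turn rests on \Cref{lem:diffmu} and \Cref{lem:glm}), and is bounded by the exploration bonus $\poly(\log K)=(C_7+C_6H\log^2K)\|\phi(x,\upsilon)\|_{(\Lambda_h^{\mathtt{buffer.e}(\tilde{k})})^{-1}}$ used in \Cref{algo:estimate}. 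Inserting this into $\hat Q_h^k(x,\upsilon)=\min\{\langle\phi(x,\upsilon),\omega_h^k\rangle+\hat R_h^k(x,\upsilon)+\poly(\log K),\,3H\}$, the pre-clip quantity equals $Q_h^{\pi^*}(x,\upsilon)$ plus the non-positive term $(\mathbb{P}_h(\hat V_{h+1}^k-V_{h+1}^{\pi^*}))(x,\upsilon)$ (by the inductive monotonicity) plus the net perturbation $\Delta_h^k+\poly(\log K)$; I would argue that the negative cushion from the transition term absorbs this net perturbation, so the pre-clip quantity stays at or below $Q_h^{\pi^*}(x,\upsilon)$, and since $Q_h^{\pi^*}(x,\upsilon)\le 3H$ the clip $\min\{\cdot,3H\}$ preserves the inequality, closing the induction.

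The step I expect to be the main obstacle is exactly this sign bookkeeping: because \Cref{algo:estimate} \emph{adds} the bonus $\poly(\log K)$, getting the clean inequality $\hat Q_h^k\le Q_h^{\pi^*}$ (as opposed to the weaker $\hat Q_h^k\le Q_h^{\pi^*}+O(\poly(\log K)\|\phi\|_{\Lambda^{-1}})$ that a crude decomposition delivers) requires the two-sided control $|\Delta_h^k|\le\poly(\log K)$ that must be extracted from the \emph{interior} of the proof of \Cref{lem:omegaandQ} (its bounds on $\delta_1$, $\delta_2$ and $\hat R_h^k-R_h^k$ are each two-sided), together with a careful accounting of which tail of each concentration statement — \Cref{lem:VandPV} and the generalized-linear-model estimate \Cref{lem:glm} in particular — is being invoked, so that the $\mathbb{P}_h(\hat V_{h+1}^k-V_{h+1}^{\pi^*})\le 0$ term genuinely dominates $\Delta_h^k+\poly(\log K)$. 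Everything else — the base case, the max-over-item-choice identity, and the propagation of the quantitative estimates on $\hat\theta_{ih}$, $\hat\mu_{ih}$, $\hat R_h^k$ and $\omega_h^k$ — is routine and is already packaged into the lemmas cited above.
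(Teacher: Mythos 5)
Your induction cannot be closed, and the obstacle is exactly the one you flag at the end: the cushion $\mathbb{P}_h(\hat V_{h+1}^k-V_{h+1}^{\pi^*})$ is merely nonpositive under your inductive hypothesis, with no quantitative lower bound on its magnitude, whereas the net perturbation $\Delta_h^k+\poly(\log K)$ is nonnegative (since $|\Delta_h^k|\le\poly(\log K)$ it lies in $[0,\,2\poly(\log K)]$) and is strictly positive wherever $\phi(x,\upsilon)\neq 0$. The failure is already visible at the first nontrivial step $h=H$: there $\hat V_{H+1}^k=V_{H+1}^{\pi^*}=0$, so the cushion is exactly zero and the pre-clip quantity equals $Q_H^{\pi^*}+\Delta_H^k+\poly(\log K)\ge Q_H^{\pi^*}$, which is the \emph{opposite} of what you need unless the clip at $3H$ happens to bind. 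No accounting of tails in \Cref{lem:VandPV} or \Cref{lem:glm} can fix this; the exploration bonus in \Cref{algo:estimate} is added precisely so that the estimate dominates $Q^{\pi^*}$, not the reverse.

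The root cause is that the inequality in the lemma statement is reversed (a typo in the paper). The paper's own proof establishes the optimism property $Q_h^{\pi^*}(x,\upsilon)\le\hat Q_h^k(x,\upsilon)$, and that is the direction actually used downstream: in \Cref{lem:recursive} and in the first inequality of \Cref{lem:lsvi}, the bound $V_1^{\pi^*}(x_1^k)-V_1^{\pi_{\tilde k}}(x_1^k)\le\delta_1^k$ requires $V_1^{\pi^*}\le\hat V_1^k$. With the inequality flipped, your argument goes through verbatim and coincides with the paper's: the inductive hypothesis gives $\mathbb{P}_h(\hat V_{h+1}^k-V_{h+1}^{\pi^*})\ge 0$, \Cref{lem:omegaandQ} (whose internal bounds are indeed two-sided) gives $\langle\phi,\omega_h^k\rangle+\hat R_h^k\ge Q_h^{\pi^*}-\poly(\log K)$, adding the bonus restores $\ge Q_h^{\pi^*}$, and the truncation at $3H$ is harmless because $Q_h^{\pi^*}\le 3H$. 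So your mechanics are right, but they prove the reversed statement; as written, the claim you set out to prove is false and your "absorption" step is the gap.
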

\begin{proof}
We will prove this lemma by induction.

First of all, for the last step $H$, since the value function is zero at $H+1$, we have
\[
|\hat R_H^k(\cdot,\cdot)+\langle \phi(\cdot,\cdot),\omega_H^k\rangle-Q^{\pi^*}_H(\cdot,\cdot)|\le  (C_7+C_6H\log ^2K)\|\phi(\cdot,\cdot)\|_{(\Lambda^{\mathtt{buffer.e} (\tilde{k})}_H)^{-1}}
\]due to \Cref{lem:omegaandQ}.
Therefore, we have
\[
Q_H^{\pi^*}(\cdot,\cdot)\le \min \{\hat R_H^k(\cdot,\cdot)+\langle \phi(\cdot,\cdot),\omega_H^k\rangle+(C_7+C_6H\log ^2K)\|\phi(\cdot,\cdot)\|_{(\Lambda^{\mathtt{buffer.e} (\tilde{k})}_H)^{-1}},3H\},
\]and we use $Q_H^k(\cdot,\cdot)$ to represent the right side.

Now, supposing the statement holds at step $h+1$, then for step $h$, with \Cref{lem:omegaandQ}, it holds that
\[
|[\hat R_h^k+\langle \phi,\omega_h^k\rangle-Q_h^{\pi^*}-\mathbb{P}_h(V_{h+1}^k-V_{h+1}^{\pi^*})](\cdot,\cdot)|\le (C_7+C_6H\log ^2K)\|\phi(\cdot,\cdot)\|_{(\Lambda^{\mathtt{buffer.e} (\tilde{k})}_h)^{-1}}.
\]
By the induction assumption that $\mathbb{P}_h(V_{h+1}^k-V_{h+1}^{\pi^*})(\cdot,\cdot)\ge 0$, it holds that
\[
Q_h^{\pi^*}(\cdot,\cdot)\le  \min \{\hat R_h^k(\cdot,\cdot)+\langle \phi(\cdot,\cdot),\omega_h^k\rangle+(C_7+C_6H\log ^2K)\|\phi(\cdot,\cdot)\|_{(\Lambda^{\mathtt{buffer.e} (\tilde{k})}_h)^{-1}},3H\}=Q_H^k(\cdot,\cdot),
\] which ends the proof.
\end{proof}

Then, we have the following lemma about a recursive formula from $\delta_h^k=V_h^k(x_h^k)-V_h^{\pi_{\tilde k}}(x_h^k)$.
\begin{lemma}\label{lem:recursive}
Let $\delta_h^k=V_h^k(x_h^k)-V_h^{\pi_{\tilde k}}(x_h^k)$ and $\xi_{h+1}^k=\E [\delta_{h+1}^k\given x_h^k,\upsilon_h^k]-\delta_{h+1}^k$. Then conditional on \textrm{Good Event} $\mathscr{E}$, it holds that for any $(k,h)\in[K]\times[H]$,
\[
\delta_h^k\le \delta_{h+1}^k+\xi_{h+1}^k+2(C_7+C_6H\log ^2K)\|\phi(\cdot,\cdot)\|_{(\Lambda^{\mathtt{buffer.e} (\tilde{k})}_h)^{-1}}.
\]
\end{lemma}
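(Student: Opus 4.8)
The plan is to prove the recursion by the standard one–step LSVI–UCB argument, using \Cref{lem:omegaandQ} instantiated at the fixed policy $\pi=\pi_{\tilde k}$ to absorb every source of estimation error into the elliptical bonus $(C_7+C_6H\log^2 K)\|\phi(\cdot,\cdot)\|_{(\Lambda_h^{\mathtt{buffer.e}(\tilde k)})^{-1}}$.

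First I would note that on the episodes contributing to $\delta_h^k$ the executed policy at step $h$ is exactly $\pi_{\tilde k,h}$, which by construction in \Cref{algo:estimate} is greedy with respect to $\hat Q_h^k$; hence $V_h^k(x_h^k)=\max_v \hat Q_h^k(x_h^k,v)=\hat Q_h^k(x_h^k,\upsilon_h^k)$. Dropping the truncation at $3H$ in line~\ref{algoline:ucb} of \Cref{algo:estimate} then gives
\[
V_h^k(x_h^k)\le \langle \phi(x_h^k,\upsilon_h^k),\omega_h^k\rangle+\hat R_h^k(x_h^k,\upsilon_h^k)+(C_7+C_6H\log^2 K)\|\phi(x_h^k,\upsilon_h^k)\|_{(\Lambda_h^{\mathtt{buffer.e}(\tilde k)})^{-1}}.
\]

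Next I would apply \Cref{lem:omegaandQ} with $\pi=\pi_{\tilde k}$ at the point $(x_h^k,\upsilon_h^k)$, which expresses $\langle \phi(x_h^k,\upsilon_h^k),\omega_h^k\rangle+\hat R_h^k(x_h^k,\upsilon_h^k)-Q_h^{\pi_{\tilde k}}(x_h^k,\upsilon_h^k)$ as $\mathbb{P}_h(\hat V_{h+1}^k-V_{h+1}^{\pi_{\tilde k}})(x_h^k,\upsilon_h^k)+\Delta_h^k$ with $\Delta_h^k\le (C_7+C_6H\log^2 K)\|\phi(x_h^k,\upsilon_h^k)\|_{(\Lambda_h^{\mathtt{buffer.e}(\tilde k)})^{-1}}$. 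Since the action actually played at step $h$ of episode $k$ is $\pi_{\tilde k,h}(x_h^k)$, we also have $V_h^{\pi_{\tilde k}}(x_h^k)=Q_h^{\pi_{\tilde k}}(x_h^k,\upsilon_h^k)$. Subtracting $V_h^{\pi_{\tilde k}}(x_h^k)$ from the previous display and merging the two bonus terms yields $\delta_h^k\le \mathbb{P}_h(\hat V_{h+1}^k-V_{h+1}^{\pi_{\tilde k}})(x_h^k,\upsilon_h^k)+2(C_7+C_6H\log^2 K)\|\phi(x_h^k,\upsilon_h^k)\|_{(\Lambda_h^{\mathtt{buffer.e}(\tilde k)})^{-1}}$. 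Finally, since $\hat V_{h+1}^k$ is exactly the optimistic estimate $V_{h+1}^k$ entering $\delta_{h+1}^k$, unfolding the transition operator gives $\mathbb{P}_h(\hat V_{h+1}^k-V_{h+1}^{\pi_{\tilde k}})(x_h^k,\upsilon_h^k)=\E[\delta_{h+1}^k\given x_h^k,\upsilon_h^k]=\delta_{h+1}^k+\xi_{h+1}^k$ by the definition of $\xi_{h+1}^k$, which is the claimed inequality.

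I do not expect a genuine obstacle, since the heavy lifting — controlling $\Delta_h^k$ — has already been done in \Cref{lem:omegaandQ}. The only points requiring care are the bookkeeping that $\omega_h^k$, $\hat R_h^k$ and $\hat Q_h^k$ denote the frozen estimates $\omega_h^{\mathtt{buffer.e}(\tilde k)}$, etc., on the episodes in question; that dropping the $\min\{\cdot,3H\}$ only weakens an upper bound on $V_h^k(x_h^k)$; and that greediness of $\pi_{\tilde k}$ is what turns $V_h^k(x_h^k)=\hat Q_h^k(x_h^k,\upsilon_h^k)$ into an equality.
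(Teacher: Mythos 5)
Your proof is correct and follows essentially the same route as the paper's: invoke \Cref{lem:omegaandQ} at $\pi=\pi_{\tilde k}$ to bound $\hat Q_h^k-Q_h^{\pi_{\tilde k}}$ by $\mathbb{P}_h(\hat V_{h+1}^k-V_{h+1}^{\pi_{\tilde k}})$ plus twice the bonus (one copy of the bonus comes from $\Delta_h^k$, the other from the explicit bonus added in line~\ref{algoline:ucb} of \Cref{algo:estimate}, with the $\min\{\cdot,3H\}$ truncation only helping the upper bound), then use greediness of $\pi_{\tilde k}$ to identify $V_h^k(x_h^k)=\hat Q_h^k(x_h^k,\upsilon_h^k)$ and $V_h^{\pi_{\tilde k}}(x_h^k)=Q_h^{\pi_{\tilde k}}(x_h^k,\upsilon_h^k)$, and rewrite the transition term as $\delta_{h+1}^k+\xi_{h+1}^k$. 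You are in fact more explicit than the paper about where each of the two bonus copies comes from and about the need to fix indices at the frozen estimate $\mathtt{buffer.e}(\tilde k)$, but the argument is the same.
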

\begin{proof}
Due to \Cref{lem:omegaandQ}, it holds that
\[
\hat Q_h^k(\cdot,\cdot)-Q_h^{\pi_{\tilde k}}(\cdot,\cdot)\le \mathbb{P}_h(V_{h+1}^k-V_{h+1}^{\pi_{\tilde k}})(\cdot,\cdot)+2(C_7+C_6H\log ^2K)\|\phi(\cdot,\cdot)\|_{(\Lambda^{\mathtt{buffer.e} (\tilde{k})}_h)^{-1}}.
\]
Then, since $\pi_{\tilde{k}}=\pi_{\mathtt{buffer.e} (\tilde{k})}$ is the greedy policy before mixture at episode $k$ by \Cref{algo:estimate}, we have
\[
\delta_h^k=Q_h^k(x_h^k,\upsilon_h^k)-Q_h^{\pi_{\tilde k}}(x_h^k,\upsilon_h^k).
\]
Then, it ends the proof.\qed

With these preparations, we begin to prove \Cref{lem:lsvi}.

Using notations in \Cref{lem:recursive}, it holds that conditional on \textrm{Good Event} $\mathscr{E}$
\begin{align*}
    \Delta_1&=\sum_{\tau=1}^K[V_1^{\pi^*}(x_1^k)-V_1^{\pi_{\tilde k}}(x_1^k)]\ind(k\not \in \mathtt{buffer} )\\
    &\le \sum_{\tau=1}^K\delta_1^k\ind(k\not \in \mathtt{buffer} )\\
    &\le \sum_{\tau=1}^K\sum_{h=1}^H \xi_h^k+2(C_7+C_6H\log ^2K)\|\phi(\cdot,\cdot)\|_{(\Lambda^{\mathtt{buffer.e} (\tilde{k})}_h)^{-1}}\ind(k\not \in \mathtt{buffer} )\\
    &\le \sum_{\tau=1}^K\sum_{h=1}^H \xi_h^k+2\sqrt{2}(C_7+C_6H\log ^2K)\|\phi(\cdot,\cdot)\|_{(\Lambda^{k}_h)^{-1}}\ind(k\not \in \mathtt{buffer} )\\
    &\le \sum_{\tau=1}^K\sum_{h=1}^H \xi_h^k+2\sqrt{2}(C_7+C_6H\log ^2K)\|\phi(\cdot,\cdot)\|_{(\Lambda^{k}_h)^{-1}}.
\end{align*}
The first inequality holds due to \Cref{lem:ucb}, while the second one holds due to \Cref{lem:recursive}. The third inequality holds due to the process of \Cref{algo:KnownF}, while the last one is trivial.

For the first term, since the
computation of $\hat V_h^k(\cdot)$ is independent of the new observation $x_h^k$ at episode $k$, we obtain that $\{\xi^k_h\}$ is a martingale difference sequence satisfying $|\xi_h^k|\le 3H$ for all $(k,h)$. Therefore, with Azuma-Hoeffding inequality \citep{hoeffding1994probability}, it holds
\[
\Pr(\sum_{\tau=1}^K\sum_{h=1}^H \xi_h^k\ge \epsilon)\ge \exp(-\frac{\epsilon^2}{18KH^3}).
\]
Then, with probability at least $1-\delta$, we have
\[
\sum_{\tau=1}^K\sum_{h=1}^H \xi_h^k\le \sqrt{18KH^3\log\frac{1}{\delta}}.
\]

For the second term, thanks to \citet{abbasi2011improved}, it holds that
\[
\sum_{\tau=1}^K (\phi_h^\tau)^T (\Lambda_h^\tau)^{-1}\phi_h^\tau\le 2d\log\frac{\lambda+\tau}{\lambda}.
\]
Then, with the Cauchy inequality, we have
\[
\sum_{\tau=1}^K \sum_{h=1}^H \|\phi_h^\tau\|_{{(\Lambda_h^\tau)}^{-1}}\le \sum_{h=1}^H \sqrt{K} [\sum_{\tau=1}^K (\phi_h^\tau)^T (\Lambda_h^\tau)^{-1}\phi_h^k]^{\frac{1}{2}}\le H\sqrt{2dK\log\frac{\lambda+K}{\lambda}}.
\]

Finally, combining the two terms, we have
\begin{align*}
\Delta_1 &\le \sqrt{18KH^3\log\frac{1}{\delta}}+2\sqrt{2}(C_7+C_6H\log ^2K)H\sqrt{2dK\log\frac{\lambda+K}{\lambda}}\\
&\le C_8 H^{2.5}\sqrt{K\log^5 K},
\end{align*}
and it finishes our proof.
\end{proof}

\section{Auxiliary Lemmas and Proofs in \texorpdfstring{\Cref{sec:proofunknownf}}{sec:proofunknownf}}
In this section, we provide proof of lemmas in \Cref{sec:proofunknownf} in detail. We organize this section in the order of lemmas.
\subsection{Proof of \texorpdfstring{\Cref{lem:unknownbuffer}}{lem:unknownbuffer}}
In \Cref{algo:unKnownF}, there are two types of $\{\mathtt{buffer.e} (\tilde{k})\}$. The number of $\{\mathtt{buffer.e} (\tilde{k})\}$ satisfying $2(\Lambda_h^k)^{-1}\not \succeq (\Lambda_h^{\mathtt{buffer.e} (\tilde{k})})^{-1}$ is smaller than $\frac{3C_2H\log^2 K}{\log \frac{1}{\gamma}}$ due to \Cref{lem:buffer}. The number of $\{\mathtt{buffer.e} (\tilde{k})\}$ when $\log_2 k$ is an integer is smaller than $[\log_2 K]+1$. Combining the two parts finishes the proof.\qed 

\subsection{Proof of \texorpdfstring{\Cref{lem:unknownbound_lie}}{}}
Since we have a buffer period, the upper bound of the size of overbid or underbid is the same as the situation when the market noise distribution is known. Then, recall that the proof of \Cref{lem:bound_lie} is conditional on reserve price and others' bid, it doesn't matter whether we consider $q$ or $\tilde q$ because the only difference between them is the way generating reserve has become $\pi_0$. Conditional on reserve, the proof of \Cref{lem:bound_lie} still holds regarding $\tilde q$.

With the same methodology in \Cref{lem:bound_lie}, we have the lemma due to \Cref{lem:unknownbuffer}.\qed 

\subsection{Proof of \texorpdfstring{\Cref{lem:unknownglm}}{lem:unknownglm}}
Similar to the proof of \Cref{lem:glm}, we replace $1-F(m_\tau-1-\langle\phi_\tau,\theta\rangle)$ by $\frac{1}{3N}(1+\langle\phi_\tau,\theta\rangle)$ to form \Cref{algo:Fandthetahat}. We just need to prove that $\E [\tilde q-\frac{1}{3N}(1+\langle \phi_\tau,\theta\rangle)]=0$ if bidders bid truthfully. If $\tilde q_{ih}^\tau=1$, it satisfies that we choose $i$ using $\pi_0$ with reserve price $\rho_i$ and $1+\langle \phi_\tau,\theta\rangle +z\ge \rho_i$. With some conditional probability calculation, the probability is $\frac{1}{3N}(1+\langle \phi_\tau,\theta\rangle)$.

Therefore, by simply setting $c_1=C_1=\frac{1}{3N}$ in \Cref{lem:glm}, we prove \Cref{lem:unknownglm}. \qed  

We now discuss the intuition behind the estimator of $\theta$ in \Cref{algo:Fandthetahat}. It is a constrained form of ridge regression. First, let’s clarify the rationale behind the correct choice of the loss function. In comparison to \Cref{algo:thetahat}, where the noise distribution $F$ is known, our current scenario lacks this knowledge. Consequently, we cannot construct a random variable $q-1+F$, prompting the need to identify a new zero-mean random variable for estimating $\theta$. This is why simulation is employed to generate $\tilde{q}$ as distinct from
$q$. We have $\E [3N\tilde q-(1+\langle \phi_\tau,\theta\rangle)]=0$. Thus, $\tilde q$
facilitates the estimation of $\theta$ using $L_2$-norm constrained ridge via \Cref{algo:Fandthetahat}, while $q$ persists in exploiting to prevent excessive regret. Second, instead of resorting to ordinary ridge regression, we constrain the norm of the parameters. This choice is driven by the confined space, as outlined in \Cref{assumption:linearmdp}.

\subsection{Proof of \texorpdfstring{\Cref{lem:boundF}}{lem:boundF}}
\re{In order to estimate $F(\cdot)$ precisely. We need to bound two-fold errors. First, we need to bound errors coming from randomness. Second, we need to bound errors from untruthful bidding.}

First of all, if every buyer bids truthfully, then with \Cref{lem:DKW}, it holds with probability at least $1-\frac{\delta}{K}$ for each update that
\[
|F(\cdot)-\hat F(\cdot)|\le \sqrt{\frac{1}{2}\log \frac{2K}{\delta}}(NH\mathtt{buffer.e} (\tilde{k}))^{-\frac{1}{2}}.
\]

However, bidders may overbid or underbid for less than $\frac{C_3H}{K}$ due to \Cref{overbid} and the estimation of $\mu$ has an error. Therefore, the c.d.f that $\hat F(\cdot)$ estimates is not the same as $F(\cdot)$. Since $|f(\cdot)|\le C_1$, the difference because of overbid or underbid is smaller than $\frac{C_1C_3 H}{K}$. Then, due to \Cref{lem:unknowndiffmu}, the difference because of error in $\mu$ is smaller than 
\[
C_1C_{11}\sqrt{H}\log K \frac{\sum_{h=1}^H\sum_{\tau=1}^{\mathtt{buffer.e} (\tilde{k})}\|\phi(x_h^\tau,\upsilon_h^\tau)\|_{({\Lambda_h^{\mathtt{buffer.e} (\tilde{k})}})^{-1}}}{H\mathtt{buffer.e} (\tilde{k})}\le C_1C_{11}\sqrt{H}\log K \frac{\sqrt{d}}{\sqrt{\mathtt{buffer.e} (\tilde{k})}}.
\]The inequality holds since we have the mean value inequality and \Cref{lem:sumphi}.


Since the number of episodes in buffer for each buyer $i$ is no larger than $C_{9}H\log^2K$, it holds that
\begin{align*}
|F(\cdot)-\hat F(\cdot)|\le &\sqrt{\frac{1}{2}\log \frac{2K}{\delta}} {(NH\mathtt{buffer.e} (\tilde{k}))}^{-\frac{1}{2}}+\frac{C_1C_3 H}{K}+\frac{C_{9} H \log^2 K}{\mathtt{buffer.e} (\tilde{k})}\\
&+C_1C_{11}\sqrt{H}\log K \frac{\sqrt{d}}{\sqrt{\mathtt{buffer.e} (\tilde{k})}}.    
\end{align*}

Because the number of episodes we run \Cref{algo:Fandthetahat} is smaller than $K$, then the total probability of happening \textrm{Bad Event} $\mathscr{E}^c$ is smaller than $\delta$. Then, it ends the proof. \qed

\subsection{Proof of \texorpdfstring{\Cref{unknownboundR}}{unknownboundR}}
In order to prove \Cref{unknownboundR}, we introduce the following lemma first.
\begin{lemma}\label{lem:boundf}
Under assumption \Cref{assumptionfdiff}, when \Cref{lem:boundF} holds, using histogram method to estimate p.d.f $f(\cdot)$ leads to the following bound that for any $x$
\[
|f(x)-\hat f(x)|\le D_1 \frac{\sqrt{H}\log K}{{\mathtt{buffer.e} (\tilde{k})}^\frac{1}{4}},
\]where $D_1$ is a constant.
\end{lemma}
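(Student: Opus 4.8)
The plan is to realize $\hat f$ as a histogram-type (finite-difference) estimator with a bin width $\Delta>0$ that we are free to tune as a function of $\mathtt{buffer.e}(\tilde k)$, following the smoothing-kernel convention in the footnote after \Cref{lem:DKW}. Concretely, for any $x$ let $[b^-,b^+]$ be the length-$\Delta$ bin containing $x$ and write $\hat f(x)=\bigl(\hat F(b^+)-\hat F(b^-)\bigr)/\Delta$. Then I would split the pointwise error into a stochastic part and a smoothing (bias) part:
\[
|f(x)-\hat f(x)|\;\le\;\Bigl|\hat f(x)-\tfrac{F(b^+)-F(b^-)}{\Delta}\Bigr|\;+\;\Bigl|\tfrac{F(b^+)-F(b^-)}{\Delta}-f(x)\Bigr|.
\]

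For the first (stochastic) term I would invoke the uniform CDF estimate of \Cref{lem:boundF}: a triangle inequality bounds it by $\tfrac{2}{\Delta}\sup_y|\hat F(y)-F(y)|\le \tfrac{2C_{12}}{\Delta}\cdot\tfrac{H\log^2 K}{\sqrt{\mathtt{buffer.e}(\tilde k)}}$. For the second (bias) term, the fundamental theorem of calculus shows the ratio equals the average of $f$ over $[b^-,b^+]$, hence equals $f(\xi)$ for some $\xi$ with $|\xi-x|\le\Delta$; \Cref{assumptionfdiff} ($|f'|\le L$) then bounds it by $L\Delta$. Adding the two pieces gives $|f(x)-\hat f(x)|\le \tfrac{2C_{12}H\log^2 K}{\Delta\sqrt{\mathtt{buffer.e}(\tilde k)}}+L\Delta$.

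Finally I would balance the two terms by choosing $\Delta=\sqrt{2C_{12}H\log^2 K/(L\sqrt{\mathtt{buffer.e}(\tilde k)})}$, which yields $|f(x)-\hat f(x)|\le 2\sqrt{2C_{12}L}\cdot \sqrt H\,\log K/\mathtt{buffer.e}(\tilde k)^{1/4}$, establishing the claim with $D_1=2\sqrt{2C_{12}L}$, a constant since $C_{12}$ and $L$ are. Because the CDF bound is uniform in $x$, so is the resulting density bound, giving the ``for any $x$'' form of the statement.

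The substantive content here is light; the only point needing care is confirming that the object plugged into the stochastic term is exactly $\sup_y|\hat F(y)-F(y)|$ from \Cref{lem:boundF} — which already absorbs the bidders' bounded untruthfulness, the estimation error in $\hat\theta$, and the DKW fluctuation — and that the smoothing-kernel convention only affects constants, not the $\mathtt{buffer.e}(\tilde k)^{-1/4}$ rate. Near the endpoints of the support $[-1,1]$ one simply truncates the bin, which can only decrease both terms, so no separate boundary argument is needed. I expect this bookkeeping, rather than any estimate, to be the main (and minor) obstacle.
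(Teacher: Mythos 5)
Your proposal is correct and follows essentially the same approach as the paper: both realize $\hat f$ as a finite-difference histogram estimator with tunable bin width, decompose the pointwise error into a stochastic term controlled by the uniform CDF bound $C_{12}H\log^2 K/\sqrt{\mathtt{buffer.e}(\tilde k)}$ of \Cref{lem:boundF} and a smoothing bias term controlled by $|f'|\le L$ from \Cref{assumptionfdiff}, and then balance the bin width (the paper uses $1/M$ with $M=\mathtt{buffer.e}(\tilde k)^{1/4}/(\sqrt H\log K)$, which matches your $\Delta$ up to a constant) to arrive at the $\sqrt H\log K/\mathtt{buffer.e}(\tilde k)^{1/4}$ rate.
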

\subsubsection{Proof of \Cref{lem:boundf}}
With \Cref{lem:boundF} in hand, we divide $[-1,1]$ into $2M$ parts denoted by $\{-M,\ldots,0,\ldots,M-1\}$ uniformly, then we have 
\[
\hat f(x)=M[\hat F(\frac{i+1}{M})-\hat F(\frac{i}{M}) ],
\]where $x\in(\frac{i}{M},\frac{i+1}{M}]$.

Under assumption \Cref{assumptionfdiff}, it holds that 
\[
|f(x)-M[ F(\frac{i+1}{M})- F(\frac{i}{M}) ]|\le \frac{L}{M}.
\]

Therefore, it holds that 
\[
|f(x)-\hat f(x)|\le 2M C_{12}\frac{H\log ^2K}{\sqrt{\mathtt{buffer.e} (\tilde{k})}}+\frac{L}{M}.
\]

By setting $M=\frac{{\mathtt{buffer.e} (\tilde{k})}^\frac{1}{4}}{\sqrt{H}\log K}$, we finish our proof.\qed 

Therefore, unlike \Cref{lem:reservegap}, we have the following lemma.

\begin{lemma}\label{lem:unknownreservegap}
Under \Cref{logconcave}, it holds that
\[
|\alpha_{ih}^{k*}-\alpha_{ih}^k|\le |\langle \phi_h^k, \theta_{ih}-\hat \theta_{ih}\rangle|+\frac{D_2H\log^2 K}{{\mathtt{buffer.e} (\tilde{k})}^\frac{1}{4}},
\]where $D_2$ is a constant.
\end{lemma}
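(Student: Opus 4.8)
The plan is to compare the two reserves through the first-order (Myerson) optimality condition and to interpose a third reserve that isolates the effect of replacing $F$ by $\hat F$. Recall that $\alpha_{ih}^{k*}=\argmax_y y\bigl(1-F(y-1-\mu_{ih}^k)\bigr)$ and $\alpha_{ih}^k=\argmax_y y\bigl(1-\hat F(y-1-\hat\mu_{ih}^k)\bigr)$ are the monopoly reserves for bidder $i$'s value distribution under the true and the estimated primitives, respectively. I would introduce the intermediate reserve $\alpha_{ih}^\sharp:=\argmax_y y\bigl(1-F(y-1-\hat\mu_{ih}^k)\bigr)$, optimal under the \emph{true} noise law but the \emph{estimated} mean, and split $|\alpha_{ih}^{k*}-\alpha_{ih}^k|\le |\alpha_{ih}^{k*}-\alpha_{ih}^\sharp|+|\alpha_{ih}^\sharp-\alpha_{ih}^k|$ into a ``mean-perturbation'' term and an ``$F$-perturbation'' term.

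The mean-perturbation term is already handled by the known-$F$ analysis: $\alpha_{ih}^{k*}$ and $\alpha_{ih}^\sharp$ are optimal reserves for the \emph{same} noise law $F$ at the means $\mu_{ih}^k$ and $\hat\mu_{ih}^k=\langle\phi_h^k,\hat\theta_{ih}\rangle$, and under \Cref{logconcave} the virtual-value map is monotone, so $\mu\mapsto\argmax_y y(1-F(y-1-\mu))$ is $1$-Lipschitz. This is exactly \Cref{lem:reservegap}, which gives $|\alpha_{ih}^{k*}-\alpha_{ih}^\sharp|\le |\langle\phi_h^k,\theta_{ih}-\hat\theta_{ih}\rangle|$.

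For the $F$-perturbation term the idea is to use the first-order conditions implicitly. Writing $h(u):=\tfrac{1-F(u)}{f(u)}$ and $\hat h(u):=\tfrac{1-\hat F(u)}{\hat f(u)}$ for the inverse hazard rates and setting $u^\sharp:=\alpha_{ih}^\sharp-1-\hat\mu_{ih}^k$, $\hat u:=\alpha_{ih}^k-1-\hat\mu_{ih}^k$ (one checks the optimal reserves lie in the range where $f$ and $\hat f$ are bounded below by $c_1$), the optimality conditions become $u^\sharp-h(u^\sharp)=-(1+\hat\mu_{ih}^k)=\hat u-\hat h(\hat u)$. Put $\Psi(u):=u-h(u)$; since $1-F$ is log-concave, $h$ is nonincreasing, hence $\Psi$ is strictly increasing with $\Psi'\ge 1$. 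Consequently $|\alpha_{ih}^\sharp-\alpha_{ih}^k|=|u^\sharp-\hat u|\le |\Psi(u^\sharp)-\Psi(\hat u)|=|h(\hat u)-\hat h(\hat u)|$, and a short computation (common denominator, add and subtract $(1-F(\hat u))f(\hat u)$) will bound $|h(\hat u)-\hat h(\hat u)|\le c_1^{-2}\,\|f-\hat f\|_\infty+c_1^{-1}\,\|F-\hat F\|_\infty$ using $f,\hat f\ge c_1$ and $F,\hat F\le 1$.

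Finally I would plug in \Cref{lem:boundF}, giving $\|F-\hat F\|_\infty\lesssim H\log^2 K/\sqrt{\mathtt{buffer.e}(\tilde k)}$, and \Cref{lem:boundf}, giving $\|f-\hat f\|_\infty\lesssim \sqrt H\log K/\mathtt{buffer.e}(\tilde k)^{1/4}$; the density-estimation error dominates, so after collecting constants $|\alpha_{ih}^\sharp-\alpha_{ih}^k|\le D_2 H\log^2 K/\mathtt{buffer.e}(\tilde k)^{1/4}$, which together with the mean-perturbation bound yields the claim. The main obstacle is precisely this $F$-perturbation step: turning ``$\hat F$ uniformly close to $F$'' into ``maximizers close''. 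I expect this cannot be done by a strong-concavity/curvature estimate — log-concavity gives only $\Psi'\ge 1$, not uniform strong concavity of the revenue curve — and so it is resolved by the monotone implicit map $\Psi$ instead; the exponent $1/4$ (rather than $1/2$) is forced by the histogram density estimate $\hat f$, which is the bottleneck.
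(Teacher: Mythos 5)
Your proposal is correct and matches the paper's proof essentially step for step: the paper also interposes a middle reserve (its $\tilde\alpha$ is your $\alpha_{ih}^\sharp$), reduces the mean-perturbation piece to \Cref{lem:reservegap}, and controls the $F$-perturbation piece by comparing the two Myerson first-order conditions via the monotone virtual-value map, whose derivative is at least $1$ under \Cref{logconcave}, before bounding the hazard-rate discrepancy by $c_1^{-1}\|F-\hat F\|_\infty + c_1^{-2}\|f-\hat f\|_\infty$ and invoking \Cref{lem:boundF} and \Cref{lem:boundf}. Your use of the shifted variables $u^\sharp$, $\hat u$ is, if anything, notationally cleaner than the paper's (which abbreviates by writing the virtual value directly at $\tilde\alpha$, $\hat\alpha$), and your observation that the $1/4$ exponent is forced by the histogram density estimator is the same bottleneck the paper encounters.
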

\subsubsection{Proof of \Cref{lem:unknownreservegap}}
\citet{myerson1981optimal} shows that the optimal reserve price satisfies
\[
\alpha=1+\mu(\cdot,\cdot)+\phi^{-1}(-1-\mu(\cdot,\cdot)),
\]where $\phi(x)=x-\frac{1-F(x)}{f(x)}$ is virtual valuation function. 

We use $\alpha^*$ to denote the optimal reserve price, while $\hat \alpha$ denotes the reserve price we use with $\hat F(\cdot)$ and $\hat f(\cdot)$. Also, we use $\tilde \alpha$ to denote reserve price corresponding to $\hat \mu$, $F(\cdot)$ and $f(\cdot)$.

\Cref{lem:reservegap} shows that $|\tilde \alpha-\alpha^*|\le |\langle \phi_h^k, \theta_{ih}-\hat \theta_{ih}\rangle|$.

To bound $|\tilde \alpha-\hat \alpha|$, we have 
\begin{align*}
    |\frac{1-F(\cdot)}{f(\cdot)}-\frac{1-\hat F(\cdot)}{\hat f(\cdot)}|&\le |\frac{1-F(\cdot)}{f(\cdot)}-\frac{1-\hat F(\cdot)}{f(\cdot)}|+|\frac{1-\hat F(\cdot)}{f(\cdot)}-\frac{1-\hat F(\cdot)}{\hat f(\cdot)}|\\
    &\le \frac{C_{12} H \log^2 K}{c_1\sqrt{\mathtt{buffer.e} (\tilde{k})}}+\frac{D_1\sqrt{H}\log K}{c_1^2 {\mathtt{buffer.e} (\tilde{k})}^\frac{1}{4}}.
\end{align*}
The first inequality holds due to the triangle inequality. The second inequality holds due to \Cref{assumptionf}, \Cref{lem:boundF} and \Cref{lem:boundf}.

Then, we will show that $\phi'(\cdot)\ge 1$.

It holds that $\phi(x)=x-\frac{1-F(x)}{f(x)}=x+\frac{1}{\log'(1-F(x))}$. Under \Cref{logconcave}, it holds that $1-F(\cdot)$ is log-concave implying $\log'(1-F(\cdot))$ is decreasing. Therefore, $\phi'(x)\ge 1$.

Therefore, we have $|\phi(\hat \alpha)-\hat \phi(\hat \alpha)|\le \frac{C_{12} H \log^2 K}{c_1\sqrt{\mathtt{buffer.e} (\tilde{k})}}+\frac{D_1\sqrt{H}\log K}{c_1^2 {\mathtt{buffer.e} (\tilde{k})}^\frac{1}{4}}$ and $\phi(\tilde \alpha)=\hat \phi(\hat \alpha)$. Then, it holds that 
\[
|\hat \alpha-\tilde \alpha|\le  \frac{C_{12} H \log^2 K}{c_1\sqrt{\mathtt{buffer.e} (\tilde{k})}}+\frac{D_1\sqrt{H}\log K}{c_1^2 {\mathtt{buffer.e} (\tilde{k})}^\frac{1}{4}},
\] because $\phi'(\cdot)\ge 1$.

Then, it ends our proof. \qed 

Now, we are ready to prove \Cref{unknownboundR}.
Using notations in \Cref{lem:estimater}, we use another factor $F$ to show that we use $F(\cdot)$ and $f(\cdot)$ in the function, while factor $\hat F$ to denote the use of $\hat F(\cdot)$ and $\hat f(\cdot)$. 

With the same methodology in \Cref{lem:estimater}, it holds that 
\begin{align*}
|R_h^k(\cdot,\cdot,F)-\hat R_h^k(\cdot,\cdot,F)| \le& [(1+6C_1)C_{11}\sqrt{H}\log K] N \|\phi(\cdot,\cdot)\|_{({\Lambda_h^{\mathtt{buffer.e} (\tilde{k})}})^{-1}}\\
&+\frac{NB_4}{2} [2(|\langle \phi_h^k, \theta_{ih}-\hat \theta_{ih}\rangle|)^2+2(\frac{D_2H\log^2 K}{{\mathtt{buffer.e} (\tilde{k})}^\frac{1}{4}})^2]\\
\le &D_3H\log ^2 K \|\phi(\cdot,\cdot)\|_{({\Lambda_h^{\mathtt{buffer.e} (\tilde{k})}})^{-1}}+D_4 H^2 \log^4 K \frac{1}{\sqrt{\mathtt{buffer.e} (\tilde{k})}},
\end{align*} where $D_3$ and $D_4$ are two constants.
The first inequality holds since $(a+b)^2\le 2(a^2+b^2)$. The second inequality holds by rearrangement.

Then, we will bound $|\hat R_h^k(\cdot,\cdot,F)-\hat R_h^k(\cdot,\cdot,\hat F)|$.

Since $\hat R_h^k(\cdot,\cdot,F)=\sum_{i=1}^N \E_F [\max \{\hat r_{ih}^{k-} ,\alpha_{ih}^k\}\ind (\hat r_{ih}^k\ge \max \{\hat r_{ih}^{k-} ,\alpha_{ih}^k\} )]$ and $\hat R_h^k(\cdot,\cdot,\hat F)=\sum_{i=1}^N \E_{\hat F} [\max \{\hat r_{ih}^{k-} ,\alpha_{ih}^k\}\ind (\hat r_{ih}^k\ge \max \{\hat r_{ih}^{k-} ,\alpha_{ih}^k\} )]$, we have that the difference of expected revenue about each buyer is smaller than $3NC_{12}\frac{H\log^ 2K}{\sqrt{\mathtt{buffer.e} (\tilde{k})}}$. It comes from the fact that the expected revenue depends on an $N$-fold integral with respect to the random variable $\{z_{ih}^k\}_{i=1}^N$. Since $\int x (dF-dF')=-\int (F-F')dx \le 3 \|F-F'\|_\infty\le 3C_{12}\frac{H\log^ 2K}{\sqrt{\mathtt{buffer.e} (\tilde{k})}}$, each integral has error less than $3C_{12}\frac{H\log^ 2K}{\sqrt{\mathtt{buffer.e} (\tilde{k})}}$. With $N$ buyers in total, it holds that 
\[
|\hat R_h^k(\cdot,\cdot,F)-\hat R_h^k(\cdot,\cdot,\hat F)|\le 3N^2C_{12}\frac{H\log^ 2K}{\sqrt{\mathtt{buffer.e} (\tilde{k})}}.
\]

Combining the two parts, it holds 
\begin{align*}
|R_h^k(\cdot,\cdot)-\hat R_h^k(\cdot,\cdot)|&=|R_h^k(\cdot,\cdot,F)-\hat R_h^k(\cdot,\cdot,\hat F)|\\
&\le C_{13}H \log^2 K \|\phi(\cdot,\cdot) \|_{(\Lambda_h^{\mathtt{buffer.e} (\tilde{k})})^{-1}}+\frac{C_{14} H^2\log^4K}{\sqrt{\mathtt{buffer.e} (\tilde{k})}},    
\end{align*}
which ends the proof. \re{Similarly, we can use \Cref{assumptionf} to achieve parallel results without \Cref{logconcave} as \Cref{redundant} says.}\qed 

\subsection{Proof of \texorpdfstring{\Cref{lem:unknownlsvi}}{lem:unknownlsvi}}
Now, we introduce some lemmas in parallel in order to prove \Cref{lem:unknownlsvi}.

\begin{lemma}\label{unknownfiltration}
For any given $h$ omitted for convenience, suppose $\{x_\tau\}_{\tau=1}^\infty$ being a stochastic process on state space $\mathcal{S}$ with corresponding filtration $\{\mathcal{F}_\tau\}_{\tau=0}^\infty$. Let $\{\phi_\tau\}_{\tau=1}^\infty$ be an $\mathbb{R}^d$-valued stochastic process when $\phi_\tau\in \mathcal{F}_{\tau-1}$. Since $\|\phi_\tau\|\le 1$ and $\Lambda_{\mathtt{buffer.e} (\tilde{k})}=\lambda I +\sum_{\tau=1}^{\mathtt{buffer.e} (\tilde{k})}\phi_\tau\phi_\tau^T$, then for any $\delta$, with probability at least $1-\delta$, for any $k$ corresponding to $\mathtt{buffer.e} (\tilde{k})$ and any $V\in \mathcal{V}$ so that $\sup_x|V(x)|\le 3H$, we have 
\begin{align*}
\|\sum_{\tau=1}^k \phi_\tau \{V(x_\tau)-\E [V(x_\tau)\given \mathcal{F}_{\tau-1}]\}\|_{\Lambda_{\mathtt{buffer.e} (\tilde{k})}^{-1}}^2\le& \frac{54C_9 H^3\log ^2 K}{\lambda \log \frac{1}{\gamma}}+\frac{32k^2\epsilon^2}{\lambda}\\
&+144H^2[\frac{d}{2}\log\frac{k+\lambda}{\lambda}+\log \frac{\mathcal{N}_\epsilon}{\delta}],    
\end{align*}
where $\mathcal{N}_\epsilon$ is the $\epsilon$-covering number of $\mathcal{V}$ with respect to the distance ${\rm dist}(V,V')=\sup_x(V(x)-V'(x))$.
\end{lemma}

\begin{lemma}\label{lem:unknowncovering}
Let $\mathcal{V}$ denote a class of functions mapping from $\mathcal{S}$ to $\mathbb{R}$ with the following parametric form
\[
V(\cdot)=\min \{\max_a \omega^T \phi(\cdot,\upsilon)+\hat R(\cdot,\upsilon)+\beta\|\phi(\cdot,\upsilon)\|_{\Lambda^{-1}}+A,3H\},
\]
where $\|\omega\|\le L$, $\beta\in[0,B]$, $A=\frac{C_{14}H^2\log ^4 K}{\sqrt{\mathtt{buffer.e} (\tilde{k})}}$ in episode $k$ and the minimum eigenvalue satisfies $\lambda_{\min}(\Lambda)\ge \lambda$. Suppose $\|\phi(\cdot,\cdot)\|\le 1$ and let $\mathcal{N}_\epsilon$ be the $\epsilon$-covering number of $\mathcal{V}$ with respect to the distance ${\rm dist}(V,V')=\sup_x|V(x)-V'(x)|$. Then, it holds 
\[
\log \mathcal{N}_\epsilon\le d\log(1+\frac{8L}{\epsilon})+d^2 \log (1+\frac{32\sqrt{d}B^2}{\lambda\epsilon^2})+dN\log(1+\frac{16NB_5\sqrt{d}}{\epsilon})+\log \mathcal{N}_{\frac{\epsilon}{12N^2}}(\mathcal{F}),
\] where $B_5$ is a constant.
\end{lemma}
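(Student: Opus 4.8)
The plan is to mirror the proof of \Cref{lem:covering}, tracking the two features that are new in the unknown-$F$ setting: the extra additive term $A$ appearing in the parametrization of $\mathcal{V}$, and the fact that the revenue estimate $\hat R$ now depends on the estimated noise distribution $\hat F\in\mathcal{F}$ in addition to the parameters $\{\hat\theta_i\}_{i=1}^N$. First I would note that $A=\frac{C_{14}H^2\log^4K}{\sqrt{\mathtt{buffer.e}(\tilde k)}}$ is a fixed scalar, identical for every element of $\mathcal{V}$ at a given episode, and that $t\mapsto\min\{t+A,3H\}$ is $1$-Lipschitz; hence $A$ does not enter the distance ${\rm dist}(V_1,V_2)$ at all, and the reduction via Lemma D.6 of \citet{jin2020provably} used in \Cref{lem:covering} applies verbatim to yield
\[
{\rm dist}(V_1,V_2)\le\|\omega_1-\omega_2\|+\sqrt{\|\mathcal{A}_1-\mathcal{A}_2\|_F}+\sup_{x,\upsilon}\bigl|\hat R_1(x,\upsilon)-\hat R_2(x,\upsilon)\bigr|,
\]
where $\mathcal{A}_i=\beta_i^2\Lambda_i^{-1}$.

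Next I would reuse the two covers from \Cref{lem:covering} unchanged: an $\frac{\epsilon}{4}$-cover of $\{\omega:\|\omega\|\le L\}$ of size at most $(1+8L/\epsilon)^d$, and an $\frac{\epsilon^2}{16}$-cover in Frobenius norm of $\{\mathcal{A}\}$ of size at most $(1+32\sqrt d B^2/(\lambda\epsilon^2))^{d^2}$. The genuinely new step is covering $\hat R$, which I would handle through a joint Lipschitz estimate in $(\{\hat\theta_i\}_{i=1}^N,\hat F)$. Holding $\hat F$ fixed, the pointwise derivative bound $|\partial\hat R/\partial\mu_i|\le B_5$ used in \Cref{lem:covering}, together with $\hat\mu_i=\langle\phi,\hat\theta_i\rangle$ and $\|\phi\|\le1$, gives a sensitivity of $\sum_{i}B_5\|\hat\theta_{1i}-\hat\theta_{2i}\|$; holding $\{\hat\theta_i\}$ fixed, the estimate $\bigl|\int x\,(dF_1-dF_2)\bigr|=\bigl|\int(F_1-F_2)\,dx\bigr|\le 3\,{\rm dist}(F_1,F_2)$ applied inside the $N$-fold expectation over $\{z_{ih}\}_{i=1}^N$, combined with \Cref{lem:unknownreservegap} and \Cref{logconcave} to bound the induced shift of the plug-in optimal reserves, gives a sensitivity of order $N^2\,{\rm dist}(\hat F_1,\hat F_2)$. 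Collecting the two contributions,
\[
\sup_{x,\upsilon}\bigl|\hat R_1(x,\upsilon)-\hat R_2(x,\upsilon)\bigr|\le\sum_{i=1}^N B_5\|\hat\theta_{1i}-\hat\theta_{2i}\|+3N^2\,{\rm dist}(\hat F_1,\hat F_2).
\]

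Finally I would assemble the pieces: covering each of the $N$ balls $\{\hat\theta_i:\|\hat\theta_i\|\le2\sqrt d\}$ to radius $\frac{\epsilon}{4NB_5}$ adds the factor $(1+16NB_5\sqrt d/\epsilon)^{dN}$ and bounds the first term above by $\epsilon/4$, while covering $\mathcal{F}$ in the metric ${\rm dist}(F,G)=\sup_x|F(x)-G(x)|$ to radius $\frac{\epsilon}{12N^2}$ adds the factor $\mathcal{N}_{\epsilon/(12N^2)}(\mathcal{F})$ and bounds the second term by $\epsilon/4$. Since $\|\omega_1-\omega_2\|\le\epsilon/4$ and $\sqrt{\|\mathcal{A}_1-\mathcal{A}_2\|_F}\le\epsilon/4$ on the respective covers, the product of the four covers is an $\epsilon$-cover of $\mathcal{V}$, and taking logarithms gives the stated bound. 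The hard part is the Lipschitz-in-$\hat F$ estimate for $\hat R$: one must argue that a sup-norm perturbation of $\hat F$ moves the plug-in optimal reserves, and hence the expected revenue, by a controlled amount — this is exactly where \Cref{lem:unknownreservegap}, the log-concavity assumption \Cref{logconcave}, and the convention that $\hat F$ and $\hat f$ inherit the regularity of $F$ and $f$ (so that $\hat f\ge c_1$ and $|\hat f'|\le L$) are used; everything else is bookkeeping parallel to the proof of \Cref{lem:covering}.
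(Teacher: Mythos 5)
Your proposal follows the same route as the paper: strip the constant $A$ (which the paper handles implicitly by conditioning on $A$), reuse the Lemma D.6 reduction and the $\omega$-, $\mathcal{A}$-, and $\hat\theta$-covers from \Cref{lem:covering} with radius $\epsilon/(4NB_5)$ for $\hat\theta_i$, and then split the $\hat R$ sensitivity via the triangle inequality $\hat R_1(\hat F_1)-\hat R_2(\hat F_1)+\hat R_2(\hat F_1)-\hat R_2(\hat F_2)$, controlling the second piece by a $3N^2\,{\rm dist}(\hat F_1,\hat F_2)$ Lipschitz bound inherited from the argument in \Cref{unknownboundR} and thus covering $\mathcal{F}$ at radius $\epsilon/(12N^2)$. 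This is exactly the paper's proof; your additional remarks about \Cref{lem:unknownreservegap} and the inherited regularity of $\hat f$ merely make explicit what the paper leaves implicit in its citation of \Cref{unknownboundR}.
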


\subsubsection{Proof of \texorpdfstring{\Cref{lem:unknowncovering}}{lem:unknowncovering}}
When $F(\cdot)$ is unknown, it holds that
\begin{align*}
    \sup_{x,\upsilon}|\hat R_1(x,\upsilon)-\hat R_2(x,\upsilon)|=&\sup_{x,\upsilon}|\hat R_1(x,\upsilon,\hat F_1)-\hat R_2(x,\upsilon,\hat F_2)|\\
    \le& \sup_{x,\upsilon}|\hat R_1(x,\upsilon,\hat F_1)-\hat R_2(x,\upsilon,\hat F_1)|\\
    &+\sup_{x,\upsilon}|\hat R_2(x,\upsilon,\hat F_1)-\hat R_2(x,\upsilon, \hat F_2)|.
\end{align*}

Then, we use $C_{\hat \theta}$ to denote the cardinality of the balls corresponding to $\hat \theta$ and $C_\mathcal{F}$ to denote the cardinality of the balls corresponding to $\mathcal{F}$.

Like the proof of \Cref{lem:covering}, we simply use $\frac{\epsilon}{4NB_5}$-ball to cover $\hat \theta_i$, and it holds that 
\[
|C_{\hat \theta}|\le (1+\frac{16NB_5\sqrt{d}}{\epsilon})^{dN}.
\]

Conditional on $\omega$, $A$ and $\{\hat \theta_i\}_{i=1}^N$, with \Cref{unknownboundR}, we know that in order to satisfy $\sup_{x,\upsilon}|\hat R(x,\upsilon,\hat F)-\hat R(x,\upsilon,  F)|\le\frac{\epsilon}{4}$, what we need is $\|\hat F-F\|_\infty\le \frac{\epsilon}{12N^2}$. Then, it ends the proof.\qed

Then, it holds the following lemma.
\begin{lemma}\label{lem:unknownVandPV}
For any $\delta$, with probability at least $1-\delta$, there exists constants $B_6$ and $B_7$ independent of $K$ and $H$ so that 
\begin{align*}
\forall{(k,h)}\in [K]\times[H]:\  \|\sum_{\tau=1}^k \phi_h^\tau [\hat V_{h+1}^k (x_{h+1}^\tau)-\mathbb{P}\hat V_{h+1}^k(x^\tau_h,\upsilon_h^\tau)]\|^2_{{(\Lambda_h^{\mathtt{buffer.e} (\tilde{k})})}^{-1}}\le& D_5 H^3\\
+&D_6 H^2 \log C_{15},
\end{align*}where $D_5\sim \tilde \cO (1)$ omitting $\log K$ and $D_6$ is a constant. 
\end{lemma}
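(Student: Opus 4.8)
The plan is to reprise the proof of \Cref{lem:VandPV} line for line, substituting each ingredient by its unknown-$F$ analogue. The first observation is that the transition-estimation block of \Cref{algo:estimateunknownF} --- the backward recursion defining $\omega_h$ and $\hat Q_h^{\pi_{\tilde k+1}}$ --- is structurally identical to that of \Cref{algo:estimate}: in both, $\omega_h = \Lambda_h^{-1}\sum_{\tau}\phi_h^\tau \max_\upsilon \hat Q_{h+1}(x_{h+1}^\tau,\upsilon)$ with $\hat Q_{h+1}$ clipped to $[0,3H]$. Hence the argument of \Cref{lem:boundomege2} carries over unchanged and yields $\|\omega_h^{\mathtt{buffer.e}(\tilde k)}\| = \|\omega_h^k\| \le 3H\sqrt{d\,\mathtt{buffer.e}(\tilde k)/\lambda} \le 3H\sqrt{dk/\lambda}$, so I may take $L = 3H\sqrt{dk/\lambda}$ in the covering bound below.

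Next I would locate $\hat V_{h+1}^k(\cdot) = \max_\upsilon \hat Q_{h+1}^{\pi_{\tilde k+1}}(\cdot,\upsilon)$ inside the function class described in \Cref{lem:unknowncovering}, with bonus level $\beta = \poly_1(\log K) = C_{15} + C_{13}H\log^2 K$ and \emph{deterministic} additive offset $A = \poly_2(\log K)/\sqrt{\mathtt{buffer.e}(\tilde k)} = C_{14}H^2\log^4K/\sqrt{\mathtt{buffer.e}(\tilde k)}$. The only genuinely new feature relative to the known-$F$ case is that $\hat R$ is now assembled from the histogram estimate $\hat F$ (and its smoothed density), which is precisely why the covering estimate in \Cref{lem:unknowncovering} carries the extra term $\log\cN_{\epsilon/(12N^2)}(\cF)$; the Lipschitz dependence of $\hat R$ on $(\{\hat\theta_i\}_i,\hat F)$ needed to produce that term is exactly what \Cref{unknownboundR} records. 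Plugging $L$ into \Cref{lem:unknowncovering} bounds $\log\cN_\epsilon$ by $d\log(1+8L/\epsilon) + d^2\log(1 + 32\sqrt d\,\beta^2/(\lambda\epsilon^2)) + dN\log(1+16NB_5\sqrt d/\epsilon) + \log\cN_{\epsilon/(12N^2)}(\cF)$.

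I would then feed this into \Cref{unknownfiltration} --- the uniform self-normalized martingale bound that already absorbs the buffer correction $54C_9H^3\log^2K/(\lambda\log(1/\gamma))$ (via \Cref{lem:unknownbuffer}) and the discretization slack $32k^2\epsilon^2/\lambda$ --- and union-bound over all $(k,h)\in[K]\times[H]$, which costs only an extra $\log(KH)$ inside the logarithm. Finally, setting $\lambda = 1$ and $\epsilon = dH/k$ as in the proof of \Cref{lem:VandPV}, every term is controlled the same way: the contributions $144H^2\big(d\log(1+8L/\epsilon) + dN\log(\cdots) + \log(KH)\big)$ and the buffer term are $\tilde\cO(H^3)$; the term $144H^2 d^2\log\!\big(1 + 32\sqrt d\,\beta^2/\epsilon^2\big)$ is $\tilde\cO(H^2\log C_{15})$ since $\log\beta^2 = \tilde\cO(\log C_{15})$; and $\epsilon/(12N^2)$ is $\Theta(1/K)$ up to the problem constants $d,H,N$, so monotonicity of covering numbers bounds the $\cF$-term by $\tilde\cO(\log\cN_{1/K}(\cF))$, a finite quantity by \Cref{assumption:coveringnumber}, which I absorb into $D_5$. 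Collecting everything gives $D_5H^3 + D_6H^2\log C_{15}$ with $D_5 = \tilde\cO(1)$ depending on $d,N,\log\cN_{1/K}(\cF)$ and $D_6$ an absolute constant.

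I expect the main work to be bookkeeping rather than a new idea: one must verify that $A$ is non-random and $\epsilon$-stable (so it adds nothing to the covering number), that $\hat R$ built from $\hat F$ stays uniformly in $[0,3]$ and Lipschitz with the constants claimed, and that the high-probability events here (the martingale bound), for $\hat F$ (the DKW estimate, \Cref{lem:boundF}), and for $\hat\theta$ (the GLM bound, \Cref{lem:unknownglm}) are intersected so that, after rescaling $\delta$, the stated probability $1-\delta$ is correct. None of these steps requires anything beyond \Cref{lem:unknowncovering}, \Cref{unknownfiltration}, and \Cref{unknownboundR}.
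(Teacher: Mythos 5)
Your proposal is correct and matches the paper's approach: the paper's proof of \Cref{lem:unknownVandPV} is a one-line pointer --- ``repeat \Cref{lem:VandPV}, swap the covering number for the one in \Cref{lem:unknowncovering}, set $\lambda=1$, $B=C_{15}+C_{13}H\log^2K$, $\epsilon = dH/k$, invoke \Cref{assumption:coveringnumber}'' --- and your write-up spells out exactly the ingredients that pointer implicitly invokes (\Cref{lem:boundomege2} for $L$, \Cref{unknownfiltration} for the self-normalized martingale plus buffer slack, \Cref{lem:unknowncovering} for the net, and the same parameter choices). Your additional bookkeeping remarks --- that the offset $A$ is deterministic per update point so it does not enlarge the net, and that the $\log\cN_{\epsilon/(12N^2)}(\cF)$ term becomes $\log\cN_{\Theta(1/K)}(\cF)$ and is carried as a constant in $D_5$ --- are consistent with how the paper later tracks $\log\cN_{1/K}(\cF)$ in \Cref{thm:unknownf}.
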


\begin{proof}
Similar to the proof of \Cref{lem:VandPV}, we just replace $\mathcal{N}_\epsilon$ by $d\log(1+\frac{8L}{\epsilon})+d^2 \log (1+\frac{32\sqrt{d}B^2}{\lambda\epsilon^2})+dN\log(1+\frac{16NB_5\sqrt{d}}{\epsilon})+\log \mathcal{N}_{\frac{\epsilon}{12N^2}}(\mathcal{F})$. Then, we set $\lambda=1$, $B=C_{15}+C_{13}H \log^2 K $ and $\epsilon=\frac{dH}{k}$. With \Cref{assumption:coveringnumber}, we finish our proof.
\end{proof}
Now, let's show the determination of $C_{15}$.
\begin{lemma}\label{lem:unknownomegaandQ}
There exist $D_7\sim \tilde \cO (1)$ so that $C_{15}=D_7 H^\frac{3}{2}$, and for any fixed policy $\pi$, on \textrm{Good Event} $\mathscr{E}$, i.e., all inequalities hold, we have for all $(x,\upsilon,h,k)\in \mathcal{S}\times\Upsilon\times[H]\times[K]$ that:
\[
\langle\phi(\cdot,\cdot),\omega_h^k\rangle+\hat R_h^k(\cdot,\cdot) -  Q^\pi_h(\cdot,\cdot)=\mathbb{P}_h(\hat V_{h+1}^k-V^\pi_{h+1})(\cdot,\cdot)+\Delta_h^k(\cdot,\cdot),
\] where $\Delta_h^k(\cdot,\cdot)\le (C_{15}+C_{13}H\log ^2K)\|\phi(\cdot,\cdot)\|_{(\Lambda_h^{\mathtt{buffer.e} (\tilde{k})})^{-1}}+C_{14}\frac{H^2\log ^4K}{\sqrt{\mathtt{buffer.e} (\tilde{k})}}$.
\end{lemma}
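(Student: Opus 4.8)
The plan is to follow the proof of Lemma~\ref{lem:omegaandQ} almost verbatim, substituting the unknown-$F$ counterparts of the auxiliary estimates at each step. Write $\Lambda$ for $\Lambda_h^{\mathtt{buffer.e}(\tilde k)}$. First I would invoke the Bellman equation $Q_h^\pi(\cdot,\cdot) = R_h(\cdot,\cdot) + (\mathbb{P}_h V_{h+1}^\pi)(\cdot,\cdot) = R_h(\cdot,\cdot) + \langle \phi(\cdot,\cdot), \omega_h^\pi \rangle$, so that
\[
\langle \phi(\cdot,\cdot), \omega_h^k \rangle + \hat R_h^k(\cdot,\cdot) - Q_h^\pi(\cdot,\cdot) = \langle \phi(\cdot,\cdot), \omega_h^k - \omega_h^\pi \rangle + (\hat R_h^k - R_h)(\cdot,\cdot).
\]
The reward discrepancy $(\hat R_h^k - R_h)(\cdot,\cdot)$ is controlled directly by Lemma~\ref{unknownboundR}, contributing $C_{13} H \log^2 K \, \|\phi(\cdot,\cdot)\|_{\Lambda^{-1}} + C_{14} H^2 \log^4 K / \sqrt{\mathtt{buffer.e}(\tilde k)}$; this already supplies the explicit $C_{13} H\log^2 K$ coefficient and the entire dimension-free summand in the claimed bound on $\Delta_h^k$, and it is the only place the standalone $C_{14}$-term enters the bound, its effect on the term $\delta_2$ below being folded into a covering number.

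Next I would decompose $\omega_h^k - \omega_h^\pi = \delta_1 + \delta_2 + \delta_3$ exactly as in Lemma~\ref{lem:omegaandQ}, namely, starting from $\omega_h^k = \Lambda^{-1} \sum_\tau \phi_h^\tau \hat V_{h+1}^k(x_{h+1}^\tau)$ and adding and subtracting $\lambda \omega_h^\pi$ and $\sum_\tau \phi_h^\tau (\mathbb{P}_h \hat V_{h+1}^k)(x_h^\tau,\upsilon_h^\tau)$, with $\delta_1 = -\lambda \Lambda^{-1} \omega_h^\pi$, $\delta_2 = \Lambda^{-1} \sum_\tau \phi_h^\tau [\hat V_{h+1}^k(x_{h+1}^\tau) - (\mathbb{P}_h \hat V_{h+1}^k)(x_h^\tau,\upsilon_h^\tau)]$, and $\delta_3 = \Lambda^{-1} \sum_\tau \phi_h^\tau (\mathbb{P}_h(\hat V_{h+1}^k - V_{h+1}^\pi))(x_h^\tau,\upsilon_h^\tau)$. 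Then $|\langle \phi, \delta_1 \rangle| \le \sqrt{\lambda}\,\|\omega_h^\pi\|\,\|\phi\|_{\Lambda^{-1}} \le 3H\sqrt{d\lambda}\,\|\phi\|_{\Lambda^{-1}}$ by Cauchy--Schwarz, $\Lambda \succeq \lambda I$, and Lemma~\ref{lem:boundomega1}; $|\langle \phi, \delta_2 \rangle| \le \|\phi\|_{\Lambda^{-1}}\sqrt{D_5 H^3 + D_6 H^2 \log C_{15}}$ by Cauchy--Schwarz and Lemma~\ref{lem:unknownVandPV}, whose covering-number input is exactly the function class of Lemma~\ref{lem:unknowncovering} (which already incorporates the additive uncertainty term); and using linearity of the MDP, $\langle \phi, \delta_3 \rangle = (\mathbb{P}_h(\hat V_{h+1}^k - V_{h+1}^\pi))(\cdot,\cdot) - \lambda \langle \phi, \Lambda^{-1} \int (\hat V_{h+1}^k - V_{h+1}^\pi)(x')\,d\mathcal{M}_h(x') \rangle$, the last term being at most $3H\sqrt{d\lambda}\,\|\phi\|_{\Lambda^{-1}}$ in absolute value since $|\hat V_{h+1}^k - V_{h+1}^\pi| \le 3H$ and $\|\mathcal{M}_h(\mathcal{S})\| \le \sqrt d$. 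Summing the three contributions recovers the term $(\mathbb{P}_h(\hat V_{h+1}^k - V_{h+1}^\pi))(\cdot,\cdot)$ in the statement plus an error term.

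Collecting everything gives $\Delta_h^k(\cdot,\cdot) \le (6H\sqrt{d\lambda} + \sqrt{D_5 H^3 + D_6 H^2 \log C_{15}} + C_{13} H \log^2 K)\,\|\phi\|_{\Lambda^{-1}} + C_{14} H^2 \log^4 K/\sqrt{\mathtt{buffer.e}(\tilde k)}$, so it remains only to see that the non-$C_{13}$ part of the coefficient, $6H\sqrt{d\lambda} + \sqrt{D_5 H^3 + D_6 H^2 \log C_{15}}$, is at most $D_7 H^{3/2}$ for some $D_7 \sim \tilde\cO(1)$. With $\lambda = 1$ the first term is $\cO(H) = \cO(H^{3/2})$ and the second is $\cO(H^{3/2}\sqrt{\log C_{15}})$, so the only delicate point --- and the step I expect to be the main obstacle --- is the mild self-reference, with $C_{15}$ appearing inside $\log C_{15}$. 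This is handled exactly as in Lemma~\ref{lem:omegaandQ}: since $\log C_{15} = \log(D_7 H^{3/2})$ grows only logarithmically in $H$ and $K$, one substitutes any crude a priori polynomial upper bound for $C_{15}$ into the logarithm and then verifies that the resulting inequality holds for a suitable absolute-constant-times-polylogarithmic choice of $D_7$, which closes the argument.
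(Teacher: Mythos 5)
Your proposal is correct and follows the paper's own route: the paper explicitly states that Lemma~\ref{lem:unknownomegaandQ} is proved ``the same as proof of~\Cref{lem:omegaandQ},'' substituting Lemma~\ref{lem:unknownVandPV} for Lemma~\ref{lem:VandPV} and Lemma~\ref{unknownboundR} for Lemma~\ref{lem:estimater}, and then solving $D_7 H^{3/2} \ge 6H\sqrt{d\lambda} + \sqrt{D_5 H^3 + D_6 H^2 \log C_{15}}$ for $D_7$, which is exactly your closing inequality including your observation about the benign self-reference through $\log C_{15}$.
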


\begin{proof}
The proof of \Cref{lem:unknownomegaandQ} is the same as proof of \Cref{lem:omegaandQ}. Let's show the determination of $D_7$ in parallel. With \Cref{lem:unknownVandPV} in hand, it holds that
\[
D_7 H^\frac{3}{2}\ge 3H\sqrt{d\lambda}+ \sqrt{D_5 H^3+D_6 H^2 \log C_{15}}+3H\sqrt{d\lambda} .
\]
Then, it is easy to see the existence of $D_7$ where $D_7\sim \tilde \cO (1)$.\qed
\end{proof}

Also, we have the following lemma about the recursive formula from $\delta_h^k=V_h^k(x_h^k)-V_h^{\pi_{\tilde k}}(x_h^k)$. It holds due to \Cref{lem:unknownomegaandQ} and \Cref{lem:ucb}. 
\begin{lemma}\label{lem:unknownrecursive}
Let $\delta_h^k=V_h^k(x_h^k)-V_h^{\pi_{\tilde k}}(x_h^k)$ and $\xi_{h+1}^k=\E [\delta_{h+1}^k\given x_h^k,\upsilon_h^k]-\delta_{h+1}^k$. Then conditional on \textrm{Good Event} $\mathscr{E}$, it holds that for any $(k,h)\in[K]\times[H]$,
\[
\delta_h^k\le \delta_{h+1}^k+\xi_{h+1}^k+2(C_{15}+C_{13}H \log^2 K )\|\phi(\cdot,\cdot) \|_{(\Lambda_h^{\mathtt{buffer.e} (\tilde{k})})^{-1} }+2C_{14}\frac{H^2\log^4 K}{\sqrt{\mathtt{buffer.e} (\tilde{k})}}.
\]
\end{lemma}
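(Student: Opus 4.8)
The plan is to reproduce the argument behind \Cref{lem:recursive}, replacing the known-$F$ one-step decomposition \Cref{lem:omegaandQ} by its unknown-$F$ analogue \Cref{lem:unknownomegaandQ}. Write
\[
\Delta_h^k(\cdot,\cdot):=(C_{15}+C_{13}H\log^2 K)\,\|\phi(\cdot,\cdot)\|_{(\Lambda_h^{\mathtt{buffer.e}(\tilde k)})^{-1}}+\frac{C_{14}H^2\log^4 K}{\sqrt{\mathtt{buffer.e}(\tilde k)}}
\]
for the uncertainty bonus that line~9 of \Cref{algo:estimateunknownF} adds to $\langle\phi(\cdot,\cdot),\omega_h^k\rangle+\hat R_h^k(\cdot,\cdot)$. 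The choices $\poly_1(\log K)=C_{15}+C_{13}H\log^2 K$ and $\poly_2(\log K)=C_{14}H^2\log^4 K$ are made precisely so that this bonus equals $\Delta_h^k(\cdot,\cdot)$, which is exactly the residual term appearing on the right-hand side of \Cref{lem:unknownomegaandQ}; this matching is what produces the factor $2$ (rather than something larger) in the claimed recursion.

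First I would instantiate \Cref{lem:unknownomegaandQ} at the fixed policy $\pi=\pi_{\tilde k}$, the optimistic estimate in force at episode $k$. On the Good Event $\mathscr{E}$ this gives, for all $(x,\upsilon)\in\cS\times\Upsilon$,
\[
\langle\phi(x,\upsilon),\omega_h^k\rangle+\hat R_h^k(x,\upsilon)-Q_h^{\pi_{\tilde k}}(x,\upsilon)\le \mathbb{P}_h\big(\hat V_{h+1}^k-V_{h+1}^{\pi_{\tilde k}}\big)(x,\upsilon)+\Delta_h^k(x,\upsilon).
\]
Adding $\Delta_h^k(x,\upsilon)$ to both sides, and using that $\hat Q_h^k(x,\upsilon)$ is the minimum of $\langle\phi(x,\upsilon),\omega_h^k\rangle+\hat R_h^k(x,\upsilon)+\Delta_h^k(x,\upsilon)$ and $3H$ — so the truncation can only decrease $\hat Q_h^k$, while $3H\ge Q_h^{\pi_{\tilde k}}$ always holds — I obtain the pointwise estimate
\[
\hat Q_h^k(x,\upsilon)-Q_h^{\pi_{\tilde k}}(x,\upsilon)\le \mathbb{P}_h\big(\hat V_{h+1}^k-V_{h+1}^{\pi_{\tilde k}}\big)(x,\upsilon)+2\Delta_h^k(x,\upsilon),
\]
valid whether or not the cap at $3H$ is active. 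This is the exact counterpart of the corresponding step in the proof of \Cref{lem:recursive}, now carrying the extra additive term $C_{14}H^2\log^4 K/\sqrt{\mathtt{buffer.e}(\tilde k)}$. Here \Cref{lem:ucb} (more precisely, its unknown-$F$ version, proved by the same backward induction from \Cref{lem:unknownomegaandQ}) guarantees that $\hat V_h^k$ is a legitimate optimistic value estimate, which is what makes $\delta_h^k$ the right quantity to propagate.

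Next I would evaluate the last display at $(x_h^k,\upsilon_h^k)$, where $\upsilon_h^k=\pi^{\upsilon}_{\tilde k,h}(x_h^k)=\argmax_\upsilon \hat Q_h^k(x_h^k,\upsilon)$ is the greedy item choice of $\pi_{\tilde k}$ along the episode-$k$ trajectory (on which $\pi_{\tilde k}$ is executed). Since $\upsilon_h^k$ is greedy for $\hat Q_h^k$ and is also the item chosen by $\pi_{\tilde k}$, we have $V_h^k(x_h^k)=\hat Q_h^k(x_h^k,\upsilon_h^k)$ and $V_h^{\pi_{\tilde k}}(x_h^k)=Q_h^{\pi_{\tilde k}}(x_h^k,\upsilon_h^k)$, hence $\delta_h^k=\hat Q_h^k(x_h^k,\upsilon_h^k)-Q_h^{\pi_{\tilde k}}(x_h^k,\upsilon_h^k)$. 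Finally, by the definition of the transition operator $\mathbb{P}_h$ together with $\delta_{h+1}^k=(\hat V_{h+1}^k-V_{h+1}^{\pi_{\tilde k}})(x_{h+1}^k)$ and $x_{h+1}^k\sim \mathbb{P}_h(\cdot\given x_h^k,\upsilon_h^k)$, we get $\mathbb{P}_h(\hat V_{h+1}^k-V_{h+1}^{\pi_{\tilde k}})(x_h^k,\upsilon_h^k)=\E[\delta_{h+1}^k\given x_h^k,\upsilon_h^k]=\delta_{h+1}^k+\xi_{h+1}^k$. Substituting $2\Delta_h^k(x_h^k,\upsilon_h^k)=2(C_{15}+C_{13}H\log^2 K)\|\phi(x_h^k,\upsilon_h^k)\|_{(\Lambda_h^{\mathtt{buffer.e}(\tilde k)})^{-1}}+2C_{14}H^2\log^4 K/\sqrt{\mathtt{buffer.e}(\tilde k)}$ yields exactly the stated inequality.

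I do not expect a genuine obstacle here: no new idea beyond the proof of \Cref{lem:recursive} is required, because \Cref{lem:unknownomegaandQ} has already absorbed the additional estimation error from $\hat F(\cdot)$ into $\Delta_h^k$. The only points demanding care are bookkeeping ones — carrying the new $\poly_2$-induced term through the truncation argument intact, and verifying that the constants $\poly_1,\poly_2$ set in \Cref{algo:estimateunknownF} reproduce the residual of \Cref{lem:unknownomegaandQ} verbatim so that the coefficient is $2$ and not larger.
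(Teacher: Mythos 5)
Your proposal is correct and follows essentially the same route as the paper, which simply notes that the lemma "holds due to Lemma \ref{lem:unknownomegaandQ} and Lemma \ref{lem:ucb}" and relies on the reader transplanting the argument of Lemma \ref{lem:recursive}: instantiate the one-step decomposition at $\pi=\pi_{\tilde k}$, absorb the truncation at $3H$, use greediness of $\pi_{\tilde k}$ to identify $\delta_h^k$ with the Q-function gap at $(x_h^k,\upsilon_h^k)$, and read off $\mathbb{P}_h(\hat V_{h+1}^k-V_{h+1}^{\pi_{\tilde k}})(x_h^k,\upsilon_h^k)=\delta_{h+1}^k+\xi_{h+1}^k$. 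Your write-up is in fact more explicit than the paper's about how the extra $\poly_2$-induced term $C_{14}H^2\log^4K/\sqrt{\mathtt{buffer.e}(\tilde k)}$ is carried through with coefficient $2$.
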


Now, we are ready to prove \Cref{lem:unknownlsvi}.

Similar to the proof of \Cref{lem:lsvi}, it holds that 
\[
\Delta_1\lesssim \tilde \cO (\sqrt{H^5{K}})+\sum_{k=1}^K \sum_{h=1}^H  2C_{14}\frac{H^2\log^4 K}{\sqrt{\mathtt{buffer.e} (\tilde{k})}}.
\]

Due to \Cref{algo:unKnownF}, we have $k\le 2\mathtt{buffer.e} (\tilde{k})$. Therefore, it holds that 
\[
\sum_{k=1}^K\frac{1}{\sqrt{\mathtt{buffer.e} (\tilde{k})}}\le \sum_{k=1}^K \frac{\sqrt{2}}{\sqrt{k}}\le 2\sqrt{2K}.
\]

Therefore, it holds that 
\[
\Delta_1\lesssim \tilde \cO (\sqrt{H^5K})+\tilde \cO (H^3 \sqrt{K}),
\]
which ends the proof. \qed

\re{
\section{Detailed Results of Numerical Experiments}\label{app:exp}
In this section, we give some details about our numerical experiments. 
}
\re{
\begin{table}[h]
    \centering
    \begin{tabular}{|c|ccc|}
        \hline
        \textbf{Trail\textbackslash Regret} & \textbf{CLUB} & \textbf{SCORP} & \textbf{NPAC-S} \\
        \hline
        1 & \textbf{57.20} & 170.77 & 131.41 \\
        \hline
        2 & 139.75 & 230.29 & \textbf{113.23} \\
                \hline
        3 & 58.01 & 189.06 & \textbf{41.46} \\
                \hline
        4 & 238.57 & 168.39 & \textbf{54.59} \\
                \hline
        5 & 79.43 & 161.72 & \textbf{59.99} \\
                \hline
        6 & 171.67 & 211.33 & \textbf{53.72} \\
                \hline
        7 & \textbf{52.24} & 204.67 & 185.61 \\
                \hline
        8 & \textbf{59.40} & 185.07 & 135.82 \\
                \hline
        9 & 228.57 & 176.15 & \textbf{37.69} \\
                \hline
        10 & 150.11 & 181.72 & \textbf{91.58} \\
                \hline
        11 & \textbf{80.74} & 197.85 & 123.08 \\
                \hline
        12 & 179.27 & \textbf{167.39} & 239.79 \\
                \hline
        13 & \textbf{37.25} & 186.11 & 56.14 \\
                \hline
        14 & \textbf{83.27} & 168.86 & 240.07 \\
                \hline
        15 & \textbf{54.92} & 163.89 & 219.48 \\
                \hline
        16 & \textbf{72.72} & 175.39 & 86.02 \\
                \hline
        17 & 56.35 & 174.99 & \textbf{35.80} \\
                \hline
        18 & 55.40 & 178.67 & \textbf{52.55} \\
                \hline
        19 & \textbf{34.40} & 170.65 & 70.55 \\
                \hline
        20 & \textbf{15.57} & 160.40 & 169.44 \\
                \hline
        21 & \textbf{95.18} & 164.27 & 171.89 \\
                \hline
        22 & 324.05 & 176.15 & \textbf{24.25} \\
                \hline
        23 & 184.31 & 174.79 & \textbf{30.46} \\
                \hline
        24 & \textbf{41.43} & 174.32 & 64.36 \\
                \hline
        25 & \textbf{51.32} & 171.11 & 89.65 \\
                \hline
        26 & \textbf{30.47} & 177.63 & 191.52 \\
                \hline
        27 & \textbf{30.46} & 178.80 & 58.29 \\
                \hline
        28 & 367.42 & 182.17 & \textbf{84.62} \\
                \hline
        29 & 54.69 & 171.78 & \textbf{44.27} \\
                \hline
        30 & 114.49 & 174.32 & \textbf{33.29} \\
        \hline
    \end{tabular}
    \caption{Regrets of three different algorithms in each trail.}
    \label{tab:expbandits}
\end{table}}

\re{In the contextual bandits setting, we show the total regrets of three different algorithms (i.e., CLUB, SCORP and NPAC-S) in all 30 trails in the following table. Among all 30 trials, CLUB has the lowest regret in 15 trials, while NPAC-S does in 14 trials. SCORP only wins in the twelfth trial. For their average regrets, it's 106.62 for CLUB, 178.96 for SCORP, and 99.69 for NPAC-S. Therefore, we conclude that for contextual bandit settings, the performances of CLUB and NPAC-S are comparable, overwhelming the performance of SCORP sufficiently. 
}

\re{
For the implementation details, we assume $N=1$, and there are two different contexts, both appearing with probability 0.5. Besides, we assume $\theta=[0,4,0.6]^T$ and underlying noise distribution is {\rm Unif}([-1,1]). In order to distinguish these strategies, we constrain that bids must be a multiple of 0.01. To simulate strategic bidders, we use \Cref{overbid}. Once it's in the buffer period, we assume bidders randomly bid. However, if not, we assume bidders bid their value plus a random noise with scale $\frac{C_3}{K}$. For NPAC-S, we use similar ways to simulate strategic behaviors. However, for SCORP, we stated before that it uses too many episodes to explore; we loosen its constraints and assume truthful bidding. Although we only consider an upper bound for its performance, SCORP still performs worse than CLUB and NPAC-S. So, we only compare CLUB and NPAC-S in MDP settings. To solve \Cref{algo:Fandthetahat}, we seek help from \texttt{scipy.optimize} package. Actually, most of the running time is spent on solving \Cref{algo:Fandthetahat}. We believe we can reduce our running time by using other commercial optimization solvers. 
}

\re{In the MDP setting, we show the total regrets of CLUB and NPAC-S in the 30 trials in \Cref{tab:MDP}. Among all 30 trials, CLUB wins NPAC-S every time. The average of CLUB is 203.07, overwhelming the corresponding 756.31 for NPAC-S. As a result, it shows that CLUB has better performance against NPAC-S in the MDP setting.}
\begin{table}[h]
    \centering
    \begin{tabular}{|c|cc|c|cc|}
        \hline
        \textbf{Trail\textbackslash Regret} & \textbf{CLUB} & \textbf{NPAC-S} & \textbf{Trail\textbackslash Regret} & \textbf{CLUB} & \textbf{NPAC-S} \\
        \hline
    1&\textbf{111.12} & 719.32 &16&   \textbf{ 202.51}  & 843.94 \\ \hline
    2&\textbf{86.96}  & 744.47 &17&    \textbf{24.77}  & 699.18 \\ \hline
    3&\textbf{369.94}  & 694.44 &18&    \textbf{262.83 } & 709.15 \\ \hline
    4&\textbf{78.32}  & 1204.41 &19&   \textbf{ 505.96 } & 802.21 \\ \hline
    5&\textbf{586.62}  & 660.06 &20&   \textbf{ 163.90}  & 696.09 \\ \hline
    6&\textbf{46.89}  & 647.03 &21&   \textbf{ 33.60}  & 653.59 \\ \hline
    7&\textbf{303.41}  & 695.98 &22&   \textbf{ 156.05}  & 872.66 \\ \hline
    8&\textbf{61.22}  & 698.99 &23&   \textbf{ 46.15 } & 746.18 \\ \hline
    9&\textbf{281.11}  & 686.92 &24&   \textbf{ 388.10}  & 781.76 \\ \hline
    10&\textbf{40.48}  & 742.37 &25&    \textbf{160.19}  & 699.93 \\ \hline
11&    \textbf{125.29}  & 790.36&26&   \textbf{ 552.07}  & 732.08 \\ \hline
12&    \textbf{140.18}  & 744.64 &27&  \textbf{  89.34}  & 734.74 \\ \hline
13&    \textbf{516.48}  & 855.74&28&   \textbf{ 112.73}  & 702.72 \\ \hline
14&    \textbf{55.23}  & 660.48&29&  \textbf{  191.32}  & 663.03 \\ \hline
15&    \textbf{87.22}  & 1002.02 &30&   \textbf{ 311.99}  & 804.94 \\ \hline
    \end{tabular}
    \caption{Regrets of two different algorithms in each trail.}
    \label{tab:MDP}
\end{table}

\re{For the detailed setting of MDP and the implementation, we consider the situation that $H=2$. We state different settings than the ones in contextual bandits as follows. The action space contains two actions. The first action will lead to the first context with probability 1, and the second action will lead to the second context in the next phase. In our MDP setting, we only discount once every episode, which means two phases. Therefore, we set the discount rate to be $\sqrt \gamma$ for NPAC-S. It is a more conservative situation and will decrease the extent of untruthful bidding for NPAC-S. At the same time, we assume NPAC-S will choose actions randomly. For our CLUB algorithm, we construct a 4-dimensional feature space to capture the structure of the underlying MDP. Additionally, instead of selecting $\delta$, we set ${\rm poly_1}(\cdot)=H\log^2(K)$ and ${\rm poly_2}(\cdot)=H^2\log^4(K)$, which decides a unique probability to break our PAC-learning bounds.}

\begin{figure}[!tbp]
    \centering
    \begin{subfigure}[b]{0.3\textwidth}
        \includegraphics[width=\textwidth]{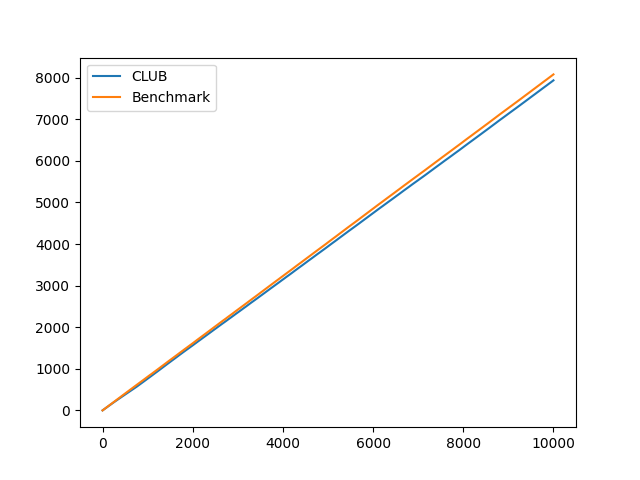}
        \caption{The performance of CLUB against the benchmark.}
        \label{fig:Gauss/CLUB}
    \end{subfigure}
    \hfill
    \begin{subfigure}[b]{0.3\textwidth}
        \includegraphics[width=\textwidth]{ 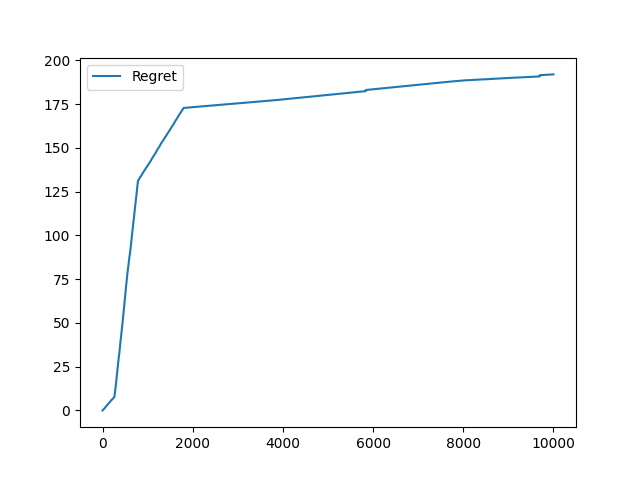}
        \caption{The regret accumulation of CLUB.}
        \label{fig:CLUB_regret}
    \end{subfigure}
    \hfill
    \begin{subfigure}[b]{0.29\textwidth}
        \includegraphics[width=\textwidth]{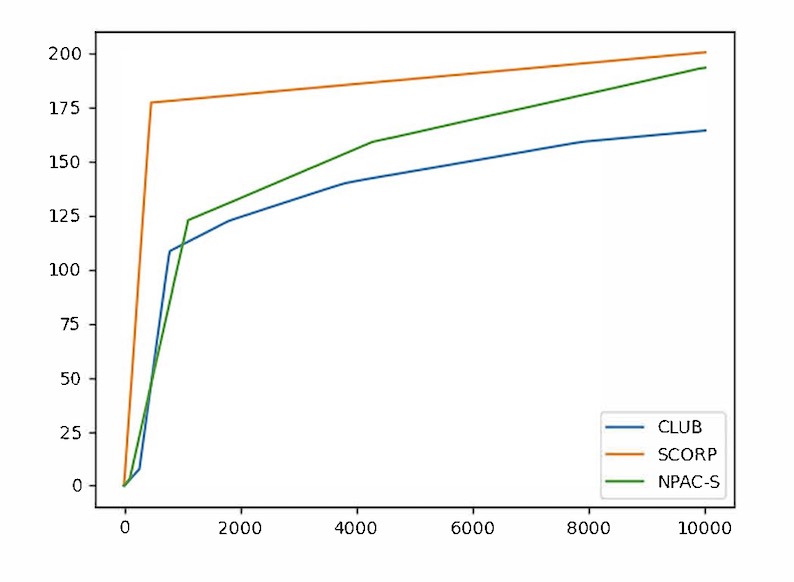}
        \caption{The average performances of three algorithms.}
        \label{fig:Gauss/average}
    \end{subfigure}
    \caption{Experiment results for the contextual bandit setting under truncated Gaussian noise distribution.
    }
    \label{fig:gauss/contextual}
    \centering
    \begin{subfigure}[b]{0.3\textwidth}
        \includegraphics[width=\textwidth]{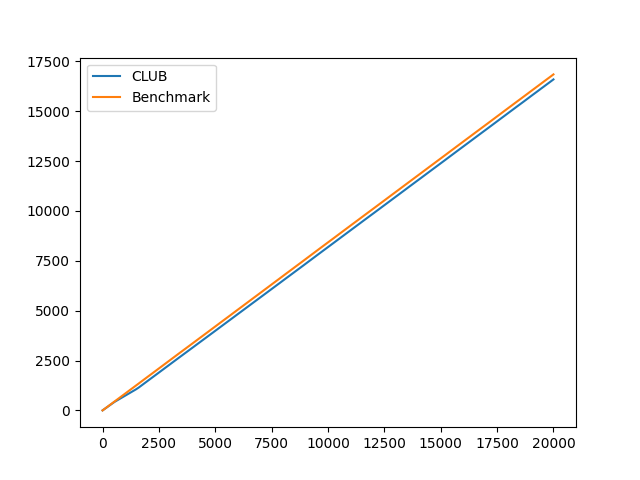}
        \caption{The performance of CLUB against the benchmark.}
        \label{fig:Gauss/MDP+sub1}
    \end{subfigure}
    \hfill
    \begin{subfigure}[b]{0.3\textwidth}
        \includegraphics[width=\textwidth]{ 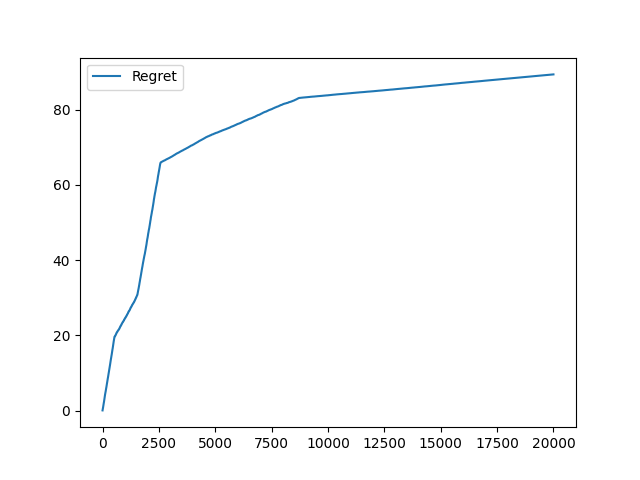}
        \caption{The regret accumulation of CLUB.}
        \label{fig:Gauss/MDP+sub2}
    \end{subfigure}
    \hfill
    \begin{subfigure}[b]{0.3\textwidth}
        \includegraphics[width=\textwidth]{ 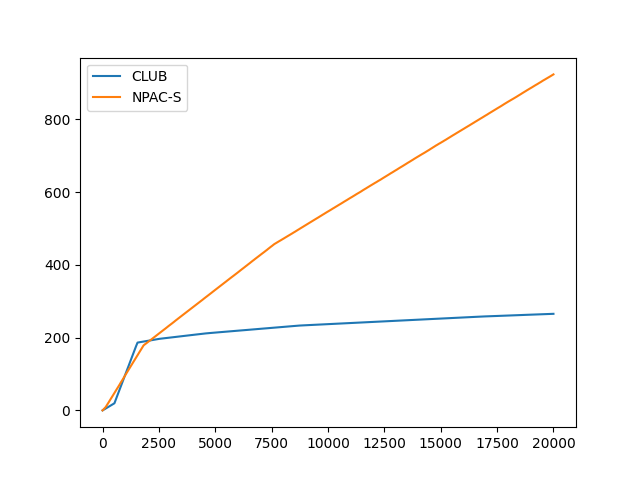}
        \caption{The average performances of two algorithms.}
        \label{fig:Gauss/MDP+sub3}
    \end{subfigure}
    \caption{Experiment results for the MDP setting under truncated Gaussian noise distribution.}
    \label{fig:gauss/MDP}
\end{figure}

To examine the robustness of our algorithm and its potential for real-world applications, we consider different market noise distributions. Specifically, we replace $F(\cdot)$ from a uniform distribution with a normal distribution $\cN(0,1)$ truncated to the interval $[-1,1]$, while keeping all other settings unchanged. For the contextual bandit setup (c.f. \Cref{fig:gauss/contextual}), we also report one representative instance in \Cref{fig:CLUB_regret}. We find that under truncated Gaussian noise, our algorithm CLUB outperforms both SCORP and NPAC-S, with average 164.09, 200.29 and 193.15, respectively, demonstrating the strong performance of CLUB under different noise distributions. For the MDP setting (c.f. \Cref{fig:gauss/MDP}), our CLUB algorithm is still significantly plausible compared to NPAC-S. The average regret of CLUB is only 265.59, while NPAC-S incurs 923.95. It shows that our CLUB algorithm can effectively handle different types of bidders in real-world environments.

\re{To sum up, the performance of CLUB and NPAC-S is comparable in contextual bandit settings, overwhelming the performance of SCORP sufficiently. As for MDP setting, CLUB is the only one to achieve sublinear regret bounds in both theory and practice. Therefore, CLUB captures the underlying information structures precisely and depicts a practical way in dynamic mechanism design.}


\bibliography{sample}

\end{document}